\documentclass{article}

\usepackage{microtype}
\usepackage{graphicx}
\usepackage{subcaption}
\usepackage{booktabs} 

\usepackage{hyperref}
\hypersetup{hypertexnames=false}

\usepackage[accepted]{icml2026}

\usepackage{amsmath}
\usepackage{amssymb}
\usepackage{mathtools}
\usepackage{amsthm}
\usepackage{enumitem}
\usepackage{titletoc}
\usepackage{algorithm}
\usepackage{algorithmic}

\usepackage[capitalize,noabbrev]{cleveref}

\theoremstyle{plain}
\newtheorem{theorem}{Theorem}
\newtheorem{proposition}{Proposition}
\newtheorem{lemma}{Lemma}
\newtheorem{corollary}{Corollary}

\theoremstyle{definition}
\newtheorem{definition}{Definition}

\theoremstyle{remark}

\newcommand{\appendixtheorems}{
  \appendix
  \renewcommand{\thetheorem}{\thesection.\arabic{theorem}}
  \renewcommand{\thelemma}{\thesection.\arabic{lemma}}
  \renewcommand{\theproposition}{\thesection.\arabic{proposition}}
  \renewcommand{\thecorollary}{\thesection.\arabic{corollary}}
  \setcounter{theorem}{0}
  \setcounter{lemma}{0}
  \setcounter{proposition}{0}
  \setcounter{corollary}{0}
}

\definecolor{nice_gold}{RGB}{191, 128, 64}
\definecolor{nice_blue}{RGB}{21, 100, 220}
\definecolor{nice_pink}{RGB}{237, 2, 140}

\newcommand{\Tcond}{\textup{\textbf{\textcolor{nice_gold}{(T)}}}}
\newcommand{\Scond}{\textup{\textbf{\textcolor{nice_gold}{(S)}}}}
\newcommand{\Mpcond}{\textup{\textbf{\textcolor{nice_gold}{(M$_p$)}}}}
\newcommand{\Scondd}{\textup{\textbf{\textcolor{nice_gold}{(S$_d$)}}}}
\newcommand{\Sconddiso}{\textup{\textbf{\textcolor{nice_gold}{(S$_d^{\mathrm{iso}}$)}}}}

\newcommand{\Ucond}{\textup{\textbf{\textcolor{nice_gold}{(U)}}}}

\newcommand{\AniCond}{\textup{\textbf{\textcolor{nice_gold}{(A)}}}}

\DeclareMathOperator{\supp}{supp}
\DeclareMathOperator{\Law}{Law}
\DeclareMathOperator{\op}{op}
\newcommand{\E}{\mathbb{E}}                  





\usepackage[textsize=tiny]{todonotes}

\icmltitlerunning{Multivariate distributional reinforcement learning using sliced divergences}

\begin{document}

\twocolumn[
  \icmltitle{Multivariate Distributional Reinforcement Learning Using Sliced Divergences}



  \icmlsetsymbol{equal}{*}

  \begin{icmlauthorlist}
    \icmlauthor{Baptiste Debes}{yyy}
    \icmlauthor{Tinne Tuytelaars}{yyy}
  \end{icmlauthorlist}

  \icmlaffiliation{yyy}{Department of Electrical Engineering (ESAT), KU Leuven, Leuven, Belgium}

  \icmlcorrespondingauthor{Baptiste Debes}{baptiste.debes@kuleuven.be}

  \icmlkeywords{Reinforcement learning, distributionan reinforcement learning, multivariate distributional reinforcement learning, sliced Wasserstein distance, sliced divergence, ICML}

  \vskip 0.3in
]



\printAffiliationsAndNotice{}  

\begin{abstract}
   Distributional reinforcement learning (DRL) models the full return distribution rather than expectations, but extending it to multivariate settings remains challenging. Many common metrics do not naturally generalize beyond one dimension or lose computational tractability, and the multivariate case introduces additional difficulties such as general matrix discounting, for which no contraction results are available. We introduce Sliced Distributional Reinforcement Learning (SDRL), which lifts tractable one-dimensional divergences to multivariate return distributions via projections. We prove Bellman contraction for uniform slicing under shared scalar discounting, and introduce a maximum-slicing variant with contraction under general dense discount matrices. SDRL supports a broad class of base divergences; we analyze Wasserstein, Cramér, and Maximum Mean Discrepancy (MMD), and characterize which SDRL variants suit the standard single-sample Bellman update used in distributional RL. We evaluate SDRL on a toy chain problem and a gridworld image-based environment as well as a subset of Atari games. Code is available at \url{https://github.com/BaptisteDebes/SlicedDistributionalRL}
\end{abstract}

\section{Introduction}
Distributional reinforcement learning (DRL) models full return distributions rather than expectations, with strong empirical \citep{dabney2017distributional,dabney2018implicit,barth-maron2018distributed,hessel2017rainbowcombiningimprovementsdeep} and theoretical support \citep{lyle2019comparative,rowland2018analysis,rowland2019statistics}, building on the foundational perspective of \citet{bellemare2017distributional,bellemare2023distributional}. In practice, DRL hinges on two coupled design choices: how to measure distributional discrepancy and the critic’s parameterization \citep{rowland2019statistics}. In the univariate case, several tractable combinations are available. For instance, the categorical parameterization paired with the KL divergence yields efficient updates \citep{bellemare2017distributional}, while Wasserstein-based discrepancies admit efficient estimators when combined with quantile representations \citep{dabney2018implicit}. This tractability is largely lost in the multivariate setting: categorical grids grow combinatorially, quantile parameterizations do not scale, and Wasserstein estimation becomes prohibitively costly, typically $\mathcal{O}(n^3\log n)$ for general optimal transport solvers \citep{genevay2018learning}.

A classical approach to high-dimensional distribution comparison is \emph{slicing}, which represents multivariate distributions through their one-dimensional projections and aggregates discrepancies across directions. This idea underlies the framework of \emph{Sliced Probability Divergences} (SPDs) \citep{rabin2011wasserstein,bonneel2015sliced,nadjahi2020statistical}, where distributions are projected onto directions on the unit sphere, one-dimensional discrepancies are computed, and the results are aggregated. This projection–aggregation mechanism reduces multivariate comparison to a collection of tractable univariate problems, enabling the use of base divergences with efficient one-dimensional estimators. 

Building on this slicing principle, we introduce \emph{Sliced Distributional Reinforcement Learning} (SDRL), a distributional reinforcement learning framework that leverages tractable one-dimensional projections to efficiently compare multivariate return distributions. Our approach lifts base divergences with efficient one-dimensional estimators, such as Wasserstein or Cramér, to the multivariate setting.

Crucially, moving beyond scalar-valued returns introduces additional degrees of freedom, notably through more general forms of discounting. While such forms of discounting naturally arise in a variety of settings, they have so far received limited theoretical treatment in distributional reinforcement learning. Beyond uniform slicing, we consider max slicing \citep{deshpande2019max} as a tool to obtain contraction guarantees in settings beyond the standard scalar-discounted case. Max slicing replaces aggregation over random directions with an optimization over the most discriminative direction. One example is the problem of Distributional Sobolev Temporal Difference, for which a concrete instantiation based on max slicing and MMD has been proposed \citep{debes2026distributionalvaluegradientsstochastic}. Here we develop the general framework underlying such constructions.

\noindent\textbf{Contributions.}
We introduce \textbf{Sliced Distributional RL (SDRL)}, a general approach to multivariate return distributions based on sliced divergences, and establish Bellman contraction under shared scalar discounting. 
We further introduce a \textbf{Max-Sliced} variant (MSDRL) and establish contraction guarantees for \emph{matrix-discounted} multivariate Bellman updates, a setting not covered by existing contraction analyses in distributional RL.

\section{Background and Related
Works} \label{sec:background}

In the \emph{expected} reinforcement learning framework, an agent interacts with an environment modeled as a Markov decision process (MDP) $(\mathcal{S}, \mathcal{A}, P, R, \{\Gamma(s,a)\})$, where rewards may be \emph{$d$-dimensional} ($R_t \in \mathbb{R}^d$, $d\ge1$). Here $\Gamma(s,a) \in \mathbb{R}^{d\times d}$ denotes a (possibly dense) state--action dependent discount--mixing matrix.
Given a policy $\pi(a|s)$, the agent seeks to maximize the expected discounted return
\begin{equation}
\begin{gathered}
Q^\pi(s,a)= \\
\mathbb{E} \!\left[
\sum_{t=0}^{\infty}
\Big(\prod_{k=1}^{t} \Gamma(S_k,A_k)\Big)\, R_t
\;\middle|\;
S_0=s,\,A_0=a
\right]
\end{gathered}
\end{equation}
with the convention $\prod_{k=1}^{0}\Gamma(S_k,A_k) = I_d$.  
Classical RL methods focus on estimating $Q^\pi(s,a)$, the \emph{expectation} of the return distribution (componentwise when $d>1$).

The \emph{distributional} perspective \citep{bellemare2017distributional}, originally developed for scalar rewards with scalar discounting, can be applied here as well: it models the full return random variable
\begin{equation}
    Z^\pi(s,a) \;=\; \sum_{t=0}^\infty \Big(\prod_{k=1}^{t} \Gamma(S_k,A_k)\Big)\, R_t ,
\end{equation}
whose expectation recovers $Q^\pi(s,a) = \mathbb{E}[Z^\pi(s,a)]$ (componentwise when $d>1$).  
This viewpoint leads to the \emph{distributional Bellman operator} with state--action dependent matrix discount.
Let $S' \sim P(\cdot \mid s,a)$ and $A' \sim \pi(\cdot \mid S')$ be the next state and action; we adopt this sampling convention for all subsequent definitions of the Bellman operator. The operator is then defined as
\begin{equation}
\label{eq:distributional_bellman_equation}
    (\mathcal{T}^\pi Z)(s,a) \stackrel{D}{=} R(s,a) + \Gamma(s,a)\, Z(S',A'),
\end{equation}
where $\stackrel{D}{=}$ denotes equality in distribution.
\paragraph{Special cases.}
\begin{enumerate}[leftmargin=*]
    \item \textbf{Classical distributional RL:} $d=1$, $\Gamma(s,a)=\gamma\in[0,1)$ \citep{bellemare2017distributional}.
    \item \textbf{Multivariate with shared scalar discount:} $d>1$, $\Gamma(s,a)=\gamma I_d$ \citep{zhang2021distributional, wiltzer2024foundations}.
    \item \textbf{General constant matrix:} $\Gamma(s,a)\equiv\Gamma$, where $\Gamma$ may be possibly dense; e.g., multi-horizon return modeling with $\Gamma=\mathrm{diag}(\gamma_1,\ldots,\gamma_d)$ that represent returns at different horizons via multiple discount factors \citep{fedus2019hyperbolic}.
    \item \textbf{State-action dependent general matrix:} $\Gamma(s,a)$ is a general (possibly dense) matrix \citep{debes2026distributionalvaluegradientsstochastic}.
\end{enumerate}

\subsection{Related works.}
\noindent\textbf{Wasserstein.}
Wasserstein metrics provide a \emph{natural} divergence for distributional reinforcement learning, and in the scalar setting the distributional Bellman operator is contractive under $\mathbf{W}_p$ \citep{bellemare2017distributional}. However, in high dimensions Wasserstein distances suffer from dimension-dependent statistical complexity \citep{fournier2015rate}. Motivated by these contraction properties, \citet{freirich2019distributional} reinterpret the distributional Bellman equation as an adversarial learning problem, using a learned discriminator to approximate $\mathbf{W}_1$. In practice, such discriminators can suffer from Lipschitz violations, finite-sample bias, and optimization error, yielding objectives that may deviate substantially from true optimal-transport distances \citep{mallasto2019well,stanczuk2021wasserstein}, thus weakening contraction claims that presume exact $\mathbf{W}_1$.

\noindent\textbf{MMD.}
Moment matching with MMD was explored in the univariate case \citep{nguyen2021distributional} and later extended to multivariate returns \citep{zhang2021distributional}. In the multivariate setting, contractivity results are available only for a narrow class of kernels \citep{wiltzer2024foundations}, and identifying a kernel that is both empirically strong and contractive remains challenging \citep{killingberg2023multiquadric}. Consequently, practitioners often resort to mixtures of Gaussian kernels despite their lack of contractivity guarantees in the multivariate setting.

\noindent\emph{Synthesis.} Taken together, existing approaches suffer from at least one of three limitations: \textbf{(1)} performant methods may not be contractive, \textbf{(2)} theoretical guarantees do not extend to the general anisotropic discount setting we target, or \textbf{(3)} the estimation is too loose to support contraction claims (adversarial $\mathbf{W}_1$). This gap motivates our sliced approach.

\section{SDRL: Distributional RL via Sliced Probability Divergences}

\subsection{Sliced Probability Divergences}
\paragraph{Slicing a base divergence.}
Let $\Delta:\mathcal P(\mathbb R)\times\mathcal P(\mathbb R)\!\to\!\mathbb R_+\cup\{\infty\}$ be a divergence on one–dimensional probability laws.
Fix $p\!\ge\!1$ and let $\sigma$ denote the uniform measure on $\mathbb S^{d-1}$.
For a unit direction $\theta\!\in\!\mathbb S^{d-1}$, let $P_\theta:\mathbb R^d\!\to\!\mathbb R$ denote the linear projection $P_\theta(x)=\langle \theta,x\rangle$, and write $(P_\theta)_{\#}\mu$ for the pushforward of $\mu$ by $P_\theta$. Given $\mu,\nu\in\mathcal P(\mathbb R^d)$, the associated sliced probability divergence (SPD) is
\begin{equation}
\mathbf{S}\Delta_p^p(\mu,\nu)
\;=\;
\int_{\mathbb S^{d-1}}
\Delta^p\!\bigl((P_\theta)_{\#}\mu,(P_\theta)_{\#}\nu\bigr)\,d\sigma(\theta).
\end{equation}
This lifts $\Delta$ to multivariate laws by averaging the one-dimensional divergence over random projection directions \citep{nadjahi2020statistical}.

\paragraph{Monte Carlo approximation.}
In practice, this integral is estimated via Monte Carlo sampling by drawing
$L$ i.i.d.\ directions $\{\theta_i\}_{i=1}^L\!\sim\!\sigma$ and computing
\begin{equation}
\widehat{\mathbf{S}\Delta}_p^p(\mu,\nu)
\;=\;
\frac{1}{L}\sum_{i=1}^{L}
\Delta^p\!\bigl((P_{\theta_i})_{\#}\mu,(P_{\theta_i})_{\#}\nu\bigr).
\end{equation}
Each projected subproblem is independent, so the $L$ evaluations can be carried out in parallel.

\paragraph{Sliced Wasserstein distance.}
Among sliced probability divergences, the most widely used instance is the \emph{sliced Wasserstein distance} (SWD) \citep{rabin2011wasserstein,bonneel2015sliced}, 
where the base divergence is chosen as $\Delta = \mathbf{W}_p$.
For $\mu,\nu\in\mathcal P(\mathbb R^d)$ and $p\ge1$,
\begin{equation}
\mathbf{SW}_p^p(\mu,\nu)
\;=\;
\int_{\mathbb S^{d-1}}
\mathbf{W}_p^p\!\bigl((P_\theta)_{\#}\mu,(P_\theta)_{\#}\nu\bigr)\,d\sigma(\theta),
\end{equation}
which reduces the high-dimensional Wasserstein problem to an average of one-dimensional Wasserstein distances between the projected pushforwards $(P_\theta)_{\#}\mu$ and $(P_\theta)_{\#}\nu$.
For two sets of $n$ samples, computing each one-dimensional $\mathbf{W}_p$ reduces to sorting the projected samples, at cost $\mathcal{O}(n\log n)$ per direction; hence the Monte Carlo estimator with $L$ directions costs $\mathcal{O}(L\,n\log n)$ overall.
This contrasts with computing exact OT in $\mathbb R^d$ between two sets of $n$ samples, which can scale as $\mathcal{O}(n^3\log n)$ for standard exact solvers \citep{genevay2019sample}.
Further details on properties and the estimator are provided in Appendix~\ref{sec:wasserstein}.

\paragraph{Sliced Cramér distance.}
In one dimension, a natural family of discrepancies is given by $\ell_p$ distances between cumulative distribution functions:
\begin{equation}
\ell_p^p(\mu,\nu)
:= \int_{\mathbb R}\!\big|F_\mu(u)-F_\nu(u)\big|^p\,du,
\end{equation}
where $F_\mu$ and $F_\nu$ are the univariate CDFs of $\mu,\nu$.
The special case $p=2$ is the \emph{Cramér distance} $\mathbf{C}_2:=\ell_2^2$ \citep{bellemare2017cramer}.
The \emph{sliced Cramér} distance lifts this divergence to $\mathbb R^d$ via random
projections:
\begin{equation}
\mathbf{SC}_2^2(\mu,\nu)
\;=\;
\int_{\mathbb S^{d-1}}
\ell_2^2\!\bigl((P_\theta)_{\#}\mu,(P_\theta)_{\#}\nu\bigr)\,d\sigma(\theta).
\end{equation}
This distance is also referred to as the \emph{Cram\'er--Wold distance} and has been investigated in machine learning \citep{knop2020cramer,kolouri2020sliced}. Its estimator has the same complexity as sliced Wasserstein, since each one-dimensional Cram\'er evaluation costs $\mathcal{O}(n\log n)$.
The use of the Cram\'er distance in distributional RL has been explored in prior work \citep{rowland2018analysis,lheritier2021cram,theate2023distributional}. See Appendix~\ref{sec:l_p} for further properties and our estimator.
\paragraph{Sliced MMD.}
Another tractable base divergence is the \emph{Maximum Mean Discrepancy} ($\mathbf{MMD}$) \citep{JMLR:v13:gretton12a}, which has already been explored in distributional RL \citep{DBLP:journals/corr/abs-2007-12354,killingberg2023multiquadric,wiltzer2024foundationsmultivariatedistributionalreinforcement}. 
For laws $\mu,\nu \in \mathcal P(\mathbb R^{d})$ and a kernel $k$, the squared $\mathbf{MMD}$ is
\begin{align}
\label{eq:mmd2_def}
\mathbf{MMD}_k^2(\mu,\nu)
&= \E_{x,x'\sim \mu}[k(x,x')]
   + \E_{y,y'\sim \nu}[k(y,y')]
   \nonumber \\
&\quad - 2\,\E_{x\sim \mu,\,y\sim \nu}[k(x,y)].
\end{align}
Lifting this discrepancy through random projections yields the \emph{sliced MMD}: for $\mu,\nu \in \mathcal P(\mathbb R^d)$,
\begin{equation}
\label{eq:smmd_def}
\begin{gathered}
\mathbf{SMMD}_k^2(\mu,\nu)
=
\\
\int_{\mathbb S^{d-1}}
\mathbf{MMD}_k^2\!\big((P_\theta)_{\#}\mu,\,(P_\theta)_{\#}\nu\big)\,d\sigma(\theta)
\end{gathered}
\end{equation}
Sliced MMD was first introduced in \cite{nadjahi2020statistical}. For two sets of $n$ samples, the base MMD estimator costs $\mathcal{O}(n^2)$ per projection, yielding an overall sliced estimator complexity of $\mathcal{O}(L\,n^2)$.
More details on the properties of MMD and the estimator we use are provided in Appendix~\ref{sec:mmd}.
\subsection{Max Sliced Probability Divergences}
Max slicing was proposed in \cite{deshpande2019max} to address the potential inefficiency of uniform random slicing, where many directions may be required to capture the discrepancy between two distributions.
It does so by learning the most discriminative projection direction, along which the projected one-dimensional divergence is largest.
It is defined as
\begin{equation}
\label{eq:msdelta_def}
\mathbf{MS}\Delta(\mu,\nu)
=
\sup_{\theta\in\mathbb S^{d-1}}
\Delta\!\big((P_\theta)_{\#}\mu,\,(P_\theta)_{\#}\nu\big).
\end{equation}
This framework was originally proposed for $\Delta=\mathbf{W}_p$, yielding the \emph{max--sliced Wasserstein distance} $\mathbf{MSW}_p$ \citep{deshpande2019max}.
As discussed in Section~\ref{sec:theoretical_results}, we show that max slicing can provide stronger contraction guarantees than uniform slicing.

\paragraph{Estimation.}
Since the supremum in the definition of max--sliced divergences cannot, in general, be computed exactly, it is typically approximated by iterative optimization of the projection direction on the unit sphere. 
At each step a gradient ascent update on the divergence is followed by renormalization onto the unit sphere, and the final direction defines the empirical estimate.
The full procedure is outlined in Algorithm~\ref{alg:msdelta-empirical}.
\subsection{Problem setting and algorithmic approach}
We focus on \emph{multivariate distributional policy evaluation}, estimating the full return law under a fixed policy $\pi$.
We wish to model the \emph{joint vector of multivariate returns} in order to capture their correlations and higher-order structure, rather than only marginal statistics.
Let $d>1$ and $\mathcal X=\mathbb R^d$.
For any policy $\pi(\cdot\,|\,s)$, let
$\mu^\pi(s,a)\in\mathcal P(\mathcal X)$ denote the law of the multivariate return $Z^\pi(s,a)$.
The distributional Bellman operator $\mathcal T^{\pi}$ relates return laws across state--action pairs via
\begin{equation}
\label{eq:bellman-mu-operator}
\begin{split}
(\mathcal T^{\pi}\mu)(s,a)
:= 
\int_{\mathcal S}\!\int_{\mathcal A}\!\int_{\mathcal X}
(f_{\Gamma(s,a),r})_{\#}\mu(s',a')
\\[3pt]
\quad \times
R(dr\,|\,s,a)\;
\pi(da'\,|\,s')\;
P(ds'\,|\,s,a).
\end{split}
\end{equation}
where $f_{\Gamma(s,a),r}(z)=r+\Gamma(s,a)\,z$ for $z\in\mathbb R^d$ and $\Gamma(s,a)\in\mathbb R^{d\times d}$ is a (possibly dense) discount--mixing matrix.
The target in policy evaluation is the fixed point $\mu^\pi$ of $\mathcal T^{\pi}$, so $\mathcal T^{\pi}\mu^\pi=\mu^\pi$.

\textbf{Algorithmic approach}\quad
Following the particle-based critic paradigm \citep{nguyen2021distributional, zhang2021distributional}, we approximate $\mu^\pi(s,a)$ by finite sets of particles and optimize them directly. 
Given a transition $(s,a,r,s')$ and next action $a'\!\sim\!\pi(\cdot|s')$, we produce a set of $N$ predicted particles
$\{z_i = Z_\phi(s,a,i)\}_{i=1}^N$ and a set of $N$ target particles
$\{\hat z_j = r + \Gamma(s,a)\,Z_\phi(s',a',j)\}_{j=1}^N$. 
Their discrepancy is measured by a sliced probability divergence with base $\Delta$, using either $L$ random projections or a single optimized direction (max--sliced), and minimizing it w.r.t.\ $\phi$ yields a distributional TD update toward the matrix-discounted target (Algorithm~\ref{alg:spd-eval}).

\begin{algorithm}[tb]
  \caption{Distributional policy evaluation with sliced divergence}
  \label{alg:spd-eval}
  \begin{algorithmic}
    \STATE {\bfseries Input:} number of particles $N$; base divergence $\Delta$ and order $p$; discount matrix $\Gamma$
    \STATE {\bfseries Input:} either projection count $L$ or a projection direction $\theta$
    \STATE {\bfseries Input:} sample transition $(s,a,r,s')$; policy $\pi$; model parameters $\phi$ (and target $\phi^-$)

    \STATE $a' \sim \pi(\cdot \mid s')$

    \FOR{$i = 1$ {\bfseries to} $N$}
        \STATE \textit{Predicted particle}
        \STATE $z_i \gets Z_{\phi}(s,a,i)$

        \STATE \textit{Target particle}
        \STATE $\hat z_i \gets r + \Gamma\, Z_{\phi^-}(s',a',i)$
    \ENDFOR

    \STATE {\bfseries Choose projection set} $\Theta$
    \IF{a direction $\theta$ is provided (max setting)}
        \STATE $\Theta \gets \{\theta\}$
    \ELSE
        \STATE draw $\{\theta_\ell\}_{\ell=1}^L \sim \mathrm{Unif}(\mathbb S^{d-1})$
        \STATE $\Theta \gets \{\theta_\ell\}_{\ell=1}^L$
    \ENDIF

    \STATE {\bfseries Monte Carlo estimator over projections}
    \STATE \textit{Using $P_{\theta'}(x) = \langle \theta', x \rangle$}
    \STATE $S \gets \frac{1}{|\Theta|} \sum_{\theta' \in \Theta}
      \Big[
        \Delta\big(
          \{\langle \theta', z_i\rangle\}_{i=1}^N,\;
          \{\langle \theta', \hat z_i\rangle\}_{i=1}^N
        \big)
      \Big]^p$

    \STATE {\bfseries Output:} $S\Delta_p^p\big(\{z_i\}_{i=1}^N,\{\hat z_i\}_{i=1}^N\big)$
  \end{algorithmic}
\end{algorithm}
\section{Theoretical results} \label{sec:theoretical_results}
In this section, we provide theoretical foundations for multivariate distributional RL with sliced divergences, based on a \emph{supremum divergence} and sufficient conditions for contraction of the distributional Bellman operator.
\begin{definition}[Supremum divergence]
Let $\mathcal D$ be a divergence on $\mathcal P(\mathbb R^d)$ and let
$\mu,\nu:\mathcal S\times\mathcal A\to\mathcal P(\mathbb R^d)$.
The \emph{supremum divergence} is defined as
\begin{equation}
\overline{\mathcal D}(\mu,\nu)
\;:=\;\sup_{(s,a)\in\mathcal S\times\mathcal A}\mathcal D\big(\mu(s,a),\,\nu(s,a)\big).
\end{equation}
\end{definition}
\paragraph{Standing notation.}
Throughout this section, returns are $\mathbb R^d$--valued and we write $\eta(s,a)\in\mathcal P(\mathbb R^d)$ for the return distribution at $(s,a)$.
We write $\Delta$ for a base divergence on $\mathcal P(\mathbb R)$, with lifted divergences $\mathbf S\Delta_p$ and $\mathbf{MS}\Delta$, and use $\overline{\mathcal D}$ for the supremum lift over $(s,a)$.

We focus on the following questions:
\begin{enumerate}
    \item \textbf{Metric property:} When do $\overline{\mathbf S\Delta_p}$ and $\overline{\mathbf{MS}\Delta}$ induce metrics on $\mathcal P(\mathbb R^d)^{\mathcal S\times\mathcal A}$?
    \item \textbf{Contraction property:} Under what conditions on $\Delta$ and on the discount structure $\Gamma$ does $\mathcal T^\pi$ contract in $\overline{\mathbf S\Delta_p}$ or $\overline{\mathbf{MS}\Delta}$?
    \item \textbf{Sample complexity:} How does the estimation error of the sliced and max--sliced divergences scale with the number of samples, and do they avoid the curse of dimensionality?
    \item \textbf{Stochastic training suitability:} With the widely used bootstrap update that instantiates the Bellman target from a single sampled successor $(s',a')$, when do $\mathbf S\Delta_p$ and $\mathbf{MS}\Delta$ remain compatible with this training setup?
\end{enumerate}

\subsection{Metric property}
It is known that uniform slicing preserves the metric property of a base divergence \citep{nadjahi2020statistical}. 
Similarly, \citet{deshpande2019max} established that max--sliced Wasserstein is a metric; more generally, max--slicing preserves metricity whenever the base divergence is a metric on $\mathcal P(\mathbb R)$.
Finally, supremum lifting over $(s,a)$ preserves metricity.
These results are summarized in Theorem~\ref{thm:global-metricity-spd-short}, with full proofs provided in Appendix~\ref{sec:metric}.

\begin{theorem}\label{thm:global-metricity-spd-short}
Assume $\Delta$ is a metric on $\mathcal P(\mathbb R)$ and fix $p\in[1,\infty)$. Then:
(i) $\mathbf S\Delta_p$ is a metric on $\mathcal P(\mathbb R^d)$;
(ii) $\mathbf{MS}\Delta$ is a metric on $\mathcal P(\mathbb R^d)$; and
(iii) the sup--lifts $\overline{\mathbf S\Delta_p}$ and $\overline{\mathbf{MS}\Delta}$ are metrics on
$\mathcal P(\mathbb R^d)^{\mathcal S\times\mathcal A}$.
\end{theorem}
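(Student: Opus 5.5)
The plan is to check the metric axioms (non-negativity, symmetry, triangle inequality, identity of indiscernibles) directly for each lifted object. Non-negativity and symmetry are inherited pointwise from $\Delta$ in every case. The only substantive step is identity of indiscernibles, which rests on the Cramér--Wold theorem. Throughout I would work on the subset of $\mathcal P(\mathbb R^d)$ where the relevant quantities are finite (for example $\mathcal P_p$ when $\Delta=\mathbf W_p$), as is implicit in the statement.

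\emph{Part (i).} For the triangle inequality, fix $\mu,\nu,\xi$. For each $\theta$, the triangle inequality for $\Delta$ gives
\[
\Delta\bigl((P_\theta)_{\#}\mu,(P_\theta)_{\#}\xi\bigr)\le \Delta\bigl((P_\theta)_{\#}\mu,(P_\theta)_{\#}\nu\bigr)+\Delta\bigl((P_\theta)_{\#}\nu,(P_\theta)_{\#}\xi\bigr).
\]
Taking the $L^p(\sigma)$ norm of both sides and applying Minkowski's inequality in $L^p(\mathbb S^{d-1},\sigma)$ yields $\mathbf S\Delta_p(\mu,\xi)\le \mathbf S\Delta_p(\mu,\nu)+\mathbf S\Delta_p(\nu,\xi)$. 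Measurability of $\theta\mapsto\Delta((P_\theta)_{\#}\mu,(P_\theta)_{\#}\nu)$ is needed here; I would assume it, or note that it holds for the concrete divergences considered, where this map is continuous in $\theta$.

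For identity of indiscernibles, $\mathbf S\Delta_p(\mu,\nu)=0$ implies $\Delta((P_\theta)_{\#}\mu,(P_\theta)_{\#}\nu)=0$ for $\sigma$-a.e.\ $\theta$, hence $(P_\theta)_{\#}\mu=(P_\theta)_{\#}\nu$ for a.e.\ $\theta$. This is the main obstacle: Cramér--Wold requires equality for \emph{all} directions. I would handle it through characteristic functions. Equality of projections gives $\hat\mu(t\theta)=\hat\nu(t\theta)$ for all $t\in\mathbb R$ and a.e.\ $\theta$, so $\hat\mu=\hat\nu$ on a set of full Lebesgue measure in $\mathbb R^d$ (by polar coordinates). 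Since $\hat\mu$ and $\hat\nu$ are continuous, they agree everywhere, and Fourier uniqueness gives $\mu=\nu$. This is the argument of \citet{nadjahi2020statistical}, and I would cite it.

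\emph{Part (ii).} The triangle inequality follows by taking the supremum over $\theta$ of the pointwise triangle inequality above, since a supremum of a sum is at most the sum of the suprema. For identity of indiscernibles, $\mathbf{MS}\Delta(\mu,\nu)=0$ forces $(P_\theta)_{\#}\mu=(P_\theta)_{\#}\nu$ for \emph{every} $\theta$. Cramér--Wold then applies directly and gives $\mu=\nu$, so no almost-everywhere subtlety arises in this part.

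\emph{Part (iii).} Let $\mathcal D$ denote either metric from (i) or (ii). Then $\overline{\mathcal D}(\mu,\nu)=0$ gives $\mathcal D(\mu(s,a),\nu(s,a))=0$ for every $(s,a)$, hence $\mu=\nu$ as maps $\mathcal S\times\mathcal A\to\mathcal P(\mathbb R^d)$. The triangle inequality follows by bounding pointwise by $\mathcal D(\mu(s,a),\nu(s,a))+\mathcal D(\nu(s,a),\xi(s,a))\le\overline{\mathcal D}(\mu,\nu)+\overline{\mathcal D}(\nu,\xi)$ and then taking the supremum over $(s,a)$. The sup-lift may be infinite, so strictly it is an extended metric; I would either restrict to return-distribution functions with finite sup-distance (for example uniformly bounded moments) or state the result as an extended metric, as is standard in distributional RL.
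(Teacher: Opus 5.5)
Your proposal is correct and follows essentially the same route as the paper. The shared steps are: the pointwise triangle inequality plus Minkowski for the uniform slice; characteristic functions restricted to lines for identity of indiscernibles (reproduced in the paper from \citet{nadjahi2020statistical}); a supremum argument for the max-sliced lift; and a pointwise-then-supremum argument for the $(s,a)$ sup-lift. Your polar-coordinates-plus-continuity step, which passes from agreement of $\hat\mu,\hat\nu$ on almost every line to agreement everywhere, spells out a step the paper only asserts, and your measurability and extended-metric caveats are sensible refinements rather than deviations.
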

\subsection{Contraction property}
Let $\mathcal D$ be any divergence on $\mathcal P(\mathbb R^d)$. We say that $\mathcal T$ is a $\kappa$--contraction in $\overline{\mathcal D}$ if there exists $\kappa\in[0,1)$ such that, for all $\eta_1,\eta_2:\mathcal S\times\mathcal A\to\mathcal P(\mathbb R^d)$,
\begin{equation}
\label{eq:contraction_def}
\overline{\mathcal{D}}\big(\mathcal{T}\eta_1,\,\mathcal{T}\eta_2\big)\ \le\ \kappa\,\overline{\mathcal{D}}(\eta_1,\eta_2).
\end{equation}

\textbf{Univariate contraction}\quad
We recall \emph{sufficient} conditions under which the univariate distributional Bellman operator $\mathcal T^\pi$
is a $c(\gamma)$--contraction with respect to $\overline{\mathcal D}$ (Proposition~\ref{prop:univariate-sup-contraction-short}). This slightly generalizes a result from \citep{bdr2023}. The proof is in Appendix~\ref{sec:contraction-univariate}.
\begin{proposition}[Univariate Bellman contraction]\label{prop:univariate-sup-contraction-short}
Let $\Delta$ be a metric on $\mathcal P(\mathbb R)$. 
For $t\in\mathbb R$, let $T_t(x)=x+t$ denote translation, and for $\gamma\in(0,1)$ let $S_\gamma(x)=\gamma x$ denote scaling. 
Suppose $\Delta$ satisfies:
\begin{enumerate}\itemsep0.2em
\item[\Tcond] \emph{Translation nonexpansion:} $\Delta\big((T_t)_{\#}\mu,(T_t)_{\#}\nu\big)\le \Delta(\mu,\nu)$ for all $t\in\mathbb R$.
\item[\Scond] \emph{Scale contraction:} there exists a nondecreasing function $c:[0,\infty)\to[0,\infty)$ such that for every $s\in[0,1]$,
\[
\Delta\bigl((S_s)_{\#}\mu,(S_s)_{\#}\nu\bigr)\le c(s)\,\Delta(\mu,\nu),
\]
with $c(s)\le 1$ for all $s\in[0,1]$ and $c(s)<1$ for all $s\in[0,1)$.
\item[\Mpcond] \emph{Mixture $p$–convexity:} for some $p\in[1,\infty)$, any probability measure $\rho$ and measurable families $(\mu_c),(\nu_c)\subset\mathcal P(\mathbb R)$,
\[
\Delta\!\Big(\int \mu_c\,d\rho,\,\int \nu_c\,d\rho\Big)
\le \Big(\int \Delta(\mu_c,\nu_c)^p\,d\rho\Big)^{1/p}.
\]

\end{enumerate}
Then the Bellman operator $\mathcal{T}^\pi$ is a $c(\gamma)$–contraction:
\[
\overline{\Delta}(\mathcal{T}^\pi\eta_1,\,\mathcal{T}^\pi\eta_2)\;\le\;
c(\gamma)\,\overline{\Delta}(\eta_1,\,\eta_2).
\]
\end{proposition}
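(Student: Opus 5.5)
Fix $(s,a)$ and write $(\mathcal T^\pi\eta_i)(s,a)$ as a mixture over the randomness $c=(r,s',a')$, with mixing law $\rho(dc)=R(dr\,|\,s,a)\,P(ds'\,|\,s,a)\,\pi(da'\,|\,s')$ and components $(f_{\gamma,r})_\#\eta_i(s',a')$, where $f_{\gamma,r}=T_r\circ S_\gamma$. The plan is to bound the divergence of the mixtures by the $L^p(\rho)$ norm of the componentwise divergences using \Mpcond, then handle each component separately with \Tcond{} and \Scond, and finally take the supremum over $(s',a')$ and $(s,a)$.

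\emph{Step 1 (mixture).} Applying \Mpcond{} with $\mu_c=(f_{\gamma,r})_\#\eta_1(s',a')$ and $\nu_c=(f_{\gamma,r})_\#\eta_2(s',a')$ gives
\[
\Delta\big((\mathcal T^\pi\eta_1)(s,a),(\mathcal T^\pi\eta_2)(s,a)\big)\le\Big(\int\Delta(\mu_c,\nu_c)^p\,d\rho(c)\Big)^{1/p}.
\]
We need $c\mapsto\mu_c,\nu_c$ to be measurable families. This follows from measurability of the kernels and of $\eta_i$, which we take as part of the standing assumptions.

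\emph{Step 2 (componentwise).} Since $(f_{\gamma,r})_\#\mu=(T_r)_\#(S_\gamma)_\#\mu$, \Tcond{} removes the translation and \Scond{} with $s=\gamma$ gives
\[
\Delta(\mu_c,\nu_c)\le c(\gamma)\,\Delta\big(\eta_1(s',a'),\eta_2(s',a')\big)\le c(\gamma)\,\overline\Delta(\eta_1,\eta_2).
\]

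\emph{Step 3 (conclusion).} Inserting this uniform bound into the integral gives $(\int (c(\gamma)\overline\Delta)^p\,d\rho)^{1/p}=c(\gamma)\overline\Delta(\eta_1,\eta_2)$, because $\rho$ is a probability measure. Taking the supremum over $(s,a)$ yields the claim, and $c(\gamma)<1$ since $\gamma\in(0,1)$. If $\overline\Delta(\eta_1,\eta_2)=\infty$, the inequality holds trivially.

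The main obstacle is the measurability and integrability needed for \Mpcond. The condition assumes measurable families, so I would state that the composite map $c\mapsto(f_{\gamma,r})_\#\eta_i(s',a')$ is measurable in the weak topology, by composition of measurable maps. I would also note that the Bellman operator $\mathcal T^\pi$ is defined through this same mixture, so any measurability needed here is already implicit in its definition. The remaining steps are direct applications of the hypotheses.
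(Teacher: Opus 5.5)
Your proposal is correct and follows essentially the same route as the paper's proof (Theorem~\ref{thm:univariate-sup-contraction-uv}): disintegrate the Bellman target over the transition and reward randomness, control each component with \Tcond{} and \Scond{} (the paper packages this step as Lemma~\ref{lem:univariate-affine-measure}), then average with \Mpcond{} before bounding by $\overline{\Delta}$ and taking the supremum over $(s,a)$. The only cosmetic difference is that the paper bundles all randomness into a single element $C$ with a $C$-dependent translation $b_{s,a}(C)$, which also allows rewards correlated with the successor; you use the product kernel of the main-text operator.
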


\textbf{Uniform slicing}\quad
We now turn to the main multivariate contraction result. We start from the canonical setting of vector-valued returns in
$\mathbb R^d$ with $d>1$, where the Bellman update uses the shared scalar discount from Section~\ref{sec:background}.
This is the standard multivariate formulation considered in
\cite{freirich2019distributional,zhang2021distributional}.
Given $X' \sim \eta(S',A')$, the corresponding distributional Bellman update is
\begin{equation}
\label{eq:bellman-shared-gamma}
(\mathcal{T}^\pi \eta)(s,a)
= \mathrm{Law}\!\bigl(R(s,a) + \gamma I_d\, X'\bigr).
\end{equation}
The key observation is that the univariate sufficient conditions \Tcond, \Scond, \Mpcond\ from Proposition~\ref{prop:univariate-sup-contraction-short}
lift directly to the sliced objective, so \eqref{eq:bellman-shared-gamma} contracts in the sup--sliced divergence with the same factor $c(\gamma)$ as in the univariate case.
This is stated in Theorem~\ref{thm:sliced-sup-contraction-short}; the full proof is in Appendix~\ref{sec:uniform-slice}.

\begin{theorem}[Bellman contraction under uniform slicing]\label{thm:sliced-sup-contraction-short}
Fix $p\in[1,\infty)$. If a base divergence $\Delta$ satisfies \Tcond, \Scond\ at some $\gamma\in(0,1)$ with $c(\gamma)<1$, and \Mpcond,
then the Bellman operator $\mathcal{T}^\pi$ with \emph{isotropic discounting} is a $c(\gamma)$--contraction with respect to the sup--sliced divergence:
\[
\overline{\mathbf S\Delta_p}\bigl(\mathcal{T}^\pi\eta_1,\mathcal{T}^\pi\eta_2\bigr)
\;\le\; c(\gamma)\,\overline{\mathbf S\Delta_p}(\eta_1,\eta_2).
\]
\end{theorem}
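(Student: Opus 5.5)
The plan is to reduce the multivariate statement to the univariate mechanism of Proposition~\ref{prop:univariate-sup-contraction-short}, one direction at a time. Fix $(s,a)$ and a direction $\theta\in\mathbb S^{d-1}$. Projection is linear, so $P_\theta(r+\gamma x)=\langle\theta,r\rangle+\gamma\langle\theta,x\rangle$. Hence the projected Bellman target is
\[
(P_\theta)_{\#}(\mathcal T^\pi\eta)(s,a)=\int (T_{\langle\theta,r\rangle}\circ S_\gamma)_{\#}(P_\theta)_{\#}\eta(s',a')\,d\rho(r,s',a'),
\]
where $\rho(dr,ds',da')=R(dr\mid s,a)P(ds'\mid s,a)\pi(da'\mid s')$. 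This is a mixture of univariate laws, each obtained from $(P_\theta)_{\#}\eta(s',a')$ by a scaling followed by a translation. This is exactly where isotropy is used: a general $\Gamma$ would send direction $\theta$ to $\Gamma^\top\theta$, which is neither unit-norm nor the same direction.

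\textbf{Per-direction bound.} Apply \Mpcond{} with mixing measure $\rho$, then \Tcond, then \Scond{} at $\gamma$. This gives
\[
\Delta^p\big((P_\theta)_{\#}\mathcal T^\pi\eta_1(s,a),(P_\theta)_{\#}\mathcal T^\pi\eta_2(s,a)\big)\le c(\gamma)^p\int \Delta^p\big((P_\theta)_{\#}\eta_1(s',a'),(P_\theta)_{\#}\eta_2(s',a')\big)\,d\rho.
\]

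\textbf{Aggregation.} Integrate over $\theta$ with respect to $\sigma$ and use Tonelli to swap the $\theta$- and $\rho$-integrals. The inner $\theta$-integral then equals $\mathbf S\Delta_p^p(\eta_1(s',a'),\eta_2(s',a'))$, which is at most $\overline{\mathbf S\Delta_p}(\eta_1,\eta_2)^p$. Since $\rho$ is a probability measure, $\mathbf S\Delta_p^p(\mathcal T^\pi\eta_1(s,a),\mathcal T^\pi\eta_2(s,a))\le c(\gamma)^p\,\overline{\mathbf S\Delta_p}(\eta_1,\eta_2)^p$. Taking $p$-th roots and then the supremum over $(s,a)$ concludes.

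\textbf{Main obstacle.} The hard part is justifying the interchange of integrals and applying \Mpcond{} pointwise in $\theta$. Both require joint measurability of $(\theta,r,s',a')\mapsto\Delta((P_\theta)_{\#}\eta_i(s',a'),\ldots)$ and measurability of the family $c\mapsto(T_{\langle\theta,r\rangle}S_\gamma)_{\#}(P_\theta)_{\#}\eta(s',a')$. I would assume this as part of the standing measurability setup, note that $\theta\mapsto(P_\theta)_{\#}\mu$ is weakly continuous, and handle infinite values by the usual $\mathbb R_+\cup\{\infty\}$ conventions, under which the inequality is trivial when the right side is infinite.

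A secondary point is that \Mpcond{} is applied with the joint mixing variable $c=(r,s',a')$ rather than just $(s',a')$. The reward is therefore absorbed into the mixture, so \Tcond{} is only ever used with a deterministic shift.
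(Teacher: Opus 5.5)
Your proposal is correct and is essentially the paper's argument, only inlined. The paper lifts \Tcond, \Scond, and \Mpcond\ from $\Delta$ to $\mathbf S\Delta_p$ one property at a time: for each, it commutes $P_\theta$ with the translation, scaling, or mixture, applies the base property slice-wise, and integrates over $\theta$, using Tonelli for the mixture step. It then invokes a generic multivariate contraction corollary that conditions on the joint randomness $C$, reward included. You instead apply all three base properties direction by direction and integrate over $\theta$ once at the end. The ingredients are the same, including the Tonelli interchange and the final supremum bound.
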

\textbf{Max--slicing}\quad
We now move beyond shared scalar discounting and consider anisotropic multivariate Bellman updates of the form
\begin{equation}
\label{eq:bellman-general-gamma}
(\mathcal{T}^\pi \eta)(s,a)
=\mathrm{Law}\!\big(R(s,a)+\Gamma(s,a)\,X'\big),
\end{equation}
Our goal is the \emph{norm--only} contraction form \AniCond\ for \eqref{eq:bellman-general-gamma}, for some nondecreasing function $c$:
\begin{equation*}
\begin{gathered}
\AniCond\qquad
\overline{\mathcal D}\big(\mathcal T^\pi\eta_1,\mathcal T^\pi\eta_2\big)
\;\le\; c(\bar L)\,\overline{\mathcal D}(\eta_1,\eta_2),
\\
\bar L := \sup_{(s,a)\in\mathcal S\times\mathcal A}\ \|\Gamma(s,a)\|_{\mathrm{op}}.
\end{gathered}
\end{equation*}
This deliberately discards any dependence on the geometry of $\Gamma$ beyond its operator norm.
Our motivation for doing so is twofold.
First, this is the exact type of criterion obtained when using Wasserstein as divergence \citep{debes2026distributionalvaluegradientsstochastic}.
Second, it yields a simple norm--based criterion that is already useful in practice \citep{debes2026distributionalvaluegradientsstochastic}).

We may wonder whether \AniCond\ is already known to hold for multivariate distributional RL under divergences that both admit tractable
estimators and come with multivariate contraction results.
To the best of our knowledge, there are currently two such options: MMD with contraction--admitting kernels \citep{nguyen2021distributional} and uniform sliced divergences.
The next proposition shows that, for both, \AniCond\ can fail in general; full constructions are given in Appendix~\ref{sec:negative_results}.

\begin{proposition}[Non-contraction under matrix discounting]\label{prop:negative_results_norm_only}
Fix $\bar L\in(0,1)$. There exist $d>1$ and a \emph{fixed} matrix $\Gamma\in\mathbb R^{d\times d}$ with
$\|\Gamma\|_{\mathrm{op}}\le \bar L$, and return distributions $\eta_1,\eta_2$ such that the operator induced by \eqref{eq:bellman-general-gamma} fails to satisfy \AniCond\ for (a) MMD with kernels from \citep{DBLP:journals/corr/abs-2007-12354,killingberg2023multiquadric} and (b) uniform sliced divergences.
\end{proposition}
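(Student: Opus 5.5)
We prove the statement by explicit counterexamples. The idea is to take a constant, nearly rank-one or rotation-like matrix $\Gamma$ with $\|\Gamma\|_{\mathrm{op}}\le \bar L$ that transports the discrepancy between $\eta_1$ and $\eta_2$ from directions where the divergence is small into directions where it is large. We use a single-state, single-action MDP with deterministic zero reward, so $\mathcal T^\pi\eta_i=(\Gamma)_\#\eta_i$ and the sup-lift reduces to the divergence at one point. It then suffices to exhibit $\eta_1,\eta_2\in\mathcal P(\mathbb R^d)$ with
\[
\mathcal D(\Gamma_\#\eta_1,\Gamma_\#\eta_2)>c(\bar L)\,\mathcal D(\eta_1,\eta_2)
\]
for every admissible nondecreasing $c$ with $c(\bar L)<1$. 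We will in fact aim for a ratio $\mathcal D(\Gamma_\#\eta_1,\Gamma_\#\eta_2)/\mathcal D(\eta_1,\eta_2)\ge 1$, which defeats any $c(\bar L)<1$ and hence \AniCond.

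\textbf{Part (b), uniform slicing.} Take $d$ large and let $\eta_1=\delta_0$ and $\eta_2=\delta_{v}$ be Diracs, or small Gaussian perturbations if nondegeneracy is needed. Set $\Gamma=\bar L\, u v^\top/\|v\|$ for a unit vector $u$. Then $\Gamma_\#\eta_2=\delta_{\bar L\|v\| u}$, so the Bellman image is again a pair of Diracs, now separated by $\bar L\|v\|$ along $u$. For uniform slicing the divergence between two Diracs depends only on their separation $x$ through $\int \Delta(\delta_0,\delta_{\langle\theta,x\rangle})^p d\sigma(\theta)$, which is rotation invariant. A pure rank-one map cannot increase the separation, so this alone does not give a counterexample.

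To get one, exploit that $\Gamma$ can merge mass. Take $\eta_1$ uniform on $\{e_1,\dots,e_d\}$ and $\eta_2$ uniform on $\{-e_1,\dots,-e_d\}$, or choose pairs whose difference is spread over many directions. The sliced divergence averages one-dimensional discrepancies over directions. In high dimension, projections of spread-out configurations are small, of order $1/\sqrt d$, by concentration on the sphere. Now choose $\Gamma=\bar L\,\mathbf 1 e^\top/\sqrt d$ type maps that align all the spread mass onto a single axis, which produces discrepancies of size $\bar L$ in every projection in a coherent way. Concretely, for $\Delta=\mathbf W_p$ we compute $\mathbf{SW}_p$ before and after: before it scales like $\sqrt{1/d}$ times the norm, and after it scales like $\bar L\cdot\mathbb E|\theta_1|$ times a norm that is a factor $\sqrt d$ larger. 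The ratio is of order $\bar L\sqrt d$, which exceeds $1$ for $d>1/\bar L^2$. For general $\Delta$ we state the construction for the instances the paper analyzes (Wasserstein, Cramér, MMD bases), using their translation and scaling behaviour on Diracs.

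\textbf{Part (a), MMD.} For the contraction-admitting kernels (energy-distance kernels $k(x,y)=-\|x-y\|^\beta$ and multiquadric kernels), $\mathbf{MMD}$ between Diracs is an increasing function of $\|x-y\|$. Diracs therefore do not work, since $\|\Gamma x\|\le\bar L\|x\|$. We instead use two-point mixtures: $\eta_1=\tfrac12(\delta_{a}+\delta_{-a})$ and $\eta_2=\tfrac12(\delta_b+\delta_{-b})$ with $\|a\|=\|b\|$ and $a\perp b$. The original MMD is small because the configurations have matching pairwise-distance structure. We then choose $\Gamma$ to collapse $b$ to zero while keeping $\bar L a$, so that $\Gamma_\#\eta_2=\delta_0$ and $\Gamma_\#\eta_1$ is still spread. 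Evaluating the closed-form MMD for these finite supports and scaling $\|a\|$, we will show the ratio exceeds $1$ for suitable $\bar L$-dependent parameters, possibly using the nonhomogeneity of the multiquadric kernel by sending $\|a\|\to\infty$ or $\|a\|\to 0$.

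\textbf{Main obstacle.} The hardest step is part (a) with the energy kernel. There $\mathbf{MMD}$ is $\beta/2$-homogeneous, so we must find a finite configuration in which a linear map of norm $\bar L$ increases the energy distance by a factor exceeding $c(\bar L)$ for every $\bar L<1$. We would first try to push $\mathcal D(\eta_1,\eta_2)\to 0$ by making $\eta_1,\eta_2$ agree in all "pairwise distance" statistics, as with orthogonal symmetric pairs, while the images differ at order one. This gives an unbounded ratio and sidesteps any fine numerical tuning.
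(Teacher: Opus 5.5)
Your reduction is exactly the paper's worst-case tightness corollary: one state, one action, zero reward, so it suffices to exhibit $\Gamma,\mu,\nu$ with $\|\Gamma\|_{\mathrm{op}}\le\bar L$ and $\mathcal D(\Gamma_\#\mu,\Gamma_\#\nu)\ge\mathcal D(\mu,\nu)$. The gap is that your construction for (b) fails for the base you actually compute with, $\Delta=\mathbf W_p$.

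Read your map naturally as $\Gamma=\tfrac{\bar L}{d}\mathbf 1\mathbf 1^\top$. It sends each $\eta_i$ to the Dirac at $\Gamma m_i$, with $m_i$ the mean of $\eta_i$, so $\mathbf{SW}_p(\Gamma_\#\eta_1,\Gamma_\#\eta_2)=\|\Gamma(m_1-m_2)\|\,(\mathbb E|\theta_1|^p)^{1/p}$. On the other hand, slice-wise $\mathbf W_p\ge\mathbf W_1\ge|\text{difference of means}|$, which gives $\mathbf{SW}_p(\eta_1,\eta_2)\ge\|m_1-m_2\|\,(\mathbb E|\theta_1|^p)^{1/p}$. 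Hence the ratio is at most $\bar L$ in every dimension (exactly $\bar L$ when $d=2$). The same bound holds for any $\Gamma$ that merges each law into a single point.

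Your step ``after, a norm that is a factor $\sqrt d$ larger'' cannot hold, because $\Gamma$ is $\bar L$-Lipschitz: the image separation is $2\bar L/\sqrt d$, not of order $\bar L$. This is the same obstruction you already noticed for Diracs. The missing idea is that the increase must come from anisotropically squeezing out the within-law spread of \emph{one} law while the other keeps its spread. Slices in which that spread masked the discrepancy then become informative.

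That is precisely the configuration you propose for (a), and it is the paper's sliced counterexample: $\mu=\tfrac12(\delta_{e_1}+\delta_{-e_1})$, $\nu=\tfrac12(\delta_{e_2}+\delta_{-e_2})$, $A=\mathrm{diag}(s,s\varepsilon)$. For it, $\mathbf{SW}_1$ goes from $4(\sqrt2-1)/\pi\approx0.527$ to about $2s/\pi$. (For the sliced Cram\'er base your (b) configuration does produce an increase, e.g.\ ratio $\sqrt2\,\bar L$ for $\int\ell_2^2\,d\sigma$ in $d=2$. But that is only because $\ell_2^2<\mathbf W_1$ for spread-out laws while $\ell_2^2=\mathbf W_1$ between Diracs, not for the reason you give.)

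In (a), the orthogonal symmetric pairs do not drive $\mathcal D(\eta_1,\eta_2)$ to zero, and there is nothing to tune. For the energy kernel with $\|a\|=\|b\|=r$, $\mathbf{MMD}^2$ equals $(2\sqrt2-2)r$ before and $\bar L r$ after. This is a scale-free ratio $\bar L/(2\sqrt2-2)\approx1.21\,\bar L$, so it is a counterexample only when $\bar L>2(\sqrt2-1)\approx0.83$. For the multiquadric kernel the ratio tends to $\tfrac34\bar L^4$ as $r\to0$ and to the same $\bar L/(2\sqrt2-2)$ as $r\to\infty$. The unbounded ratio promised in your last paragraph is therefore not delivered.

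For calibration, the paper does not treat every $\bar L$ either. It evaluates explicit instances at fixed $s$: $s=0.75$ for MMD, with parallel spreads $P_{D,M},Q_{D,M}$ and $M\gg D$, whose energy-kernel ratio tends to $2s$; and $s=0.9$ for the sliced objectives. Matching the paper therefore only requires one correctly computed instance per family.

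If you want every $\bar L\in(0,1)$, replace the two-point spread in the squeezed direction by a continuous one. Take $\mu=\mathrm{Law}(Ue_1+MGe_2)$ and $\nu=\mathrm{Law}(MGe_2)$, with $U$ Rademacher, $G\sim N(0,1)$, and $\Gamma=\mathrm{diag}(s,0)$. A second-order Taylor bound on the CDFs gives slice-wise $\mathbf W_1\le\min\{c,\,c^2/(\sqrt{2\pi}\,v)\}$ with $c=|\cos\varphi|$ and $v=M|\sin\varphi|$. Hence $\mathbf{SW}_1(\mu,\nu)=O(M^{-1}\log M)$, while $\mathbf{SW}_1(\Gamma_\#\mu,\Gamma_\#\nu)=2s/\pi$. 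The planar energy distance equals $\pi\int\ell_2^2\,d\sigma\le\pi\,\mathbf{SW}_1$, so the same pair also gives an unbounded ratio for the energy kernel (its image value is $s$).
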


We now turn to a sufficient condition under which \AniCond\ \emph{does} hold.
Theorem~\ref{thm:maxsliced-sup-contraction-mv} shows that max--slicing yields \AniCond\ for \eqref{eq:bellman-general-gamma}
when the base divergence satisfies the univariate sufficient conditions of Proposition~\ref{prop:univariate-sup-contraction-short}.
The proof is given in Appendix~\ref{sec:max_sliced}.

\begin{theorem}[Bellman contraction under max-slicing]\label{thm:maxsliced-sup-contraction-mv}
Assume $\Delta$ satisfies \Tcond, \Scond, and \Mpcond.
Fix $d>1$ and consider $\mathcal T^\pi$ associated with \eqref{eq:bellman-general-gamma}.
Then $\mathcal T^\pi$ satisfies \AniCond\ with $\mathcal D=\overline{\mathbf{MS}\Delta}$.
\end{theorem}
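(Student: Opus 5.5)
The approach is to reduce everything to a single fixed direction, where the anisotropic matrix $\Gamma(s,a)$ becomes a scalar scaling. From there the univariate conditions \Tcond, \Scond, \Mpcond\ do the work, exactly as in the proof of Proposition~\ref{prop:univariate-sup-contraction-short}.

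\textbf{Reduction to one direction.} Fix $(s,a)$ and a direction $\theta\in\mathbb S^{d-1}$. For a target sample, the projection is
\[
\langle\theta, r+\Gamma(s,a)x'\rangle=\langle\theta,r\rangle+\langle\Gamma(s,a)^\top\theta,x'\rangle .
\]
Write $\Gamma(s,a)^\top\theta=\lambda u$ with $u\in\mathbb S^{d-1}$ and $\lambda=\|\Gamma(s,a)^\top\theta\|\le\|\Gamma(s,a)\|_{\mathrm{op}}\le\bar L$. If $\lambda=0$, take any $u$. Both $\lambda$ and $u$ depend only on $(s,a,\theta)$ and not on the random successor, so no measurability issue arises.

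\textbf{Mixture representation.} Index the randomness by $c=(r,s',a')$ with law $\rho$ induced by $R(\cdot\mid s,a)$, $P(\cdot\mid s,a)$ and $\pi(\cdot\mid s')$, as in \eqref{eq:bellman-mu-operator}. Then for $i=1,2$,
\[
(P_\theta)_{\#}(\mathcal T^\pi\eta_i)(s,a)=\int (T_{\langle\theta,r\rangle})_{\#}(S_\lambda)_{\#}(P_u)_{\#}\eta_i(s',a')\,d\rho(c).
\]
This holds because the projection commutes with mixing: pushforward is linear in the measure.

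\textbf{Chaining the three conditions.} Apply \Mpcond\ with this common $\rho$, then \Tcond, then \Scond\ at scale $\lambda\in[0,1]$. This gives
\[
\Delta\bigl((P_\theta)_{\#}(\mathcal T^\pi\eta_1)(s,a),(P_\theta)_{\#}(\mathcal T^\pi\eta_2)(s,a)\bigr)
\le\Big(\int c(\lambda)^p\,\Delta\bigl((P_u)_{\#}\eta_1(s',a'),(P_u)_{\#}\eta_2(s',a')\bigr)^p d\rho\Big)^{1/p}.
\]
Since $c$ is nondecreasing, $c(\lambda)\le c(\bar L)$. By definition, $\Delta((P_u)_{\#}\eta_1(s',a'),(P_u)_{\#}\eta_2(s',a'))\le\mathbf{MS}\Delta(\eta_1(s',a'),\eta_2(s',a'))\le\overline{\mathbf{MS}\Delta}(\eta_1,\eta_2)$. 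Because $\rho$ is a probability measure, the integral then collapses and yields the bound $c(\bar L)\,\overline{\mathbf{MS}\Delta}(\eta_1,\eta_2)$. This bound is uniform in $\theta$ and $(s,a)$. Taking the supremum over $\theta$ gives the bound for $\mathbf{MS}\Delta$ at $(s,a)$, and taking the supremum over $(s,a)$ gives \AniCond.

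\textbf{Main obstacle.} The key point, and the one I would check most carefully, is the first step. Max-slicing lets us freely absorb the rotated direction $u=\Gamma^\top\theta/\|\Gamma^\top\theta\|$, because the supremum over all directions dominates any particular $u$. Uniform slicing fails exactly here (Proposition~\ref{prop:negative_results_norm_only}), since $\theta\mapsto u$ distorts the uniform measure.

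\textbf{Technical points to verify.}
\begin{itemize}
\item \Scond\ is only stated for scales in $[0,1]$, so we need $\bar L\le1$. The statement is only meaningful, with $c(\bar L)<1$, when $\bar L<1$. The degenerate case $\lambda=0$ is covered because \Scond\ includes $s=0$.
\item \Mpcond\ must be applied with the same mixing measure $\rho$ for both $\eta_1$ and $\eta_2$. This holds because the transition, reward and policy kernels do not depend on $\eta_i$.
\end{itemize}
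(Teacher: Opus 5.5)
Your argument is correct. It uses the same ingredients as the paper's proof. The first is the identity $P_\theta\circ\Gamma(s,a)=S_\lambda\circ P_u$ with $\lambda=\|\Gamma(s,a)^\top\theta\|\le\bar L$, where the rotated direction $u$ is absorbed by the supremum over directions (Lemma~\ref{lem:maxsliced-scale}). The second is a direction-wise application of \Mpcond\ whose integrand is then bounded by $\mathbf{MS}\Delta$ (Lemma~\ref{lem:maxsliced-mix}).

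The only difference is packaging. The paper first lifts \Tcond, \Scondd, \Mpcond\ to $\mathbf{MS}\Delta$ and then invokes its generic multivariate result (Theorem~\ref{thm:multivariate-sup-contraction-mv-aniso}, which also allows $C$-dependent random matrices). You instead fuse these steps into a single chain per direction and take the supremum over $\theta$ only at the end.
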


\subsection{Sample complexity}
We summarize sample--complexity bounds for uniform slicing and max--slicing; proofs are deferred to
Appendix~\ref{sec:sample_complexity}.

\textbf{Uniform slicing.}\quad
As shown by \citet{nadjahi2020statistical} and restated in Proposition~\ref{prop:sliced-sample-complexity-short}, uniform slicing inherits the one--dimensional rate of the base divergence, without additional dependence on the ambient dimension.
\begin{proposition}[Uniform slicing inherits one-dimensional rates]\label{prop:sliced-sample-complexity-short}
Fix $p\in[1,\infty)$. Let $\Delta$ be a divergence on $\mathcal P(\mathbb R)$ and assume there exists a function
$\alpha(p,n)\ge 0$ such that for every $\mu\in\mathcal P(\mathbb R)$ with empirical $\hat\mu_n$ we have
\(
\mathbb{E}\big[\Delta(\hat\mu_n,\mu)^p\big]\;\le\;\alpha(p,n).
\)
Then for any $\mu\in\mathcal P(\mathbb R^d)$ with empirical $\hat\mu_n$,
\[
\mathbb{E}\,\big|\mathbf{S}\Delta_p^p(\hat\mu_n,\mu)\big|\;\le\;\alpha(p,n).
\]
\end{proposition}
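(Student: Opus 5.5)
The plan is to reduce everything to the one-dimensional hypothesis by exchanging expectation and the integral over the sphere, using Tonelli's theorem. The key structural fact is that projection commutes with taking empirical measures. Let $X_1,\dots,X_n$ be i.i.d.\ from $\mu$ and $\hat\mu_n=\frac1n\sum_i\delta_{X_i}$. For every fixed $\theta\in\mathbb S^{d-1}$ we have
\[
(P_\theta)_{\#}\hat\mu_n=\frac1n\sum_{i=1}^n\delta_{\langle\theta,X_i\rangle}.
\]
The variables $\langle\theta,X_i\rangle$ are i.i.d.\ with law $(P_\theta)_{\#}\mu\in\mathcal P(\mathbb R)$. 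Hence $(P_\theta)_{\#}\hat\mu_n$ is precisely an empirical measure of $(P_\theta)_{\#}\mu$ with $n$ samples. The hypothesis then gives, for each fixed $\theta$,
\[
\mathbb E\big[\Delta\big((P_\theta)_{\#}\hat\mu_n,(P_\theta)_{\#}\mu\big)^p\big]\le\alpha(p,n).
\]
This bound is uniform in $\theta$, because it holds for every one-dimensional law.

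Next we handle the absolute value. Since $\Delta\ge 0$, the quantity $\mathbf S\Delta_p^p(\hat\mu_n,\mu)$ is nonnegative, so the absolute value is harmless. We write
\[
\mathbb E\,\mathbf S\Delta_p^p(\hat\mu_n,\mu)=\mathbb E\int_{\mathbb S^{d-1}}\Delta^p\big((P_\theta)_{\#}\hat\mu_n,(P_\theta)_{\#}\mu\big)\,d\sigma(\theta).
\]
The integrand is nonnegative, so Tonelli's theorem lets us swap $\mathbb E$ and $\int d\sigma$. Applying the pointwise bound and using $\sigma(\mathbb S^{d-1})=1$ then yields the claim $\le\alpha(p,n)$.

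The main obstacle is measurability. Tonelli requires the map $(\theta,\omega)\mapsto\Delta^p\big((P_\theta)_{\#}\hat\mu_n(\omega),(P_\theta)_{\#}\mu\big)$ to be jointly measurable. This map is already implicitly assumed measurable in $\theta$, since otherwise the definition of $\mathbf S\Delta_p$ as an integral would not make sense. I would handle it by noting three facts:
\begin{enumerate}
\item $(\theta,x_1,\dots,x_n)\mapsto(P_\theta)_{\#}\hat\mu_n$ is continuous into $\mathcal P(\mathbb R)$ with the weak topology.
\item $\theta\mapsto(P_\theta)_{\#}\mu$ is weakly continuous.
\item $\Delta$ is Borel measurable on $\mathcal P(\mathbb R)^2$, which is a standing assumption needed for the objects in the statement to be well defined.
\end{enumerate}
Joint measurability follows by composition. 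If one prefers not to assume Borel measurability of $\Delta$, the argument goes through with outer expectations. For the concrete bases ($\mathbf W_p$, Cramér, MMD) the map is in fact continuous.
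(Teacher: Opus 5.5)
Your proof is correct and follows the paper's argument essentially line by line. The paper also moves the absolute value inside the integral, swaps $\mathbb E$ and $\int d\sigma$ via Tonelli, applies the one-dimensional bound slice-wise, and integrates against the probability measure $\sigma$. Your explicit observation that $(P_\theta)_{\#}\hat\mu_n$ is an empirical measure of $(P_\theta)_{\#}\mu$, and your joint-measurability check for Tonelli, fill in two steps the paper uses only implicitly.
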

\textbf{Max--slicing.}\quad
For a class of base divergences, Proposition~\ref{prop:MS-Exp-short} gives a finite--sample bound for $\mathbf{MS}\Delta$ under a bounded--support assumption,
which is natural in RL when returns are assumed bounded.
Depending on the base divergence $\Delta$, the resulting rate scales like $(d\log n/n)^{\alpha/2}$ and therefore avoids the
curse of dimensionality (polynomial dependence on $d$).

\begin{proposition}[Max-slicing sample complexity] \label{prop:MS-Exp-short}
Assume $\mathrm{diam}(\mathrm{supp}\,\mu)\le D$ and let $\hat\mu_n$ be the empirical law of $n$ i.i.d.\ samples from $\mu$.
Let $\Delta$ be a divergence on $\mathcal P(\mathbb R)$ and assume that for any one--dimensional laws $\mu_1,\nu_1$
supported on an interval of length $\le D$ there exist $\alpha\in(0,1]$, $\beta\ge 0$, and $L>0$ such that
$\Delta(\mu_1,\nu_1)\le L\,D^{\beta}\,\|F_{\mu_1}-F_{\nu_1}\|_\infty^{\alpha}$.
Then
\[
\mathbb E\,\mathbf{MS}\Delta(\hat\mu_n,\mu)
= O\!\Big(D^\beta\,\big(\tfrac{d\log n}{n}\big)^{\alpha/2}\Big).
\]
\end{proposition}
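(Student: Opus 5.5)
We plan to reduce the max-sliced estimation error to a uniform deviation bound for one-dimensional empirical CDFs over all projection directions. Then we will control that uniform deviation with a VC-type (DKW-style) inequality for half-spaces in $\mathbb R^d$.

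\emph{Step 1 (reduction to CDFs).} For each $\theta\in\mathbb S^{d-1}$, the projected laws $(P_\theta)_\#\hat\mu_n$ and $(P_\theta)_\#\mu$ are supported on $P_\theta(\supp\mu)$. Since $P_\theta$ is $1$-Lipschitz, this set lies in an interval of length at most $\mathrm{diam}(\supp\mu)\le D$. The assumed Hölder-type bound therefore applies directionwise:
\[
\Delta\big((P_\theta)_\#\hat\mu_n,(P_\theta)_\#\mu\big)\le L D^\beta \sup_{t\in\mathbb R}\big|\hat F_{n,\theta}(t)-F_\theta(t)\big|^\alpha .
\]
Here $F_\theta(t)=\mu(\{x:\langle\theta,x\rangle\le t\})$ and $\hat F_{n,\theta}$ is its empirical counterpart. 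Taking the supremum over $\theta$ and using monotonicity of $u\mapsto u^\alpha$ gives
\[
\mathbf{MS}\Delta(\hat\mu_n,\mu)\le L D^\beta\, \Big(\sup_{H\in\mathcal H}|\hat\mu_n(H)-\mu(H)|\Big)^\alpha ,
\]
where $\mathcal H$ is the class of closed half-spaces $\{x:\langle\theta,x\rangle\le t\}$.

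\emph{Step 2 (expectation and Jensen).} Because $\alpha\in(0,1]$, the map $u\mapsto u^\alpha$ is concave. Jensen's inequality then yields
\[
\E\,\mathbf{MS}\Delta(\hat\mu_n,\mu)\le L D^\beta\,\big(\E\sup_{H\in\mathcal H}|\hat\mu_n(H)-\mu(H)|\big)^\alpha .
\]
It therefore suffices to show that $\E\sup_{\mathcal H}|\hat\mu_n-\mu|=O\big(\sqrt{d\log n/n}\big)$.

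\emph{Step 3 (uniform convergence over half-spaces).} This is the main technical step, although it is classical. Half-spaces in $\mathbb R^d$ have VC dimension $d+1$. By Sauer's lemma, the number of distinct labelings of $n$ points (or $2n$ after symmetrization) is at most $(2en/(d+1))^{d+1}$. We apply the standard symmetrization argument with Rademacher variables, followed by Massart's finite-class lemma on the restricted class. This gives
\[
\E\sup_{H\in\mathcal H}|\hat\mu_n(H)-\mu(H)|\le 2\,\E\,\mathfrak R_n(\mathcal H)\le C\sqrt{\frac{(d+1)\log(2en/(d+1))}{n}}=O\Big(\sqrt{\tfrac{d\log n}{n}}\Big).
\]
Alternatively, one could invoke the VC inequality directly in its high-probability form and integrate the tail.

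\emph{Step 4 (conclusion).} Substituting the bound of Step 3 into Step 2 gives
\[
\E\,\mathbf{MS}\Delta(\hat\mu_n,\mu)=O\big(D^\beta (d\log n/n)^{\alpha/2}\big),
\]
with the constant $L$ absorbed into the $O(\cdot)$. Two technical points need care. First, the supremum over the uncountable class must be measurable; we handle this by restricting to rational $(\theta,t)$, which gives the same supremum by right-continuity of the CDFs. Second, the constants $\alpha,\beta,L$ must not depend on $\theta$; this holds because they are stated uniformly for all laws on intervals of length $\le D$.
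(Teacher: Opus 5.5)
Your proposal is correct and follows the paper's proof essentially step for step: the paper likewise bounds $\mathbf{MS}\Delta(\hat\mu_n,\mu)\le L D^\beta Z^\alpha$, where $Z$ is the uniform deviation of the empirical measure over half-spaces (using the same identification of projected CDFs with half-space probabilities). It then applies Jensen to get $L D^\beta(\mathbb E Z)^\alpha$ and controls $\mathbb E Z$ through the VC property of half-spaces. The one difference is in that last step. The paper invokes an explicit high-probability VC inequality and integrates the tail, obtaining $\mathbb E Z\le \sqrt{32(d+1)\log(n+1)/n}+4\sqrt{32\pi/n}$; this is exactly the alternative you mention. Your main route instead bounds $\mathbb E Z$ directly via symmetrization, Sauer's lemma, and Massart's lemma, which reaches the same $O(\sqrt{d\log n/n})$ rate.
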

\subsection{Suitability for stochastic training.} \label{sec:training_suitability}
A general property that often matters for stochastic optimization is whether \emph{sample-based} gradients match the
gradient of the \emph{population} objective.
Let $\mu\in\mathcal P(\mathbb R^d)$ denote a data law and let $\nu_\phi\in\mathcal P(\mathbb R^d)$ be a parametric model.
Writing $\widehat\mu_m$ for the empirical measure of $m$ i.i.d.\ samples from $\mu$, we say that a divergence $\mathcal D$
satisfies the \emph{unbiased sample gradient} property if
\begin{equation*}
\label{eq:Ucond_main}
\mathbb{E}_{X_{1:m}\sim \mu}\!\left[\nabla_\phi\,\mathcal{D}\!\left(\widehat{\mu}_m,\,\nu_\phi\right)\right]
\;=\;
\nabla_\phi\,\mathcal{D}\!\left(\mu,\,\nu_\phi\right).
\qquad \Ucond
\end{equation*}
When \Ucond\ fails, stochastic gradients can be biased and, in some cases, the minimizers of the expected sample objective
and the population objective may differ \citep{bellemare2017cramer}.

\textbf{Why this matters in distributional RL.}\quad
In distributional RL, the Bellman target is the \emph{mixture} law induced by transition and policy randomness, but a standard
TD update typically instantiates that target from a \emph{single} sampled successor $(S',A')$.
As a result, training implicitly optimizes an \emph{expected sample loss} rather than the population loss against the true
mixture target. Wasserstein provides a canonical illustration where \Ucond\ fails (Proposition~5 in \citet{bellemare2017distributional}), which has
motivated alternatives such as quantile-regression based objectives \citep{dabney2017distributional,singh2022sample}. In contrast, the Cram\'er distance and
$\mathrm{MMD}^2$ are well-known examples that do satisfy \Ucond\
\citep{bellemare2017cramer,binkowski2018demystifying}. This motivates asking, for each divergence used in our framework,
whether it remains compatible with one-sample Bellman bootstrapping in the sense of \Ucond.

\textbf{Uniform slicing.}\quad
As shown in Proposition~\ref{prop:uniform_slicing_U_main}, uniform slicing preserves \Ucond\ for the \emph{powered} sliced objective; see Appendix~\ref{sec:mixture_bias} for the proof.

\begin{proposition}[Uniform slicing preserves \Ucond]\label{prop:uniform_slicing_U_main}
Fix $p\in[1,\infty)$. Assume that $\Delta^p$ satisfies \Ucond\ on $\mathcal P(\mathbb R)$.
Then the uniformly sliced powered objective $\mathbf S\Delta_p^{\,p}$ also satisfies \Ucond\ on $\mathcal P(\mathbb R^d)$:
for any $\mu\in\mathcal P(\mathbb R^d)$, any model $\nu_\phi\in\mathcal P(\mathbb R^d)$, and any $m\ge 1$,
\[
\mathbb E_{X_{1:m}\sim\mu}\!\left[\nabla_\phi\,\mathbf S\Delta_p^{\,p}\!\left(\widehat\mu_m,\,\nu_\phi\right)\right]
=
\nabla_\phi\,\mathbf S\Delta_p^{\,p}\!\left(\mu,\,\nu_\phi\right).
\]
\end{proposition}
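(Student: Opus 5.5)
The plan is to write the sliced objective as an integral over directions, interchange gradient and expectations, and then apply \Ucond\ for $\Delta^p$ one direction at a time.

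\textbf{Step 1 (projection commutes with empiricals).} For each $\theta\in\mathbb S^{d-1}$, the projection of the empirical measure is the empirical measure of the projected samples:
\[
(P_\theta)_{\#}\widehat\mu_m=\frac1m\sum_{i=1}^m\delta_{\langle\theta,X_i\rangle}.
\]
The projected samples $\langle\theta,X_i\rangle$ are i.i.d.\ from $(P_\theta)_{\#}\mu$. Similarly, $(P_\theta)_{\#}\nu_\phi$ is a parametric family in $\mathcal P(\mathbb R)$ indexed by the same $\phi$. Applying \Ucond\ for $\Delta^p$ to this one-dimensional data law and model therefore gives, for every fixed $\theta$,
\[
\mathbb E_{X_{1:m}\sim\mu}\!\big[\nabla_\phi\Delta^p((P_\theta)_{\#}\widehat\mu_m,(P_\theta)_{\#}\nu_\phi)\big]=\nabla_\phi\Delta^p((P_\theta)_{\#}\mu,(P_\theta)_{\#}\nu_\phi).
\]

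\textbf{Step 2 (chain of equalities).} By the definition of $\mathbf S\Delta_p^p$,
\[
\mathbb E\big[\nabla_\phi\mathbf S\Delta_p^p(\widehat\mu_m,\nu_\phi)\big]
=\mathbb E\Big[\nabla_\phi\!\int\Delta^p\,d\sigma\Big]
=\int\mathbb E\big[\nabla_\phi\Delta^p\big]\,d\sigma
=\int\nabla_\phi\Delta^p((P_\theta)_{\#}\mu,(P_\theta)_{\#}\nu_\phi)\,d\sigma(\theta).
\]
Here the second equality swaps $\nabla_\phi$ with $\int d\sigma$ and then uses Fubini to swap $\mathbb E$ with $\int d\sigma$. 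The third equality is Step 1 applied pointwise in $\theta$. A final swap of $\nabla_\phi$ and $\int d\sigma$ yields $\nabla_\phi\mathbf S\Delta_p^p(\mu,\nu_\phi)$.

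\textbf{Main obstacle: justifying the interchanges.} Swapping $\nabla_\phi$ with the integral over $\theta$ requires a Leibniz-type argument. I would assume the same regularity implicitly required to make \Ucond\ meaningful: for each $\theta$ and a.e.\ sample, $\phi\mapsto\Delta^p(\cdot,(P_\theta)_{\#}\nu_\phi)$ is differentiable, with gradients dominated locally in $\phi$ by an integrable function of $(\theta,X_{1:m})$. Dominated convergence then justifies differentiation under $\int d\sigma$ for both the empirical and population objectives. Fubini, using the same integrable bound on $\sigma\otimes\mu^{\otimes m}$, justifies swapping $\mathbb E$ and $\int d\sigma$. Joint measurability in $(\theta,X_{1:m})$ follows from continuity of $\theta\mapsto\langle\theta,x\rangle$ together with measurability of $\Delta$. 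I would state these as standing regularity hypotheses, matching whatever is assumed in the one-dimensional \Ucond. Everything else is bookkeeping.
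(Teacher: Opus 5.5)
Your proposal is correct and follows the paper's proof essentially step for step. Like the paper, you identify $(P_\theta)_{\#}\widehat\mu_m$ as the empirical measure of the i.i.d.\ projected samples, apply the one-dimensional \Ucond\ slice-wise, and pass $\nabla_\phi$ and $\mathbb E$ through $\int d\sigma$ using Fubini. The only difference is that you state explicitly the dominated-convergence hypothesis needed to differentiate under $\int d\sigma$, whereas the paper assumes only the integrability condition for Fubini and swaps $\nabla_\phi$ with the $\theta$-integral without comment.
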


\textbf{Max-slicing.}\quad
Unlike uniform slicing, max-slicing introduces an additional maximization step over directions, and this changes the
behavior of sample-based objectives: the direction selected by the max can depend on the same sample used to evaluate
the loss. As a result, the unbiased sample gradient property \Ucond\ can fail for max-sliced objectives, even when it
holds direction-wise. This means that max-slicing is generally not compatible with the usual one-sample Bellman
bootstrapping setup used in distributional RL. See Appendix~\ref{sec:mixture_bias}.

\begin{proposition}[Max-slicing can violate \Ucond]\label{prop:max_slicing_U_main}
Fix $p\in[1,\infty)$. In general, the max-sliced objective $\mathbf{MS}\Delta$ does \emph{not} satisfy \Ucond: there exist
$d>1$, a data law $\mu\in\mathcal P(\mathbb R^d)$, a parametric family $(\nu_\phi)_{\phi}$, and $m\ge 1$ such that
\[
\mathbb{E}_{X_{1:m}\sim \mu}\!\left[\nabla_\phi\,\mathbf{MS}\Delta^{\,p}\!\left(\widehat{\mu}_m,\,\nu_\phi\right)\right]
\;\neq\;
\nabla_\phi\,\mathbf{MS}\Delta^{\,p}\!\left(\mu,\,\nu_\phi\right).
\]
\end{proposition}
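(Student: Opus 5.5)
We prove the claim with an explicit counterexample. We choose the simplest setting: $d=2$, $m=1$, $p=2$, and base divergence $\Delta=\mathbf{W}_2$, so the counterexample is fully computable in closed form. A counterexample already falsifies the general statement. If we want one for an arbitrary base divergence with direction-wise \Ucond, we can take $\Delta=\ell_2$ (Cramér) instead. The same mechanism applies.

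\textbf{Construction.} Let the data law be $\mu=\tfrac12\delta_{(1,0)}+\tfrac12\delta_{(-1,0)}$ and the model $\nu_\phi=\delta_{(\phi,0)}$. We evaluate gradients at $\phi=0$. With $m=1$, the sample $\widehat\mu_1$ is a Dirac $\delta_{x}$ with $x=(\pm1,0)$.

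For Diracs, $\mathbf{W}_2^2\big((P_\theta)_\#\delta_x,(P_\theta)_\#\delta_y\big)=\langle\theta,x-y\rangle^2$. Maximizing over $\theta\in\mathbb S^1$ gives
\[
\mathbf{MS}\Delta^{2}(\delta_x,\nu_\phi)=\|x-(\phi,0)\|^2=(\pm1-\phi)^2 .
\]
Its gradient is $-2(\pm1-\phi)$, and averaging over the two equally likely samples gives $2\phi$, which is $0$ at $\phi=0$. So this simple choice may give agreement. We therefore break the symmetry so that the maximizing direction depends on the sample.

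\textbf{Symmetry-broken construction.} Take
\[
\mu=\tfrac12\delta_{(1,0)}+\tfrac12\delta_{(0,1)},\qquad \nu_\phi=\delta_{(\phi,0)}.
\]
The sample objective is $\|x-(\phi,0)\|^2$, with expectation
\[
\tfrac12\big[(1-\phi)^2+\phi^2+1\big],
\]
whose gradient is $2\phi-1$.

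The population objective is
\[
\sup_\theta \tfrac12\big(\theta_1-\phi\theta_1\big)^2+\tfrac12\big(\theta_2-\phi\theta_1\big)^2 .
\]
This holds because one-dimensional $\mathbf{W}_2^2$ against a Dirac equals the variance plus the squared mean offset, which here is just the average squared displacement. The objective is the top eigenvalue of
\[
M(\phi)=\tfrac12\sum_x (x-(\phi,0))(x-(\phi,0))^\top .
\]

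At $\phi=0$ we have $M=\tfrac12 I$, a degenerate top eigenvalue. To get a clean, well-defined gradient we pick $\phi_0=1/2$ instead. Then the vectors are $(1/2,0)$ and $(-1/2,1)$, so
\[
M=\tfrac12\begin{pmatrix}1/2 & -1/2\\ -1/2 & 1\end{pmatrix}.
\]
Its top eigenvalue $\lambda(\phi)$ is simple. By Hellmann–Feynman, $\lambda'(\phi_0)=v^\top M'(\phi_0)v$, where $v$ is the top eigenvector. We compute
\[
M'(\phi)=-\tfrac12\sum_x\big(e_1(x-\phi e_1)^\top+(x-\phi e_1)e_1^\top\big),
\]
so
\[
v^\top M' v=-v_1\sum_x\langle x-\phi e_1,v\rangle/1\cdot\tfrac{2}{2}=-2v_1\langle \bar x-\phi e_1,v\rangle,
\]
with $\bar x=(1/2,1/2)$.

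\textbf{Conclusion.} At $\phi_0=1/2$ this gives $-v_1v_2$. The sample-gradient expectation is $2\phi_0-1=0$. The top eigenvector of $M$ has $v_1v_2<0$ because the off-diagonal entry is negative, so $\lambda'(\phi_0)>0\neq0$. Hence \Ucond\ fails. The intuition is that for each single sample the max-direction aligns with that sample's displacement, recovering the full squared Euclidean distance, while the population max only captures the top eigen-direction. This Jensen-type gap between $\mathbb E\sup$ and $\sup\mathbb E$ varies with $\phi$.

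\textbf{Main obstacle and fallbacks.} The main risk is ensuring that the top eigenvalue is simple at the chosen point, so the max-sliced objective is differentiable there. We handle this by choosing $\phi_0$ off the symmetric point and verifying the discriminant explicitly. To extend from $p=2$ to general $p$, we would replace squared displacements by $|\cdot|^p$ and use Danskin's theorem at a unique maximizing $\theta$.
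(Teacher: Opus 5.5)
Your counterexample is correct for $p=2$, and it takes a genuinely different and more concrete route than the paper. With $\mu=\tfrac12\delta_{(1,0)}+\tfrac12\delta_{(0,1)}$, $\nu_\phi=\delta_{(\phi,0)}$, $m=1$ and $\Delta=\mathbf W_2$, the expected sample objective is $\operatorname{tr}M(\phi)$. The population objective is $\lambda_{\max}(M(\phi))=\tfrac12\bigl(1-\phi+\phi^2+\phi\sqrt{(1-\phi)^2+1}\bigr)$ for $\phi>0$. At $\phi_0=1/2$ the eigenvalues are $(3\pm\sqrt5)/8$, the expected sample gradient is $0$, and $\lambda_{\max}'(1/2)=-v_1v_2=1/\sqrt5$. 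So the gradient bias is exactly $\tfrac{d}{d\phi}\lambda_{\min}(M(\phi))\neq 0$. Your intermediate display for $v^\top M'v$ is garbled, but its endpoint $-2v_1\langle\bar x-\phi e_1,v\rangle$ is right.

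The paper argues differently. It proves the value-level Jensen inequality $\mathbb E\sup_\theta\widehat{\mathcal L}_\theta\ge\sup_\theta\mathbb E\widehat{\mathcal L}_\theta$ and argues heuristically that the sample-dependent maximizer carries this selection bias into gradients. It then illustrates with a schematic two-index toy $\widehat{\mathcal L}_{\theta_i}=\phi Z_i$ with Rademacher $Z_i$. That toy uses signed losses over two indices rather than $\Delta^p$ of projections of an actual pair $(\mu,\nu_\phi)$. It shows the mechanism without producing the objects the statement asks for; your construction does produce them.

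Because $\nu_\phi$ is a Dirac, each slice $\mathbf W_2^2\bigl(\cdot,(P_\theta)_\#\nu_\phi\bigr)$ is linear in the data law, so direction-wise \Ucond\ holds exactly in your example. This is precisely the paper's ``fails even when it holds direction-wise'' scenario. It also makes your Cram\'er detour unnecessary, and you did not verify that ``the same mechanism applies'' there anyway.

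What remains is the quantifier: the statement fixes an arbitrary $p\in[1,\infty)$, and you only treat $p=2$. Your suggested extension via Danskin at the same point is not automatic. Keeping $\Delta=\mathbf W_2$ and the same $\mu,\nu_\phi$, the expected sample and population gradients of $\mathbf{MS}\mathbf W_2^{\,p}$ at $\phi_0=1/2$ actually coincide for some $p\in(3,4)$.

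A clean fix goes as follows. The expected sample objective is $S(\phi)=\tfrac12\bigl[(1-\phi)^p+(1+\phi^2)^{p/2}\bigr]$ and the population one is $\lambda_{\max}(M(\phi))^{p/2}$. Both are differentiable on $(0,1)$, since the top eigenvalue is simple for $\phi\neq 0$. Their difference $g$ is not constant on $[0,1]$: $g(0)=1-2^{-p/2}$ and $g(1)=2^{p/2-1}-1$ differ unless $2^{p/2}=2+\sqrt2$, and in that case $g(1/2)\neq g(0)$. The mean value theorem then gives $\phi^*\in(0,1)$ with $g'(\phi^*)\neq 0$, which is a witness for every $p$. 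Alternatively, since the statement lets you pick the base divergence, $\Delta=\mathbf W_2^{2/p}$ gives $\mathbf{MS}\Delta^{\,p}=\mathbf{MS}\mathbf W_2^{\,2}$ and reduces every $p$ to your computation.
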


\subsection{Instantiations}
Our results are agnostic to the choice of base divergence $\Delta$ on $\mathcal P(\mathbb R)$, provided it is a metric and satisfies \Tcond, \Scond, and \Mpcond: the corresponding sliced and max--sliced objectives inherit metricity and yield Bellman contraction via Theorems~\ref{thm:sliced-sup-contraction-short} and~\ref{thm:maxsliced-sup-contraction-mv}. In this paper we instantiate $\Delta$ with $\mathbf W_p$, the Cram\'er distance $\mathbf C_2$, and $\mathbf{MMD}_k$. For $\mathbf{MMD}$, the conclusions extend to any kernel $k$ for which $\mathbf{MMD}_k$ is a metric and admits a scale bound of the form \Scond; we use the multiquadric kernel~\citep{killingberg2023multiquadric}, defined as $k_h(x,y) = -\sqrt{1 + h^2 \|x-y\|^2}$, as a representative example, and use this kernel for all experiments reported in the subsequent section. A complete summary of instantiations (contraction factors and sample--complexity rates) is given in Appendix~\ref{sec:instantiations}, Table~\ref{tab:contraction-summary}.

\section{Experiments}\label{sec:experiments_main}

\paragraph{Chain environment.}
We validate sliced divergences for \emph{multivariate} policy evaluation on a tabular chain MDP adapted from
\citet{rowland2019statistics,nguyen2021distributional}.
The state space is a length-$K$ chain with stochastic transitions (Fig.~\ref{fig:chain_env}).
Rewards are positional and defined as
$R_{t+1}=e_{S_{t+1}}\in\mathbb{R}^K$, the one-hot embedding of the entered next state.
Appendix~\ref{sec:chain_environment} provides full details.

We compare two bootstrapping regimes:
(i) \emph{standard one-sample distributional TD}, which instantiates the Bellman target from a single successor,
and (ii) a \emph{near-exact} variant that explicitly constructs the transition-mixture target using the known dynamics
(Appendix~\ref{sec:near_exact_dist_td}).
This controlled comparison isolates the effect of the unbiased sample-gradient property \Ucond:
in Fig.~\ref{fig:chain_env_bars}, near-exact TD is largely insensitive to gradient bias, whereas under one-sample TD
the objectives that violate \Ucond\ degrade substantially (notably the Wasserstein-based and max-sliced variants).
Performance is measured by the empirical $\widehat{\mathbf{W}}_2$ between Monte Carlo returns and the learned critic
distribution.

\paragraph{Pixel-based environments.}
We next evaluate the same objectives on pixel observations.
All pixel-based experiments use PQN \citep{gallici2024simplifying} as a common backbone.
Its fully vectorized implementation and simplified training pipeline (notably, no target network) make it substantially
faster, which lets us compare many objectives and variants under a fixed compute budget.

We consider two complementary benchmarks.
\textbf{Maze} follows the maze environments of \citet{zhang2021distributional}, which provide pixel observations and vector-valued rewards, and is used for \emph{policy evaluation}. We assess distributional accuracy via the empirical $\widehat{\mathbf{W}}_2$ between Monte Carlo return samples and the return distribution predicted by the critic. We include max-sliced variants for completeness: unlike the chain environment, exact TD is not available here, making this a realistic setting where the gap between objectives satisfying \Ucond\ and those that do not is practically meaningful. Figure~\ref{fig:maze_and_cloud} shows the environment and a representative example where the critic closely matches the Monte Carlo return distribution when trained with sliced Cramér. Figure~\ref{fig:gridworld_bars} summarizes $\widehat{\mathbf{W}}_2$ across objectives and again highlights that \Ucond\ is essential for accurate distributional learning. In particular, sliced MMD and sliced Cramér perform on par with MMD.

\textbf{Atari} follows \citet{zhang2021distributional} and considers a subset of six Atari games from the Arcade Learning Environment \citep{bellemare2013arcade} whose primitive rewards admit a fixed decomposition into multi-dimensional components. We evaluate \emph{control} performance and report aggregated normalized scores across games and variants in Figure~\ref{fig:normalized_scores_main}. Sliced Cram\'er performs strongly in this benchmark.

While objectives that satisfy \Ucond\ yield accurate distributional policy evaluation from pixels, this does not necessarily translate into the strongest control performance. In particular, sliced Wasserstein--$2$ performs remarkably well despite exhibiting biased gradients and poor distributional matching in earlier experiments. This suggests that control performance may depend primarily on accurate estimation of the return expectation even when the full return distribution is not well captured. This is consistent with the observation in Section~\ref{sec:chain_monte_carlo_clouds}, where objectives violating \Ucond\ can still accurately predict the return mean despite collapsing the full distribution. It also aligns with prior work noting that policy improvement depends on the accuracy of the expected return \citep{rowland2019statistics}.

The full training procedure and experimental setup for the pixel-based experiments are detailed in Appendix~\ref{sec:pixel_based}. Further ablations and diagnostics are reported in Section~\ref{sec:results_and_ablations}, including computational cost, qualitative return distribution visualizations, and an Atari sensitivity study on the number of projection directions (Section~\ref{sec:more_atari}).

\begin{figure}[t]
\centering
\includegraphics[width=0.95\columnwidth, trim=50 225 50 60, clip]{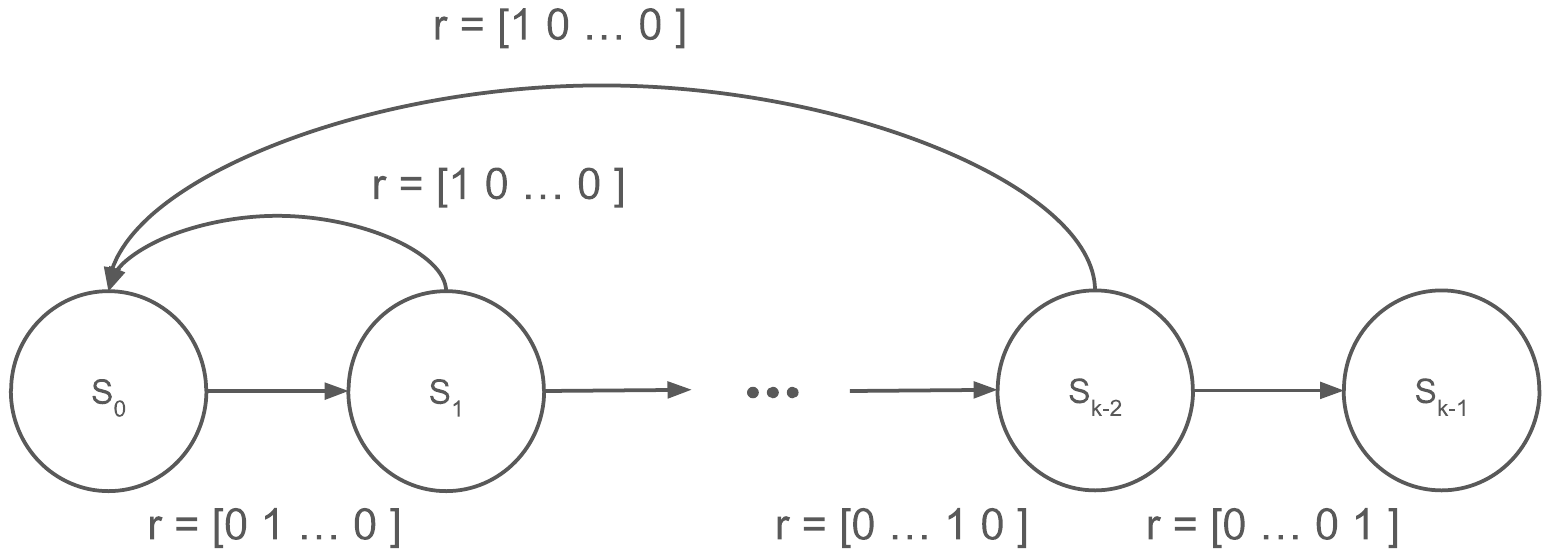}
\caption{Chain policy-evaluation MDP: states $s_0\!\to\! s_{K-1}$ (terminal). From any nonterminal $s_i$, action \texttt{fwd} moves to $s_{i+1}$ with probability\ 0.9 and resets to $s_0$ with probability\ 0.1; \texttt{bwd} swaps these probabilities. Positional-reward variant: $R_{t+1}=e_{S_{t+1}}\in\mathbb{R}^K$ (one-hot of the entered state).}
\label{fig:chain_env}
\end{figure}

\begin{figure}[t]
\centering
\includegraphics[width=0.95\columnwidth]{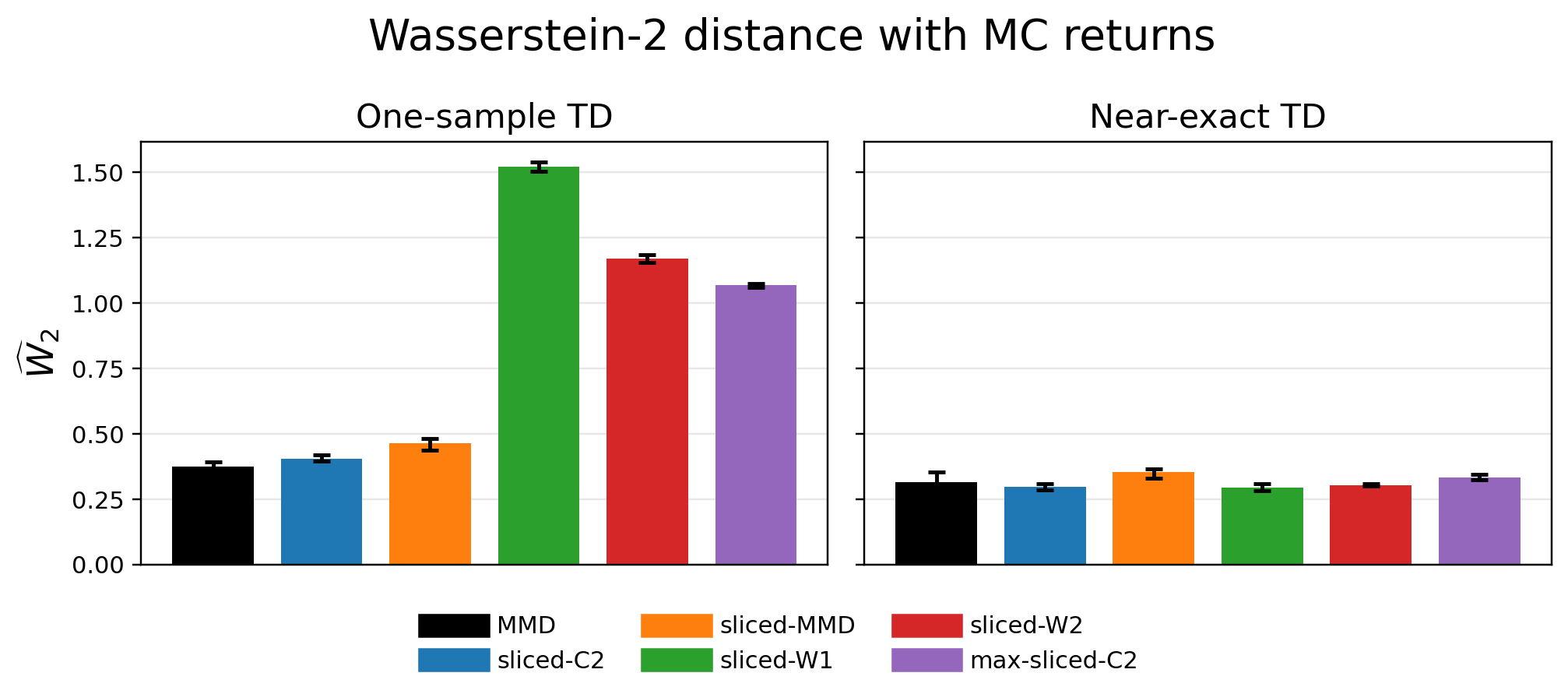}
\caption{
Chain policy-evaluation benchmark: empirical Wasserstein--$2$ distance $\widehat{\mathbf W}_2$ between Monte Carlo
return samples and the return distribution predicted by the critic at the initial state.
Left: standard one-sampled distributional TD.
Right: near-exact distributional TD using an explicit mixture Bellman target.
Bars show the median across $25$ random seeds; error bars are $95\%$ bootstrap confidence intervals (10k resamples).
}

\label{fig:chain_env_bars}
\end{figure}

\begin{figure}[h]
\centering
\includegraphics[width=0.95\columnwidth]{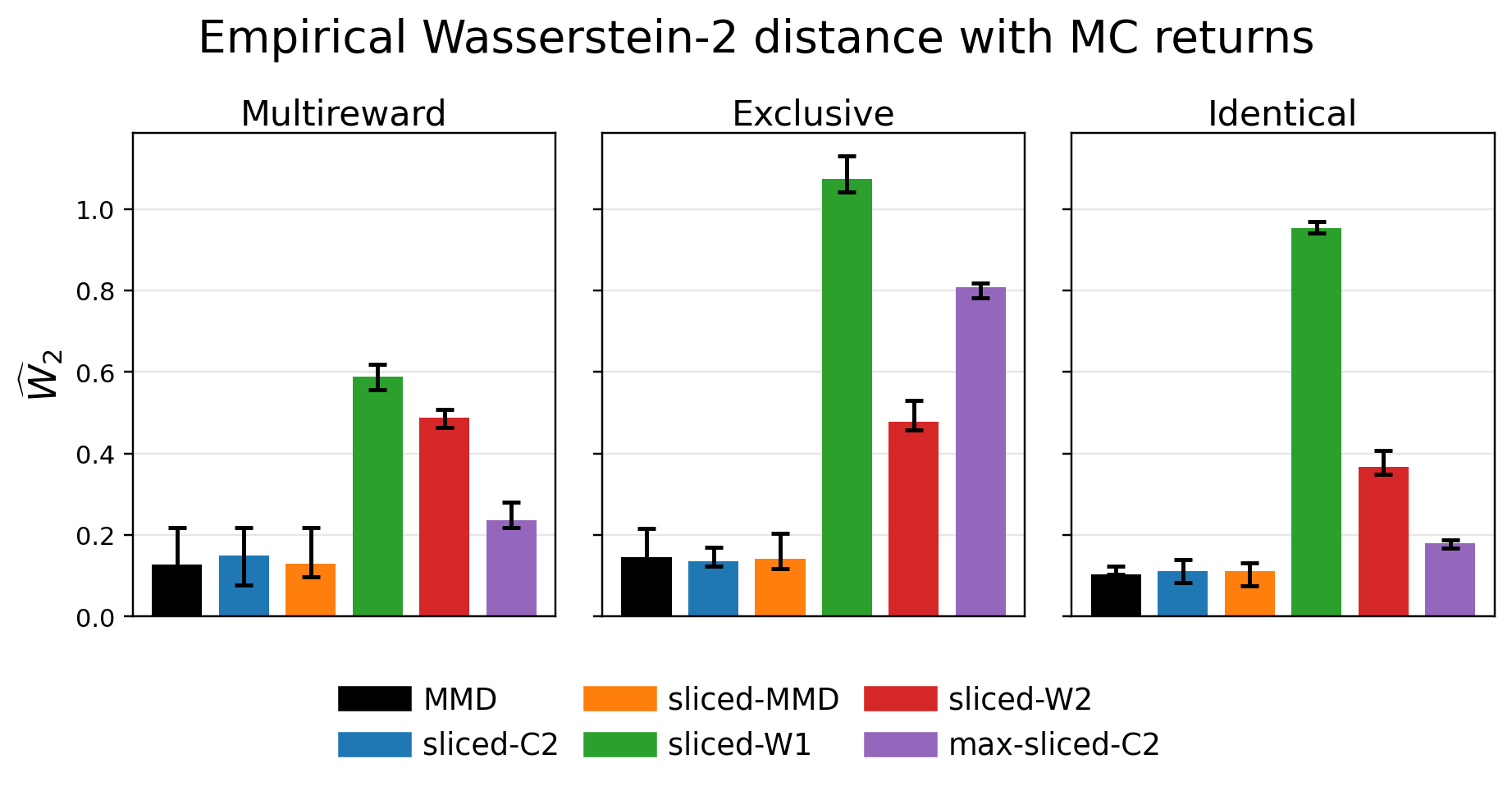}
\caption{
Maze policy-evaluation benchmark: empirical Wasserstein--$2$ distance $\widehat{\mathbf W}_2$ between Monte Carlo return
samples and the return distribution predicted by the critic at the start state (uniform random evaluation policy).
We report results for the three variants \textsc{maze-multireward}, \textsc{maze-exclusive}, and \textsc{maze-identical}.
Bars show the median across $5$ random seeds; error bars are $95\%$ bootstrap confidence intervals (10k resamples).
}
\label{fig:gridworld_bars}
\end{figure}

\begin{figure}[t]
  \centering
  \begin{minipage}[t]{0.44\columnwidth}
    \centering
    \includegraphics[width=\linewidth]{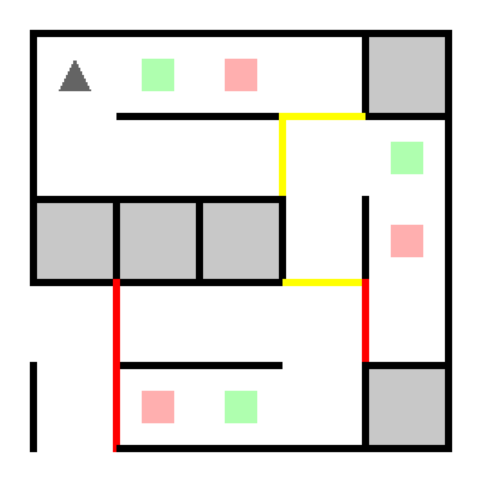}
  \end{minipage}\hfill
  \begin{minipage}[t]{0.55\columnwidth}
  \centering
  \includegraphics[width=\linewidth, trim=0 50 0 0, clip]{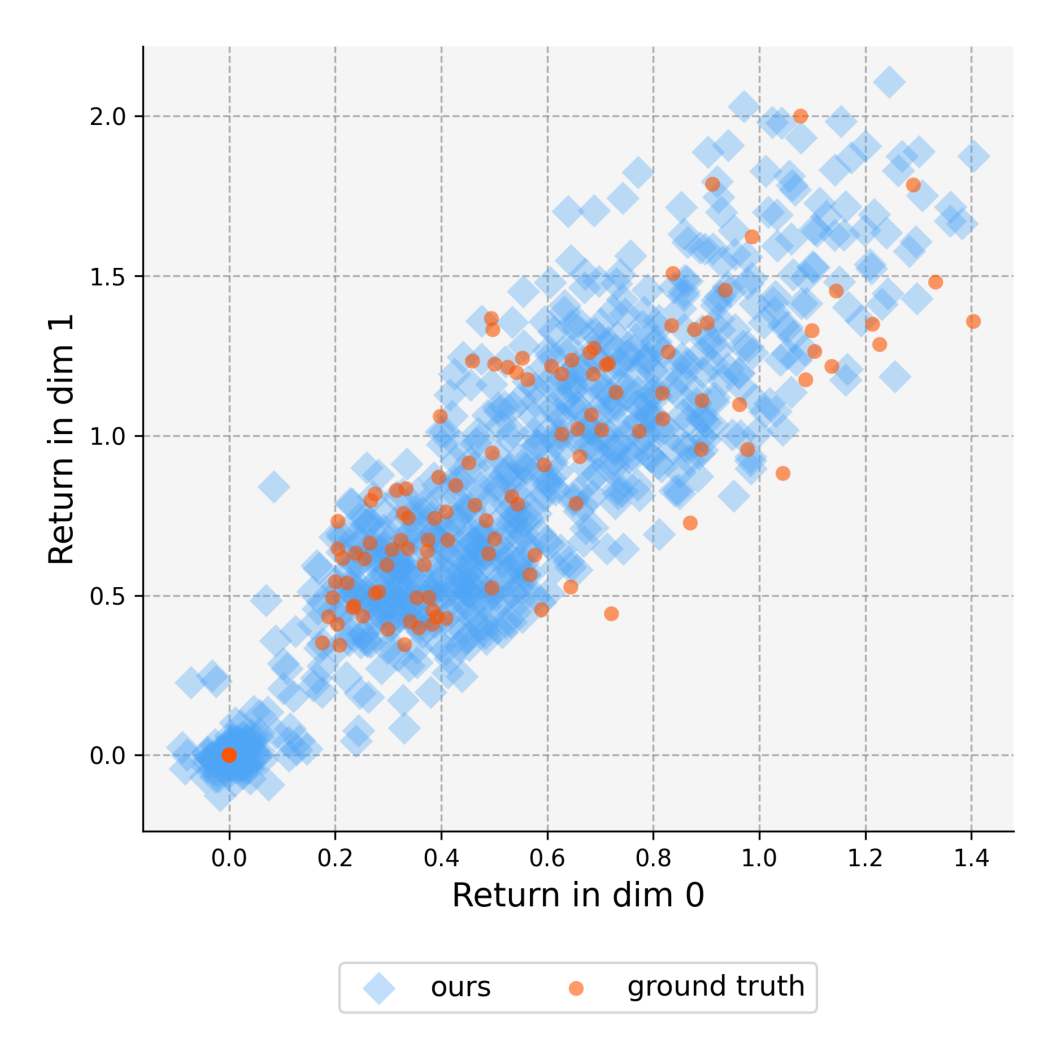}
\end{minipage}

  \caption{
Left: initial observation of \textsc{maze-identical} reproduced from \citet{zhang2021distributional}.
Right: Monte Carlo discounted returns (orange) versus critic-predicted return particles (blue) at the start state under
the uniform random evaluation policy, for a distributional critic trained with sliced Cram\'er.
}

  \label{fig:maze_and_cloud}
\end{figure}

\begin{figure}[t]
    \centering
    \includegraphics[width=0.85\columnwidth]{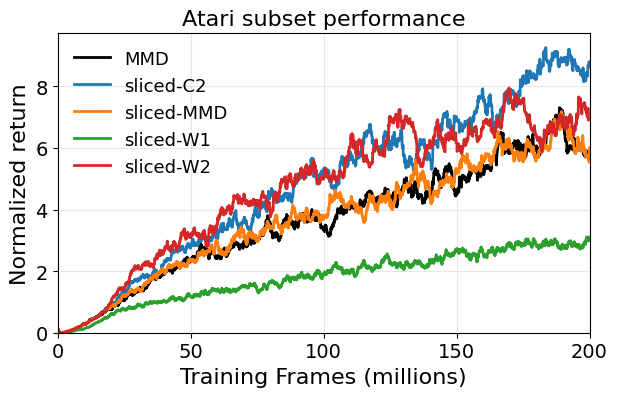}
    \caption{
Aggregated control performance on an Atari subset with decomposed reward signals.
Curves report normalized evaluation returns averaged across \textsc{Asteroids}, \textsc{Gopher}, \textsc{MsPacman},
\textsc{Pong}, and \textsc{UpNDown}, excluding \textsc{AirRaid}, which is omitted from aggregation due to its atypical
score scale.
All methods are evaluated using the same protocol as in \citet{gallici2024simplifying}, following a greedy policy
during evaluation and running evaluation environments continuously alongside training.
Curves show the median across $5$ random seeds.
}

    \label{fig:normalized_scores_main}
\end{figure}

\section{Discussion and open question}
For anisotropic updates of the form $R+\Gamma Z(S',A')$, Wasserstein is attractive because its contraction criterion
depends only on $\|\Gamma\|_{\mathrm{op}}$. However, in our setting it faces three difficulties:
(1) high computational cost in large $d$, (2) poor statistical efficiency, and (3) incompatibility with the usual one-sample
Bellman bootstrap because \Ucond\ fails.
Max-slicing is designed to mitigate (1)–(2) by reducing comparisons to one-dimensional projections, but
Proposition~\ref{prop:max_slicing_U_main} shows that it can still fail (3) because of the max-selection step.
Taken together, our results leave open the question: does there exist a divergence that is provably contractive under
general anisotropic discounts in a norm-only sense and that also satisfies \Ucond, while still addressing (1)–(2) through
tractable estimators and dimension-friendly statistical rates?

\section{Conclusion}

This work studies how to extend one-dimensional distributional objectives to \emph{multivariate} return distributions in a
way that remains compatible with temporal-difference learning.
Our main practical takeaway is that \textbf{sliced Cram\'er} provides a strong default for multivariate distributional
learning. It is computationally efficient, behaves well under one-sample bootstrapping, and comes with contraction
guarantees in the standard isotropic setting.

On the methodological side, we propose a principled framework to lift divergences defined on one-dimensional samples to
multivariate returns through slicing.
This framework identifies conditions on the one-dimensional base divergence that are sufficient to obtain contraction
of the corresponding multivariate distributional Bellman operator under anisotropic updates.
We further analyze \emph{max slicing} as an alternative to uniform slicing, and show how it can recover contraction under
the same norm-controlled Bellman updates, while also requiring care due to its sample-dependent direction selection.

In particular, our results provide tools to nuance what can go wrong in practice when optimizing sliced objectives with
bootstrapped targets.
We show that an objective can remain contractive at the population level while still yielding biased sample gradients in
the traditional one-sample TD setting, and we characterize when this bias arises or disappears.

Empirically, our controlled policy-evaluation benchmarks confirm that satisfying the unbiased sample-gradient
requirement is essential for accurate distribution matching from bootstrapped targets in the traditional one-sample
regime, whereas the near-exact mixture updates largely remove these effects.

Finally, we are far from having explored the full potential of slicing, which has seen many improvements over the years \citep{kolouri2019generalized,rowland2019orthogonal,nguyen2021distributional,chen2022augmented}. However, it is worth keeping in mind that improvements over uniform slicing can quickly reintroduce gradient bias, potentially making an objective unsuitable for one-sample TD and thus for traditional distributional RL.

\section*{Acknowledgements}
This project has received funding from the European Research Council (ERC) under the European Union's Horizon 2020 research and innovation programme (grant agreement n° 101021347).

\section*{Impact Statement}
This paper presents work whose goal is to advance the field of machine learning.
There are many potential societal consequences of our work, none of which we feel
must be specifically highlighted here.

\bibliography{example_paper}
\bibliographystyle{icml2026}

\newpage
\appendix
\onecolumn

\appendixtheorems

\section*{Appendix overview}\label{sec:appendix_overview}

This appendix collects the technical background, proofs, and experimental details supporting the main paper.
\textbf{Base probability divergences} (Section~\ref{sec:base_divergences}) recalls the definitions, estimators, and key structural properties of the standard discrepancies used as building blocks, namely the Wasserstein distance, the class of cumulative distribution function distances, and the Maximum Mean Discrepancy.
\textbf{Metric property} (Section~\ref{sec:metric}) proves that lifting a base metric through uniform slicing or maximization over projections preserves the metric axioms on the space of multivariate probability measures.
\textbf{Contraction results} (Section~\ref{sec:app_contraction}) establishes sufficient conditions for the distributional Bellman operator to be a contraction, covering univariate and multivariate settings, uniform slicing, explicit counterexamples where norm control fails, and guarantees for max-sliced objectives.
\textbf{Mixture bias} (Section~\ref{sec:mixture_bias}) discusses the suitability of different divergences for stochastic gradient-based training, focusing on the unbiased sample gradient property and its interaction with slicing operations.
\textbf{Sample complexity} (Section~\ref{sec:sample_complexity}) derives statistical estimation rates, highlighting the dimension-free nature of uniform slicing and providing specific bounds for max-sliced Wasserstein and Cramer distances.
\textbf{Instantiations} (Section~\ref{sec:instantiations}) summarizes the specific contraction factors, sample complexity bounds, and computational costs for the considered divergence families.
\textbf{Pseudo-codes} (Section~\ref{sec:pseudo-codes}) provides the algorithmic details for estimating the max-sliced divergence.
\textbf{Pixel-based control: maze and Atari} (Section~\ref{sec:pixel_based}) describes the experimental setup for the control tasks, including environment specifications and neural network architectures.
\textbf{Chain environment} (Section~\ref{sec:chain_environment}) details the tabular environment used for policy evaluation and the specific temporal difference algorithms employed.
Finally, \textbf{More results and ablations} (Section~\ref{sec:results_and_ablations}) presents additional experimental data and ablation studies.

\section{Base probability divergences}\label{sec:base_divergences}
We briefly recall the base discrepancies used in our instantiations, along with the estimators and key structural properties
that will be invoked later to obtain contraction factors and sample-complexity bounds.
Section~\ref{sec:wasserstein} reviews Wasserstein and the contraction properties we use.
Section~\ref{sec:l_p} covers the $\ell_p$ CDF distances on $\mathbb R$, including Cram\'er--2.
Section~\ref{sec:mmd} reviews MMD and its kernel formulations.

\subsection{Wasserstein distance} \label{sec:wasserstein}

The Wasserstein distance, arising from optimal transport theory \citep{villani2008optimal}, provides a principled way of comparing probability measures by quantifying the minimal cost of transporting mass from one distribution to another. 
Let $(\mathbb{R}^d, d)$ be a metric space and denote by $\mathcal{P}_p(\mathbb{R}^d)$ the set of Borel probability measures with finite $p$-th moment. 
For $\mu, \nu \in \mathcal{P}_p(\mathbb{R}^d)$, the $p$-Wasserstein distance is defined as
\begin{equation}
\mathbf{W}_p(\mu, \nu) =
\left(
\inf_{\pi \in \Pi(\mu,\nu)}
\int_{\mathbb{R}^d\times \mathbb{R}^d}
d(x, y)^p \, d\pi(x, y)
\right)^{1/p},
\end{equation}
where $\Pi(\mu, \nu)$ denotes the set of couplings (or transport plans) $\pi$ whose marginals are $\mu$ and $\nu$. 
When the underlying measures admit densities $I_\mu$ and $I_\nu$, we may write $\mathbf{W}_p(I_\mu, I_\nu)$ without ambiguity.

Computing $\mathbf{W}_p$ directly is challenging in high dimensions, but there are settings where closed-form expressions exist. 
In the special case where $\mu$ and $\nu$ are one-dimensional distributions on a normed linear space, the Wasserstein distance simplifies to
\begin{equation}
\mathbf{W}_p(\mu, \nu) =
\left(
\int_0^1 \big|F_\mu^{-1}(z) - F_\nu^{-1}(z)\big|^p \, dz
\right)^{1/p},
\end{equation}
where $F_\mu^{-1}$ and $F_\nu^{-1}$ are the quantile functions (inverse CDFs) of $\mu$ and $\nu$, respectively.

\subsubsection{Estimator}
For empirical measures 
$\tilde{\mu} = \frac{1}{N} \sum_{n=1}^N \delta_{x_n}$ and 
$\tilde{\nu} = \frac{1}{N} \sum_{n=1}^N \delta_{y_n}$ 
in one dimension, $\mathbf{W}_p$ can be computed by sorting the samples and comparing corresponding order statistics \citep{villani2008optimal}:
\begin{equation}
\mathbf{W}_p(\tilde{\mu}, \tilde{\nu}) =
\left(
\frac{1}{N} \sum_{n=1}^N 
\big|x_{I_x[n]} - y_{I_y[n]}\big|^p
\right)^{1/p},
\end{equation}
where $I_x[n]$ and $I_y[n]$ are the indices that sort $\{x_n\}$ and $\{y_n\}$ in ascending order.

\subsubsection{Properties}
\paragraph{Metric.}
It is a classical result that the Wasserstein distances are genuine metrics. 
In particular, Proposition~2 in \cite{givens1984class} establishes that 
\[
\mathbf{W}_p \quad\text{is a metric on }\;\mathcal P_p(\mathbb R)
\qquad\text{for every }p\in[1,\infty],
\]
where $\mathcal P_p(\mathbb R)=\{\mu\in\mathcal P(\mathbb R):\int |x|^p\,d\mu(x)<\infty\}$ for $p<\infty$, and $\mathcal P_\infty(\mathbb R)=\mathcal P(\mathbb R)$.

\paragraph{Translation invariant.}
By definition
\paragraph{Scale contraction.}
\begin{proposition}[Linear push-forward contraction for $\mathbf{W}_p$ under matrix discount]\label{prop:wp-linear-pushforward-matrix}
Let $p\in[1,\infty]$ and $d\ge 1$. Equip $\mathbb R^d$ with the Euclidean metric $d(x,y)=\|x-y\|_2$.
If $p<\infty$, assume $\mu,\nu\in\mathcal P_p(\mathbb R^d)$; if $p=\infty$, assume $\mathbf{W}_\infty(\mu,\nu)<\infty$.
Let $\Gamma\in\mathbb R^{d\times d}$ and define the linear map $F_\Gamma:\mathbb R^d\to\mathbb R^d$ by
\[
F_\Gamma(x)=\Gamma x.
\]
Then
\[
\boxed{\ 
\mathbf{W}_p\bigl((F_\Gamma)_{\#}\mu,\,(F_\Gamma)_{\#}\nu\bigr)
\ \le\ 
\|\Gamma\|_{\mathrm{op}}\;\mathbf{W}_p(\mu,\nu).
\ }
\]
In particular, when $\Gamma=\gamma I_d$ this recovers Lemma~3 of \cite{zhang2021distributional}.

Moreover, in the univariate case $d=1$ and $\Gamma=\gamma\in[0,1]$, letting $S_\gamma(x)=\gamma x$ yields the scale-contraction form
\[
\mathbf{W}_p\bigl((S_\gamma)_{\#}\mu,\,(S_\gamma)_{\#}\nu\bigr)\le \gamma\,\mathbf{W}_p(\mu,\nu).
\]
\end{proposition}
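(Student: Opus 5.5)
The proof is a coupling argument. Pushing forward an optimal coupling of $\mu$ and $\nu$ by $F_\Gamma$ gives a (possibly suboptimal) coupling of the two image measures, and $F_\Gamma$ is $\|\Gamma\|_{\mathrm{op}}$-Lipschitz. Fix $p<\infty$ first.

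\emph{Existence of an optimal plan.} Since $\mu,\nu\in\mathcal P_p(\mathbb R^d)$, the cost is continuous and nonnegative, and $\Pi(\mu,\nu)$ is tight and weakly compact. An optimal plan $\pi^\star\in\Pi(\mu,\nu)$ therefore exists. To avoid this step one can instead take an $\varepsilon$-optimal plan and let $\varepsilon\to0$.

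\emph{Pushed-forward plan.} Define $\tilde\pi:=(F_\Gamma\times F_\Gamma)_{\#}\pi^\star$. Its marginals are $(F_\Gamma)_{\#}\mu$ and $(F_\Gamma)_{\#}\nu$, because the projections commute with the product map. The image measures also lie in $\mathcal P_p$, since $\|\Gamma x\|\le\|\Gamma\|_{\mathrm{op}}\|x\|$. Hence $\tilde\pi$ is admissible, and by the change-of-variables formula
\[
\mathbf W_p^p\bigl((F_\Gamma)_{\#}\mu,(F_\Gamma)_{\#}\nu\bigr)\le\int\|\Gamma x-\Gamma y\|_2^p\,d\pi^\star(x,y)\le\|\Gamma\|_{\mathrm{op}}^p\int\|x-y\|_2^p\,d\pi^\star=\|\Gamma\|_{\mathrm{op}}^p\,\mathbf W_p^p(\mu,\nu).
\]
Taking $p$-th roots gives the boxed inequality.

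\emph{The case $p=\infty$.} Here $\mathbf W_\infty$ is the infimum over couplings of the $\pi$-essential supremum of $\|x-y\|$. The same pushforward gives
\[
\operatorname{ess\,sup}_{\tilde\pi}\|u-v\|=\operatorname{ess\,sup}_{\pi}\|\Gamma(x-y)\|\le\|\Gamma\|_{\mathrm{op}}\operatorname{ess\,sup}_\pi\|x-y\|
\]
for every $\pi\in\Pi(\mu,\nu)$. Taking the infimum over $\pi$ concludes, and no attainment is needed.

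\emph{Special cases.} For $\Gamma=\gamma I_d$ we have $\|\Gamma\|_{\mathrm{op}}=|\gamma|$, which recovers the isotropic lemma. For $d=1$ and $\Gamma=\gamma\in[0,1]$, $F_\Gamma=S_\gamma$ and $\|\Gamma\|_{\mathrm{op}}=\gamma$, which gives the scale-contraction form.

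The only step that is not routine is the first one, justifying an optimal or near-optimal coupling. The $\varepsilon$-optimal route makes even this immediate, so there is no real obstacle.
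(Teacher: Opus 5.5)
Your proposal is correct and takes essentially the same route as the paper: push a coupling of $\mu,\nu$ forward by $F_\Gamma\times F_\Gamma$, then use $\|\Gamma(x-y)\|_2\le\|\Gamma\|_{\mathrm{op}}\|x-y\|_2$, with the $p=\infty$ case handled by the same pushforward. The paper works with $\varepsilon$-optimal couplings throughout, which avoids needing an optimal plan, as you also note. For $p=\infty$ it uses the supremum over $\supp(\pi)$ in place of your essential supremum; the two agree here because $\|x-y\|$ is continuous.
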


\begin{proof}
Fix any $\varepsilon>0$, and choose a coupling $\pi\in\Pi(\mu,\nu)$ that is $\varepsilon$-optimal:
\[
\Bigl(\int_{\mathbb R^d\times\mathbb R^d}\|x-y\|_2^{p}\,d\pi(x,y)\Bigr)^{1/p}
< \mathbf{W}_{p}(\mu,\nu) + \varepsilon
\qquad (1\le p<\infty),
\]
and for $p=\infty$ choose $\pi$ with
\[
\sup_{(x,y)\in\supp(\pi)}\|x-y\|_2
< \mathbf{W}_{\infty}(\mu,\nu) + \varepsilon.
\]
Push this coupling forward under $F_\Gamma\times F_\Gamma$ to obtain
\[
\pi' = (F_\Gamma\times F_\Gamma)_{\#}\pi
\;\in\;
\Pi\bigl((F_\Gamma)_{\#}\mu,\,(F_\Gamma)_{\#}\nu\bigr).
\]

\medskip
\noindent\textbf{Case $1\le p<\infty$.}
By definition of $\mathbf{W}_p$,
\begin{align*}
\mathbf{W}_{p}\bigl((F_\Gamma)_{\#}\mu,\,(F_\Gamma)_{\#}\nu\bigr)^{p}
&\le \int \|u-v\|_2^{p}\,d\pi'(u,v)\\
&= \int \|\Gamma x-\Gamma y\|_2^{p}\,d\pi(x,y)\\
&\le \|\Gamma\|_{\mathrm{op}}^{p}\int \|x-y\|_2^{p}\,d\pi(x,y)\\
&< \|\Gamma\|_{\mathrm{op}}^{p}\bigl(\mathbf{W}_{p}(\mu,\nu)+\varepsilon\bigr)^{p},
\end{align*}
where we used $\|\Gamma(x-y)\|_2\le \|\Gamma\|_{\mathrm{op}}\|x-y\|_2$.
Taking $p$th roots and letting $\varepsilon\to0$ gives
\[
\mathbf{W}_p\bigl((F_\Gamma)_{\#}\mu,\,(F_\Gamma)_{\#}\nu\bigr)\le \|\Gamma\|_{\mathrm{op}}\,\mathbf{W}_p(\mu,\nu).
\]

\medskip
\noindent\textbf{Case $p=\infty$.}
Using the same bijection of couplings,
\begin{align*}
\mathbf{W}_{\infty}\bigl((F_\Gamma)_{\#}\mu,\,(F_\Gamma)_{\#}\nu\bigr)
&\le \sup_{(u,v)\in\supp(\pi')}\|u-v\|_2\\
&= \sup_{(x,y)\in\supp(\pi)}\|\Gamma x-\Gamma y\|_2\\
&\le \|\Gamma\|_{\mathrm{op}}\sup_{(x,y)\in\supp(\pi)}\|x-y\|_2\\
&< \|\Gamma\|_{\mathrm{op}}\bigl(\mathbf{W}_{\infty}(\mu,\nu)+\varepsilon\bigr).
\end{align*}
Letting $\varepsilon\to0$ yields the desired bound.

Finally, when $d=1$ and $\Gamma=\gamma\ge 0$, the map is $S_\gamma(x)=\gamma x$ and the above gives
$\mathbf{W}_p((S_\gamma)_\#\mu,(S_\gamma)_\#\nu)\le \gamma \mathbf{W}_p(\mu,\nu)$.
\end{proof}

\paragraph{p-convexity.}

\begin{proposition}[Mixture $p$-convexity for $\mathbf{W}_p$]\label{prop:wp-mix-integral-conditional}
Let $(X,d)$ be a metric space, $p\in[1,\infty)$, and let $(\Omega,\mathcal F,\rho)$ be a probability space. 
Let $(\mu_c)_{c\in\Omega},(\nu_c)_{c\in\Omega}\subset\mathcal P_p(X)$ be measurable families. Then
\[
\mathbf{W}_p\!\Big(\int_\Omega \mu_c\,\rho(dc),\ \int_\Omega \nu_c\,\rho(dc)\Big)
\;\le\;
\Bigg(\int_\Omega \mathbf{W}_p(\mu_c,\nu_c)^p\,\rho(dc)\Bigg)^{1/p}.
\]
\end{proposition}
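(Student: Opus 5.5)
The plan is to glue together near-optimal couplings measurably and then integrate the cost. Fix $\varepsilon>0$. For each $c\in\Omega$, choose a coupling $\pi_c\in\Pi(\mu_c,\nu_c)$ with
\[
\int_{X\times X} d(x,y)^p\,d\pi_c(x,y)\;\le\;\mathbf{W}_p(\mu_c,\nu_c)^p+\varepsilon .
\]
Then set $\bar\pi:=\int_\Omega \pi_c\,\rho(dc)$, meaning $\bar\pi(A)=\int_\Omega\pi_c(A)\,\rho(dc)$ for Borel $A\subset X\times X$.

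\emph{Marginals.} For a Borel $B\subset X$, linearity of the integral gives
\[
\bar\pi(B\times X)=\int_\Omega\pi_c(B\times X)\,\rho(dc)=\int_\Omega\mu_c(B)\,\rho(dc).
\]
The second marginal is handled the same way. Hence $\bar\pi\in\Pi\big(\int\mu_c\,d\rho,\int\nu_c\,d\rho\big)$.

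\emph{Cost bound.} By the definition of $\mathbf{W}_p$ as an infimum and Fubini/Tonelli (the integrand $d^p$ is nonnegative),
\[
\mathbf{W}_p\Big(\int\mu_c\,d\rho,\int\nu_c\,d\rho\Big)^p\le\int d(x,y)^p\,d\bar\pi
=\int_\Omega\!\int d(x,y)^p\,d\pi_c\,\rho(dc)\le\int_\Omega\mathbf{W}_p(\mu_c,\nu_c)^p\,\rho(dc)+\varepsilon .
\]
Letting $\varepsilon\to0$ and taking $p$-th roots gives the claim. If the right-hand side is infinite there is nothing to prove, so we may assume it is finite.

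\emph{Main obstacle: measurability.} The difficulty is choosing $c\mapsto\pi_c$ so that it is a measurable kernel, i.e.\ $c\mapsto\pi_c(A)$ is measurable for each $A$. Only then is $\bar\pi$ well defined and Fubini applicable. For this I would assume $X$ is Polish (true for $\mathbb R$ and $\mathbb R^d$, the cases used in the paper). There are two routes.

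\begin{enumerate}
\item Invoke the measurable selection theorem for optimal transport (Villani, \emph{Optimal Transport}, Cor.~5.22). It gives a measurable choice of \emph{optimal} couplings $c\mapsto\pi_c$ whenever $c\mapsto(\mu_c,\nu_c)$ is measurable into $\mathcal P(X)^2$ with the weak topology. This also removes the need for $\varepsilon$.
\item In the case $X=\mathbb R$ actually used downstream, avoid selection theorems entirely. Take the comonotone coupling $\pi_c=\mathrm{Law}\big(F_{\mu_c}^{-1}(U),F_{\nu_c}^{-1}(U)\big)$ with $U\sim\mathrm{Unif}(0,1)$. It is optimal in one dimension, and $(c,u)\mapsto F_{\mu_c}^{-1}(u)$ is jointly measurable because $F_{\mu_c}(t)$ is measurable in $c$ and right-continuous in $t$. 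This makes $\bar\pi$ explicit as the law of $\big(F_{\mu_C}^{-1}(U),F_{\nu_C}^{-1}(U)\big)$ with $C\sim\rho$ independent of $U$.
\end{enumerate}

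I would present the general argument citing the selection result, and remark that route 2 gives a fully self-contained proof for $X=\mathbb R$.
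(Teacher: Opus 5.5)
Your proposal is correct and is essentially the paper's proof: you mix $\varepsilon$-optimal couplings into $\int_\Omega \pi_c\,\rho(dc)$, check its marginals, bound its cost by Tonelli, and let $\varepsilon\to 0$. The one real difference is that the paper simply \emph{assumes} the couplings can be chosen measurably, whereas you justify this, either for Polish $X$ via measurable selection of optimal plans or explicitly on $\mathbb R$ via the quantile coupling. Your justification also settles the measurability of $c\mapsto \mathbf{W}_p(\mu_c,\nu_c)$, which the paper leaves implicit.
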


\begin{proof}
\noindent\textbf{Step 1: $\varepsilon$-optimal couplings for each $c$.}\\
Fix $\varepsilon>0$. For each $c\in\Omega$, pick an $\varepsilon$-optimal coupling 
$\pi_c^\varepsilon\in\Pi(\mu_c,\nu_c)$ such that
\[
\int_{X\times X} d(x,y)^p\,\pi_c^\varepsilon(dx,dy)\ \le\ \mathbf{W}_p(\mu_c,\nu_c)^p+\varepsilon.
\]

\medskip
\noindent\textbf{Step 2: Measurable selection and mixed coupling.}\\
Assume the family $(\pi_c^\varepsilon)_{c\in\Omega}$ can be chosen measurably, 
so that $c\mapsto\pi_c^\varepsilon$ is a probability kernel. 
We then define the mixed coupling
\[
\Pi^\varepsilon(U) \;:=\; \int_\Omega \pi_c^\varepsilon(U)\,\rho(dc),
\qquad U\subseteq X\times X\ \text{Borel}.
\]
For any measurable $A\subseteq X$,
\[
\Pi^\varepsilon(A\times X)
=\int_\Omega \pi_c^\varepsilon(A\times X)\,\rho(dc)
=\int_\Omega \mu_c(A)\,\rho(dc)
=\Big(\int_\Omega \mu_c\,\rho(dc)\Big)(A),
\]
and similarly
\[
\Pi^\varepsilon(X\times A)
=\int_\Omega \pi_c^\varepsilon(X\times A)\,\rho(dc)
=\int_\Omega \nu_c(A)\,\rho(dc)
=\Big(\int_\Omega \nu_c\,\rho(dc)\Big)(A).
\]
Hence $\Pi^\varepsilon$ has the mixed marginals 
$\int_\Omega \mu_c\,\rho(dc)$ and $\int_\Omega \nu_c\,\rho(dc)$, i.e.
\[
\Pi^\varepsilon\ \in\ \Pi\!\Big(\int_\Omega \mu_c\,\rho(dc),\ \int_\Omega \nu_c\,\rho(dc)\Big).
\]

\medskip
\noindent\textbf{Step 3: Bound the transport cost of the mixed coupling.}\\
Since $(c,x,y)\mapsto d(x,y)^p$ is nonnegative and measurable and $c\mapsto\pi_c^\varepsilon$ is a probability kernel, Tonelli’s theorem allows us to exchange the order of integration in $(c,x,y)$:
\begin{align*}
\int_{X\times X} d(x,y)^p\,\Pi^\varepsilon(dx,dy)
&= \int_{X\times X} d(x,y)^p\ \Big(\int_\Omega \pi_c^\varepsilon(dx,dy)\,\rho(dc)\Big) \\
&= \int_\Omega\Big(\int_{X\times X} d(x,y)^p\,\pi_c^\varepsilon(dx,dy)\Big)\,\rho(dc) \\
&\le \int_\Omega \big(\mathbf{W}_p(\mu_c,\nu_c)^p+\varepsilon\big)\,\rho(dc) \\
&= \int_\Omega \mathbf{W}_p(\mu_c,\nu_c)^p\,\rho(dc)\ +\ \varepsilon.
\end{align*}

\medskip
\noindent\textbf{Step 4: Take the infimum over couplings and pass to the limit.}\\
By definition of $\mathbf{W}_p$,
\[
\mathbf{W}_p\!\Big(\int \mu_c\,d\rho,\ \int \nu_c\,d\rho\Big)^p
\ \le\ \int_{X\times X} d(x,y)^p\,\Pi^\varepsilon(dx,dy)
\ \le\ \int_\Omega \mathbf{W}_p(\mu_c,\nu_c)^p\,\rho(dc) + \varepsilon.
\]
Taking $p$th roots and letting $\varepsilon\downarrow 0$ yields
\[
\mathbf{W}_p\!\Big(\int_\Omega \mu_c\,\rho(dc),\ \int_\Omega \nu_c\,\rho(dc)\Big)
\ \le\ \Bigg(\int_\Omega \mathbf{W}_p(\mu_c,\nu_c)^p\,\rho(dc)\Bigg)^{1/p}.
\]
\end{proof}

\subsection{The $\ell_p$--family of CDF distances on $\mathbb{R}$}\label{sec:l_p}
Let $\mu,\nu \in \mathcal P(\mathbb R)$ be probability measures with cumulative distribution functions (CDFs) $F_\mu,F_\nu$.  
For $p \in [1,\infty)$, the $\ell_p$ distance between $\mu$ and $\nu$ is defined as
\[
\ell_p(\mu,\nu) 
:= \left( \int_{-\infty}^{\infty} \big| F_\mu(t) - F_\nu(t) \big|^p \, dt \right)^{1/p}
\;=\; \|F_\mu - F_\nu\|_{L^p(\mathbb R)} ,
\]
that is, the $\ell_p$–family can be seen as the $L^p$ norm between the two CDFs \citep{bellemare2017cramer}.

\paragraph{Connections to other distances.}
\begin{itemize}
 \item \textbf{Wasserstein distance.}  
  For $p=1$, one recovers the $1$--Wasserstein distance \citep{bellemare2017cramer}:
  \[
  \ell_1(\mu,\nu) = \mathbf{W}_1(\mu,\nu) = \int_{0}^1 \big| F_\mu^{-1}(u) - F_\nu^{-1}(u) \big| \, du .
  \]
  Thus, $\ell_1$ coincides with the classical earth mover’s distance on $\mathbb R$.

  \item \textbf{Cramér distance.}  
  For $p=2$, the squared $\ell_2$ distance coincides with the Cramér distance (also denoted $\mathbf{C}_2$) \citep{bellemare2017cramer}:
  \[
  \ell_2^2(\mu,\nu) = \mathbf{C}_2(\mu,\nu)= \int_{-\infty}^{\infty} \big(F_\mu(t) - F_\nu(t)\big)^2 \, dt,
  \]
  which also admits the energy distance form
  \[
  \ell_2^2(\mu,\nu) = \mathbb E|X-Y| - \tfrac{1}{2}\mathbb E|X-X'| - \tfrac{1}{2}\mathbb E|Y-Y'|,
  \]
  for $X,X' \!\sim\! \mu$ and $Y,Y'\!\sim\!\nu$ i.i.d.

 \item \textbf{Cramér distance and the Continuous Ranked Probability Score (CRPS).}  
  The squared $\ell_2$ distance admits a close connection to the
  \emph{Continuous Ranked Probability Score (CRPS)} \citep{gneiting2007strictly}.
  For a predictive distribution $\mu$ with CDF $F_\mu$ and an observation $y\in\mathbb R$,
  the CRPS is defined as
\[
\mathrm{CRPS}(\mu,y)
:=
\int_{-\infty}^{\infty}
\big(F_\mu(t) - \mathbf{1}\{y \le t\}\big)^2\,dt .
\]
  Taking expectation with respect to $Y\sim\nu$ and exchanging expectation and integration yields
  \[
  \mathbb E_{Y\sim\nu}\,\mathrm{CRPS}(\mu,Y)
  =
  \int_{-\infty}^{\infty}
  \Big(F_\mu(t)^2 - 2F_\mu(t)F_\nu(t) + F_\nu(t)\Big)\,dt .
  \]
  Adding and subtracting $F_\nu(t)^2$ inside the integral gives the decomposition
  \[
  \mathbb E_{Y\sim\nu}\,\mathrm{CRPS}(\mu,Y)
  =
  \underbrace{\int_{-\infty}^{\infty}\big(F_\mu(t)-F_\nu(t)\big)^2\,dt}_{\ell_2^2(\mu,\nu)}
  \;+\;
  \underbrace{\int_{-\infty}^{\infty}\big(F_\nu(t)-F_\nu(t)^2\big)\,dt}_{\text{constant in }\mu}.
  \]
  Consequently, minimizing the expected CRPS with respect to $\mu$ is equivalent to minimizing
  the squared Cramér distance $\ell_2^2(\mu,\nu)$, and both objectives induce identical gradients
  with respect to the predicted distribution, since the second term is independent of $\mu$.

\end{itemize}

\subsubsection{Estimator} \label{sec:cramer_estimators}
\paragraph{Empirical CDF estimator.}
For empirical measures 
$\tilde{\mu} = \tfrac{1}{n} \sum_{i=1}^n \delta_{u_i}$ and 
$\tilde{\nu} = \tfrac{1}{m} \sum_{j=1}^m \delta_{v_j}$ 
in one dimension, the $\ell_p$ CDF distance (Cramér when $p=2$) admits a
closed form: after merging and sorting all samples, one tracks the cumulative
difference of the two empirical CDFs, which is piecewise constant between
successive breakpoints. The distance then reduces to a weighted sum of gap
lengths multiplied by the corresponding powers of this difference.
\[
\ell_p^p(\tilde{\mu},\tilde{\nu})
=\sum_{k=1}^{K-1} (t_{k+1}-t_k)\,|\Delta_k|^p.
\]
Algorithmically, the estimator amounts to sorting the combined samples once,
tracking the cumulative difference of the two empirical CDFs, and summing the
piecewise contributions. This requires $\mathcal O((n{+}m)\log(n{+}m))$ time
for sorting and linear time for the scan.
\paragraph{CRPS--based estimator for the $\ell_2$ (Cramér--2) distance.}
Building on the connection established in Section~\ref{sec:l_p} between the
squared $\ell_2$ CDF distance (Cramér--$2$) and the expected
\emph{Continuous Ranked Probability Score (CRPS)}, we construct an empirical
estimator using the integral representation of the CRPS.
Let $F_\mu$ denote the CDF of $\mu$ and $Q_\mu(\tau)=F_\mu^{-1}(\tau)$ its
quantile function.
The CRPS satisfies
\[
\mathrm{CRPS}(\mu,y)
=
2\int_0^1 \rho_\tau\!\big(y - Q_\mu(\tau)\big)\,d\tau,
\qquad
\rho_\tau(r)=r\big(\tau-\mathbf{1}\{r<0\}\big).
\]
In practice, we approximate the integral over $\tau$ by a finite sum over
uniformly spaced quantile levels $\tau_i=(2i-1)/(2n)$ and replace
$Q_\mu(\tau_i)$ by the order statistics $u_{(i)}$, yielding the empirical objective
\[
\widehat{\mathcal L}_{\mathrm{CRPS}}
=
\frac{2}{nm}\sum_{i=1}^n\sum_{j=1}^m
\rho_{\tau_i}\!\big(v_j-u_{(i)}\big).
\]
As shown in Section~\ref{sec:l_p}, the expected CRPS differs from the squared
Cramér--$2$ distance $\ell_2^2(\mu,\nu)$ by an additive constant independent of
$\mu$. Accordingly, we use the gradient of the estimator above as a surrogate
for the gradient of the true Cramér--$2$ objective, namely
\[
\nabla_\theta \widehat{\mathcal L}_{\mathrm{CRPS}}
\;\approx\;
\nabla_\theta \,\ell_2^2(\mu_\theta,\nu),
\]
where the approximation reflects the finite-sample and discretization effects
of the estimator.
Computationally, this estimator requires sorting the $n$ samples from $\mu$
once ($\mathcal O(n\log n)$) and evaluating $nm$ terms, resulting in total cost
$\mathcal O(n\log n + nm)$.
\paragraph{Energy distance estimator.}
A second estimator for the squared $\ell_2$ CDF distance (Cramér--$2$) follows
from its energy distance representation. For $X,X'\sim\mu$ and
$Y,Y'\sim\nu$ i.i.d., one has
\[
\ell_2^2(\mu,\nu)
=
\mathbb E|X-Y|
-\tfrac12 \mathbb E|X-X'|
-\tfrac12 \mathbb E|Y-Y'|.
\]
Given samples $u_1,\dots,u_n\sim\mu$ and $v_1,\dots,v_m\sim\nu$, each expectation
is estimated by its corresponding U--statistic, yielding
\[
\widehat{\ell}_2^2(\mu,\nu)
=
\frac{1}{nm}\sum_{i,j}|u_i-v_j|
-\frac{1}{2n(n-1)}\sum_{i\neq i'}|u_i-u_{i'}|
-\frac{1}{2m(m-1)}\sum_{j\neq j'}|v_j-v_{j'}|.
\]
This estimator is unbiased for $\ell_2^2(\mu,\nu)$ and does not rely on sorting
or discretization.

Equivalently, the squared Cramér--$2$ distance coincides with the squared
Maximum Mean Discrepancy $\mathbf{MMD}_k^2(\mu,\nu)$ associated with the kernel
\[
k(x,y) = -|x-y|.
\]
Under this identification, the estimator above coincides (up to a constant
factor) with the unbiased MMD U--statistic associated with the kernel $k$, as
discussed in Section~\ref{sec:mmd}.
Computationally, evaluating the estimator involves $\mathcal O(nm)$
cross--sample pairwise terms together with $\mathcal O(n^2+m^2)$
within--sample terms, resulting in an overall cost of
$\mathcal O((n+m)^2)$.

\subsubsection{Properties}
\paragraph{Metric.}
\begin{proposition}[Metric property of the $\ell_p$ CDF distance]\label{prop:lp-metric}
Let $\mu,\nu\in\mathcal P(\mathbb R)$ have CDFs $F_\mu,F_\nu$. For $p\in[1,\infty)$ define
\[
\ell_p(\mu,\nu)
\;:=\;\big\|F_\mu - F_\nu\big\|_{L^p(\mathbb R)}
\;=\;\Big(\int_{\mathbb R}\!\big|F_\mu(t)-F_\nu(t)\big|^p\,dt\Big)^{1/p}.
\]
Let $\mathcal P_1(\mathbb R):=\{\xi\in\mathcal P(\mathbb R):\int_{\mathbb R}|x|\,d\xi(x)<\infty\}$.
Then for every $p\in[1,\infty)$, $\ell_p$ is a metric on $\mathcal P_1(\mathbb R)$.
\end{proposition}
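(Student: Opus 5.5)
The plan is to realize $\ell_p$ as the restriction of the $L^p(\mathbb R)$ norm to the set of differences of CDFs, so that symmetry and the triangle inequality come for free. Everything then reduces to two checks: that $\ell_p$ is finite on $\mathcal P_1(\mathbb R)$, and that it separates points. Nonnegativity and symmetry are immediate from $|F_\mu-F_\nu| = |F_\nu-F_\mu|\ge 0$.

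\emph{Finiteness.} I would first bound the $p$-th power by the first power: since $0\le |F_\mu-F_\nu|\le 1$, we have $|F_\mu-F_\nu|^p\le |F_\mu-F_\nu|$ pointwise. It then suffices to show $\int_{\mathbb R}|F_\mu(t)-F_\nu(t)|\,dt<\infty$. For this I split the line at $0$. On $(-\infty,0)$ use $|F_\mu-F_\nu|\le F_\mu+F_\nu$, and on $[0,\infty)$ use $|F_\mu-F_\nu|\le (1-F_\mu)+(1-F_\nu)$. By Tonelli (the layer-cake formula),
\[
\int_{-\infty}^0 F_\xi(t)\,dt+\int_0^\infty\big(1-F_\xi(t)\big)\,dt=\int_{\mathbb R}|x|\,d\xi(x)<\infty
\]
for every $\xi\in\mathcal P_1(\mathbb R)$. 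Hence $\ell_p(\mu,\nu)^p\le \int|x|\,d\mu+\int|x|\,d\nu<\infty$. This is the main technical step, though it is routine. The only care needed is to avoid the constant function $1$ on $[0,\infty)$, which is why the bound works with tails rather than with the CDFs themselves.

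\emph{Triangle inequality.} For $\mu,\nu,\xi\in\mathcal P_1(\mathbb R)$ write $F_\mu-F_\nu=(F_\mu-F_\xi)+(F_\xi-F_\nu)$. All three differences lie in $L^p(\mathbb R)$ by the previous step, so Minkowski's inequality in $L^p(\mathbb R)$ gives $\ell_p(\mu,\nu)\le\ell_p(\mu,\xi)+\ell_p(\xi,\nu)$.

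\emph{Identity of indiscernibles.} If $\mu=\nu$, the CDFs coincide and $\ell_p=0$. Conversely, suppose $\ell_p(\mu,\nu)=0$. Then $F_\mu=F_\nu$ Lebesgue-almost everywhere. Both functions are right-continuous, so equality on a dense set extends to every point: for each $t$, take $t_n\downarrow t$ inside the full-measure set of agreement and pass to the limit. A probability measure on $\mathbb R$ is determined by its CDF (a $\pi$–$\lambda$ argument on the half-lines $(-\infty,t]$), so $\mu=\nu$. This is the step I would write out most carefully, because it is the one place where right-continuity, and not mere measurability, is used.
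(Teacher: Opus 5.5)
Your proof is correct and follows the paper's argument: finiteness via $|F_\mu-F_\nu|^p\le|F_\mu-F_\nu|$, Minkowski in $L^p(\mathbb R)$ for the triangle inequality, and right-continuity to upgrade a.e.\ equality of the CDFs to equality everywhere. You do deviate in one place: you bound $\ell_1(\mu,\nu)\le\int|x|\,d\mu+\int|x|\,d\nu$ directly via the layer-cake formula, whereas the paper invokes $\ell_1=\mathbf{W}_1$ and the finiteness of $\mathbf{W}_1$ on $\mathcal P_1(\mathbb R)$, which makes your version self-contained; your $t_n\downarrow t$ step also makes precise the paper's informal claim that distinct CDFs must differ on an interval of positive length.
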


\begin{proof}
\noindent\textbf{Finiteness on $\mathcal P_1(\mathbb R)$.}\\
In one dimension, $\ell_1(\mu,\nu)=\int_{\mathbb R}|F_\mu-F_\nu|\,dt=W_1(\mu,\nu)$, hence $\ell_1(\mu,\nu)<\infty$ for $\mu,\nu\in\mathcal P_1(\mathbb R)$.
For $p>1$, since $0\le |F_\mu-F_\nu|\le 1$,
\[
\ell_p(\mu,\nu)^p=\int |F_\mu-F_\nu|^p\,dt\le \int |F_\mu-F_\nu|\,dt=\ell_1(\mu,\nu)<\infty.
\]

\medskip
\noindent\textbf{Nonnegativity and symmetry.}\\
By definition, $\ell_p(\mu,\nu)=\|F_\mu-F_\nu\|_{L^p}\ge 0$ and
$\ell_p(\mu,\nu)=\|F_\mu-F_\nu\|_{L^p}=\|F_\nu-F_\mu\|_{L^p}=\ell_p(\nu,\mu)$.

\medskip
\noindent\textbf{Identity of indiscernibles.}\\
If $\ell_p(\mu,\nu)=0$, that means
\[
\int |F_\mu(x)-F_\nu(x)|^p\,dx=0.
\]
An $L^p$ norm is zero iff the functions are equal almost everywhere. So $F_\mu=F_\nu$ except maybe on a measure zero set. Now, CDFs are monotone and right–continuous. Such functions cannot differ only on a measure zero set, if they are different at one point, they must differ on an interval of positive length. So “equal almost everywhere” forces them to be equal everywhere. If the CDFs are identical, then the distributions are the same.

\medskip
\noindent\textbf{Triangle inequality.}\\
We use Minkowski’s inequality in $L^p(\mathbb R)$. Writing $F_\mu-F_\lambda=(F_\mu-F_\nu)+(F_\nu-F_\lambda)$, we obtain
\begin{align}
\ell_p(\mu,\lambda)
&= \|F_\mu-F_\lambda\|_{L^p}\nonumber\\
&= \|(F_\mu-F_\nu)+(F_\nu-F_\lambda)\|_{L^p}\nonumber\\
&\le \|F_\mu-F_\nu\|_{L^p}+\|F_\nu-F_\lambda\|_{L^p}\,.\qquad\text{(Minkowski)}\nonumber
\end{align}
Therefore $\ell_p(\mu,\lambda)\le \ell_p(\mu,\nu)+\ell_p(\nu,\lambda)$.
\end{proof}

\paragraph{Translation invariant.}
By non-trivial arguments (see Theorem~2 in \cite{bellemare2017cramer} and Proposition~3.2 in \cite{odin2020dynamic}), the $\ell_p$ distance is invariant under translations: for all $\mu,\nu\in\mathcal P_1(\mathbb R)$ and every $t\in\mathbb R$,
\[
\ell_p\bigl((T_t)_{\#}\mu,(T_t)_{\#}\nu\bigr)=\ell_p(\mu,\nu),
\qquad T_t(x)=x+t.
\]
\paragraph{Scaling.}
\begin{proposition}[$(1/p)$-homogeneity of the $\ell_p$ CDF distance under scaling]
\label{prop:lp-scale-lipschitz}
Let $\mu,\nu\in\mathcal P(\mathbb R)$ have CDFs $F_\mu,F_\nu$, and assume $\ell_p(\mu,\nu)<\infty$.
For $p\in[1,\infty)$ and $S_s(x)=s\,x$ with $s>0$, the $\ell_p$ distance satisfies
\[
\ell_p\bigl((S_s)_{\#}\mu,(S_s)_{\#}\nu\bigr)\;=\;s^{1/p}\,\ell_p(\mu,\nu)
\;\le\;c(s)\,\ell_p(\mu,\nu),
\qquad c(s):=s^{1/p}.
\]
\end{proposition}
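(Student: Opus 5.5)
The plan is to compute the CDF of the pushforward explicitly and then change variables in the defining integral. For $s>0$ and $X\sim\mu$, the law $(S_s)_{\#}\mu$ is the law of $sX$. Its CDF is therefore
\[
F_{(S_s)_{\#}\mu}(t)=\mathbb P(sX\le t)=\mathbb P(X\le t/s)=F_\mu(t/s),
\]
and the same holds for $\nu$. This identity uses $s>0$ so that dividing by $s$ preserves the direction of the inequality. It also preserves right-continuity, so no issue arises at atoms.

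Substituting this into the definition of $\ell_p$ gives
\[
\ell_p^p\bigl((S_s)_{\#}\mu,(S_s)_{\#}\nu\bigr)=\int_{\mathbb R}\big|F_\mu(t/s)-F_\nu(t/s)\big|^p\,dt .
\]
The change of variables $u=t/s$, $dt=s\,du$ maps $\mathbb R$ to itself bijectively and turns this into
\[
s\int_{\mathbb R}|F_\mu(u)-F_\nu(u)|^p\,du=s\,\ell_p^p(\mu,\nu).
\]
The integrand is nonnegative and measurable, so the substitution is justified by Tonelli, or simply by the linear change-of-variables formula for Lebesgue measure. The identity holds in $[0,\infty]$, and the assumption $\ell_p(\mu,\nu)<\infty$ makes both sides finite. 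Taking $p$th roots yields the equality $\ell_p\bigl((S_s)_{\#}\mu,(S_s)_{\#}\nu\bigr)=s^{1/p}\ell_p(\mu,\nu)$.

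The inequality with $c(s):=s^{1/p}$ is then an equality, so it holds trivially. For use with \Scond, we also record the following facts:
\begin{itemize}
\item $c$ is nondecreasing on $[0,\infty)$.
\item $c(s)\le 1$ for $s\in[0,1]$, and $c(s)<1$ for $s\in[0,1)$.
\item The case $s=0$ is handled separately. Both pushforwards equal $\delta_0$, so the left side is $0=c(0)\ell_p(\mu,\nu)$.
\end{itemize}
There is no real obstacle here. The only points needing care are the direction of the inequality in the CDF identity, which requires $s>0$, and the Jacobian factor $s$, as opposed to $1/s$, in the substitution.
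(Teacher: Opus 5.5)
Your proposal is correct and follows essentially the same route as the paper: identify $F_{(S_s)_{\#}\mu}(t)=F_\mu(t/s)$, substitute $u=t/s$ with Jacobian factor $s$, and take $p$th roots. Your extra remarks on the case $s=0$ and on the properties of $c(s)=s^{1/p}$ needed for \Scond\ are correct additions that the paper leaves implicit.
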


\begin{proof}
\noindent\textbf{Scale--sensitivity via change of variables.}\\
Let $s>0$. Using $F_{(S_s)_{\#}\mu}(x)=F_\mu(x/s)$, we compute
\begin{align}
\ell_p\bigl((S_s)_{\#}\mu,(S_s)_{\#}\nu\bigr)
&= \Bigg(\int_{\mathbb R}\big|F_\mu(x/s)-F_\nu(x/s)\big|^p\,dx\Bigg)^{\!1/p} \nonumber\\
&= \Bigg(\int_{\mathbb R}\big|F_\mu(u)-F_\nu(u)\big|^p\,s\,du\Bigg)^{\!1/p} \quad (\text{C.V.\ $u=x/s$}) \nonumber\\
&= s^{1/p}\,\Bigg(\int_{\mathbb R}\big|F_\mu(u)-F_\nu(u)\big|^p\,du\Bigg)^{\!1/p} \nonumber\\
&= s^{1/p}\,\ell_p(\mu,\nu). \nonumber
\end{align}

Thus, for $s>0$,
\[
\ell_p\bigl((S_s)_{\#}\mu,(S_s)_{\#}\nu\bigr)=s^{1/p}\,\ell_p(\mu,\nu),
\]
which shows that $\ell_p$ is positively homogeneous of degree $1/p$ under scaling.
\end{proof}

\paragraph{p-convexity.}
\begin{proposition}[Mixture $p$-convexity for $\ell_p$ (integral form)]\label{prop:lp-mix-integral-corrected}
Let $(\Omega,\mathcal F,\rho)$ be a probability space, $p\in[1,\infty)$, and let $(\mu_c)_{c\in\Omega},(\nu_c)_{c\in\Omega}\subset\mathcal P_1(\mathbb R)$ be measurable families with CDFs $(F_{\mu_c}), (F_{\nu_c})$. Then
\[
\ell_p\!\Big(\int_\Omega \mu_c\,\rho(dc),\ \int_\Omega \nu_c\,\rho(dc)\Big)
\;\le\;
\Bigg(\int_\Omega \ell_p(\mu_c,\nu_c)^p\,\rho(dc)\Bigg)^{\!1/p}.
\]
\end{proposition}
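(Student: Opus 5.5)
The proof works at the level of cumulative distribution functions and combines linearity of mixing with Jensen's inequality. Write $\bar\mu:=\int_\Omega \mu_c\,\rho(dc)$ and $\bar\nu:=\int_\Omega \nu_c\,\rho(dc)$. The first step is to identify the CDFs of the mixtures pointwise. For every $t\in\mathbb R$, apply the definition of the mixture to the set $(-\infty,t]$:
\[
F_{\bar\mu}(t)=\int_\Omega F_{\mu_c}(t)\,\rho(dc),\qquad F_{\bar\nu}(t)=\int_\Omega F_{\nu_c}(t)\,\rho(dc).
\]
This uses only the measurability of $c\mapsto\mu_c((-\infty,t])$, which is part of the assumption that $(\mu_c)$ and $(\nu_c)$ are measurable families. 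It follows that
\[
F_{\bar\mu}(t)-F_{\bar\nu}(t)=\int_\Omega \big(F_{\mu_c}(t)-F_{\nu_c}(t)\big)\,\rho(dc).
\]

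The second step is a pointwise Jensen bound. Since $x\mapsto|x|^p$ is convex for $p\ge1$ and $\rho$ is a probability measure, for every $t$
\[
\big|F_{\bar\mu}(t)-F_{\bar\nu}(t)\big|^p\le\int_\Omega \big|F_{\mu_c}(t)-F_{\nu_c}(t)\big|^p\,\rho(dc).
\]
The integrand is bounded by $1$, so the right-hand side is always well defined.

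The third step integrates this inequality over $t\in\mathbb R$ and exchanges the order of integration. The function $(c,t)\mapsto|F_{\mu_c}(t)-F_{\nu_c}(t)|^p$ is nonnegative, and it is jointly measurable: for fixed $c$ it is right-continuous in $t$, and for fixed $t$ it is measurable in $c$. Tonelli's theorem therefore applies and gives
\[
\ell_p(\bar\mu,\bar\nu)^p\le\int_\Omega\int_{\mathbb R}\big|F_{\mu_c}(t)-F_{\nu_c}(t)\big|^p\,dt\,\rho(dc)=\int_\Omega \ell_p(\mu_c,\nu_c)^p\,\rho(dc).
\]
Taking $p$-th roots yields the claim. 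If the right-hand side is infinite, the inequality holds trivially.

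The main obstacle is the measure-theoretic justification of the joint measurability needed for Tonelli. I would handle it in two moves. First, restrict to rational $t$ and use right-continuity in $t$ to get the joint measurability of $(c,t)\mapsto F_{\mu_c}(t)$. Second, note that it suffices for the comparison to hold Lebesgue-a.e. in $t$, which the pointwise Jensen bound already provides everywhere.
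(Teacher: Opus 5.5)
Your proposal is correct and follows the same route as the paper's proof: linearity of the mixture CDFs, then pointwise Jensen for $|\cdot|^p$ under the probability measure $\rho$, then Tonelli to swap the $t$- and $c$-integrals. Your extra argument for joint measurability (right-continuity in $t$ combined with measurability in $c$) supplies a detail that the paper's proof leaves implicit.
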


\begin{proof}
\noindent\textbf{CDF linearity under mixtures.}\\
For every $x\in\mathbb R$,
\[
F_{\int \mu_c\,d\rho}(x) \;=\; \int_\Omega F_{\mu_c}(x)\,\rho(dc),
\qquad
F_{\int \nu_c\,d\rho}(x) \;=\; \int_\Omega F_{\nu_c}(x)\,\rho(dc).
\]
Hence
\begin{equation}\label{eq:cdf-mixture-linearity}
F_{\int \mu_c\,d\rho}(x)-F_{\int \nu_c\,d\rho}(x)
\;=\; \int_\Omega \big(F_{\mu_c}(x)-F_{\nu_c}(x)\big)\,\rho(dc).
\end{equation}

\medskip
\noindent\textbf{Jensen inside the $x$–integral.}\\
Since $|\cdot|^p$ is convex and $\rho$ is a probability measure,
\[
\Big|\int_\Omega \big(F_{\mu_c}(x)-F_{\nu_c}(x)\big)\,\rho(dc)\Big|^p
\;\le\; \int_\Omega \big|F_{\mu_c}(x)-F_{\nu_c}(x)\big|^p\,\rho(dc) \qquad \text{(Jensen on $\Omega$)}.
\]

\medskip
\noindent\textbf{Integrate over $x$ and swap the order.}\\
Therefore
\begin{align*}
\ell_p\!\Big(\int \mu_c\,d\rho,\int \nu_c\,d\rho\Big)^p
&= \int_{\mathbb R} \big|F_{\int \mu_c\,d\rho}(x)-F_{\int \nu_c\,d\rho}(x)\big|^p\,dx \quad (\text{def.\ of }\ell_p) \\
&= \int_{\mathbb R} \Big|\int_\Omega \big(F_{\mu_c}(x)-F_{\nu_c}(x)\big)\,\rho(dc)\Big|^p\,dx \quad \text{(by \eqref{eq:cdf-mixture-linearity})} \\
&\le \int_{\mathbb R} \int_\Omega \big|F_{\mu_c}(x)-F_{\nu_c}(x)\big|^p\,\rho(dc)\,dx \quad (\text{Jensen on }\Omega) \\
&= \int_\Omega \int_{\mathbb R} \big|F_{\mu_c}(x)-F_{\nu_c}(x)\big|^p\,dx\ \rho(dc) \quad (\text{Fubini--Tonelli}) \\
&= \int_\Omega \ell_p(\mu_c,\nu_c)^p\,\rho(dc).
\end{align*}

\medskip
Taking the $p$-th root yields the claim.
\end{proof}

\subsection{MMD}\label{sec:mmd}
The Maximum Mean Discrepancy (MMD) is a kernel-based discrepancy that measures how far apart two probability laws are in a reproducing kernel Hilbert space (RKHS)~$\mathcal H$.  
Given a \emph{positive definite} kernel $k\colon X\times X\to\mathbb R$ with feature map $\phi(x)=k(x,\cdot)$, each distribution admits a \emph{mean embedding} in~$\mathcal H$:
\begin{equation}
\label{eq:mmd_embed}
\mu_P=\int_X \phi(x)\,dP(x),
\qquad
\mu_Q=\int_X \phi(x)\,dQ(x).
\end{equation}
The distance between these embeddings defines
\begin{equation}
\label{eq:mmd_def}
\mathbf{MMD}_k(P,Q) \;=\; \|\mu_P-\mu_Q\|_{\mathcal H}.
\end{equation}
Its square admits the familiar expansion in terms of kernel expectations:
\begin{align}
\label{eq:mmd2_expand}
\mathbf{MMD}^2_k(P,Q)
&= \|\mu_P-\mu_Q\|_{\mathcal H}^2 \nonumber\\
&= \iint k(x,x')\,dP(x)\,dP(x')
 \;+\;\iint k(y,y')\,dQ(y)\,dQ(y') \nonumber\\
&\quad-\;2\iint k(x,y)\,dP(x)\,dQ(y).
\end{align}

In this paper we also use a closely related formulation that does not require explicitly starting from an RKHS.  
Let $X$ be a measurable space and let $k:X\times X\to\mathbb{R}$ be symmetric. We say that $k$ is
\emph{conditionally positive definite (CPD)} if
\begin{equation}
\label{eq:cpd_def}
\iint_{X\times X} k(x,x')\,d\mu(x)\,d\mu(x') \;\geq\;0
\qquad\text{for all finite signed measures }\mu\text{ on }X\text{ with }\mu(X)=0,
\end{equation}
and \emph{conditionally strictly positive definite (CSPD)} if the inequality is strict for every nonzero such $\mu$.
Under this condition, the quadratic form
\begin{equation}
\label{eq:gamma_k_def}
\gamma_k(P,Q)^2
:=\iint k(x,y)\,d(P-Q)(x)\,d(P-Q)(y)
\end{equation}
is nonnegative and behaves like an MMD. Concretely, define
\begin{equation}
\label{eq:rho_k_def}
\rho_k(x,y)\;:=\;k(x,x)+k(y,y)-2k(x,y),
\end{equation}
fix any $z_0\in X$, and introduce the one-point centered (distance-induced) kernel
\begin{align}
\label{eq:k_circ_def}
k^\circ(x,y)
&:=\tfrac12\bigl[\rho_k(x,z_0)+\rho_k(y,z_0)-\rho_k(x,y)\bigr] \nonumber\\
&=\;k(x,y)-k(x,z_0)-k(z_0,y)+k(z_0,z_0).
\end{align}
Then $k^\circ$ is positive definite and admits an RKHS $\mathcal H_{k^\circ}$, and for any $P,Q$ with finite integrals,
\begin{align}
\label{eq:gamma_equals_mmd}
\gamma_k(P,Q)^2
&=\iint k^\circ(x,y)\,d(P-Q)(x)\,d(P-Q)(y) \nonumber\\
&=\bigl\|\mu_{k^\circ}(P)-\mu_{k^\circ}(Q)\bigr\|_{\mathcal H_{k^\circ}}^{2}
=\mathbf{MMD}_{k^\circ}(P,Q)^2 .
\end{align}
This is the standard distance-induced kernel construction and equivalence result of \citet{sejdinovic2013equivalence}.

\paragraph{Convention.}
To keep notation light, we write $\mathbf{MMD}_k$ to denote either the usual RKHS MMD when $k$ is positive definite,
or more generally the square root of the quadratic form $\gamma_k(\mu,\nu)^2=\iint k\,d(\mu-\nu)\,d(\mu-\nu)$ when $k$
is only conditionally positive definite. In the latter case, one may always replace $k$ by its one-point centered
kernel $k^\circ$ in \eqref{eq:k_circ_def}, which is positive definite, and then $\gamma_k(\mu,\nu)=\mathbf{MMD}_{k^\circ}(\mu,\nu)$.

\subsubsection{Estimator} \label{sec:mmd_estimators}
The $\mathbf{MMD}$ can be approximated from samples in two standard ways, both originating from \citet{JMLR:v13:gretton12a}.  
Given two sets of $m$ samples $\{x_i\}_{i=1}^m \sim P$ and $\{y_i\}_{i=1}^m \sim Q$, the \emph{biased} estimator is
\begin{equation}
\label{eq:biased_estimator_main_long}
\widehat{\mathbf{MMD}}_b^2
= \frac{1}{m^{2}} \sum_{i,j=1}^{m} k(x_i,x_j)
+ \frac{1}{m^{2}} \sum_{i,j=1}^{m} k(y_i,y_j)
- \frac{2}{m^{2}} \sum_{i,j=1}^{m} k(x_i,y_j).
\end{equation}
while the \emph{unbiased} estimator excludes diagonal terms:
\begin{equation}
\label{eq:mmd_unbiased_estimator}
\widehat{\mathbf{MMD}}_u^2
= \frac{1}{m(m-1)} \sum_{\substack{i,j=1 \\ i \neq j}}^m k(x_i, x_j)
+ \frac{1}{m(m-1)} \sum_{\substack{i,j=1 \\ i \neq j}}^m k(y_i, y_j)
- \frac{2}{m^2} \sum_{i,j=1}^m k(x_i, y_j).
\end{equation}
Although the unbiased form eliminates a small finite-sample bias, the biased estimator is often preferred in practice.  
In particular, applications of $\mathbf{MMD}$ to distributional RL \citep{DBLP:journals/corr/abs-2007-12354,killingberg2023multiquadric} consistently rely on the biased version due to its lower variance and greater numerical stability during training.
\subsubsection{Properties}
\paragraph{Metric.}

\begin{proposition}[MMD as a Metric on $\mathcal P(X)$]\label{prop:mmd_metric}
Let $k\colon X\times X\to\mathbb R$ be a symmetric kernel, and interpret $\mathbf{MMD}_k$ according to the
convention above.  
We say that $\mathbf{MMD}_k$ \emph{defines a metric} on $\mathcal P(X)$ iff $k$ is \emph{conditionally strictly positive definite} (CSPD), i.e., for every nonzero finite signed Borel measure $\nu$ with $\nu(X)=0$,
\[
\iint_{X\times X} k(x,y)\,d\nu(x)\,d\nu(y)\;>\;0.
\]
Then $\mathbf{MMD}_k$ satisfies the metric axioms on $\mathcal P(X)$:
\begin{enumerate}
  \item \emph{Nonnegativity:} $\mathbf{MMD}_k(P,Q)\ge 0.$
  \item \emph{Symmetry:} $\mathbf{MMD}_k(P,Q)=\mathbf{MMD}_k(Q,P).$
  \item \emph{Identity of indiscernibles:} $\mathbf{MMD}_k(P,Q)=0 \Rightarrow P=Q.$
  \item \emph{Triangle inequality:} for any $P,Q,R\in\mathcal P(X)$,
    $\mathbf{MMD}_k(P,Q)\le \mathbf{MMD}_k(P,R)+\mathbf{MMD}_k(R,Q).$
\end{enumerate}

\emph{Justification.} This is the standard correspondence between negative-type distances, distance-induced kernels, and RKHS MMD metrics as outlined in \cite{sejdinovic2013equivalence}.
\end{proposition}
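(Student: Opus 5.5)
The plan is to reduce everything to the Hilbert-space structure given by the convention: in all cases $\mathbf{MMD}_k(P,Q)=\|\mu_{\tilde k}(P)-\mu_{\tilde k}(Q)\|_{\mathcal H_{\tilde k}}$. Here $\tilde k=k$ when $k$ is positive definite, and $\tilde k=k^\circ$, the one-point centered kernel \eqref{eq:k_circ_def}, when $k$ is only CPD. This identification is exactly \eqref{eq:gamma_equals_mmd}. With it in hand, three of the four axioms follow from standard norm properties, and only identity of indiscernibles uses the CSPD hypothesis.

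\textbf{Nonnegativity, symmetry and the triangle inequality.} I would first check that the quadratic form $\gamma_k(P,Q)^2=\iint k\,d(P-Q)\,d(P-Q)$ is well defined and finite, assuming the kernel integrals are finite, as the convention implicitly does. Then:
\begin{itemize}
\item Nonnegativity holds because the CPD inequality \eqref{eq:cpd_def} applies to the zero-mass signed measure $P-Q$, or simply because a norm is nonnegative.
\item Symmetry holds because the form is invariant under $P-Q\mapsto Q-P$.
\item For the triangle inequality, I write $\mu_{\tilde k}(P)-\mu_{\tilde k}(Q)=(\mu_{\tilde k}(P)-\mu_{\tilde k}(R))+(\mu_{\tilde k}(R)-\mu_{\tilde k}(Q))$, using linearity of the mean embedding in the measure, and apply the triangle inequality for $\|\cdot\|_{\mathcal H_{\tilde k}}$.
\end{itemize}
When $\tilde k=k^\circ$, I would briefly verify that the correction terms $-k(x,z_0)-k(z_0,y)+k(z_0,z_0)$ integrate to zero against $d(P-Q)\,d(P-Q)$, because $(P-Q)(X)=0$. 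This confirms that the embedding distance really equals $\gamma_k$.

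\textbf{Identity of indiscernibles.} Suppose $\mathbf{MMD}_k(P,Q)=0$ and set $\nu:=P-Q$, a finite signed Borel measure with $\nu(X)=0$. Then $\iint k\,d\nu\,d\nu=0$. If $\nu\neq0$, the CSPD hypothesis would make this integral strictly positive, a contradiction. Hence $\nu=0$, that is, $P=Q$. The converse, $P=Q\Rightarrow\mathbf{MMD}_k=0$, is immediate.

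\textbf{Main obstacle.} The only delicate point is technical rather than conceptual. I need to justify that the measures considered have finite kernel integrals, since for unbounded kernels such as the multiquadric or $-|x-y|$ one must restrict to measures with suitable moments. I also need the equality \eqref{eq:gamma_equals_mmd}, including the Bochner integrability of the embedding, which I would cite from \citet{sejdinovic2013equivalence} rather than reprove. I would state the moment restriction explicitly, analogous to the restriction to $\mathcal P_1(\mathbb R)$ in Proposition~\ref{prop:lp-metric}, and then the argument above goes through verbatim.
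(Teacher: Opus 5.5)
Your proposal is correct and follows the paper's route, which simply appeals to the Sejdinovic et al.\ equivalence $\gamma_k=\mathbf{MMD}_{k^\circ}$. You make explicit what the paper leaves implicit: nonnegativity, symmetry and the triangle inequality are norm properties in $\mathcal H_{k^\circ}$ (or $\mathcal H_k$), and identity of indiscernibles is the CSPD inequality applied to $\nu=P-Q$. Your moment caveat genuinely sharpens the statement: for unbounded kernels such as $-|x-y|$ the quantity need not be finite on all of $\mathcal P(X)$, so the axioms should be asserted on the subclass of laws with finite kernel integrals.
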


\paragraph{Scale contraction.}
Whether $\mathbf{MMD}_k$ contracts under scaling depends on the choice of kernel.
In the contraction-based distributional RL literature, only two kernel families have been instantiated to obtain
a Bellman contraction: the negative-type power family introduced by \citet{nguyen2021distributional}
and the multiquadric kernel studied by \citet{killingberg2023multiquadric}.
We record the corresponding scale-contraction bounds in
Proposition~\ref{prop:power-scale-contraction} (negative-type power kernels) and
Proposition~\ref{lem:mq-scale-contraction} (multiquadric kernel).
The multiquadric bound is restated from \citet[Lemma~10]{debes2026distributionalvaluegradientsstochastic}; the proof is given there.

\begin{proposition}[Scale contraction of $\mathbf{MMD}^2$ for the multiquadric kernel]
\label{lem:mq-scale-contraction}
Let $h>0$ and consider the (negative) multiquadric kernel
\[
k_h(x,y) \;=\; -\sqrt{\,1+h^2\|x-y\|^2\,}, \qquad x,y\in\mathbb R^d.
\]
For probability measures $\mu,\nu$ on $\mathbb R^d$ with finite second moments, define
\[
\mathbf{MMD}^2_{k_h}(\mu,\nu)
\;=\;
\mathbb E\,k_h(X,X')
+ \mathbb E\,k_h(Y,Y')
-2\,\mathbb E\,k_h(X,Y),
\]
where $X,X'\!\sim\mu$ i.i.d.\ and $Y,Y'\!\sim\nu$ i.i.d.
Let $S_s(x):=s\,x$ for $s\in[0,1]$. Then
\begin{equation}
\label{eq:mq-scale-contraction}
\mathbf{MMD}^2_{k_h}\!\big((S_s)_{\#}\mu,(S_s)_{\#}\nu\big)
\;\le\;
s\,\mathbf{MMD}^2_{k_h}(\mu,\nu),
\qquad 0\le s\le 1.
\end{equation}
Equivalently,
\begin{equation}
\label{eq:mq-scale-contraction-unsquared}
\mathbf{MMD}_{k_h}\!\big((S_s)_{\#}\mu,(S_s)_{\#}\nu\big)
\;\le\;
\sqrt{s}\,\mathbf{MMD}_{k_h}(\mu,\nu),
\qquad 0\le s\le 1.
\end{equation}
\end{proposition}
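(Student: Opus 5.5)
The plan is to write $\mathbf{MMD}^2_{k_h}$ as a weighted $L^2$ distance between characteristic functions. The scaling then becomes a change of variables in frequency, and the inequality \eqref{eq:mq-scale-contraction} reduces to a pointwise comparison of spectral weights.

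\textbf{Step 1: Gaussian mixture representation of the kernel.} I would start from the elementary identity, valid for $a\ge 0$,
\[
\sqrt{a}=\frac{1}{2\sqrt{\pi}}\int_0^\infty \bigl(1-e^{-au}\bigr)\,u^{-3/2}\,du .
\]
Applying it with $a=1+h^2\|x-y\|^2$ gives
\[
-k_h(x,y)=\frac{1}{2\sqrt\pi}\int_0^\infty \bigl(1-e^{-u}e^{-uh^2\|x-y\|^2}\bigr)u^{-3/2}\,du .
\]
I then integrate $k_h$ against $d(\mu-\nu)\otimes d(\mu-\nu)$. Because $(\mu-\nu)(\mathbb R^d)=0$, the constant term $1$ integrates to zero. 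This yields
\[
\mathbf{MMD}^2_{k_h}(\mu,\nu)=\frac{1}{2\sqrt\pi}\int_0^\infty u^{-3/2}e^{-u}\,\mathbf{MMD}^2_{G_u}(\mu,\nu)\,du,
\]
where $G_u(x,y)=e^{-uh^2\|x-y\|^2}$ is a Gaussian kernel.

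Exchanging the $u$-integral with the measure integrals needs a Fubini justification, since the integrand $1-e^{-au}$ is not integrable uniformly in $(x,y)$. I would handle this by first truncating to $u\in[\varepsilon,M]$, where the finite second moments make everything absolutely integrable. I would then pass to the limit by monotone or dominated convergence, using $|1-e^{-au}|\le \min(1,au)$ together with $a\le 1+2h^2(\|x\|^2+\|y\|^2)$. I expect this to be the main technical obstacle; the remaining steps are calculations.

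\textbf{Step 2: Fourier form.} By Bochner's theorem, $G_u(x,y)=\int e^{i\langle\omega,x-y\rangle}g_u(\omega)\,d\omega$, where $g_u$ is the $\mathcal N(0,2uh^2 I_d)$ density. Hence
\[
\mathbf{MMD}^2_{G_u}(\mu,\nu)=\int|\varphi_\mu(\omega)-\varphi_\nu(\omega)|^2 g_u(\omega)\,d\omega .
\]
By Tonelli (all terms are nonnegative),
\[
\mathbf{MMD}^2_{k_h}(\mu,\nu)=\int|\varphi_\mu-\varphi_\nu|^2(\omega)\,w(\omega)\,d\omega,
\qquad
w(\omega)=\frac{1}{2\sqrt\pi}\int_0^\infty u^{-3/2}e^{-u}g_u(\omega)\,du .
\]

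\textbf{Step 3: Scaling.} The case $s=0$ is trivial, so fix $s\in(0,1]$. Since $\varphi_{(S_s)_\#\mu}(\omega)=\varphi_\mu(s\omega)$, the substitution $\omega'=s\omega$ gives
\[
\mathbf{MMD}^2_{k_h}\bigl((S_s)_\#\mu,(S_s)_\#\nu\bigr)=\int|\varphi_\mu-\varphi_\nu|^2(\omega')\,s^{-d}w(\omega'/s)\,d\omega' .
\]
It therefore suffices to show $s^{-d}w(\omega/s)\le s\,w(\omega)$ pointwise.

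A direct Gaussian computation gives $s^{-d}g_u(\omega/s)=g_{s^2u}(\omega)$. Substituting $v=s^2u$, so that $u^{-3/2}du=s\,v^{-3/2}dv$, I get
\[
s^{-d}w(\omega/s)=\frac{s}{2\sqrt\pi}\int_0^\infty v^{-3/2}e^{-v/s^2}g_v(\omega)\,dv\le s\,w(\omega),
\]
because $e^{-v/s^2}\le e^{-v}$ for $s\le1$. This proves \eqref{eq:mq-scale-contraction}. Taking square roots gives \eqref{eq:mq-scale-contraction-unsquared}.
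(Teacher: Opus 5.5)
The paper does not prove this proposition here. It restates Lemma~10 of \citet{debes2026distributionalvaluegradientsstochastic} and defers the argument there, so there is no in-paper proof to compare against. Judged on its own, your argument is correct and complete. The following all check out: the subordination identity for $\sqrt a$; the Fourier representation with $g_u$ the $\mathcal N(0,2uh^2I_d)$ density; the identity $s^{-d}g_u(\omega/s)=g_{s^2u}(\omega)$; the Jacobian $u^{-3/2}\,du=s\,v^{-3/2}\,dv$; and the trivial case $s=0$.

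You could simplify in two places. First, Step~2 is not needed. Because $G_u(sx,sy)=G_{s^2u}(x,y)$, you have $\mathbf{MMD}^2_{G_u}((S_s)_\#\mu,(S_s)_\#\nu)=\mathbf{MMD}^2_{G_{s^2u}}(\mu,\nu)$. The substitution $v=s^2u$ directly in the Step~1 representation then gives $\mathbf{MMD}^2_{k_h}((S_s)_\#\mu,(S_s)_\#\nu)=\frac{s}{2\sqrt\pi}\int_0^\infty v^{-3/2}e^{-v/s^2}\,\mathbf{MMD}^2_{G_v}(\mu,\nu)\,dv$. From there, $e^{-v/s^2}\le e^{-v}$ and $\mathbf{MMD}^2_{G_v}\ge 0$ finish the proof. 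The same representation also shows $\mathbf{MMD}^2_{k_h}\ge 0$, which you need before taking square roots for the unsquared form.

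Second, the Fubini step is less delicate than you expect. Since $\int_0^\infty u^{-3/2}\min(1,au)\,du=4\sqrt a\le 4(1+h\|x-y\|)$, the integrand is absolutely integrable against $u^{-3/2}du\otimes|\mu-\nu|^{\otimes 2}$ once $\mu,\nu$ have finite first moments. So Fubini applies directly with no truncation, and the second-moment hypothesis can be weakened to first moments.

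For comparison, an alternative spectral route writes the generalized Fourier transform of $-\sqrt{1+h^2\|x\|^2}$ as a positive multiple of $\|\omega\|^{-(d+1)/2}K_{(d+1)/2}(\|\omega\|/h)$. It then uses that $t\mapsto t^{\nu}K_\nu(t)$ is decreasing, which gives exactly $s^{-d}\hat\phi(\omega/s)\le s\,\hat\phi(\omega)$. Your weight $w$ is precisely this modified Bessel density written as a Gaussian mixture, and $e^{-v/s^2}\le e^{-v}$ is the subordinated form of that monotonicity. Your version therefore avoids both generalized Fourier transforms and special-function facts.
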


\begin{proposition}[Scale contraction of $\mathbf{MMD}^2$ for the negative-type power family]
\label{prop:power-scale-contraction}
Fix $\beta\in(0,2]$ and consider the (negative) power kernel
\[
k_\beta(x,y)\;:=\;-\|x-y\|^\beta,\qquad x,y\in\mathbb R^d.
\]
For probability measures $\mu,\nu$ on $\mathbb R^d$ with finite $\beta$-th moments, define
\[
\mathbf{MMD}^2_{k_\beta}(\mu,\nu)
\;=\;
\mathbb E\,k_\beta(X,X')
+ \mathbb E\,k_\beta(Y,Y')
-2\,\mathbb E\,k_\beta(X,Y),
\]
where $X,X'\!\sim\mu$ i.i.d.\ and $Y,Y'\!\sim\nu$ i.i.d.
Let $S_s(x):=s\,x$ for $s\in[0,1]$. Then
\begin{equation}
\label{eq:power-scale-contraction}
\mathbf{MMD}^2_{k_\beta}\!\big((S_s)_{\#}\mu,(S_s)_{\#}\nu\big)
\;=\;
s^\beta\,\mathbf{MMD}^2_{k_\beta}(\mu,\nu),
\qquad 0\le s\le 1.
\end{equation}
Consequently,
\begin{equation}
\label{eq:power-scale-contraction-unsquared}
\mathbf{MMD}_{k_\beta}\!\big((S_s)_{\#}\mu,(S_s)_{\#}\nu\big)
\;=\;
s^{\beta/2}\,\mathbf{MMD}_{k_\beta}(\mu,\nu),
\qquad 0\le s\le 1.
\end{equation}
In particular, $\Delta:=\mathbf{MMD}^2_{k_\beta}$ satisfies \Scond\ with $c(s)=s^\beta$ on $[0,1]$ (and $\Delta:=\mathbf{MMD}_{k_\beta}$ satisfies \Scond\ with $c(s)=s^{\beta/2}$).
\end{proposition}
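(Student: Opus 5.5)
The identity in \eqref{eq:power-scale-contraction} follows from the exact homogeneity of the kernel under scaling, applied term by term in the three-expectation expansion of $\mathbf{MMD}^2_{k_\beta}$. If $X,X'\sim\mu$ and $Y,Y'\sim\nu$ are i.i.d., then $sX,sX'\sim(S_s)_{\#}\mu$ and $sY,sY'\sim(S_s)_{\#}\nu$ are i.i.d. copies, with the same independence structure as in the definition. Substituting these into the definition gives
\[
\mathbf{MMD}^2_{k_\beta}\big((S_s)_{\#}\mu,(S_s)_{\#}\nu\big)
= -\mathbb E\|sX-sX'\|^\beta-\mathbb E\|sY-sY'\|^\beta+2\,\mathbb E\|sX-sY\|^\beta .
\]
Since $\|s(x-y)\|^\beta=s^\beta\|x-y\|^\beta$ for $s\ge 0$, each of the three expectations picks up the same factor $s^\beta$. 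Factoring it out gives $s^\beta\,\mathbf{MMD}^2_{k_\beta}(\mu,\nu)$.

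Two technical points need checking. First, every expectation must be finite, so that subtracting them is legitimate. This follows from the finite $\beta$-th moment assumption: for $\beta\le 1$ use $\|x-y\|^\beta\le\|x\|^\beta+\|y\|^\beta$, and for $\beta\in(1,2]$ use $\|x-y\|^\beta\le 2^{\beta-1}(\|x\|^\beta+\|y\|^\beta)$. The scaled measures clearly also have finite $\beta$-th moments. Second, the boundary case $s=0$ must be handled. There both pushforwards equal $\delta_0$, so the left side is $0$, matching $0^\beta\cdot(\cdot)=0$ because $\beta>0$.

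For \eqref{eq:power-scale-contraction-unsquared}, take square roots. This requires $\mathbf{MMD}^2_{k_\beta}\ge 0$, which holds because $-\|x-y\|^\beta$ is conditionally positive definite for $\beta\in(0,2]$: $\|\cdot\|^\beta$ is then a negative-type distance (Schoenberg). Under the paper's convention, $\mathbf{MMD}_{k_\beta}$ is by definition the square root of this nonnegative quadratic form, so taking the square root gives the factor $s^{\beta/2}$.

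The final statement about \Scond\ is then immediate. The functions $c(s)=s^\beta$ and $s^{\beta/2}$ are nondecreasing on $[0,\infty)$, satisfy $c(s)\le1$ on $[0,1]$, and satisfy $c(s)<1$ on $[0,1)$. The only step that needs any care is the integrability bookkeeping; the scaling itself is mechanical.
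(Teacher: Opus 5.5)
Your proof is correct and follows the paper's argument: push i.i.d.\ copies through $S_s$, apply $\|s(x-y)\|^\beta = s^\beta\|x-y\|^\beta$ inside each of the three expectations, factor out $s^\beta$, then take square roots. Your extra checks fill in details the paper leaves implicit: finiteness of the three expectations, and nonnegativity of the quadratic form via conditional positive definiteness, which is what makes the square-root step legitimate.
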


\begin{proof}
Let $X,X'\sim\mu$ i.i.d.\ and $Y,Y'\sim\nu$ i.i.d.
Then $sX,sX'\sim (S_s)_{\#}\mu$ and $sY,sY'\sim (S_s)_{\#}\nu$.
By homogeneity of the norm, $\|s x-s y\|=s\|x-y\|$, hence for all $x,y\in\mathbb R^d$,
\[
k_\beta(sx,sy)=-\|sx-sy\|^\beta = -(s\|x-y\|)^\beta = s^\beta\,k_\beta(x,y).
\]
Therefore,
\begin{align*}
\mathbf{MMD}^2_{k_\beta}\!\big((S_s)_{\#}\mu,(S_s)_{\#}\nu\big)
&=\mathbb E\,k_\beta(sX,sX')+\mathbb E\,k_\beta(sY,sY')-2\,\mathbb E\,k_\beta(sX,sY)\\
&=s^\beta\Big(\mathbb E\,k_\beta(X,X')+\mathbb E\,k_\beta(Y,Y')-2\,\mathbb E\,k_\beta(X,Y)\Big)\\
&=s^\beta\,\mathbf{MMD}^2_{k_\beta}(\mu,\nu),
\end{align*}
which proves \eqref{eq:power-scale-contraction}. Taking square roots yields \eqref{eq:power-scale-contraction-unsquared}.
Finally, on $[0,1]$ the function $c(s)=s^\beta$ is nondecreasing, satisfies $c(s)\le 1$, and $c(s)<1$ for all $s\in[0,1)$ since $\beta>0$, so \Scond\ holds.
\end{proof}

\paragraph{p-convexity.}
\begin{proposition}[Mixture $p$–convexity of $\mathbf{MMD}_k$ in an RKHS] \label{lem:mmd_rkhs_mixture_p_convexity}
Let $k:X\times X\to\mathbb R$ be a symmetric positive–semidefinite reproducing kernel with RKHS $(\mathcal H,\langle\cdot,\cdot\rangle)$ and feature map $\phi(x)=k(x,\cdot)$. 
Let $(\Omega,\mathcal F,\rho)$ be a probability space, and let $(\mu_c)_{c\in\Omega}$ and $(\nu_c)_{c\in\Omega}$ be measurable families of probability measures on $X$ for which the mean embeddings $\mu_{\mu_c}:=\int_X \phi\,d\mu_c$ and $\mu_{\nu_c}:=\int_X \phi\,d\nu_c$ exist in $\mathcal H$. 
Define the mixtures $\bar\mu:=\int_\Omega \mu_c\,\rho(dc)$ and $\bar\nu:=\int_\Omega \nu_c\,\rho(dc)$. 
Assume all mean embeddings and integrals below are well defined.  
Then for every $p\ge 1$,
\[
\boxed{\;
\mathbf{MMD}_k(\bar\mu,\bar\nu)\ \le\ \Bigl(\int_\Omega \mathbf{MMD}_k(\mu_c,\nu_c)^p\,\rho(dc)\Bigr)^{1/p}\;}.
\]
\end{proposition}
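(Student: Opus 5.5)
The plan is to work entirely at the level of mean embeddings in $\mathcal H$, where the claim reduces to a Jensen-type inequality for a Bochner integral of $\mathcal H$-valued functions followed by Jensen's inequality for the scalar convex map $t\mapsto t^p$.

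\emph{Step 1: the embedding is linear under mixtures.} We first show that $\mu_{\bar\mu}=\int_\Omega \mu_{\mu_c}\,\rho(dc)$ as a Bochner integral in $\mathcal H$, and likewise for $\bar\nu$. To check this, pair both sides with an arbitrary $f\in\mathcal H$. By the reproducing property,
\[
\langle f,\mu_{\bar\mu}\rangle=\int_X f\,d\bar\mu=\int_\Omega\int_X f\,d\mu_c\,\rho(dc)=\int_\Omega\langle f,\mu_{\mu_c}\rangle\,\rho(dc).
\]
The middle equality holds by the definition of the mixture measure together with Fubini. This is where the standing assumption that ``all mean embeddings and integrals are well defined'' is used. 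It covers measurability of $c\mapsto\mu_{\mu_c}$ and Bochner integrability, which follows for instance from $\int_\Omega\|\mu_{\mu_c}\|_{\mathcal H}\,d\rho<\infty$. Taking differences, we obtain
\[
\mu_{\bar\mu}-\mu_{\bar\nu}=\int_\Omega(\mu_{\mu_c}-\mu_{\nu_c})\,\rho(dc).
\]

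\emph{Step 2: norm bound.} Applying the triangle inequality for Bochner integrals to the identity from Step 1 gives
\[
\mathbf{MMD}_k(\bar\mu,\bar\nu)=\Big\|\int_\Omega(\mu_{\mu_c}-\mu_{\nu_c})\,d\rho\Big\|_{\mathcal H}\le\int_\Omega\mathbf{MMD}_k(\mu_c,\nu_c)\,\rho(dc).
\]

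\emph{Step 3: pass to the $L^p$ bound.} Since $\rho$ is a probability measure and $t\mapsto t^p$ is convex on $[0,\infty)$ for $p\ge1$, Jensen's inequality yields
\[
\Big(\int_\Omega\mathbf{MMD}_k(\mu_c,\nu_c)\,d\rho\Big)^p\le\int_\Omega\mathbf{MMD}_k(\mu_c,\nu_c)^p\,d\rho.
\]
Equivalently, the $L^1(\rho)$ norm is dominated by the $L^p(\rho)$ norm. Taking $p$-th roots and combining with Step 2 gives the claim.

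For $p=2$ there is an alternative route through the kernel-expectation expansion \eqref{eq:mmd2_expand} and bilinearity, but the embedding argument covers all $p\ge1$ at once. The only delicate point is the justification in Step 1, namely measurability and Bochner integrability of $c\mapsto\mu_{\mu_c}$. Here I would simply invoke the well-definedness hypothesis. In the CPD convention one applies the same argument to the centered kernel $k^\circ$ from \eqref{eq:k_circ_def}.
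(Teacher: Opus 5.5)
Your proposal is correct and follows the paper's proof: linearity of the mean embedding under mixtures, then the triangle inequality for the $\mathcal H$-valued integral, then $L^1(\rho)\le L^p(\rho)$ on a probability space. Your check of the linearity step by pairing with arbitrary $f\in\mathcal H$ is just a more careful version of what the paper asserts directly.
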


\begin{proof}
By linearity of mean embeddings,
\[
\mu_{\bar\mu}=\int_\Omega \mu_{\mu_c}\,\rho(dc),
\qquad
\mu_{\bar\nu}=\int_\Omega \mu_{\nu_c}\,\rho(dc),
\]
where $\mu_{\mu_c}=\int_X \phi(x)\,d\mu_c(x)$ and $\mu_{\nu_c}=\int_X \phi(x)\,d\nu_c(x)$ are elements of $\mathcal H$.
Thus,
\[
\mu_{\bar\mu}-\mu_{\bar\nu}=\int_\Omega v(c)\,\rho(dc),
\qquad v(c):=\mu_{\mu_c}-\mu_{\nu_c}\in\mathcal H.
\]
Hence
\begin{align*}
\mathbf{MMD}_k(\bar\mu,\bar\nu)
&=\|\mu_{\bar\mu}-\mu_{\bar\nu}\|_{\mathcal H}
=\Bigl\|\int_\Omega v(c)\,\rho(dc)\Bigr\|_{\mathcal H} \\
&\le \int_\Omega \|v(c)\|_{\mathcal H}\,\rho(dc)
&&\text{(triangle inequality in $\mathcal H$)}\\
&\le \Bigl(\int_\Omega \|v(c)\|_{\mathcal H}^p\,\rho(dc)\Bigr)^{1/p}
&&\text{($L^1\le L^p$ on a probability space)}.
\end{align*}
Finally, $\|v(c)\|_{\mathcal H}=\|\mu_{\mu_c}-\mu_{\nu_c}\|_{\mathcal H}=\mathbf{MMD}_k(\mu_c,\nu_c)$, which gives the claim.
\end{proof}

\begin{proposition}[Mixture $p$–convexity for CPD kernels via the distance–induced RKHS] \label{lem:mmd_cpd_mixture_p_convexity}
Let $k:X\times X\to\mathbb R$ be conditionally positive definite (CPD) and let $k^\circ$ be the associated distance–induced (one–point centered) kernel using the equivalence \eqref{eq:gamma_equals_mmd} with the one-point centered kernel \eqref{eq:k_circ_def}, so that for all probabilities $P,Q$ with finite integrals,
\[
\gamma_k(P,Q)=\mathbf{MMD}_{k^\circ}(P,Q).
\]
Let $(\Omega,\mathcal F,\rho)$ be a probability space, and let $(\mu_c)_{c\in\Omega}$ and $(\nu_c)_{c\in\Omega}$ be measurable families of probability measures on $X$ with finite embeddings for $k^\circ$. Define the mixtures $\bar\mu:=\int_\Omega \mu_c\,\rho(dc)$ and $\bar\nu:=\int_\Omega \nu_c\,\rho(dc)$. Then for every $p\ge 1$,
\[
\boxed{\;
\gamma_k(\bar\mu,\bar\nu)\ \le\ \Bigl(\int_\Omega \gamma_k(\mu_c,\nu_c)^p\,\rho(dc)\Bigr)^{1/p}\; }.
\]
\end{proposition}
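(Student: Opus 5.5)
The plan is to reduce the statement to Proposition~\ref{lem:mmd_rkhs_mixture_p_convexity}, applied to the positive definite kernel $k^\circ$ from \eqref{eq:k_circ_def}. Fix a base point $z_0\in X$. By the distance-induced kernel construction of \citet{sejdinovic2013equivalence}, $k^\circ$ is symmetric and positive semidefinite, so it has an RKHS $\mathcal H_{k^\circ}$ with feature map $\phi^\circ(x)=k^\circ(x,\cdot)$. The identity \eqref{eq:gamma_equals_mmd} gives $\gamma_k(P,Q)=\mathbf{MMD}_{k^\circ}(P,Q)$ for every pair of probability measures with finite integrals. We apply this identity to each pair $(\mu_c,\nu_c)$ and also to the pair of mixtures $(\bar\mu,\bar\nu)$.

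The key steps, in order, are as follows.

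First, we check that the hypotheses of Proposition~\ref{lem:mmd_rkhs_mixture_p_convexity} hold for $k^\circ$. The assumption that each $\mu_c,\nu_c$ has a finite $k^\circ$-embedding gives the embeddings $\mu_{\mu_c},\mu_{\nu_c}\in\mathcal H_{k^\circ}$. We additionally need the mixtures $\bar\mu,\bar\nu$ to have finite embeddings, and the identity $\mu_{\bar\mu}=\int_\Omega \mu_{\mu_c}\,\rho(dc)$ to hold as a Bochner integral. I would read both as part of the standing assumption that ``all integrals are well defined'' and state this explicitly. For the linearity identity, it is enough to pair both sides with an arbitrary $f\in\mathcal H_{k^\circ}$ and use Fubini: $\int f\,d\bar\mu=\int_\Omega\int f\,d\mu_c\,\rho(dc)$.

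Second, we invoke Proposition~\ref{lem:mmd_rkhs_mixture_p_convexity} for $k^\circ$. This gives
\[
\mathbf{MMD}_{k^\circ}(\bar\mu,\bar\nu)\le\Big(\int_\Omega \mathbf{MMD}_{k^\circ}(\mu_c,\nu_c)^p\,\rho(dc)\Big)^{1/p}.
\]

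Third, we translate back to $\gamma_k$ by applying \eqref{eq:gamma_equals_mmd} on both sides. On the left this gives $\gamma_k(\bar\mu,\bar\nu)$, and under the integral it gives $\gamma_k(\mu_c,\nu_c)$. Since $\gamma_k$ does not depend on the choice of $z_0$, no ambiguity arises; the reason is that the correction terms of $k^\circ$ vanish when integrated against the signed measure $P-Q$, which has zero total mass.

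The main obstacle is the integrability bookkeeping in the first step. The mixtures must have finite $k^\circ$-embeddings, and $c\mapsto \mu_{\mu_c}-\mu_{\nu_c}$ must be Bochner integrable, which requires $\int_\Omega\|\mu_{\mu_c}-\mu_{\nu_c}\|\,d\rho<\infty$. If the right-hand side of the claim is infinite, the inequality holds trivially. Otherwise, finiteness of the right-hand side together with $L^1\le L^p$ on a probability space gives this integrability. Measurability of $c\mapsto\mu_{\mu_c}$ follows from weak measurability, since each $c\mapsto\int f\,d\mu_c$ is measurable, combined with Pettis' theorem. Pettis applies because $\mathcal H_{k^\circ}$ is separable, which holds when $k^\circ$ is continuous on a separable $X$; I would state this as a mild regularity assumption.
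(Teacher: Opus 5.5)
Your proposal is correct and matches the paper's proof: apply Proposition~\ref{lem:mmd_rkhs_mixture_p_convexity} to the positive definite kernel $k^\circ$, then convert both sides back using the identity $\gamma_k=\mathbf{MMD}_{k^\circ}$ from \eqref{eq:gamma_equals_mmd}. Your additional bookkeeping (finite embeddings for the mixtures, Bochner integrability of $c\mapsto\mu_{\mu_c}-\mu_{\nu_c}$, and independence from the base point $z_0$) spells out what the paper covers with its assumption that ``all integrals are well defined.''
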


\begin{proof}
By \eqref{eq:gamma_equals_mmd}, $\gamma_k=\mathbf{MMD}_{k^\circ}$.
Applying Lemma~\ref{lem:mmd_rkhs_mixture_p_convexity} to the PSD kernel $k^\circ$ and the families $(\mu_c),(\nu_c)$ yields
\[
\mathbf{MMD}_{k^\circ}(\bar\mu,\bar\nu)\ \le\ \Bigl(\int_\Omega \mathbf{MMD}_{k^\circ}(\mu_c,\nu_c)^p\,\rho(dc)\Bigr)^{1/p}.
\]
Replacing $\mathbf{MMD}_{k^\circ}$ by $\gamma_k$ using the equivalence \eqref{eq:gamma_equals_mmd} gives the claim.
\end{proof}

\newpage
\section{Metric property}\label{sec:metric}
\begin{lemma}[Basic metric properties of slicing (from \cite{nadjahi2020statistical})]\label{prop:basic-metric}
Let $\Delta:\mathcal P(\mathbb R)\times\mathcal P(\mathbb R)\to[0,\infty]$ be a divergence and let $p\in[1,\infty)$.  
For $\mu,\nu\in\mathcal P(\mathbb R^d)$ define
\[
\mathbf{S}\Delta_p^p(\mu,\nu)\;=\;\int_{S^{d-1}}\Delta^p\!\big((P_\theta)_{\#}\mu,(P_\theta)_{\#}\nu\big)\,d\sigma(\theta),
\]
where $(P_\theta)_{\#}\mu$ is the pushforward of $\mu$ by $x\mapsto\langle\theta,x\rangle$ and $\sigma$ is the uniform
measure on $S^{d-1}$. \emph{This reproduces Proposition~1 \cite{nadjahi2020statistical}.}

\medskip
\noindent
\textbf{Statement.} If $\Delta$ is a metric on $\mathcal P(\mathbb R)$, then $\mathbf{S}\Delta_p$ is a metric on $\mathcal P(\mathbb R^d)$.  
In particular:
\begin{itemize}\itemsep0.25em
  \item \textbf{Nonnegativity and symmetry.} If $\Delta$ is nonnegative (resp.\ symmetric) on $\mathcal P(\mathbb R)$,
  then $\mathbf{S}\Delta_p$ is nonnegative (resp.\ symmetric) on $\mathcal P(\mathbb R^d)$.
  \item \textbf{Identity of indiscernibles.} If $\Delta(\alpha,\beta)=0$ iff $\alpha=\beta$ for $\alpha,\beta\in\mathcal P(\mathbb R)$,
  then $\mathbf{S}\Delta_p(\mu,\nu)=0$ iff $\mu=\nu$ for $\mu,\nu\in\mathcal P(\mathbb R^d)$.
  \item \textbf{Triangle inequality.} If $\Delta$ is a metric on $\mathcal P(\mathbb R)$, then $\mathbf{S}\Delta_p$ satisfies the triangle inequality on
  $\mathcal P(\mathbb R^d)$.
\end{itemize}
\end{lemma}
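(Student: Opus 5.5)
We verify the four metric axioms one at a time, working directly from the definition of $\mathbf{S}\Delta_p$ as an $L^p(\sigma)$ norm of the slice-wise function $f_{\mu,\nu}(\theta):=\Delta\big((P_\theta)_{\#}\mu,(P_\theta)_{\#}\nu\big)$. With this notation, $\mathbf{S}\Delta_p(\mu,\nu)=\|f_{\mu,\nu}\|_{L^p(\sigma)}$. Throughout we assume $\theta\mapsto f_{\mu,\nu}(\theta)$ is measurable, which holds for the base divergences considered ($\mathbf W_p$, $\ell_p$, $\mathbf{MMD}_k$) because they depend continuously on the projected law.

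\textbf{Nonnegativity and symmetry} are pointwise in $\theta$. Nonnegativity of $\Delta$ gives $f_{\mu,\nu}\ge 0$, and symmetry of $\Delta$ gives $f_{\mu,\nu}=f_{\nu,\mu}$. Integrating and taking the $p$-th root preserves both properties.

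\textbf{Triangle inequality.} Fix $\mu,\nu,\lambda$. For each $\theta$, the triangle inequality for $\Delta$ on the projected laws gives
\[
f_{\mu,\lambda}(\theta)\le f_{\mu,\nu}(\theta)+f_{\nu,\lambda}(\theta).
\]
Since $t\mapsto t^p$ is monotone on $[0,\infty)$ and the $L^p(\sigma)$ norm is monotone on nonnegative functions, we get $\|f_{\mu,\lambda}\|_{L^p}\le\|f_{\mu,\nu}+f_{\nu,\lambda}\|_{L^p}$. Minkowski's inequality in $L^p(\mathbb S^{d-1},\sigma)$, valid because $p\ge1$, then yields $\mathbf S\Delta_p(\mu,\lambda)\le \mathbf S\Delta_p(\mu,\nu)+\mathbf S\Delta_p(\nu,\lambda)$.

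\textbf{Identity of indiscernibles.} This is the only step that is not routine. If $\mu=\nu$, every projection agrees, so $f_{\mu,\nu}\equiv0$ and the sliced divergence vanishes. Conversely, suppose $\mathbf S\Delta_p(\mu,\nu)=0$. Then $f_{\mu,\nu}=0$ for $\sigma$-a.e.\ $\theta$, and since $\Delta$ separates points, $(P_\theta)_{\#}\mu=(P_\theta)_{\#}\nu$ for $\sigma$-a.e.\ $\theta$. We pass to characteristic functions: for such $\theta$ and every $t\in\mathbb R$,
\[
\hat\mu(t\theta)=\widehat{(P_\theta)_{\#}\mu}(t)=\widehat{(P_\theta)_{\#}\nu}(t)=\hat\nu(t\theta).
\]
Hence $\hat\mu=\hat\nu$ on the set $\{t\theta : t\in\mathbb R,\ \theta \text{ in a set of full } \sigma\text{-measure}\}$. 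This set is dense in $\mathbb R^d$, because a $\sigma$-null set cannot contain an open cap of the sphere, so the good directions are dense in $\mathbb S^{d-1}$. Characteristic functions are continuous, so $\hat\mu=\hat\nu$ everywhere. Lévy's uniqueness theorem, which is the Cramér--Wold argument, then gives $\mu=\nu$.

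The main obstacle is exactly this almost-everywhere issue, and the density-plus-continuity argument above resolves it. Combining the four properties shows that $\mathbf S\Delta_p$ is a metric on $\mathcal P(\mathbb R^d)$, restricted to the subclass on which $\Delta$ is finite, e.g.\ $\mathcal P_p$ for $\mathbf W_p$.
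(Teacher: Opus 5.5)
Your proof is correct and follows the same route as the paper. Nonnegativity and symmetry are inherited pointwise, the triangle inequality comes from the slice-wise triangle inequality plus Minkowski in $L^p(\sigma)$, and identity of indiscernibles comes from restricting characteristic functions to the lines $\{t\theta : t\in\mathbb R\}$ and then invoking uniqueness of characteristic functions. Your density-plus-continuity step (a full-measure set of directions is dense in the sphere, and $\hat\mu,\hat\nu$ are continuous) justifies the passage from $\sigma$-almost every line to all of $\mathbb R^d$, which the paper asserts without argument.
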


\begin{proof}[Proof (this is a reproduction from \cite{nadjahi2020statistical}, App.~A.1)]
We prove that $\mathbf{S}\Delta_p$ satisfies the three defining properties required for a metric on $\mathcal P(\mathbb R^d)$.

\medskip
\textbf{Nonnegativity and symmetry.}\\
This is immediate from the definition since the integrand inherits these properties from $\Delta$, and taking a $p$-th root preserves them.

\medskip
\textbf{Identity of indiscernibles.}\\
We need to show that $\mathbf{S}\Delta_p(\mu,\nu)=0$ implies $\mu=\nu$.  
(The converse implication is immediate from the definition, since if $\mu=\nu$ then every slice coincides and the integral vanishes.)

(1) Assume $\mathbf{S}\Delta_p(\mu,\nu)=0$. Since the integrand is nonnegative, this yields
\[
\Delta\big((P_\theta)_{\#}\mu,(P_\theta)_{\#}\nu\big)=0 \quad \text{for $\sigma$-almost every $\theta\in S^{d-1}$}.
\]
By the base property of $\Delta$ in one dimension, we obtain
\[
(P_\theta)_{\#}\mu=(P_\theta)_{\#}\nu \quad \text{for $\sigma$-almost every $\theta\in S^{d-1}$}.
\]

\smallskip
\emph{Notation.} For a probability measure $\xi$ on $\mathbb{R}^d$, we write $\widehat{\xi}$ for its characteristic function:
\[
\widehat{\xi}(z) \;=\; \int_{\mathbb{R}^d} e^{\,i\langle z,x\rangle}\,d\xi(x), \qquad z\in\mathbb{R}^d.
\]

(2) By Lemma~\ref{lem:law-pushforward}, the one–dimensional pushforward $(P_\theta)_{\#}\xi$ satisfies
\[
\widehat{(P_\theta)_{\#}\xi}(t)
=\int_{\mathbb{R}} e^{\,i t u}\,d\big((P_\theta)_{\#}\xi\big)(u)
=\int_{\mathbb{R}^d} e^{\,i t\langle \theta, x\rangle}\,d\xi(x)
=\widehat{\xi}(t\theta), \quad t\in\mathbb{R}.
\]
Hence $(P_\theta)_{\#}\mu=(P_\theta)_{\#}\nu$ implies
\[
\widehat{\mu}(t\theta)=\widehat{\nu}(t\theta)
\quad \text{for $\sigma$-almost every $\theta\in S^{d-1}$ and all $t\in\mathbb{R}$}.
\]

\emph{Interpretation.} Projecting onto $\theta$ in the original space corresponds to restricting
$\widehat{\mu}$ to the line $\{t\theta:t\in\mathbb{R}\}$ in frequency space. Thus the two characteristic
functions agree along almost all such lines.

(3) Therefore $\widehat{\mu}=\widehat{\nu}$ on $\mathbb{R}^d$, and by the injectivity of characteristic functions (distinct measures cannot share the same characteristic function; see Theorem~26.2 in \cite{billingsley2017probability})
we conclude $\mu=\nu$.

\medskip
\textbf{Triangle inequality.}\\
(iii) Assume that $\Delta$ is a metric on $\mathcal P(\mathbb R)$. Let $\mu,\nu,\xi\in\mathcal P(\mathbb R^d)$.  
For every $\theta\in S^{d-1}$, the base triangle inequality gives
\[
\Delta\big((P_\theta)_{\#}\mu,(P_\theta)_{\#}\nu\big)
\;\le\;
\Delta\big((P_\theta)_{\#}\mu,(P_\theta)_{\#}\xi\big)
+
\Delta\big((P_\theta)_{\#}\xi,(P_\theta)_{\#}\nu\big).
\]
Taking the $p$-th power and integrating over the sphere yields
\[
\int_{S^{d-1}}\!\Delta^p\big((P_\theta)_{\#}\mu,(P_\theta)_{\#}\nu\big)\,d\sigma(\theta)
\;\le\;
\int_{S^{d-1}}\!\bigl[\Delta\big((P_\theta)_{\#}\mu,(P_\theta)_{\#}\xi\big)+\Delta\big((P_\theta)_{\#}\xi,(P_\theta)_{\#}\nu\big)\bigr]^p\,d\sigma(\theta).
\]
\emph{Minkowski’s inequality.} For $p\ge 1$ and measurable $f,g$ on a measure space $(X,\mu)$,
\[
\Bigg( \int_X |f(x)+g(x)|^p\, d\mu(x) \Bigg)^{\!1/p}
\;\le\;
\Bigg( \int_X |f(x)|^p\, d\mu(x) \Bigg)^{\!1/p}
+
\Bigg( \int_X |g(x)|^p\, d\mu(x) \Bigg)^{\!1/p}.
\]

Using this inequality with $X=S^{d-1}$, $\mu=\sigma$, and 
\[
f(\theta)=\Delta\big((P_\theta)_{\#}\mu,(P_\theta)_{\#}\xi\big), 
\qquad 
g(\theta)=\Delta\big((P_\theta)_{\#}\xi,(P_\theta)_{\#}\nu\big),
\]
we obtain
\[
\mathbf{S}\Delta_p(\mu,\nu) \;\le\; \mathbf{S}\Delta_p(\mu,\xi) + \mathbf{S}\Delta_p(\xi,\nu).
\]
\end{proof}

\begin{lemma}[Max–sliced metric properties]\label{prop:max-sliced-metric}
Let $\Delta:\mathcal P(\mathbb R)\times\mathcal P(\mathbb R)\to[0,\infty]$ be a metric on $\mathcal P(\mathbb R)$.  
For $\mu,\nu\in\mathcal P(\mathbb R^d)$ define
\[
\mathbf{MS}\Delta(\mu,\nu)
:=\sup_{\theta\in\mathbb S^{d-1}}
\Delta\!\big((P_\theta)_\#\mu,\,(P_\theta)_\#\nu\big),
\qquad P_\theta(x)=\langle\theta,x\rangle .
\]
Then $\mathbf{MS}\Delta$ is a metric on $\mathcal P(\mathbb R^d)$: it is nonnegative and symmetric, satisfies the identity of indiscernibles, and obeys the triangle inequality.
\end{lemma}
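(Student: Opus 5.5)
We check the four metric axioms one at a time, treating each slice $\theta$ separately and then taking the supremum. The argument parallels Lemma~\ref{prop:basic-metric}. The only difference is that the $L^p(\sigma)$ aggregation over the sphere is replaced by an $L^\infty$-type aggregation, a supremum over all directions.

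\emph{Nonnegativity and symmetry.} These are immediate. For every $\theta$, the quantity $\Delta\big((P_\theta)_\#\mu,(P_\theta)_\#\nu\big)$ is nonnegative and symmetric in $(\mu,\nu)$. A supremum of nonnegative quantities is nonnegative, and swapping $\mu$ and $\nu$ leaves every term of the supremum unchanged. Note that $\mathbf{MS}\Delta$ may take the value $+\infty$, just as $\Delta$ may.

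\emph{Triangle inequality.} Fix $\mu,\nu,\xi\in\mathcal P(\mathbb R^d)$ and a direction $\theta$. The triangle inequality for $\Delta$ gives
\[
\Delta\big((P_\theta)_\#\mu,(P_\theta)_\#\nu\big)\le \Delta\big((P_\theta)_\#\mu,(P_\theta)_\#\xi\big)+\Delta\big((P_\theta)_\#\xi,(P_\theta)_\#\nu\big).
\]
Each term on the right is at most the corresponding supremum, so
\[
\Delta\big((P_\theta)_\#\mu,(P_\theta)_\#\nu\big)\le \mathbf{MS}\Delta(\mu,\xi)+\mathbf{MS}\Delta(\xi,\nu).
\]
The right-hand side does not depend on $\theta$, so taking the supremum over $\theta$ on the left yields the claim. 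This is simpler than the uniform case, since no Minkowski inequality is needed.

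\emph{Identity of indiscernibles.} If $\mu=\nu$, every slice coincides and $\mathbf{MS}\Delta(\mu,\nu)=0$. Conversely, suppose $\mathbf{MS}\Delta(\mu,\nu)=0$. Since every term is nonnegative, $\Delta\big((P_\theta)_\#\mu,(P_\theta)_\#\nu\big)=0$ for \emph{every} $\theta\in\mathbb S^{d-1}$. Because $\Delta$ is a metric, $(P_\theta)_\#\mu=(P_\theta)_\#\nu$ for all $\theta$.

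We then reuse step~(2) of the proof of Lemma~\ref{prop:basic-metric}: $\widehat{(P_\theta)_\#\xi}(t)=\widehat\xi(t\theta)$. This gives $\widehat\mu(t\theta)=\widehat\nu(t\theta)$ for all $t\in\mathbb R$ and all $\theta$, that is, $\widehat\mu=\widehat\nu$ on all of $\mathbb R^d$. Injectivity of characteristic functions (Cramér--Wold) then gives $\mu=\nu$. This step is actually easier than in the uniform case: the uniform case only had equality for $\sigma$-a.e.\ $\theta$ and needed continuity of characteristic functions to fill in the remaining directions, whereas here we have equality for every direction.

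\emph{Main obstacle.} None of the steps is really difficult. The only point needing care is the domain of definition. When $\Delta$ is a metric only on a subclass such as $\mathcal P_p(\mathbb R)$ (as for $\mathbf W_p$ or $\ell_p$), we must check that projections preserve the relevant moment condition. This holds because $|\langle\theta,x\rangle|\le\|x\|$, so $(P_\theta)_\#\mu\in\mathcal P_p(\mathbb R)$ whenever $\mu\in\mathcal P_p(\mathbb R^d)$. We must also decide whether infinite values are allowed; we allow values in $[0,\infty]$, and all the inequalities above remain valid there.
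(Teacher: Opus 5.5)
Your proposal is correct and follows the paper's proof essentially step by step: the axioms are checked slice-wise and then passed through the supremum, and identity of indiscernibles reuses the characteristic-function argument from the uniform-slicing lemma, restricted to lines $\{t\theta\}$. Your two additions are accurate but not needed for the argument. First, equality now holds for \emph{every} $\theta$, so no continuity step is required. Second, projections preserve the moment condition, which matters for the domain restrictions of bases such as $\mathbf W_p$ or $\ell_p$.
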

\begin{proof}
We prove that $\mathbf{MS}\Delta$ satisfies the three defining properties required for a metric on $\mathcal P(\mathbb R^d)$.

\medskip
\textbf{Nonnegativity and symmetry.}
Each slice is nonnegative and symmetric because $\Delta$ is; taking a supremum preserves both properties.

\medskip
\textbf{Identity of indiscernibles.}
If $\mu=\nu$ then every slice is equal, so $\mathbf{MS}\Delta(\mu,\nu)=0$.
Conversely, if $\mathbf{MS}\Delta(\mu,\nu)=0$, then
\[
(P_\theta)_\#\mu = (P_\theta)_\#\nu \quad \text{for all $\theta\in\mathbb S^{d-1}$}.
\]
The argument given in Proposition~\ref{prop:basic-metric} for the sliced case then applies verbatim, showing that $\mu=\nu$.

\medskip
\textbf{Triangle inequality.}
For any $\theta\in\mathbb S^{d-1}$ and any $\mu,\nu,\xi\in\mathcal P(\mathbb R^d)$, the base metric property yields
\[
\Delta\!\big((P_\theta)_\#\mu,(P_\theta)_\#\nu\big)\;
\le\;\Delta\!\big((P_\theta)_\#\mu,(P_\theta)_\#\xi\big)
+\Delta\!\big((P_\theta)_\#\xi,(P_\theta)_\#\nu\big).
\]
Taking the supremum over $\theta$ on both sides gives
\[
\mathbf{MS}\Delta(\mu,\nu)\;\le\;\mathbf{MS}\Delta(\mu,\xi)+\mathbf{MS}\Delta(\xi,\nu).
\]

All three metric axioms hold; hence $\mathbf{MS}\Delta$ is a metric on $\mathcal P(\mathbb R^d)$.
\end{proof}

\begin{lemma}[Supremum lift preserves metricity for SPDs and MaxSPDs—follows closely from \citet{nguyen2021distributional}, Proposition~1 (Appendix~A.1)]
\label{lem:sup-metric-spd-max}
Let $\mathcal D$ be a metric on $\mathcal P(\mathbb R^d)$. (In our use, $\mathcal D$ will be either the SPD $\mathbf S\Delta^{\,\rho,p}$ or the MaxSPD $\mathbf{MS}\Delta$.)  
Define, for $\mu,\nu:\mathcal S\times\mathcal A\to\mathcal P(\mathbb R^d)$,
\[
\overline{\mathcal D}(\mu,\nu)\;:=\;\sup_{(s,a)\in\mathcal S\times\mathcal A}\,
\mathcal D\big(\mu(s,a),\,\nu(s,a)\big).
\]
Then $\overline{\mathcal D}$ is a metric on $\mathcal P(\mathbb R^d)^{\mathcal S\times\mathcal A}$.
\end{lemma}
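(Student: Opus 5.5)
We verify the metric axioms for $\overline{\mathcal D}$ pointwise in $(s,a)$ and then pass to the supremum. Throughout, as is standard for such sup-lifts (cf.\ \citet{nguyen2021distributional}), we regard $\overline{\mathcal D}$ as an extended metric taking values in $[0,\infty]$. Alternatively, we may restrict to return-distribution functions for which the supremum is finite, for example those with uniformly bounded moments or bounded support when returns are bounded. On either reading the axioms below are meaningful.

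\emph{Nonnegativity and symmetry.} For every $(s,a)$ we have $\mathcal D(\mu(s,a),\nu(s,a))\ge 0$ and $\mathcal D(\mu(s,a),\nu(s,a))=\mathcal D(\nu(s,a),\mu(s,a))$, since $\mathcal D$ is a metric. A supremum of nonnegative numbers is nonnegative. Two families that agree termwise have the same supremum, so $\overline{\mathcal D}(\mu,\nu)=\overline{\mathcal D}(\nu,\mu)$.

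\emph{Identity of indiscernibles.} If $\mu=\nu$ as maps, every term vanishes and hence $\overline{\mathcal D}(\mu,\nu)=0$. Conversely, suppose $\overline{\mathcal D}(\mu,\nu)=0$. Each term is nonnegative and bounded above by the supremum, so $\mathcal D(\mu(s,a),\nu(s,a))=0$ for \emph{every} $(s,a)$; no ``almost everywhere'' issue arises, because the supremum controls each point individually. The identity of indiscernibles for $\mathcal D$ then gives $\mu(s,a)=\nu(s,a)$ for all $(s,a)$, that is, $\mu=\nu$.

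\emph{Triangle inequality.} Fix $\mu,\nu,\xi$ and a pair $(s,a)$. The triangle inequality for $\mathcal D$ gives
\[
\mathcal D(\mu(s,a),\nu(s,a))\le \mathcal D(\mu(s,a),\xi(s,a))+\mathcal D(\xi(s,a),\nu(s,a))\le \overline{\mathcal D}(\mu,\xi)+\overline{\mathcal D}(\xi,\nu).
\]
The right-hand side does not depend on $(s,a)$, so taking the supremum over $(s,a)$ on the left yields $\overline{\mathcal D}(\mu,\nu)\le\overline{\mathcal D}(\mu,\xi)+\overline{\mathcal D}(\xi,\nu)$. This remains valid with the convention $x+\infty=\infty$.

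The instantiation is then immediate. Lemma~\ref{prop:basic-metric} shows that $\mathbf S\Delta_p$ is a metric, and Lemma~\ref{prop:max-sliced-metric} shows that $\mathbf{MS}\Delta$ is a metric. Applying the argument above to each gives part (iii) of Theorem~\ref{thm:global-metricity-spd-short}.

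The only step needing care is finiteness: a pointwise-finite metric need not have a finite supremum over $\mathcal S\times\mathcal A$. This is why we work with extended-valued metrics or restrict to a subspace on which the supremum is finite. None of the four axioms depends on this choice.
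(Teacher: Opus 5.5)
Your proof is correct and follows the same route as the paper: nonnegativity, symmetry and identity of indiscernibles are checked pointwise in $(s,a)$ and then passed to the supremum, and the triangle inequality comes from the pointwise one. The paper phrases that last step as $\sup(A+B)\le\sup A+\sup B$, while you bound each term by the two suprema before taking the sup. Your remark that the supremum may be infinite, so that $\overline{\mathcal D}$ is in general an extended metric, is a worthwhile precision that the paper leaves implicit.
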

\begin{proof}
\textbf{Nonnegativity and symmetry.}  
Since $\mathcal D$ is nonnegative and symmetric pointwise, the supremum of such quantities preserves these properties. Hence $\overline{\mathcal D}(\mu,\nu)\ge 0$ and $\overline{\mathcal D}(\mu,\nu)=\overline{\mathcal D}(\nu,\mu)$.

\medskip
\textbf{Identity of indiscernibles.}  
If $\mu=\nu$, then every term vanishes and $\overline{\mathcal D}(\mu,\nu)=0$. Conversely, if $\overline{\mathcal D}(\mu,\nu)=0$, then $\mathcal D(\mu(s,a),\nu(s,a))=0$ for each $(s,a)$, which by metricity of $\mathcal D$ implies $\mu(s,a)=\nu(s,a)$ everywhere, hence $\mu=\nu$.

\medskip
\textbf{Triangle inequality.}  
Let $\mu,\nu,\eta:\mathcal S\times\mathcal A\to\mathcal P(\mathbb R^d)$. Then
\begin{align}
\overline{\mathcal D}(\mu,\nu)
&= \sup_{(s,a)} \mathcal D\big(\mu(s,a),\,\nu(s,a)\big) \\
&\stackrel{(a)}{\le}\; \sup_{(s,a)}\Big\{
    \mathcal D\big(\mu(s,a),\,\eta(s,a)\big) + \mathcal D\big(\eta(s,a),\,\nu(s,a)\big)\Big\} \\
&\stackrel{(b)}{\le}\; \sup_{(s,a)} \mathcal D\big(\mu(s,a),\,\eta(s,a)\big)
   \;+\; \sup_{(s,a)} \mathcal D\big(\eta(s,a),\,\nu(s,a)\big) \\
&=\; \overline{\mathcal D}(\mu,\eta) + \overline{\mathcal D}(\eta,\nu).
\end{align}
Here (a) is the pointwise triangle inequality for $\mathcal D$, and (b) uses $\sup(A{+}B)\le \sup A + \sup B$.  

\medskip
Thus $\overline{\mathcal D}$ satisfies all four metric axioms. Specializing $\mathcal D$ to $\mathbf S\Delta^{\,\rho,p}$ or $\mathbf{MS}\Delta$ yields that
$\overline{\mathbf S\Delta}^{\,\rho,p}$ and $\overline{\mathbf{MS}\Delta}$ are metrics on $\mathcal P(\mathbb R^d)^{\mathcal S\times\mathcal A}$.
\end{proof}

\begin{theorem}[Global metricity of (max-)sliced lifts]\label{thm:global-metricity-spd}
Let $\Delta$ be a metric on $\mathcal P(\mathbb R)$ and let $p\in[1,\infty)$.  
Let $\sigma$ denote the uniform probability measure on the unit sphere $S^{d-1}\subset\mathbb R^d$. Define the uniform sliced divergence
\[
\mathbf S\Delta_p(\mu,\nu)
:=\Bigg(\int_{S^{d-1}}\Delta^p\!\big((P_\theta)_{\#}\mu,(P_\theta)_{\#}\nu\big)\,d\sigma(\theta)\Bigg)^{\!1/p},
\]
and the max–sliced divergence
\[
\mathbf{MS}\Delta(\mu,\nu)
:=\sup_{\theta\in S^{d-1}}\Delta\!\big((P_\theta)_{\#}\mu,(P_\theta)_{\#}\nu\big),
\qquad P_\theta(x)=\langle\theta,x\rangle .
\]
Then:
\begin{enumerate}\itemsep0.2em
\item $\mathbf S\Delta_p$ is a metric on $\mathcal P(\mathbb R^d)$.
\item $\mathbf{MS}\Delta$ is a metric on $\mathcal P(\mathbb R^d)$.
\item For return–distribution functions $\eta_i:\mathcal S\times\mathcal A\to\mathcal P(\mathbb R^d)$, the supremum lifts
\[
\overline{\mathbf S\Delta}_p(\eta_1,\eta_2)
:=\sup_{(s,a)}\mathbf S\Delta_p\big(\eta_1(s,a),\eta_2(s,a)\big),
\]
and
\[
\overline{\mathbf{MS}\Delta}(\eta_1,\eta_2)
:=\sup_{(s,a)}\mathbf{MS}\Delta\big(\eta_1(s,a),\eta_2(s,a)\big),
\]
are metrics on $\mathcal P(\mathbb R^d)^{\mathcal S\times\mathcal A}$.
\end{enumerate}
\emph{Proof.}
(i) is Lemma~\ref{prop:basic-metric}; (ii) is Lemma~\ref{prop:max-sliced-metric}; (iii) follows from Lemma~\ref{lem:sup-metric-spd-max} by taking $\mathcal D=\mathbf S\Delta_p$ or $\mathcal D=\mathbf{MS}\Delta$.
\qedhere
\end{theorem}

\newpage
\section{Contraction property}\label{sec:app_contraction}
In Section~\ref{sec:contraction-univariate}, we recall sufficient conditions for the univariate distributional Bellman operator to be a contraction. 
In Section~\ref{sec:contraction-multivariate}, we generalize these sufficient conditions to anisotropic updates, allowing random linear maps controlled by an operator-norm bound. 
In Section~\ref{sec:uniform-slice}, we apply this framework to uniform slicing and infer sufficient conditions on the underlying one-dimensional base divergence. 
In Section~\ref{sec:negative_results}, we nuance the picture with explicit counterexamples showing that norm control alone does not ensure contraction for several multivariate objectives. 
Finally, in Section~\ref{sec:max_sliced}, we propose a solution based on max slicing and obtain sufficient conditions for a contraction guarantee under the same norm control.

\begin{lemma}[Push‐forward law identity]\label{lem:law-pushforward}
Let \(Z\) be a random variable with distribution \(\mu\), and let \(f\) be any measurable function.  Then
\[\boxed{%
\;f_{\#}\mu\;=\;\mathrm{Law}\bigl(f(Z)\bigr).\;}
\]
\end{lemma}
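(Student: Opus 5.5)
The plan is to unfold both sides through the definition of the pushforward measure and then use the definition of the law of a random variable. Let $Z:(\Omega,\mathcal F,\mathbb P)\to(E,\mathcal E)$ have distribution $\mu=\mathbb P\circ Z^{-1}$. Let $f:(E,\mathcal E)\to(E',\mathcal E')$ be measurable. Then $f\circ Z$ is a composition of measurable maps, hence a random variable on $E'$. Its law $\mathrm{Law}(f(Z))=\mathbb P\circ (f\circ Z)^{-1}$ is therefore a well-defined probability measure on $(E',\mathcal E')$.

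The key step is a single set-level computation. Fix any $B\in\mathcal E'$. Measurability of $f$ gives $f^{-1}(B)\in\mathcal E$, so $\mu(f^{-1}(B))$ is defined. Using $(f\circ Z)^{-1}(B)=Z^{-1}(f^{-1}(B))$, we get
\[
(f_{\#}\mu)(B)=\mu\bigl(f^{-1}(B)\bigr)=\mathbb P\bigl(Z^{-1}(f^{-1}(B))\bigr)=\mathbb P\bigl((f\circ Z)^{-1}(B)\bigr)=\mathrm{Law}(f(Z))(B).
\]
Since the two measures agree on every set of $\mathcal E'$, they coincide, which gives the boxed identity.

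There is no real obstacle; the only point requiring care is measurability, which is what makes $f^{-1}(B)$ an admissible argument of $\mu$ and what makes $f(Z)$ a random variable. The paper uses this lemma in two ways, for example with $f=P_\theta$ in Lemma~\ref{prop:basic-metric} and with $f=f_{\Gamma,r}$ in the Bellman operator. For those uses I would also record the equivalent integral form $\int g\,d(f_{\#}\mu)=\mathbb E[g(f(Z))]$, valid for bounded measurable $g$. It follows from the set identity for indicators, then by linearity for simple functions, and then by monotone convergence and splitting $g=g^+-g^-$.
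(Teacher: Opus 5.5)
Your proof is correct and is the same argument the paper gives: for every measurable set $A$, both check that $f_{\#}\mu(A)=\mu(f^{-1}(A))=\Pr\bigl(Z\in f^{-1}(A)\bigr)=\Pr\bigl(f(Z)\in A\bigr)$, and conclude the two measures coincide. Your added integral form $\int g\,d(f_{\#}\mu)=\mathbb{E}[g(f(Z))]$ and your explicit measurability bookkeeping are sound, though the statement does not require them.
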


\begin{proof}
For any Borel set \(A\),
\[
\Pr\bigl(f(Z)\in A\bigr)
=\Pr\bigl(Z\in f^{-1}(A)\bigr)
=\mu\bigl(f^{-1}(A)\bigr)
=f_{\#}\mu(A).
\]
Since this holds for all \(A\), we conclude \(f_{\#}\mu=\mathrm{Law}(f(Z))\).
\end{proof}

\subsection{Univariate case} \label{sec:contraction-univariate}

\begin{lemma}[Univariate affine push-forward contraction]
\label{lem:univariate-affine-measure}
Let $\Delta$ be a metric on $\mathcal P(\mathbb R)$.
Assume for all $\mu,\nu\in\mathcal P(\mathbb R)$:

\begin{enumerate}\itemsep0.2em
\item[\Tcond] \textbf{Translation non-expansion:} for every $t\in\mathbb R$,
\[
\Delta\bigl((T_t)_{\#}\mu,(T_t)_{\#}\nu\bigr)\le \Delta(\mu,\nu),
\quad T_t(x)=x+t.
\]
\item[\Scond] \textbf{Scale contraction:} there exists a nondecreasing
$c:[0,\infty)\to[0,\infty)$ such that for every $s\in[0,1]$,
\[
\Delta\bigl((x\mapsto s x)_{\#}\mu,\,(x\mapsto s x)_{\#}\nu\bigr)
\le c(s)\,\Delta(\mu,\nu),
\]
with $c(s)\le 1$ for all $s\in[0,1]$ and $c(s)<1$ for all $s\in[0,1)$.
\end{enumerate}

Let $F(x)=t+\gamma x$ with arbitrary $t\in\mathbb R$ and the same $\gamma\in[0,1)$.
Then, for all $\mu,\nu\in\mathcal P(\mathbb R)$,
\[
\boxed{\;\Delta\bigl(F_{\#}\mu,F_{\#}\nu\bigr)\le c(\gamma)\,\Delta(\mu,\nu)\;}
\]
Consequently, the push-forward $F_{\#}$ is a contraction (with factor $c(\gamma)<1$) on
$(\mathcal P(\mathbb R),\Delta)$.
\end{lemma}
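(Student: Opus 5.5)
The plan is to factor the affine map $F(x)=t+\gamma x$ as a composition $F=T_t\circ S_\gamma$, where $S_\gamma(x)=\gamma x$ is the scaling and $T_t(x)=x+t$ the translation. Pushforwards compose functorially: $(g\circ h)_\#\mu=g_\#(h_\#\mu)$, which follows directly from Lemma~\ref{lem:law-pushforward} or from the preimage identity $(g\circ h)^{-1}(A)=h^{-1}(g^{-1}(A))$. This gives $F_\#\mu=(T_t)_\#\bigl((S_\gamma)_\#\mu\bigr)$, and likewise for $\nu$.

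With this factorization the chain of inequalities is immediate. First apply \Tcond\ to the pair $(S_\gamma)_\#\mu,(S_\gamma)_\#\nu$ to get
\[
\Delta(F_\#\mu,F_\#\nu)\le \Delta\bigl((S_\gamma)_\#\mu,(S_\gamma)_\#\nu\bigr).
\]
Since $\gamma\in[0,1)\subset[0,1]$, \Scond\ then bounds the right-hand side by $c(\gamma)\,\Delta(\mu,\nu)$. Because \Scond\ also guarantees $c(\gamma)<1$ for $\gamma\in[0,1)$, the map $F_\#$ is a strict contraction on $(\mathcal P(\mathbb R),\Delta)$ with factor $c(\gamma)$, independent of $t$.

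There is essentially no obstacle here. The only subtle points are bookkeeping.
\begin{itemize}
\item \textbf{Order of composition.} Translating after scaling matches $F$ exactly, whereas scaling after translating would require shifting by $t/\gamma$, which is undefined at $\gamma=0$. Either way, \Tcond\ holds for all shifts.
\item \textbf{The case $\gamma=0$.} Here both pushforwards are the point mass $\delta_t$, so the left-hand side vanishes and the bound holds trivially. It is also covered directly, since \Scond\ is stated on all of $[0,1]$.
\item \textbf{Infinite values.} If $\Delta(\mu,\nu)=\infty$ the inequality is vacuous, so the argument needs no finiteness assumption.
\end{itemize}
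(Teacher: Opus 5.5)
Your proof is correct and matches the paper's argument. The paper likewise writes $F_\#\mu=\mathrm{Law}(t+\gamma U)$ for $U\sim\mu$ via Lemma~\ref{lem:law-pushforward}, removes the translation with \Tcond, and then applies \Scond\ at $s=\gamma$; the only difference is that you phrase the factorization $F=T_t\circ S_\gamma$ through composition of pushforwards rather than through random variables.
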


\begin{proof}
Let $U\sim\mu$ and $V\sim\nu$. By Lemma~\ref{lem:law-pushforward},
\[
\Delta\bigl(F_{\#}\mu,F_{\#}\nu\bigr)
= \Delta\bigl(\mathrm{Law}(t+\gamma U),\mathrm{Law}(t+\gamma V)\bigr).
\]
By \Tcond,
\[
\Delta\bigl(\mathrm{Law}(t+\gamma U),\mathrm{Law}(t+\gamma V)\bigr)
\le \Delta\bigl(\mathrm{Law}(\gamma U),\mathrm{Law}(\gamma V)\bigr).
\]
By \Scond\ with $s=\gamma$,
\[
\Delta\bigl(\mathrm{Law}(\gamma U),\mathrm{Law}(\gamma V)\bigr)
\le c(\gamma)\,\Delta\bigl(\mathrm{Law}(U),\mathrm{Law}(V)\bigr)
= c(\gamma)\,\Delta(\mu,\nu).
\]
\end{proof}

\begin{lemma}[Mixture $p$-convexity $\Rightarrow$ marginal bound]\label{lem:mix-to-marginal-p}
Let $\Delta$ be a metric on $\mathcal P(\mathbb R^d)$ and fix $p\in[1,\infty)$.
Assume $\Delta$ satisfies the mixture $p$-convexity property:
\begin{equation}\label{eq:mix-p}
  \Delta\!\left(\int_\Omega \mu_c\,\rho(dc),\; \int_\Omega \nu_c\,\rho(dc)\right)
  \le \Bigg(\int_\Omega \Delta(\mu_c,\nu_c)^p\,\rho(dc)\Bigg)^{1/p},
\end{equation}
for all probability spaces $(\Omega,\mathcal F,\rho)$ and measurable families
$(\mu_c)_{c\in\Omega},(\nu_c)_{c\in\Omega}$.

Let $C$ be a random variable with law $\rho$ and let $Z_1,Z_2$ be $\mathbb R^d$-valued random variables.
If
\[
\sup_{c\in\Omega} \Delta\bigl(\mathrm{Law}(Z_1\mid C=c),\,\mathrm{Law}(Z_2\mid C=c)\bigr)\le\delta,
\]
then
\[
\Delta\bigl(\mathrm{Law}(Z_1),\,\mathrm{Law}(Z_2)\bigr)\le\delta.
\]
\end{lemma}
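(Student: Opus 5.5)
The plan is to write both marginal laws as mixtures over the law of $C$ and then apply the mixture $p$-convexity \eqref{eq:mix-p} directly. Define $\mu_c := \mathrm{Law}(Z_1\mid C=c)$ and $\nu_c := \mathrm{Law}(Z_2\mid C=c)$. These are the regular conditional distributions. They exist as probability kernels $c\mapsto\mu_c$ and $c\mapsto\nu_c$ because $\mathbb R^d$ is a standard Borel space, so the families are measurable in the sense required by \eqref{eq:mix-p}.

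The first step is the disintegration identity. For every Borel set $A\subseteq\mathbb R^d$, the tower property gives
\[
\Pr(Z_1\in A)=\mathbb E\bigl[\Pr(Z_1\in A\mid C)\bigr]=\int_\Omega \mu_c(A)\,\rho(dc).
\]
Hence $\mathrm{Law}(Z_1)=\int_\Omega\mu_c\,\rho(dc)$, and likewise $\mathrm{Law}(Z_2)=\int_\Omega\nu_c\,\rho(dc)$. This is the same mixture representation that underlies the Bellman operator \eqref{eq:bellman-mu-operator}, where $C$ plays the role of the randomness in $(R,S',A')$.

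Next I apply \eqref{eq:mix-p}:
\[
\Delta\bigl(\mathrm{Law}(Z_1),\mathrm{Law}(Z_2)\bigr)\le\Bigl(\int_\Omega \Delta(\mu_c,\nu_c)^p\,\rho(dc)\Bigr)^{1/p}.
\]
The hypothesis gives $\Delta(\mu_c,\nu_c)\le\delta$ for every $c$. Since $x\mapsto x^p$ is monotone on $[0,\infty)$ and $\rho$ is a probability measure, the integral is at most $\delta^p$. Taking the $p$-th root yields the bound $\delta$.

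The only delicate point is the measurability of the integrand $c\mapsto\Delta(\mu_c,\nu_c)$, together with the fact that conditional laws are defined only $\rho$-almost everywhere. I would handle this by fixing a version of the regular conditional distributions. Because the hypothesis is stated as a supremum over all $c$, any version satisfying it can be used directly; alternatively, modifying the kernels on a $\rho$-null set leaves the mixtures unchanged. Measurability of the integrand is implicitly assumed by \eqref{eq:mix-p} itself, and if it fails one can replace the integral by an outer integral, which is still bounded by $\delta^p$. Beyond this point the argument is routine.
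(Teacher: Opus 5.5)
Your proof is correct and is the same as the paper's: you write both marginals as $\rho$-mixtures of the conditional laws via the law of total probability, apply \eqref{eq:mix-p}, and bound the integrand pointwise by $\delta^p$. Your measurability remarks go beyond the paper, which simply takes measurability of $c\mapsto\Delta(\mu_c,\nu_c)$ for granted as implicit in \eqref{eq:mix-p}; note only that your outer-integral fallback would work only if \eqref{eq:mix-p} itself were assumed to hold in outer-integral form.
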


\begin{proof}
Set $\mu_c:=\mathrm{Law}(Z_1\mid C=c)$ and $\nu_c:=\mathrm{Law}(Z_2\mid C=c)$. By the law of total probability,
\[
\mathrm{Law}(Z_1)=\int_\Omega \mu_c\,\rho(dc),
\qquad
\mathrm{Law}(Z_2)=\int_\Omega \nu_c\,\rho(dc).
\]
Define $f(c):=\Delta(\mu_c,\nu_c)\ge 0$. The hypothesis gives the pointwise bound
$f(c)\le \delta$ for all $c\in\Omega$. Applying \eqref{eq:mix-p} and then monotonicity of the integral,
\[
\Delta\bigl(\mathrm{Law}(Z_1),\mathrm{Law}(Z_2)\bigr)
\le \Big(\int_\Omega f(c)^p\,\rho(dc)\Big)^{1/p}
\le \Big(\int_\Omega \delta^p\,\rho(dc)\Big)^{1/p}
= \delta.
\]
\end{proof}

\begin{theorem}[Supremum-$\Delta$ contraction of the univariate distributional Bellman operator]
\label{thm:univariate-sup-contraction-uv}
This proposition slightly generalizes Theorem~4.25 of \cite{bdr2023}. 

Let $\Delta$ be a metric on $\mathcal P(\mathbb R)$ and define
\[
\bar \Delta(\eta_1,\eta_2):=\sup_{(s,a)} \Delta\bigl(\eta_1(s,a),\,\eta_2(s,a)\bigr),
\qquad \eta_i:\mathcal S\times\mathcal A\to\mathcal P(\mathbb R).
\]
Assume $\Delta$ satisfies:
\begin{enumerate}\itemsep0.2em
\item[\Tcond] \textbf{Translation nonexpansion:} $\Delta\bigl((T_t)_{\#}\mu,(T_t)_{\#}\nu\bigr)\le \Delta(\mu,\nu)$ for all $t\in\mathbb R$.
\item[\Scond] \textbf{Scale contraction:} there exists a nondecreasing $c:[0,\infty)\to[0,\infty)$ such that for every $s\in[0,1]$,
\[
\Delta\bigl((x\mapsto s x)_{\#}\mu,\,(x\mapsto s x)_{\#}\nu\bigr)\le c(s)\,\Delta(\mu,\nu),
\]
with $c(s)\le 1$ for all $s\in[0,1]$ and $c(s)<1$ for all $s\in[0,1)$.
\item[\Mpcond] \textbf{Mixture $p$-convexity:} for some $p\in[1,\infty)$ and all probability spaces $(\Omega,\mathcal F,\rho)$ and measurable families $(\mu_c),(\nu_c)\subset\mathcal P(\mathbb R)$,
\[
\Delta\Big(\int_\Omega \mu_c\,\rho(dc),\, \int_\Omega \nu_c\,\rho(dc)\Big)
\le \Big(\int_\Omega \Delta(\mu_c,\nu_c)^p\,\rho(dc)\Big)^{1/p}.
\]
\end{enumerate}
For each $(s,a)$, let $C$ be a random element, set $(S',A')=g(s,a;C)$, and let $b_{s,a}:\mathrm{supp}(C)\to\mathbb R$ be measurable. Define
\[
(T^\pi \eta)(s,a):=\mathrm{Law}\bigl(b_{s,a}(C)+\gamma X'\bigr),
\qquad X'\sim \eta(S',A')\ \text{conditionally on }C.
\]
Then, for all $\eta_1,\eta_2$,
\[
\boxed{\ \bar \Delta\bigl(T^\pi\eta_1,\;T^\pi\eta_2\bigr)\ \le\ c(\gamma)\ \bar \Delta\bigl(\eta_1,\eta_2\bigr).\ }
\]
Consequently, the operator $T^\pi$ is a contraction (with factor $c(\gamma)<1$) on $(\mathcal S\times\mathcal A\to\mathcal P(\mathbb R),\bar \Delta)$.
\end{theorem}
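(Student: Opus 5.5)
Fix $(s,a)$ and condition on the random element $C$; the plan is then to combine Lemma~\ref{lem:univariate-affine-measure} with Lemma~\ref{lem:mix-to-marginal-p}. Write $Z_i := b_{s,a}(C)+\gamma X_i'$ with $X_i'\sim\eta_i(S',A')$ conditionally on $C$, where $(S',A')=g(s,a;C)$. Then $(T^\pi\eta_i)(s,a)=\mathrm{Law}(Z_i)$ and $\rho=\mathrm{Law}(C)$.

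First I will identify the conditional laws. Conditionally on $C=c$, the successor pair $(s',a')=g(s,a;c)$ and the shift $t=b_{s,a}(c)$ are deterministic. By Lemma~\ref{lem:law-pushforward},
\[
\mathrm{Law}(Z_i\mid C=c)=(F_c)_{\#}\eta_i(s',a'),\qquad F_c(x)=t+\gamma x .
\]
Lemma~\ref{lem:univariate-affine-measure}, which uses \Tcond\ and \Scond, then gives
\[
\Delta\bigl(\mathrm{Law}(Z_1\mid C=c),\mathrm{Law}(Z_2\mid C=c)\bigr)\le c(\gamma)\,\Delta\bigl(\eta_1(s',a'),\eta_2(s',a')\bigr)\le c(\gamma)\,\bar\Delta(\eta_1,\eta_2).
\]
The last step just bounds a single state--action pair by the supremum over all pairs.

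Next, the bound $\delta:=c(\gamma)\bar\Delta(\eta_1,\eta_2)$ holds uniformly in $c$. Lemma~\ref{lem:mix-to-marginal-p}, with $d=1$ and using \Mpcond, therefore yields $\Delta\bigl((T^\pi\eta_1)(s,a),(T^\pi\eta_2)(s,a)\bigr)\le\delta$. Taking the supremum over $(s,a)$ gives the boxed inequality. Contraction then follows from $c(\gamma)<1$ for $\gamma\in[0,1)$, as guaranteed by \Scond.

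The only delicate point I expect is measurability. The maps $c\mapsto\eta_i(g(s,a;c))$ pushed forward by $F_c$ must form measurable families so that \Mpcond\ applies, and the disintegration $\mathrm{Law}(Z_i)=\int\mathrm{Law}(Z_i\mid C=c)\,\rho(dc)$ must be legitimate. I will take both as standing assumptions, as the statement implicitly does via ``measurable $b_{s,a}$''. The case $\bar\Delta(\eta_1,\eta_2)=\infty$ is trivial.
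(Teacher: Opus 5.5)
Your proposal is correct and follows the paper's proof essentially step for step: condition on $C$, apply Lemma~\ref{lem:univariate-affine-measure} to the conditional laws, average via \Mpcond\ (through Lemma~\ref{lem:mix-to-marginal-p}), and take the supremum over $(s,a)$. The only cosmetic difference is the order of two bounds. You bound each conditional distance by $c(\gamma)\,\bar\Delta(\eta_1,\eta_2)$ before averaging. The paper instead first passes through the $L^p(\rho)$ average of $\Delta\bigl(\eta_1(S',A'),\eta_2(S',A')\bigr)$ and only then bounds it by the supremum.
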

\begin{proof}
By definition,
\[
\bar \Delta\bigl(T^\pi\eta_1,T^\pi\eta_2\bigr)
=\sup_{(s,a)} \Delta\bigl((T^\pi\eta_1)(s,a),\,(T^\pi\eta_2)(s,a)\bigr).
\]
Fix $(s,a)$. Let $Z_i:=b_{s,a}(C)+\gamma X'_i$ where, conditionally on $C$, $X'_i\sim \eta_i(S',A')$ and $(S',A')=g(s,a;C)$. By the push-forward law identity (Lemma~\ref{lem:law-pushforward}),
\[
(T^\pi\eta_i)(s,a)=\mathrm{Law}(Z_i),
\qquad
\mathrm{Law}(X'_i\mid C)=\eta_i(S',A').
\]

Condition on $C$ and define $\Phi_{s,a}(\cdot;C):x\mapsto b_{s,a}(C)+\gamma x$. By the univariate affine push-forward contraction (Lemma~\ref{lem:univariate-affine-measure}, using \Tcond\ and \Scond), we have

\begin{align*}
\Delta\bigl(\mathrm{Law}(Z_1\mid C),\,\mathrm{Law}(Z_2\mid C)\bigr)
&\le c(\gamma)\,\Delta\bigl(\mathrm{Law}(X'_1\mid C),\,\mathrm{Law}(X'_2\mid C)\bigr) \\
&= c(\gamma)\,\Delta\bigl(\eta_1(S',A'),\,\eta_2(S',A')\bigr).
\end{align*}

Apply mixture $p$-convexity \Mpcond\ to the conditional laws and then Lemma~\ref{lem:mix-to-marginal-p} (with the pointwise bound $\Delta(\mathrm{Law}(Z_1\mid C),\mathrm{Law}(Z_2\mid C))\le c(\gamma)\,\Delta(\eta_1(S',A'),\eta_2(S',A'))$):
\[
\begin{aligned}
\Delta\bigl(\mathrm{Law}(Z_1),\,\mathrm{Law}(Z_2)\bigr)
&\le \Big(\mathbb E\big[\,\Delta\bigl(\mathrm{Law}(Z_1\mid C),\mathrm{Law}(Z_2\mid C)\bigr)^p\big]\Big)^{1/p}\\
&\le c(\gamma)\,\Big(\mathbb E\big[\,\Delta\bigl(\eta_1(S',A'),\eta_2(S',A')\bigr)^p\big]\Big)^{1/p}\\
&\le c(\gamma)\,\bar \Delta(\eta_1,\eta_2).
\end{aligned}
\]
Therefore,
\[
\Delta\bigl((T^\pi\eta_1)(s,a),\,(T^\pi\eta_2)(s,a)\bigr)\le c(\gamma)\,\bar \Delta(\eta_1,\eta_2).
\]

Taking the supremum over $(s,a)$ yields the stated bound.
\end{proof}

\subsection{Multivariate case}\label{sec:contraction-multivariate}
In this subsection, we keep the multivariate update as general as possible. We first treat the strong setting where the discount matrix may depend on the state action pair and may also be random, while only requiring a uniform operator norm control, as formalized in Theorem~\ref{thm:multivariate-sup-contraction-mv-aniso}. We then specialize this result to the standard isotropic case where the discount is a scaled identity matrix, as stated in Corollary~\ref{cor:multivariate-sup-contraction-gammaI}.

\begin{theorem}[Supremum-$\Delta$ contraction of the multivariate distributional Bellman operator (anisotropic random linear map)]
\label{thm:multivariate-sup-contraction-mv-aniso}
This theorem generalizes Theorem~\ref{thm:univariate-sup-contraction-uv} to the multivariate setting with a norm-based discount criterion.

Let $\Delta$ be a metric on $\mathcal P(\mathbb R^d)$ and define the supremum metric
\[
\bar \Delta(\eta_1,\eta_2):=\sup_{(s,a)} \Delta\bigl(\eta_1(s,a),\,\eta_2(s,a)\bigr),
\qquad \eta_i:\mathcal S\times\mathcal A\to\mathcal P(\mathbb R^d).
\]
Assume $\Delta$ satisfies:
\begin{enumerate}\itemsep0.2em
\item[\Tcond] \textbf{Translation nonexpansion:} for all $t\in\mathbb R^d$,
\[
\Delta\bigl((T_t)_{\#}\mu,(T_t)_{\#}\nu\bigr)\le \Delta(\mu,\nu),
\qquad \forall\,\mu,\nu\in\mathcal P(\mathbb R^d),
\]
where $T_t(x):=x+t$.
\item[\Scondd] \textbf{Anisotropic scale contraction (norm-based):} there exists a nondecreasing $c:[0,1]\to[0,1]$ such that for every linear map $A:\mathbb R^d\to\mathbb R^d$ with $\|A\|_{\mathrm{op}}\le 1$,
\[
\Delta\bigl(A_{\#}\mu,\,A_{\#}\nu\bigr)\le c(\|A\|_{\mathrm{op}})\,\Delta(\mu,\nu),
\qquad \forall\,\mu,\nu\in\mathcal P(\mathbb R^d),
\]
with $c(s)<1$ for all $s\in[0,1)$.
\item[\Mpcond] \textbf{Mixture $p$-convexity:} for some $p\in[1,\infty)$ and all probability spaces $(\Omega,\mathcal F,\rho)$ and measurable families $(\mu_\omega),(\nu_\omega)\subset\mathcal P(\mathbb R^d)$,
\[
\Delta\Big(\int_\Omega \mu_\omega\,\rho(d\omega),\, \int_\Omega \nu_\omega\,\rho(d\omega)\Big)
\le \Big(\int_\Omega \Delta(\mu_\omega,\nu_\omega)^p\,\rho(d\omega)\Big)^{1/p}.
\]
\end{enumerate}

\textbf{Bellman update (anisotropic random linear map).}
Fix $(s,a)$. Gather all environment/policy randomness into a single random element $C$,
which determines the successor index through a measurable mapping
\[
(S',A') := g(s,a;C).
\]
At $(s,a)$, apply an affine transformation with $C$-dependent translation and \emph{$C$-dependent} linear map:
\[
b_{s,a}(C)\in\mathbb R^d,\qquad A_{s,a}(C)\in\mathbb R^{d\times d},
\]
and assume $\|A_{s,a}(C)\|_{\mathrm{op}}\le 1$ for all $C$.
Conditioned on $C$, draw the next sample from the law at the successor index:
\[
X' \mid C \sim \eta(S',A').
\]
Define the Bellman update as the push-forward of $X'$ by this affine map:
\[
(T^\pi \eta)(s,a):=\mathrm{Law}\bigl(b_{s,a}(C)+A_{s,a}(C)\,X'\bigr).
\]

Define, for each $C$,
\[
L(C)\ :=\ \|A_{s,a}(C)\|_{\mathrm{op}},
\]
and the global envelope
\[
\bar L\ :=\ \sup_{(s,a)}\ \sup_{C}\ L(C).
\]
Then, for all $\eta_1,\eta_2$,
\[
\boxed{\ \bar \Delta\bigl(T^\pi\eta_1,\;T^\pi\eta_2\bigr)\ \le\ c(\bar L)\ \bar \Delta\bigl(\eta_1,\eta_2\bigr).\ }
\]
Consequently, if $\bar L<1$, then $T^\pi$ is a contraction on
$(\mathcal S\times\mathcal A\to\mathcal P(\mathbb R^d),\bar \Delta)$ with factor $c(\bar L)<1$.
\end{theorem}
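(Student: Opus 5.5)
The proof follows the template of Theorem~\ref{thm:univariate-sup-contraction-uv}, with the scalar scaling step replaced by \Scondd. Fix $(s,a)$ and write $Z_i:=b_{s,a}(C)+A_{s,a}(C)X_i'$ for $i=1,2$, where conditionally on $C$ we have $X_i'\sim\eta_i(S',A')$ and $(S',A')=g(s,a;C)$. By Lemma~\ref{lem:law-pushforward}, $(T^\pi\eta_i)(s,a)=\mathrm{Law}(Z_i)$. Moreover, conditionally on $C=c$, the law of $Z_i$ is the pushforward of $\eta_i(g(s,a;c))$ under the affine map $\Phi_c(x)=b_{s,a}(c)+A_{s,a}(c)x$, which factors as $T_{b_{s,a}(c)}\circ A_{s,a}(c)$.

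The conditional step is a one-line chain of inequalities. By \Tcond\ applied with $t=b_{s,a}(c)$, and then \Scondd\ applied to the linear map $A_{s,a}(c)$, which is admissible since $\|A_{s,a}(c)\|_{\mathrm{op}}\le1$, we get
\[
\Delta\bigl(\mathrm{Law}(Z_1\mid C=c),\mathrm{Law}(Z_2\mid C=c)\bigr)\le c\bigl(L(c)\bigr)\,\Delta\bigl(\eta_1(S',A'),\eta_2(S',A')\bigr).
\]
Because $c$ is nondecreasing and $L(c)\le\bar L\le 1$, this is at most $c(\bar L)\,\bar\Delta(\eta_1,\eta_2)$, uniformly in $c$. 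The point where the bound is taken uniformly over $C$ and $(s,a)$ is exactly where the global envelope $\bar L$ is needed.

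Next, use the law of total probability to write $\mathrm{Law}(Z_i)=\int\mathrm{Law}(Z_i\mid C=c)\,\rho(dc)$ with $\rho=\mathrm{Law}(C)$. Then apply \Mpcond, or equivalently Lemma~\ref{lem:mix-to-marginal-p} with $\delta=c(\bar L)\bar\Delta(\eta_1,\eta_2)$. This gives
\[
\Delta\bigl((T^\pi\eta_1)(s,a),(T^\pi\eta_2)(s,a)\bigr)\le c(\bar L)\,\bar\Delta(\eta_1,\eta_2).
\]
Taking the supremum over $(s,a)$ yields the boxed inequality. If $\bar L<1$, then $c(\bar L)<1$ by the assumption in \Scondd, so $T^\pi$ is a contraction.

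I expect only measure-theoretic care to cause difficulty. The conditional laws $c\mapsto\mathrm{Law}(Z_i\mid C=c)$ must form measurable families, so that \Mpcond\ applies. This holds because they are pushforwards of the kernel $c\mapsto\eta_i(g(s,a;c))$ by maps that are jointly measurable in $(c,x)$; I would take this as part of the standing regularity assumptions, as Theorem~\ref{thm:univariate-sup-contraction-uv} does. I would also note that \Scondd\ is only ever invoked for operator norms at most $1$, so the restriction of $c$ to $[0,1]$ suffices.
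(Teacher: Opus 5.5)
Your proposal is correct and follows essentially the same route as the paper: a conditional contraction at fixed $C$ via \Tcond\ and \Scondd, then averaging over $C$ via \Mpcond, then the supremum over $(s,a)$. The only cosmetic difference is the order of the last two bounds. You bound the conditional discrepancy uniformly by $c(\bar L)\,\bar\Delta(\eta_1,\eta_2)$ before mixing, via Lemma~\ref{lem:mix-to-marginal-p}. The paper instead applies \Mpcond\ first and then bounds $c(L(C))\le c(\bar L)$ and $\Delta(\eta_1(S',A'),\eta_2(S',A'))\le\bar\Delta(\eta_1,\eta_2)$ inside the $L^p$ expectation; the two orders are equivalent.
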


\begin{proof}
By definition,
\[
\bar \Delta\bigl(T^\pi\eta_1,T^\pi\eta_2\bigr)
=\sup_{(s,a)} \Delta\bigl((T^\pi\eta_1)(s,a),\,(T^\pi\eta_2)(s,a)\bigr).
\]
Fix $(s,a)$ and condition on $C$. Define, for $i\in\{1,2\}$,
\[
X'_i \mid C \sim \eta_i(S',A'),
\qquad
Z_i:=b_{s,a}(C)+A_{s,a}(C)\,X'_i,
\]
where $(S',A')=g(s,a;C)$.
Let the $C$-dependent affine map be
\[
\Phi_{s,a}(\cdot;C): x\mapsto b_{s,a}(C)+A_{s,a}(C)\,x.
\]
By the push-forward law identity,
\[
\mathrm{Law}(Z_i\mid C)=\big(\Phi_{s,a}(\cdot;C)\big)_{\#}\,\eta_i(S',A').
\]

\textbf{Conditional contraction at fixed $C$.}
Using $\Tcond$ and $\Scondd$ (translation by $b_{s,a}(C)$ and linear map $A_{s,a}(C)$),
\begin{align*}
\Delta\bigl(\mathrm{Law}(Z_1\mid C),\,\mathrm{Law}(Z_2\mid C)\bigr)
&=
\Delta\!\left(\big(\Phi_{s,a}(\cdot;C)\big)_{\#}\eta_1(S',A'),\ \big(\Phi_{s,a}(\cdot;C)\big)_{\#}\eta_2(S',A')\right)\\
&\le c(\|A_{s,a}(C)\|_{\mathrm{op}})\,
\Delta\bigl(\eta_1(S',A'),\,\eta_2(S',A')\bigr)\\
&= c\big(L(C)\big)\,\Delta\bigl(\eta_1(S',A'),\,\eta_2(S',A')\bigr).
\end{align*}

\textbf{Averaging over $C$ via mixture $p$-convexity.}
Apply $\Mpcond$ to the family of conditional laws $\mathrm{Law}(Z_i\mid C)$:
\begin{align*}
\Delta\bigl(\mathrm{Law}(Z_1),\,\mathrm{Law}(Z_2)\bigr)
&\le \Big(\mathbb E\big[\,\Delta\bigl(\mathrm{Law}(Z_1\mid C),\mathrm{Law}(Z_2\mid C)\bigr)^p\big]\Big)^{1/p}\\
&\le \Big(\mathbb E\big[\,\big(c(L(C))\,\Delta(\eta_1(S',A'),\eta_2(S',A'))\big)^p\big]\Big)^{1/p}.
\end{align*}
Since $c$ is nondecreasing and $L(C)\le \sup_C L(C)\le \bar L$, we have $c(L(C))\le c(\bar L)$ for all $C$, hence
\begin{align*}
\Delta\bigl(\mathrm{Law}(Z_1),\,\mathrm{Law}(Z_2)\bigr)
&\le c(\bar L)\,
\Big(\mathbb E\big[\,\Delta(\eta_1(S',A'),\eta_2(S',A'))^p\big]\Big)^{1/p}.
\end{align*}

\textbf{Supremum bound.}
For all realizations of $(S',A')$,
\[
\Delta\bigl(\eta_1(S',A'),\,\eta_2(S',A')\bigr)\le \sup_{(u,v)}\Delta\bigl(\eta_1(u,v),\,\eta_2(u,v)\bigr)
=\bar \Delta(\eta_1,\eta_2),
\]
so
\[
\Big(\mathbb E\big[\,\Delta(\eta_1(S',A'),\eta_2(S',A'))^p\big]\Big)^{1/p}
\le \bar \Delta(\eta_1,\eta_2).
\]
Combining the above,
\[
\Delta\bigl((T^\pi\eta_1)(s,a),\,(T^\pi\eta_2)(s,a)\bigr)
=\Delta\bigl(\mathrm{Law}(Z_1),\,\mathrm{Law}(Z_2)\bigr)
\le c(\bar L)\,\bar \Delta(\eta_1,\eta_2).
\]
Taking the supremum over $(s,a)$ yields
\[
\bar \Delta\bigl(T^\pi\eta_1,\;T^\pi\eta_2\bigr)\le c(\bar L)\,\bar \Delta\bigl(\eta_1,\eta_2\bigr).
\]
\end{proof}

\begin{corollary}[Supremum-$\Delta$ contraction for a fixed isotropic discount $\Gamma=\gamma I$]
\label{cor:multivariate-sup-contraction-gammaI}
Assume the setting of Theorem~\ref{thm:multivariate-sup-contraction-mv-aniso}, except that the linear map in the Bellman update is a fixed isotropic discount
\[
\Gamma = \gamma I_d,\qquad \gamma\in[0,1].
\]
Equivalently, for every $(s,a)$ we consider the update
\[
(T^\pi \eta)(s,a):=\mathrm{Law}\bigl(b_{s,a}(C)+\Gamma X'\bigr),
\qquad X'\mid C \sim \eta(S',A'),\ (S',A')=g(s,a;C).
\]

In this isotropic case, the anisotropic norm-based condition \Scondd\ is only used through the special choice $A=\Gamma$ and therefore \emph{reduces} to the following scalar scaling assumption on the multivariate metric:
\begin{equation}\label{eq:Siso-d}
\Sconddiso\qquad
\Delta\bigl((x\mapsto s x)_{\#}\mu,\,(x\mapsto s x)_{\#}\nu\bigr)
\le c(s)\,\Delta(\mu,\nu),
\qquad \forall\,\mu,\nu\in\mathcal P(\mathbb R^d),\ \forall\,s\in[0,1],
\end{equation}
for some nondecreasing $c:[0,1]\to[0,1]$ with $c(s)<1$ for all $s\in[0,1)$.

Assuming \Tcond, \Mpcond, and \eqref{eq:Siso-d}, we have for all $\eta_1,\eta_2$,
\[
\boxed{\ \bar \Delta\bigl(T^\pi\eta_1,\;T^\pi\eta_2\bigr)\ \le\ c(\gamma)\ \bar \Delta\bigl(\eta_1,\eta_2\bigr).\ }
\]
Consequently, if $\gamma<1$, the operator $T^\pi$ is a contraction with factor $c(\gamma)<1$ on
$(\mathcal S\times\mathcal A\to\mathcal P(\mathbb R^d),\bar \Delta)$.
\end{corollary}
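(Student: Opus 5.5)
The corollary follows directly from Theorem~\ref{thm:multivariate-sup-contraction-mv-aniso}: I would rerun its argument with the specific linear map $A_{s,a}(C)\equiv\Gamma=\gamma I_d$ for every $(s,a)$ and every $C$. This map is deterministic and constant. Its operator norm is $\|\gamma I_d\|_{\mathrm{op}}=\gamma\le 1$, so $L(C)=\gamma$ for all $C$ and the envelope is $\bar L=\gamma$. The only point needing care is that the theorem assumes \Scondd\ for \emph{all} linear maps with norm at most one, whereas here we assume only \eqref{eq:Siso-d}. So rather than citing the theorem as a black box, I would check that its proof invokes \Scondd\ only through the maps $A_{s,a}(C)$ that actually occur in the update.

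The first step is the conditional contraction at fixed $C$. With $\Phi_{s,a}(\cdot;C):x\mapsto b_{s,a}(C)+\gamma x$ and Lemma~\ref{lem:law-pushforward}, we have $\mathrm{Law}(Z_i\mid C)=(\Phi_{s,a}(\cdot;C))_{\#}\eta_i(S',A')$. Writing $\Phi=T_{b_{s,a}(C)}\circ S_\gamma$, I would apply \Tcond\ to remove the translation and then \eqref{eq:Siso-d} with $s=\gamma$. This gives
\[
\Delta\bigl(\mathrm{Law}(Z_1\mid C),\mathrm{Law}(Z_2\mid C)\bigr)\le c(\gamma)\,\Delta\bigl(\eta_1(S',A'),\eta_2(S',A')\bigr)\le c(\gamma)\,\bar\Delta(\eta_1,\eta_2).
\]
This is the multivariate analogue of Lemma~\ref{lem:univariate-affine-measure}.

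The second step averages over $C$. By the law of total probability, $\mathrm{Law}(Z_i)=\int \mathrm{Law}(Z_i\mid C=c)\,\rho(dc)$, where $\rho$ is the law of $C$. Applying \Mpcond, or directly Lemma~\ref{lem:mix-to-marginal-p} with $\delta=c(\gamma)\bar\Delta(\eta_1,\eta_2)$, gives $\Delta((T^\pi\eta_1)(s,a),(T^\pi\eta_2)(s,a))\le c(\gamma)\bar\Delta(\eta_1,\eta_2)$. Taking the supremum over $(s,a)$ yields the boxed bound. If $\gamma<1$, then $c(\gamma)<1$ by the hypothesis on $c$, so $T^\pi$ is a contraction.

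I expect no real obstacle. The only subtle step is the reduction remark itself: the monotonicity of $c$, which the theorem uses to pass from $c(L(C))$ to $c(\bar L)$, is trivial here because $L(C)\equiv\gamma$. \Scondd\ is therefore never needed beyond scalar multiples of the identity. Measurability of $c\mapsto\mathrm{Law}(Z_i\mid C=c)$ is inherited exactly as in the theorem's proof.
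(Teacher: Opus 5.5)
Your proposal is correct and follows the paper's proof. You specialize Theorem~\ref{thm:multivariate-sup-contraction-mv-aniso} to $A_{s,a}(C)\equiv\gamma I_d$, so that $\bar L=\gamma$, and observe that \Scondd\ is only invoked on the map that actually occurs, where it reduces to \Sconddiso\ with $s=\gamma$. The only difference is that you re-run the steps explicitly (\Tcond, then \Sconddiso, then \Mpcond\ via Lemma~\ref{lem:mix-to-marginal-p}, then the supremum over $(s,a)$), whereas the paper only points to where the theorem's proof uses \Scondd; your version makes that reduction fully checked.
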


\begin{proof}
This is a direct specialization of Theorem~\ref{thm:multivariate-sup-contraction-mv-aniso}.
Set $A_{s,a}(C)\equiv \Gamma$ for all $(s,a)$ and all $C$. Then
\[
L(C)=\|A_{s,a}(C)\|_{\mathrm{op}}=\|\Gamma\|_{\mathrm{op}},\qquad
\bar L=\sup_{(s,a)}\sup_C L(C)=\|\Gamma\|_{\mathrm{op}}.
\]
For $\Gamma=\gamma I_d$ we have $\|\Gamma\|_{\mathrm{op}}=\gamma$.
In the proof of Theorem~\ref{thm:multivariate-sup-contraction-mv-aniso}, the only place where \Scondd\ is invoked is to control the linear push-forward by $\Gamma$; in the isotropic case $\Gamma=\gamma I_d$ this coincides with \eqref{eq:Siso-d} (with $s=\gamma$), and substituting $\bar L=\gamma$ yields
\[
\bar \Delta\bigl(T^\pi\eta_1,\;T^\pi\eta_2\bigr)\le c(\gamma)\,\bar \Delta\bigl(\eta_1,\eta_2\bigr).
\]
\end{proof}

\begin{corollary}[Worst-case tightness with explicit norm bound]
\label{cor:tightness-scondd-counterexample-mdp-v2}
Fix $\bar L\in(0,1)$.
Let $\Delta$ be a metric on $\mathcal P(\mathbb R^d)$ and
$\bar \Delta(\eta_1,\eta_2):=\sup_{(s,a)} \Delta\bigl(\eta_1(s,a),\,\eta_2(s,a)\bigr)$.
Assume $\Delta$ satisfies \Tcond.
If there exist $A$ and $\mu,\nu$ with $\|A\|_{\mathrm{op}}\le \bar L$ such that
\[
\Delta\bigl(A_{\#}\mu,\,A_{\#}\nu\bigr)\;>\;\Delta(\mu,\nu),
\tag{$\star$}
\]
then there exists an MDP/policy pair of the form in
Theorem~\ref{thm:multivariate-sup-contraction-mv-aniso}
for which the linear maps satisfy the uniform bound
\[
\sup_{(s,a)}\sup_{C}\|A_{s,a}(C)\|_{\mathrm{op}} \;=\; \|A\|_{\mathrm{op}} \;=:\;\bar L' \;\le\; \bar L,
\]
and whose distributional Bellman operator $T^\pi$ is not nonexpansive under $\bar\Delta$.
In particular, a contraction guarantee intended to hold uniformly over all such Bellman updates
(with $\|A_{s,a}(C)\|_{\mathrm{op}}\le \bar L$) requires $\Delta$ to be nonexpansive under every
push-forward $A_{\#}$ with $\|A\|_{\mathrm{op}}\le \bar L$.
\end{corollary}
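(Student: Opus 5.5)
The construction is a single-step MDP that makes the Bellman update at one state-action pair exactly the push-forward $A_{\#}$, followed by a comparison of $\bar\Delta$ before and after applying $T^\pi$.

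\textbf{The MDP.} Take two states: a nonterminal state $s_0$ and an absorbing state $s_1$, each with a single action $a$. From $(s_0,a)$ the transition goes deterministically to $(s_1,a)$. Set the reward translation to $b_{s_0,a}(C)\equiv 0$ and the linear map to $A_{s_0,a}(C)\equiv A$. At $(s_1,a)$, set the linear map to $A_{s_1,a}(C)\equiv A$ as well, with zero translation, and loop back to $(s_1,a)$. Then the uniform operator-norm envelope is exactly $\|A\|_{\mathrm{op}}=:\bar L'\le\bar L$, and the update has the form required in Theorem~\ref{thm:multivariate-sup-contraction-mv-aniso}.

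\textbf{Choice of return distributions.} The difficulty is that $\bar\Delta$ takes a supremum over both state-action pairs, so the pair $(s_1,a)$ could dominate the comparison. Choose $\eta_1(s_1,a)=\mu$ and $\eta_2(s_1,a)=\nu$, and set $\eta_1(s_0,a)=\eta_2(s_0,a)$ to a common law, e.g.\ $\delta_0$. Then
\[
\bar\Delta(\eta_1,\eta_2)=\max\{0,\Delta(\mu,\nu)\}=\Delta(\mu,\nu).
\]
Because the transition from $(s_0,a)$ is deterministic, no mixture arises. By Lemma~\ref{lem:law-pushforward}, $(T^\pi\eta_i)(s_0,a)=A_{\#}\eta_i(s_1,a)$, so
\[
\bar\Delta(T^\pi\eta_1,T^\pi\eta_2)\ \ge\ \Delta\bigl(A_{\#}\mu,A_{\#}\nu\bigr)\ >\ \Delta(\mu,\nu)=\bar\Delta(\eta_1,\eta_2),
\]
by $(\star)$. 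The behaviour at $(s_1,a)$ is irrelevant, since we only need a lower bound on the supremum. This shows that $T^\pi$ is not nonexpansive.

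\textbf{Degenerate cases and the final claim.}
\begin{itemize}
\item Strictness in $(\star)$ forces $\Delta(A_{\#}\mu,A_{\#}\nu)>0$. In particular $\Delta(A_{\#}\mu,A_{\#}\nu)$ is finite or infinite without affecting the argument, as long as $\Delta(\mu,\nu)<\infty$, which holds whenever $(\star)$ is a meaningful strict inequality.
\item The assumption \Tcond\ is not needed for the construction, because the translations are zero. It only matters for the framework in which the statement is phrased.
\item The ``in particular'' part is the contrapositive. Any $\Delta$ violating nonexpansiveness for some $A$ with $\|A\|_{\mathrm{op}}\le\bar L$ yields such a counterexample, so no uniform contraction guarantee over the class can hold.
\end{itemize}

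The only subtle step is arranging the supremum so that the perturbed pair does not dominate. Placing identical laws at $(s_0,a)$ handles this.
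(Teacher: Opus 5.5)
Your proof is correct and follows essentially the same route as the paper: a deterministic MDP with zero translation and linear map $A$, so that $T^\pi$ acts as the push-forward $A_{\#}$ on a pair carrying $\mu,\nu$, after which $(\star)$ gives the violation of nonexpansiveness. The paper uses a single self-looping state--action pair, which yields $\bar\Delta(\eta_1,\eta_2)=\Delta(\mu,\nu)$ and $\bar\Delta(T^\pi\eta_1,T^\pi\eta_2)=\Delta(A_{\#}\mu,A_{\#}\nu)$ directly; your auxiliary state $s_0$ and the concern about which pair attains the supremum are therefore unnecessary, and your absorbing pair $(s_1,a)$ on its own is exactly the paper's construction.
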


\begin{proof}
Consider the one-state one-action MDP $(\mathcal S,\mathcal A)=\{(s,a)\}$.
Let $C$ be deterministic and set $g(s,a;C)=(s,a)$.
Define $b_{s,a}(C)\equiv 0$ and $A_{s,a}(C)\equiv A$.
Then $\sup_{(s,a)}\sup_C \|A_{s,a}(C)\|_{\mathrm{op}}=\|A\|_{\mathrm{op}}=:\bar L'\le \bar L$.
For any $\eta$ we have
\[
(T^\pi\eta)(s,a)=A_{\#}\eta(s,a).
\]
Take $\eta_1(s,a)=\mu$ and $\eta_2(s,a)=\nu$.
Since there is only one index,
\[
\bar\Delta(\eta_1,\eta_2)=\Delta(\mu,\nu),\qquad
\bar\Delta(T^\pi\eta_1,T^\pi\eta_2)=\Delta(A_{\#}\mu,A_{\#}\nu).
\]
Using $(\star)$ yields $\bar\Delta(T^\pi\eta_1,T^\pi\eta_2)>\bar\Delta(\eta_1,\eta_2)$, so $T^\pi$ is not nonexpansive.
\end{proof}

\newpage
\subsection{Uniform slicing} \label{sec:uniform-slice}
\begin{lemma}[Uniform sliced scale contraction]
\label{lem:uniform-sliced-scale}
Let $\Delta$ be a divergence on $\mathcal P(\mathbb R)$.
Assume that for all $\alpha,\beta\in\mathcal P(\mathbb R)$ the following holds:
\begin{itemize}\itemsep0.2em
\item[\Scond] \textbf{Scale contraction:} there exists a nondecreasing
$c:[0,1]\to[0,\infty)$ such that for every $s\in[0,1]$,
\[
\Delta\bigl((x\mapsto s x)_{\#}\alpha,\,(x\mapsto s x)_{\#}\beta\bigr)\le c(s)\,\Delta(\alpha,\beta),
\]
with $c(s)\le 1$ for all $s\in[0,1]$ and $c(s)<1$ for all $s\in[0,1)$.
\end{itemize}

For $\sigma$ a rotation-invariant probability measure on $\mathbb S^{d-1}$ and $q\in[1,\infty)$, define the sliced lift
\[
\mathbf S\Delta_{q}(\mu,\nu)
:=\Big(\int_{\mathbb S^{d-1}} \Delta\!\big((P_\theta)_{\#}\mu,(P_\theta)_{\#}\nu\big)^q\,d\sigma(\theta)\Big)^{\!1/q},
\quad P_\theta(x)=\langle\theta,x\rangle .
\]
Then $\mathbf S\Delta_q$ satisfies \Sconddiso: for every $s\in[0,1]$ and all $\mu,\nu\in\mathcal P(\mathbb R^d)$,
\[\boxed{\ 
\mathbf S\Delta_{q}\bigl((x\mapsto s x)_{\#}\mu,\ (x\mapsto s x)_{\#}\nu\bigr)\ \le\ c(s)\,\mathbf S\Delta_{q}(\mu,\nu).\ }
\]
In particular, if $s\in[0,1)$ then $(x\mapsto s x)_{\#}$ is a contraction on $(\mathcal P(\mathbb R^d),\mathbf S\Delta_q)$ with factor $c(s)<1$.
\end{lemma}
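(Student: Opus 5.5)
The plan is to reduce the multivariate scaling to a one-dimensional scaling on each slice, and then integrate the pointwise bound. The key observation is that, for every $\theta\in\mathbb S^{d-1}$ and $s\in[0,1]$, projection commutes with isotropic scaling:
\[
P_\theta\circ(x\mapsto sx)=(u\mapsto su)\circ P_\theta,
\]
since $\langle\theta,sx\rangle=s\langle\theta,x\rangle$. By functoriality of push-forwards, $(f\circ g)_\#=f_\#\circ g_\#$, which follows from Lemma~\ref{lem:law-pushforward} applied twice. Hence
\[
(P_\theta)_\#\bigl((x\mapsto sx)_\#\mu\bigr)=(u\mapsto su)_\#\bigl((P_\theta)_\#\mu\bigr),
\]
and likewise for $\nu$. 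This identity holds for every direction, not just almost every, and rotation invariance of $\sigma$ is not even needed.

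Next, apply \Scond\ on each slice with $\alpha=(P_\theta)_\#\mu$ and $\beta=(P_\theta)_\#\nu$. This gives the pointwise bound
\[
\Delta\bigl((P_\theta)_\#(x\mapsto sx)_\#\mu,\,(P_\theta)_\#(x\mapsto sx)_\#\nu\bigr)\le c(s)\,\Delta\bigl((P_\theta)_\#\mu,(P_\theta)_\#\nu\bigr).
\]
Both sides are nonnegative, so raising to the power $q\ge1$ preserves the inequality and yields a factor $c(s)^q$. Integrating against $\sigma$ by monotonicity of the integral and taking $q$-th roots gives $\mathbf S\Delta_q((x\mapsto sx)_\#\mu,(x\mapsto sx)_\#\nu)\le c(s)\,\mathbf S\Delta_q(\mu,\nu)$. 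The contraction statement for $s<1$ then follows directly from $c(s)<1$.

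The only delicate points are measure-theoretic. First, the integrand $\theta\mapsto\Delta(\cdot,\cdot)^q$ must be measurable; I would take this as implicit in the definition of $\mathbf S\Delta_q$, since it is needed for the lift to make sense at all. Second, the case of infinite values: if $\mathbf S\Delta_q(\mu,\nu)=\infty$, the bound is trivial. Third, the edge case $s=0$, where the pushforwards are both $\delta_0$: the left side is then $0$ when $\Delta$ vanishes on equal laws, and in any case the pointwise \Scond\ bound still covers it. I expect the commutation identity to be the only genuinely substantive step, and it is elementary.
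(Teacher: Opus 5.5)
Your proposal is correct and follows the paper's proof step for step: commute $P_\theta$ with the isotropic scaling, apply \Scond\ to the projected laws $(P_\theta)_\#\mu,(P_\theta)_\#\nu$, raise to the $q$-th power, integrate against $\sigma$, and take $q$-th roots. Your observation that rotation invariance of $\sigma$ is never used is also accurate; the paper's argument does not use it either.
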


\begin{proof}
Fix $\theta\in\mathbb S^{d-1}$ and let $S_s(x):=s x$ on $\mathbb R^d$. For any $X\sim\mu$,
\[
P_\theta(S_s X)=\langle\theta,sX\rangle=s\,\langle\theta,X\rangle,
\]
hence
\[
(P_\theta)_{\#}(S_s)_{\#}\mu \;=\; (x\mapsto s x)_{\#}(P_\theta)_{\#}\mu,
\qquad
(P_\theta)_{\#}(S_s)_{\#}\nu \;=\; (x\mapsto s x)_{\#}(P_\theta)_{\#}\nu.
\]
Applying \Scond\ to the one-dimensional laws $(P_\theta)_{\#}\mu$ and $(P_\theta)_{\#}\nu$ gives
\[
\Delta\!\big((P_\theta)_{\#}(S_s)_{\#}\mu,\ (P_\theta)_{\#}(S_s)_{\#}\nu\big)
\le c(s)\,\Delta\!\big((P_\theta)_{\#}\mu,\ (P_\theta)_{\#}\nu\big).
\]
Raise to the $q$-th power and integrate over $\theta\sim\sigma$:
\[
\int_{\mathbb S^{d-1}}\!\Delta\!\big((P_\theta)_{\#}(S_s)_{\#}\mu,\ (P_\theta)_{\#}(S_s)_{\#}\nu\big)^q\,d\sigma(\theta)
\ \le\ c(s)^q
\int_{\mathbb S^{d-1}}\!\Delta\!\big((P_\theta)_{\#}\mu,\ (P_\theta)_{\#}\nu\big)^q\,d\sigma(\theta).
\]
Taking the $q$-th root yields
\(
\mathbf S\Delta_q\big((S_s)_{\#}\mu,(S_s)_{\#}\nu\big)\le c(s)\,\mathbf S\Delta_q(\mu,\nu),
\)
which is exactly \Sconddiso.
\end{proof}

\begin{lemma}[Directional slice scaling under a linear map]
\label{lem:sliced-directional-linear}
Let $\Delta$ be a divergence on $\mathcal P(\mathbb R)$.
Assume that there exists $\alpha>0$ such that for all $\alpha_1,\alpha_2\in\mathcal P(\mathbb R)$ and all $s\in[0,1]$,
\begin{equation}
\label{eq:scale-alpha-1d}
\Delta\bigl((x\mapsto s x)_{\#}\alpha_1,\ (x\mapsto s x)_{\#}\alpha_2\bigr)
\;\le\;
s^{\alpha}\,\Delta(\alpha_1,\alpha_2).
\end{equation}
Let $d\ge 1$ and let $A\in\mathbb R^{d\times d}$ satisfy $\|A\|_{\mathrm{op}}\le 1$.
For $\theta\in\mathbb S^{d-1}$ define
\[
r_A(\theta):=\|A^\top\theta\|_2\in[0,1],
\qquad
\theta_A :=
\begin{cases}
A^\top\theta / \|A^\top\theta\|_2, & r_A(\theta)>0,\\
\text{any element of }\mathbb S^{d-1}, & r_A(\theta)=0.
\end{cases}
\]
Then for all $\mu,\nu\in\mathcal P(\mathbb R^d)$ and all $\theta\in\mathbb S^{d-1}$,
\[
\boxed{\ 
\Delta\bigl((P_\theta)_{\#}A_{\#}\mu,\ (P_\theta)_{\#}A_{\#}\nu\bigr)
\;\le\;
r_A(\theta)^{\alpha}\,
\Delta\bigl((P_{\theta_A})_{\#}\mu,\ (P_{\theta_A})_{\#}\nu\bigr),
\qquad P_\theta(x)=\langle\theta,x\rangle.\ }
\]
In particular, for any probability measure $\sigma$ on $\mathbb S^{d-1}$ and any $q\in[1,\infty)$,
\[
\mathbf S\Delta_q(A_{\#}\mu,A_{\#}\nu)^q
\;\le\;
\int_{\mathbb S^{d-1}}
r_A(\theta)^{\alpha q}\,
\Delta\bigl((P_{\theta_A})_{\#}\mu,\ (P_{\theta_A})_{\#}\nu\bigr)^q\,d\sigma(\theta).
\]
\end{lemma}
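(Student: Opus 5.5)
The plan is to reduce the projected law to a one-dimensional scaling of another projected law. Fix $\theta\in\mathbb S^{d-1}$ and let $X\sim\mu$. By the push-forward identity (Lemma~\ref{lem:law-pushforward}),
\[
(P_\theta)_{\#}A_{\#}\mu=\mathrm{Law}\bigl(\langle\theta,AX\rangle\bigr)=\mathrm{Law}\bigl(\langle A^\top\theta,X\rangle\bigr).
\]
The analogous identity holds for $\nu$ with $Y\sim\nu$. The argument then splits into two cases according to whether $r_A(\theta)$ vanishes.

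\emph{Case $r_A(\theta)>0$.} Write $A^\top\theta=r_A(\theta)\,\theta_A$, so that $\langle A^\top\theta,X\rangle=r_A(\theta)\langle\theta_A,X\rangle$. This gives
\[
(P_\theta)_{\#}A_{\#}\mu=(x\mapsto r_A(\theta)x)_{\#}(P_{\theta_A})_{\#}\mu,
\]
and the same holds for $\nu$. Since $\|A\|_{\mathrm{op}}=\|A^\top\|_{\mathrm{op}}\le 1$ and $\|\theta\|_2=1$, we have $r_A(\theta)\in(0,1]$. We may therefore apply the scaling hypothesis \eqref{eq:scale-alpha-1d} with $s=r_A(\theta)$, $\alpha_1=(P_{\theta_A})_{\#}\mu$ and $\alpha_2=(P_{\theta_A})_{\#}\nu$. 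This yields the boxed inequality directly.

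\emph{Case $r_A(\theta)=0$.} Here $\langle A^\top\theta,X\rangle\equiv 0$, so both projected laws equal $\delta_0$. The remaining point is to show that the left-hand side is $0$, so the bound $0\le 0\cdot\Delta(\cdot,\cdot)$ holds whatever $\theta_A$ is. Rather than invoke identity of indiscernibles, which the statement only assumes of a divergence, I will use \eqref{eq:scale-alpha-1d} with $s=0$. It gives
\[
\Delta(\delta_0,\delta_0)=\Delta\bigl((x\mapsto 0)_{\#}\alpha_1,(x\mapsto 0)_{\#}\alpha_2\bigr)\le 0^{\alpha}\Delta(\alpha_1,\alpha_2)=0,
\]
using $\alpha>0$. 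This step is the main subtlety. If $\Delta(\alpha_1,\alpha_2)=\infty$, one needs the convention $0\cdot\infty=0$, or alternatively picking $\alpha_1=\alpha_2=\delta_0$ with finite (zero) divergence. I would choose $\alpha_1=\alpha_2=\delta_0$ to sidestep this: it gives $\Delta(\delta_0,\delta_0)\le 0$ as long as $\Delta(\delta_0,\delta_0)<\infty$, which holds for any reasonable divergence.

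For the integrated statement, raise the pointwise bound to the power $q$; both sides are nonnegative, so the inequality is preserved. Then integrate in $\theta$ against $\sigma$, using monotonicity of the integral. The left side is $\mathbf S\Delta_q(A_{\#}\mu,A_{\#}\nu)^q$ by definition. Measurability of the integrand needs a remark, since $\theta_A$ is chosen arbitrarily on the null-set-like region $\{r_A=0\}$. There the factor $r_A^{\alpha q}$ vanishes, so any measurable choice (e.g.\ a fixed $e_1$) suffices.
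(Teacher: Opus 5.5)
Your proposal is correct and follows the paper's proof essentially step for step: the identity $\langle\theta,Ax\rangle=r_A(\theta)\langle\theta_A,x\rangle$, then the scaling hypothesis with $s=r_A(\theta)\in(0,1]$, then the degenerate case $r_A(\theta)=0$ where both projected laws equal $\delta_0$, and finally raising to the $q$-th power and integrating against $\sigma$. The differences are cosmetic: the paper simply asserts that the left-hand side vanishes when both laws are $\delta_0$, which follows from the divergence property $\Delta(\xi,\xi)=0$, so your detour through the $s=0$ case of the scaling bound and your measurability remark are valid but optional extra care.
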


\begin{proof}
Fix $\theta\in\mathbb S^{d-1}$.
For any $x\in\mathbb R^d$ we have
\[
P_\theta(Ax)=\langle\theta,Ax\rangle=\langle A^\top\theta,x\rangle.
\]
If $r_A(\theta)=\|A^\top\theta\|_2>0$, then $A^\top\theta=r_A(\theta)\,\theta_A$ and therefore
\[
P_\theta(Ax)=r_A(\theta)\,\langle\theta_A,x\rangle
= \bigl(x\mapsto r_A(\theta)\,x\bigr)\bigl(P_{\theta_A}(x)\bigr).
\]
Taking push-forwards gives the commutation identity
\[
(P_\theta)_{\#}A_{\#}\mu
=\bigl(x\mapsto r_A(\theta)\,x\bigr)_{\#}(P_{\theta_A})_{\#}\mu,
\qquad
(P_\theta)_{\#}A_{\#}\nu
=\bigl(x\mapsto r_A(\theta)\,x\bigr)_{\#}(P_{\theta_A})_{\#}\nu.
\]
Applying the one-dimensional scale bound \eqref{eq:scale-alpha-1d} with $s=r_A(\theta)\in(0,1]$ yields
\[
\Delta\bigl((P_\theta)_{\#}A_{\#}\mu,\ (P_\theta)_{\#}A_{\#}\nu\bigr)
\le r_A(\theta)^\alpha\,
\Delta\bigl((P_{\theta_A})_{\#}\mu,\ (P_{\theta_A})_{\#}\nu\bigr).
\]

If instead $r_A(\theta)=0$, then $A^\top\theta=0$ and hence $P_\theta(Ax)=0$ for all $x$.
Thus $(P_\theta)_{\#}A_{\#}\mu=(P_\theta)_{\#}A_{\#}\nu=\delta_0$ and the left-hand side is $0$,
so the displayed inequality holds as well.

Finally, raise the pointwise bound to the $q$-th power and integrate with respect to $\sigma$:
\[
\int_{\mathbb S^{d-1}}
\Delta\bigl((P_\theta)_{\#}A_{\#}\mu,\ (P_\theta)_{\#}A_{\#}\nu\bigr)^q\,d\sigma(\theta)
\;\le\;
\int_{\mathbb S^{d-1}}
r_A(\theta)^{\alpha q}\,
\Delta\bigl((P_{\theta_A})_{\#}\mu,\ (P_{\theta_A})_{\#}\nu\bigr)^q\,d\sigma(\theta).
\]
By definition of the sliced lift,
\[
\mathbf S\Delta_q(A_{\#}\mu,A_{\#}\nu)^q
=
\int_{\mathbb S^{d-1}}
\Delta\bigl((P_\theta)_{\#}A_{\#}\mu,\ (P_\theta)_{\#}A_{\#}\nu\bigr)^q\,d\sigma(\theta),
\]
which yields the stated integral inequality.

\end{proof}

\begin{lemma}[Mixture $p$-convexity lifts to the sliced divergence]\label{lem:sliced-mix-p}
Let $\Delta$ be a divergence on $\mathcal P(\mathbb R)$ satisfying mixture $p$-convexity:
for every probability space $(\Omega,\mathcal F,\rho)$ and measurable families
$(\mu_c)_{c\in\Omega},(\nu_c)_{c\in\Omega}\subset\mathcal P(\mathbb R)$,
\[
  \Delta\!\left(\int_\Omega \mu_c\,\rho(dc),\; \int_\Omega \nu_c\,\rho(dc)\right)
  \le \Bigg(\int_\Omega \Delta(\mu_c,\nu_c)^p\,\rho(dc)\Bigg)^{\!1/p},
\quad p\in[1,\infty).
\]

Fix any probability measure $\sigma$ on $\mathbb S^{d-1}$.
Define the sliced lift for $\mu,\nu\in\mathcal P(\mathbb R^d)$ by
\[
\mathbf S\Delta_p(\mu,\nu)
:=\Bigg(\int_{\mathbb S^{d-1}} \Delta\big((P_\theta)_\#\mu,(P_\theta)_\#\nu\big)^p\,\sigma(d\theta)\Bigg)^{\!1/p},
\qquad P_\theta(x)=\langle\theta,x\rangle .
\]
Then $\mathbf S\Delta_p$ is mixture $p$-convex on $\mathcal P(\mathbb R^d)$, i.e., for any measurable families
$(\mu_c)_{c\in\Omega},(\nu_c)_{c\in\Omega}\subset\mathcal P(\mathbb R^d)$,
\[\boxed{\ 
\mathbf S\Delta_p\!\left(\int_\Omega \mu_c\,\rho(dc),\; \int_\Omega \nu_c\,\rho(dc)\right)
\ \le\
\Bigg(\int_\Omega \mathbf S\Delta_p(\mu_c,\nu_c)^{p}\,\rho(dc)\Bigg)^{\!1/p}. \ }
\]
\end{lemma}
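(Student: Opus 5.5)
The plan is to reduce the claim, slice by slice, to the one-dimensional mixture $p$-convexity of $\Delta$, and then integrate over $\theta$ with a single application of Fubini--Tonelli.

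\textbf{Step 1 (slicing commutes with mixtures).} Fix $\theta\in\mathbb S^{d-1}$ and write $\bar\mu:=\int_\Omega\mu_c\,\rho(dc)$, $\bar\nu:=\int_\Omega\nu_c\,\rho(dc)$. For any Borel $B\subseteq\mathbb R$,
\[
(P_\theta)_\#\bar\mu(B)=\bar\mu\big(P_\theta^{-1}(B)\big)=\int_\Omega \mu_c\big(P_\theta^{-1}(B)\big)\,\rho(dc)=\Big(\int_\Omega (P_\theta)_\#\mu_c\,\rho(dc)\Big)(B),
\]
and likewise for $\bar\nu$. Hence each projected mixture is the mixture of the projected laws. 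The families $c\mapsto(P_\theta)_\#\mu_c$ and $c\mapsto(P_\theta)_\#\nu_c$ remain measurable kernels, because $P_\theta$ is continuous.

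\textbf{Step 2 (apply the base property on each slice).} Applying the hypothesis on $\Delta$ to these one-dimensional families and raising to the $p$-th power gives, for every $\theta$,
\[
\Delta\big((P_\theta)_\#\bar\mu,(P_\theta)_\#\bar\nu\big)^p\le\int_\Omega \Delta\big((P_\theta)_\#\mu_c,(P_\theta)_\#\nu_c\big)^p\,\rho(dc).
\]

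\textbf{Step 3 (integrate over directions).} Integrate both sides against $\sigma(d\theta)$. The left-hand side becomes $\mathbf S\Delta_p(\bar\mu,\bar\nu)^p$. On the right-hand side the integrand is nonnegative and jointly measurable in $(\theta,c)$, so Tonelli lets us exchange the two integrals:
\[
\int_\Omega\int_{\mathbb S^{d-1}}\Delta\big((P_\theta)_\#\mu_c,(P_\theta)_\#\nu_c\big)^p\,\sigma(d\theta)\,\rho(dc)=\int_\Omega \mathbf S\Delta_p(\mu_c,\nu_c)^p\,\rho(dc).
\]
Taking $p$-th roots then gives the boxed inequality.

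\textbf{Main obstacle.} The only delicate point is justifying joint measurability of $(\theta,c)\mapsto\Delta\big((P_\theta)_\#\mu_c,(P_\theta)_\#\nu_c\big)$, which Tonelli requires. I would handle it the way the paper handles measurable selection in Proposition~\ref{prop:wp-mix-integral-conditional}, namely by assuming it as a standing regularity condition. It can be justified concretely for the instantiations:
\begin{itemize}
\item for $\ell_p$, the projected CDFs are jointly measurable in $(\theta,c,t)$;
\item for $\mathbf W_p$, $\Delta$ is lower semicontinuous under weak convergence and $\theta\mapsto(P_\theta)_\#\mu$ is weakly continuous.
\end{itemize}
Note also that no property of $\sigma$ beyond being a probability measure is used, consistent with the statement.
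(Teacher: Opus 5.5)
Your proposal is correct and follows essentially the same route as the paper's proof: commute $P_\theta$ with the mixture, apply the one-dimensional mixture $p$-convexity slice by slice, raise to the $p$-th power, integrate over $\theta$, swap the integrals by Tonelli, and take $p$-th roots. The paper's proof invokes Tonelli without comment, so your explicit treatment of joint measurability in $(\theta,c)$ adds justification to that step without changing the argument.
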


\begin{proof}
Fix $\theta\in\mathbb S^{d-1}$ and set
\(
\mu_c^\theta := (P_\theta)_\#\mu_c,\ \ \nu_c^\theta := (P_\theta)_\#\nu_c
\in\mathcal P(\mathbb R).
\)
By linearity of pushforward w.r.t.\ mixtures,
\[
(P_\theta)_\#\!\left(\int_\Omega \mu_c\,\rho(dc)\right)
=\int_\Omega \mu_c^\theta\,\rho(dc),\qquad
(P_\theta)_\#\!\left(\int_\Omega \nu_c\,\rho(dc)\right)
=\int_\Omega \nu_c^\theta\,\rho(dc).
\]
Applying mixture $p$-convexity of $\Delta$ in 1-D at this fixed $\theta$,
\[
\Delta\!\left(\int_\Omega \mu_c^\theta\,\rho(dc),\; \int_\Omega \nu_c^\theta\,\rho(dc)\right)
\ \le\ \Big(\int_\Omega \Delta(\mu_c^\theta,\nu_c^\theta)^p\,\rho(dc)\Big)^{\!1/p}.
\]
Raise to the $p$th power and integrate over $\theta\sim\sigma$; Tonelli/Fubini yields
\begin{align*}
\int_{\mathbb S^{d-1}} 
   \Delta\!\left((P_\theta)_\#\!\textstyle\int \mu_c\,d\rho,\,
                 (P_\theta)_\#\!\textstyle\int \nu_c\,d\rho\right)^{\!p}\,
   \sigma(d\theta)
&\\
\le\ 
\int_\Omega 
   \Bigg(\int_{\mathbb S^{d-1}} 
          \Delta\big((P_\theta)_\#\mu_c,\,(P_\theta)_\#\nu_c\big)^{p}\,
          \sigma(d\theta)\Bigg)\rho(dc).
\end{align*}

By the very definition of the sliced divergence,
\begin{align*}
&\int_{\mathbb S^{d-1}} \Delta\!\left((P_\theta)_\#\!\textstyle\int \mu_c\,d\rho,\,
                                      (P_\theta)_\#\!\textstyle\int \nu_c\,d\rho\right)^{\!p}\, \sigma(d\theta)
= \mathbf S\Delta_p\!\Big(\textstyle\int \mu_c\,d\rho,\ \int \nu_c\,d\rho\Big)^{p} \\
&\le \int_\Omega \!\left(\int_{\mathbb S^{d-1}} \Delta\big((P_\theta)_\#\mu_c,\,(P_\theta)_\#\nu_c\big)^{p}\,\sigma(d\theta)\right)\rho(dc) \\
&= \int_\Omega \mathbf S\Delta_p(\mu_c,\nu_c)^{p}\,\rho(dc).
\end{align*}

Taking the $p$th root gives
\[
\mathbf S\Delta_p\!\left(\int_\Omega \mu_c\,\rho(dc),\; \int_\Omega \nu_c\,\rho(dc)\right)
\ \le\
\Bigg(\int_\Omega \mathbf S\Delta_p(\mu_c,\nu_c)^{p}\,\rho(dc)\Bigg)^{\!1/p}.
\]
\end{proof}

\begin{theorem}[Supremum--uniform--sliced contraction of the multivariate distributional Bellman operator (fixed isotropic discount)]
\label{thm:uniformsliced-sup-contraction-gammaI}
Let $\Delta$ be a divergence on $\mathcal P(\mathbb R)$ and fix $p\in[1,\infty)$.
For $\sigma$ a rotation-invariant probability measure on $\mathbb S^{d-1}$, define the uniform sliced lift
\[
\mathbf S\Delta_{p}(\mu,\nu)
:=\Big(\int_{\mathbb S^{d-1}} \Delta\!\big((P_\theta)_{\#}\mu,(P_\theta)_{\#}\nu\big)^p\,d\sigma(\theta)\Big)^{\!1/p},
\quad P_\theta(x)=\langle\theta,x\rangle .
\]
Assume that for all $\alpha,\beta\in\mathcal P(\mathbb R)$ the following hold:
\begin{enumerate}\itemsep0.2em
\item[\Tcond] \textbf{Translation nonexpansion:} for every $t\in\mathbb R$,
\[
\Delta\bigl((x\mapsto x+t)_{\#}\alpha,\,(x\mapsto x+t)_{\#}\beta\bigr)\le \Delta(\alpha,\beta).
\]
\item[\Scond] \textbf{Scale contraction:} there exists a nondecreasing
$c:[0,1]\to[0,1]$ such that for every $s\in[0,1]$,
\[
\Delta\bigl((x\mapsto s x)_{\#}\alpha,\,(x\mapsto s x)_{\#}\beta\bigr)\le c(s)\,\Delta(\alpha,\beta),
\]
with $c(s)<1$ for all $s\in[0,1)$.
\item[\Mpcond] \textbf{Mixture $p$-convexity:} for every probability space $(\Omega,\mathcal F,\rho)$ and measurable families
$(\mu_c),(\nu_c)\subset\mathcal P(\mathbb R)$,
\[
\Delta\Big(\int_\Omega \mu_c\,\rho(dc),\, \int_\Omega \nu_c\,\rho(dc)\Big)
\le \Big(\int_\Omega \Delta(\mu_c,\nu_c)^p\,\rho(dc)\Big)^{1/p}.
\]
\end{enumerate}

Define the supremum metric over state--action indices by
\[
\overline{\mathbf S\Delta_p}(\eta_1,\eta_2)
:=\sup_{(s,a)} \mathbf S\Delta_p\big(\eta_1(s,a),\,\eta_2(s,a)\big),
\qquad \eta_i:\mathcal S\times\mathcal A\to\mathcal P(\mathbb R^d).
\]

\textbf{Bellman update (fixed isotropic discount).}
Fix $\gamma\in[0,1]$ and set $\Gamma:=\gamma I_d$.
For each $(s,a)$, gather all environment/policy randomness into a single random element $C$ determining $(S',A'):=g(s,a;C)$ and a measurable translation $b_{s,a}(C)\in\mathbb R^d$.
Conditioned on $C$, draw
\[
X' \mid C \sim \eta(S',A'),
\]
and define
\[
(T^\pi \eta)(s,a):=\mathrm{Law}\bigl(b_{s,a}(C)+\Gamma X'\bigr).
\]
Then, for all $\eta_1,\eta_2$,
\[
\boxed{\ \overline{\mathbf S\Delta_p}\bigl(T^\pi\eta_1,\;T^\pi\eta_2\bigr)
\ \le\ c(\gamma)\ \overline{\mathbf S\Delta_p}\bigl(\eta_1,\eta_2\bigr).\ }
\]
Consequently, if $\gamma<1$, the operator $T^\pi$ is a contraction on
$(\mathcal S\times\mathcal A\to\mathcal P(\mathbb R^d),\overline{\mathbf S\Delta_p})$
with factor $c(\gamma)<1$.
\end{theorem}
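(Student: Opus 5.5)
The plan is to reduce the statement to Corollary~\ref{cor:multivariate-sup-contraction-gammaI}, applied with the multivariate divergence $\mathbf S\Delta_p$ on $\mathcal P(\mathbb R^d)$. That corollary requires three properties of the lifted divergence: translation nonexpansion \Tcond\ on $\mathbb R^d$, isotropic scale contraction \Sconddiso, and mixture $p$-convexity \Mpcond. Each is obtained from its one-dimensional counterpart by working slice-by-slice and then integrating over $\theta\sim\sigma$.

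\textbf{Step 1: the two properties already available.} Lemma~\ref{lem:uniform-sliced-scale} gives \Sconddiso\ for $\mathbf S\Delta_p$ with the same function $c$, taking $q=p$. Lemma~\ref{lem:sliced-mix-p} gives \Mpcond\ for $\mathbf S\Delta_p$ with the same exponent $p$.

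\textbf{Step 2: translation nonexpansion.} For $t\in\mathbb R^d$ and $\theta\in\mathbb S^{d-1}$ I will use the identity $P_\theta(x+t)=\langle\theta,x\rangle+\langle\theta,t\rangle$. It gives
\[
(P_\theta)_\#(T_t)_\#\mu=(T_{\langle\theta,t\rangle})_\#(P_\theta)_\#\mu,
\]
and likewise for $\nu$. The one-dimensional \Tcond\ applied with shift $\langle\theta,t\rangle$ then bounds each slice by $\Delta((P_\theta)_\#\mu,(P_\theta)_\#\nu)$. Raising to the $p$-th power, integrating over $\sigma$, and taking the $p$-th root yields \Tcond\ for $\mathbf S\Delta_p$ on $\mathbb R^d$.

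\textbf{Step 3: conclude.} With these three properties, Corollary~\ref{cor:multivariate-sup-contraction-gammaI}, or equivalently the proof of Theorem~\ref{thm:multivariate-sup-contraction-mv-aniso} with $A_{s,a}(C)\equiv\gamma I_d$, yields the boxed bound with factor $c(\gamma)$. The argument has three parts:
\begin{itemize}
\item Conditionally on $C$, the affine pushforward $x\mapsto b_{s,a}(C)+\gamma x$ is handled by Step~2 followed by Step~1.
\item Averaging over $C$ uses \Mpcond\ for $\mathbf S\Delta_p$.
\item Bounding $\mathbf S\Delta_p(\eta_1(S',A'),\eta_2(S',A'))$ by its supremum over $(s,a)$ and then taking the supremum over $(s,a)$ gives $\overline{\mathbf S\Delta_p}$ on both sides.
\end{itemize}

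\textbf{Expected obstacle.} The main difficulty is that the corollary and theorem are stated for a \emph{metric} $\Delta$ on $\mathcal P(\mathbb R^d)$, whereas here $\Delta$ is only a divergence. I expect metricity is never actually used in the chain of inequalities; only nonnegativity and the three properties are needed. So I would note that the argument goes through verbatim, or re-run it directly rather than cite the corollary. A second point is measurability of $\theta\mapsto\Delta((P_\theta)_\#\mu,(P_\theta)_\#\nu)$ and of $c\mapsto\mathbf S\Delta_p(\mu_c,\nu_c)$, which is needed for the Tonelli exchange. I would assume it implicitly, as Lemma~\ref{lem:sliced-mix-p} already does.
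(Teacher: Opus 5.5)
Your proposal is correct and matches the paper's proof. The paper also lifts \Tcond\ slice-wise via the shift $\langle\theta,t\rangle$, obtains \Sconddiso\ from Lemma~\ref{lem:uniform-sliced-scale} and \Mpcond\ from Lemma~\ref{lem:sliced-mix-p}, and then invokes Corollary~\ref{cor:multivariate-sup-contraction-gammaI}. Your remark that the corollary's chain of inequalities never uses metricity is accurate, and it addresses a hypothesis mismatch (metric versus divergence) that the paper passes over silently.
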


\begin{proof}
We verify that the multivariate divergence $\mathbf S\Delta_p$ satisfies the assumptions of
Corollary~\ref{cor:multivariate-sup-contraction-gammaI}.

\textbf{Translation nonexpansion lifts.}
Fix $t\in\mathbb R^d$ and define $T_t(x)=x+t$ on $\mathbb R^d$.
For every $\theta\in\mathbb S^{d-1}$ and any $\mu,\nu\in\mathcal P(\mathbb R^d)$,
\[
(P_\theta)_{\#}(T_t)_{\#}\mu=(x\mapsto x+\langle\theta,t\rangle)_{\#}(P_\theta)_{\#}\mu,
\qquad
(P_\theta)_{\#}(T_t)_{\#}\nu=(x\mapsto x+\langle\theta,t\rangle)_{\#}(P_\theta)_{\#}\nu.
\]
Applying \Tcond\ to the one-dimensional laws $(P_\theta)_{\#}\mu$ and $(P_\theta)_{\#}\nu$ yields
\[
\Delta\!\big((P_\theta)_{\#}(T_t)_{\#}\mu,\ (P_\theta)_{\#}(T_t)_{\#}\nu\big)
\le
\Delta\!\big((P_\theta)_{\#}\mu,\ (P_\theta)_{\#}\nu\big).
\]
Raising to the $p$-th power, integrating over $\theta\sim\sigma$, and taking the $p$-th root gives
\[
\mathbf S\Delta_p\big((T_t)_{\#}\mu,(T_t)_{\#}\nu\big)\le \mathbf S\Delta_p(\mu,\nu),
\]
which is \Tcond\ for $\mathbf S\Delta_p$ on $\mathcal P(\mathbb R^d)$.

\textbf{Scale contraction lifts to \Sconddiso.}
By Lemma~\ref{lem:uniform-sliced-scale}, \Scond\ for $\Delta$ implies that $\mathbf S\Delta_p$
satisfies \Sconddiso\ on $\mathcal P(\mathbb R^d)$ with the same function $c$:
for all $s\in[0,1]$,
\[
\mathbf S\Delta_{p}\bigl((x\mapsto s x)_{\#}\mu,\ (x\mapsto s x)_{\#}\nu\bigr)\ \le\ c(s)\,\mathbf S\Delta_{p}(\mu,\nu).
\]

\textbf{Mixture $p$-convexity lifts.}
By Lemma~\ref{lem:sliced-mix-p}, mixture $p$-convexity \Mpcond\ of $\Delta$
lifts to mixture $p$-convexity of $\mathbf S\Delta_p$ on $\mathcal P(\mathbb R^d)$.

Having established \Tcond, \Sconddiso, and \Mpcond\ for the multivariate divergence $\mathbf S\Delta_p$,
we may apply Corollary~\ref{cor:multivariate-sup-contraction-gammaI} with $\Delta$ replaced by $\mathbf S\Delta_p$
and $\Gamma=\gamma I_d$, yielding
\[
\overline{\mathbf S\Delta_p}\bigl(T^\pi\eta_1,\;T^\pi\eta_2\bigr)
\le c(\gamma)\,\overline{\mathbf S\Delta_p}\bigl(\eta_1,\eta_2\bigr).
\]
\end{proof}

\newpage
\subsection{Negative results.} \label{sec:negative_results}
Our goal in this section is to test whether multivariate contraction can be controlled solely through the operator norm
of the linear part of the Bellman update. Corollary~\ref{cor:tightness-scondd-counterexample-mdp-v2} gives a simple
reduction: it suffices to find one matrix $A$ and one pair of laws $\mu,\nu$ such that
\begin{equation}
\label{eq:preamble_obstruction}
\|A\|_{\mathrm{op}}\le \bar L
\qquad\text{and}\qquad
\Delta\bigl(A_{\#}\mu,\,A_{\#}\nu\bigr)>\Delta(\mu,\nu).
\tag{$\star$}
\end{equation}
If such a triple exists, then there is an MDP/policy pair whose Bellman update respects the same norm bound
$\|A_{s,a}(C)\|_{\mathrm{op}}\le \bar L$ but for which the associated distributional Bellman operator is not nonexpansive
under the supremum metric $\bar\Delta$. In other words, \eqref{eq:preamble_obstruction} rules out any contraction guarantee that is meant to hold uniformly over all such updates and that depends only on
$\|A\|_{\mathrm{op}}$.

Accordingly, we now construct explicit instances of \eqref{eq:preamble_obstruction} for the two kernel families that
have been instantiated to obtain contraction-based MMD objectives in distributional RL: the negative-type power family
\citep{nguyen2021distributional} and the multiquadric family \citep{killingberg2023multiquadric}, as detailed in
Section~\ref{sec:negative_results_mmd}; we then apply the same obstruction strategy to \emph{uniform sliced} objectives,
obtained by averaging the corresponding one-dimensional discrepancies over directions $\theta\in\mathbb S^1$, as detailed
in Section~\ref{sec:negative_results_uniform}.

\subsubsection{MMD.}\label{sec:negative_results_mmd}
Throughout, we use the MMD convention of Sec.~\ref{sec:mmd}. In particular, for the (possibly CPD) kernels considered
below, $\mathbf{MMD}_k^2(P,Q)$ admits the expansion
\begin{equation}
\label{eq:mmd_expand_cross_within}
\mathbf{MMD}_k^2(P,Q)
=
\underbrace{\mathbb E\,k(X,X')}_{\text{within }P}
\;+\;
\underbrace{\mathbb E\,k(Y,Y')}_{\text{within }Q}
\;-\;2\,
\underbrace{\mathbb E\,k(X,Y)}_{\text{cross}},
\qquad
X,X'\!\sim P,\ Y,Y'\!\sim Q\ \text{i.i.d.}
\end{equation}

The examples below exploit a simple mechanism visible in \eqref{eq:mmd_expand_cross_within}: since the objective is a
cross term minus two within terms, one can arrange for these expectations to be individually large and close, so that
$\mathbf{MMD}_k^2(P,Q)$ results from a delicate cancellation. An anisotropic contraction can then shrink the within terms
much more than the cross term, breaking that cancellation and yielding an \emph{increase} of $\mathbf{MMD}_k^2$ despite the
map having operator norm $<1$.

Fix $s\in(0,1)$ and $\varepsilon\in(0,1)$ and consider the full-rank diagonal map on $\mathbb R^2$,
\[
A_{s,\varepsilon}:=\mathrm{diag}(s,\ s\varepsilon),
\qquad
\|A_{s,\varepsilon}\|_{\mathrm{op}}=s<1.
\]
Fix $D>0$ and $M>0$ and define
\[
P_{D,M}:=\tfrac12\delta_{(0,M)}+\tfrac12\delta_{(0,-M)},
\qquad
Q_{D,M}:=\tfrac12\delta_{(D,M)}+\tfrac12\delta_{(D,-M)}.
\]
Under $A_{s,\varepsilon}$, the separation $D$ becomes $sD$ while the spread $M$ becomes $\varepsilon M$, which can be made
tiny when $\varepsilon\ll 1$.

\paragraph{Negative-type power family ($\beta=1$, energy distance).}
Consider the negative-type power kernel $k_\beta(x,y):=-\|x-y\|^\beta$ on $\mathbb R^2$. For $\beta=1$,
\eqref{eq:mmd_expand_cross_within} becomes
\[
\mathbf{MMD}_{k_1}^2(P,Q)
=
2\,\mathbb E\|X-Y\|
-\mathbb E\|X-X'\|
-\mathbb E\|Y-Y'\|.
\]
For $(P,Q)=(P_{D,M},Q_{D,M})$, the two-value enumeration yields
\begin{equation}
\label{eq:power_beta1_closedform}
\mathbf{MMD}_{k_1}^2(P_{D,M},Q_{D,M})
=
D+\sqrt{D^2+4M^2}-2M,
\end{equation}
and after applying $A_{s,\varepsilon}$,
\begin{equation}
\label{eq:power_beta1_closedform_push}
\mathbf{MMD}_{k_1}^2\!\big((A_{s,\varepsilon})_{\#}P_{D,M},\ (A_{s,\varepsilon})_{\#}Q_{D,M}\big)
=
sD+s\sqrt{D^2+4\varepsilon^2 M^2}-2s\varepsilon M.
\end{equation}

\medskip
\noindent\emph{Numerical instance.}
Take $D=1$, $M=1000$, $s=0.75$, $\varepsilon=10^{-6}$ and write $A:=A_{s,\varepsilon}$. Then $\|A\|_{\mathrm{op}}=0.75<1$,
while a direct evaluation of \eqref{eq:power_beta1_closedform}--\eqref{eq:power_beta1_closedform_push} gives
\[
\mathbf{MMD}_{k_1}^2(P_{1,1000},Q_{1,1000})
\approx 1.000249999984,
\qquad
\mathbf{MMD}_{k_1}^2(A_{\#}P_{1,1000},A_{\#}Q_{1,1000})
\approx 1.498501499999.
\]

\paragraph{Negative multiquadric kernel.}
Fix $h=1$ and consider the negative multiquadric kernel
\[
k_{\mathrm{MQ}}(x,y):=-\sqrt{1+\|x-y\|^2}.
\]
With $f(r):=\sqrt{1+r^2}$, \eqref{eq:mmd_expand_cross_within} becomes
\[
\mathbf{MMD}_{k_{\mathrm{MQ}}}^2(P,Q)
=
2\,\mathbb E\,f(\|X-Y\|)
-\mathbb E\,f(\|X-X'\|)
-\mathbb E\,f(\|Y-Y'\|).
\]
For $(P,Q)=(P_{D,M},Q_{D,M})$,
\begin{equation}
\label{eq:mq_closedform}
\mathbf{MMD}_{k_{\mathrm{MQ}}}^2(P_{D,M},Q_{D,M})
=
f(D)+f(\sqrt{D^2+4M^2})-1-f(2M),
\end{equation}
and after applying $A_{s,\varepsilon}$,
\begin{equation}
\label{eq:mq_closedform_push}
\mathbf{MMD}_{k_{\mathrm{MQ}}}^2\!\big((A_{s,\varepsilon})_{\#}P_{D,M},\ (A_{s,\varepsilon})_{\#}Q_{D,M}\big)
=
f(sD)+f\!\Big(s\sqrt{D^2+4\varepsilon^2M^2}\Big)-1-f(2s\varepsilon M).
\end{equation}

\medskip
\noindent\emph{Numerical instance.}
With the same choice $D=1$, $M=1000$, $s=0.75$, $\varepsilon=10^{-6}$ (so $\|A\|_{\mathrm{op}}=0.75<1$),
a direct evaluation of \eqref{eq:mq_closedform}--\eqref{eq:mq_closedform_push} gives
\[
\mathbf{MMD}_{k_{\mathrm{MQ}}}^2(P_{1,1000},Q_{1,1000})
\approx 0.414463562326,
\qquad
\mathbf{MMD}_{k_{\mathrm{MQ}}}^2(A_{\#}P_{1,1000},A_{\#}Q_{1,1000})
\approx 0.499999775000.
\]

\subsubsection{Uniform slicing.}\label{sec:negative_results_uniform}
Fix $d=2$ and let $\sigma$ be the uniform probability measure on $\mathbb S^{1}$.
Given a divergence $\Delta$ on $\mathcal P(\mathbb R)$ and $p\in[1,\infty)$, define its uniform sliced lift by
\begin{equation}
\label{eq:uniform_sliced_def_p}
\mathbf S\Delta_p(\mu,\nu)
:=\Bigg(\int_{\mathbb S^{1}}
\Delta\!\big((P_\theta)_{\#}\mu,(P_\theta)_{\#}\nu\big)^p\,d\sigma(\theta)\Bigg)^{\!1/p},
\qquad
P_\theta(x):=\langle \theta,x\rangle .
\end{equation}
In this subsection we take $\Delta=\mathbf{W}_1$ and $p=1$, so that
\[
\mathbf S \mathbf{W}_1(\mu,\nu)=\int_{\mathbb S^1} \mathbf{W}_1\!\big((P_\theta)_{\#}\mu,(P_\theta)_{\#}\nu\big)\,d\sigma(\theta).
\]
\paragraph{Wasserstein ($\mathbf{W}_1$).}
Let
\[
\mu:=\tfrac12\delta_{(1,0)}+\tfrac12\delta_{(-1,0)},
\qquad
\nu:=\tfrac12\delta_{(0,1)}+\tfrac12\delta_{(0,-1)}.
\]
Fix $s\in(0,1)$ and $\varepsilon\in(0,1)$ and consider the full--rank diagonal map
\[
A:=\mathrm{diag}(s,\ s\varepsilon),
\qquad
\|A\|_{\mathrm{op}}=s<1.
\]
Parametrize $\theta\in\mathbb S^1$ as $\theta=(\cos\varphi,\sin\varphi)$ with $\varphi\in[0,2\pi)$.
Then the one-dimensional projections are
\[
(P_\theta)_{\#}\mu=\tfrac12\delta_{-\cos\varphi}+\tfrac12\delta_{\cos\varphi},
\qquad
(P_\theta)_{\#}\nu=\tfrac12\delta_{-\sin\varphi}+\tfrac12\delta_{\sin\varphi},
\]
and similarly,
\[
(P_\theta)_{\#}(A_{\#}\mu)=\tfrac12\delta_{-s\cos\varphi}+\tfrac12\delta_{s\cos\varphi},
\qquad
(P_\theta)_{\#}(A_{\#}\nu)=\tfrac12\delta_{-s\varepsilon\sin\varphi}+\tfrac12\delta_{s\varepsilon\sin\varphi}.
\]
In one dimension, for $a,b\ge 0$,
\[
\mathbf{W}_1\!\left(\tfrac12\delta_{-a}+\tfrac12\delta_a,\ \tfrac12\delta_{-b}+\tfrac12\delta_b\right)=|a-b|.
\]
Therefore, using \eqref{eq:uniform_sliced_def_p} with $p=1$ and $d\sigma(\theta)=\frac{1}{2\pi}\,d\varphi$, we obtain
\begin{align}
\label{eq:SW1_fullcircle_base}
\mathbf S \mathbf{W}_1(\mu,\nu)
&=\frac{1}{2\pi}\int_{0}^{2\pi}\big||\cos\varphi|-|\sin\varphi|\big|\,d\varphi,\\
\label{eq:SW1_fullcircle_post}
\mathbf S \mathbf{W}_1(A_{\#}\mu,A_{\#}\nu)
&=\frac{s}{2\pi}\int_{0}^{2\pi}\big||\cos\varphi|-\varepsilon|\sin\varphi|\big|\,d\varphi.
\end{align}
We use the following identity: for any integrable $H:[0,1]^2\to\mathbb R$,
\begin{equation}
\label{eq:quadrant_reduction_H}
\int_{0}^{2\pi} H\big(|\cos\varphi|,|\sin\varphi|\big)\,d\varphi
=
4\int_{0}^{\pi/2} H\big(\cos\varphi,\sin\varphi\big)\,d\varphi.
\end{equation}
Applying \eqref{eq:quadrant_reduction_H} to \eqref{eq:SW1_fullcircle_base}--\eqref{eq:SW1_fullcircle_post} (and using that
$\cos\varphi,\sin\varphi\ge 0$ on $[0,\pi/2]$) yields
\begin{align}
\label{eq:SW1_quartercircle_base}
\mathbf S \mathbf{W}_1(\mu,\nu)
&=\frac{2}{\pi}\int_{0}^{\pi/2}|\cos\varphi-\sin\varphi|\,d\varphi,\\
\label{eq:SW1_quartercircle_post}
\mathbf S \mathbf{W}_1(A_{\#}\mu,A_{\#}\nu)
&=\frac{2s}{\pi}\int_{0}^{\pi/2}|\cos\varphi-\varepsilon\sin\varphi|\,d\varphi.
\end{align}

We evaluate the one-dimensional integrals in \eqref{eq:SW1_quartercircle_base}--\eqref{eq:SW1_quartercircle_post} using
\texttt{scipy.integrate.quad}~\cite{2020SciPy-NMeth}.
For $s=0.9$ and $\varepsilon=0.01$ this yields $\|A\|_{\mathrm{op}}=0.9<1$ but
\[
\mathbf S \mathbf{W}_1(\mu,\nu)\approx 0.527393,
\qquad
\mathbf S \mathbf{W}_1(A_{\#}\mu,A_{\#}\nu)\approx 0.567286,
\]
so $\mathbf S \mathbf{W}_1(A_{\#}\mu,A_{\#}\nu)>\mathbf S \mathbf{W}_1(\mu,\nu)$, which instantiates \eqref{eq:preamble_obstruction}
with $\Delta=\mathbf S \mathbf{W}_1$.

\paragraph{Cram\'er--2 ($\ell_2^2$).}
The same construction yields a counterexample for the uniform sliced Cram\'er--2 divergence.
For $a,b\ge 0$, a direct computation gives
\[
\ell_2^2\!\left(\tfrac12\delta_{-a}+\tfrac12\delta_{a},\ \tfrac12\delta_{-b}+\tfrac12\delta_{b}\right)
=\frac12|a-b|.
\]
In one dimension we also have
\[
\mathbf{W}_1\!\left(\tfrac12\delta_{-a}+\tfrac12\delta_{a},\ \tfrac12\delta_{-b}+\tfrac12\delta_{b}\right)=|a-b|,
\]
hence, for every $\theta$,
\[
\ell_2^2\big((P_\theta)_{\#}\mu,(P_\theta)_{\#}\nu\big)=\tfrac12\,\mathbf{W}_1\big((P_\theta)_{\#}\mu,(P_\theta)_{\#}\nu\big),
\qquad
\ell_2^2\big((P_\theta)_{\#}A_{\#}\mu,(P_\theta)_{\#}A_{\#}\nu\big)
=\tfrac12\,\mathbf{W}_1\big((P_\theta)_{\#}A_{\#}\mu,(P_\theta)_{\#}A_{\#}\nu\big).
\]
Using the definition \eqref{eq:uniform_sliced_def_p} with $p=1$ and linearity of the integral, we obtain the exact identities
\begin{equation}
\label{eq:S_c2_half_SW1_exact_base}
\mathbf S\ell_2^2(\mu,\nu)=\tfrac12\,\mathbf S \mathbf{W}_1(\mu,\nu),
\qquad
\mathbf S\ell_2^2(A_{\#}\mu,A_{\#}\nu)=\tfrac12\,\mathbf S \mathbf{W}_1(A_{\#}\mu,A_{\#}\nu).
\end{equation}
In particular, for $s=0.9$ and $\varepsilon=0.01$ (so $\|A\|_{\mathrm{op}}=0.9<1$), using the values reported in the
Wasserstein case we get
\[
\mathbf S\ell_2^2(\mu,\nu)\approx \tfrac12\cdot 0.527393 = 0.263697,
\qquad
\mathbf S\ell_2^2(A_{\#}\mu,A_{\#}\nu)\approx \tfrac12\cdot 0.567286 = 0.283643,
\]
so $\mathbf S\ell_2^2(A_{\#}\mu,A_{\#}\nu)>\mathbf S\ell_2^2(\mu,\nu)$, which instantiates \eqref{eq:preamble_obstruction}
with $\Delta=\mathbf S\ell_2^2$.

\paragraph{Uniform sliced MMD with the negative-type power family.}
Fix $\beta\in(0,2]$ and consider the negative-type power kernel on $\mathbb R$,
\[
k_\beta(x,y):=-|x-y|^\beta .
\]
For one-dimensional laws $\mu,\nu\in\mathcal P(\mathbb R)$,
\[
\mathbf{MMD}^2_{k_\beta}(\mu,\nu)
=
\underbrace{2\,\mathbb E|X-Y|^\beta}_{\text{cross}}
\;-\;
\underbrace{\mathbb E|X-X'|^\beta}_{\text{within }\mu}
\;-\;
\underbrace{\mathbb E|Y-Y'|^\beta}_{\text{within }\nu},
\qquad
X,X'\!\sim\mu,\ Y,Y'\!\sim\nu\ \text{i.i.d.}
\]
For the two-point symmetric laws $\mu=\tfrac12\delta_{+a}+\tfrac12\delta_{-a}$ and
$\nu=\tfrac12\delta_{+b}+\tfrac12\delta_{-b}$ with $a,b\ge 0$, we have
\[
\mathbb E|X-Y|^\beta=\tfrac12|a-b|^\beta+\tfrac12(a+b)^\beta,
\qquad
\mathbb E|X-X'|^\beta=2^{\beta-1}a^\beta,
\qquad
\mathbb E|Y-Y'|^\beta=2^{\beta-1}b^\beta,
\]
and substituting into the definition yields
\begin{equation}
\label{eq:1d_mmd2_power_2point}
\mathbf{MMD}^2_{k_\beta}(\mu,\nu)
=
|a-b|^\beta+(a+b)^\beta-2^{\beta-1}\big(a^\beta+b^\beta\big).
\end{equation}

Now consider the same pair of laws $\mu,\nu\in\mathcal P(\mathbb R^2)$ and the full-rank anisotropic map
$A_{s,\varepsilon}=\mathrm{diag}(s,s\varepsilon)$ as above.
For $\theta=(\cos\varphi,\sin\varphi)\in\mathbb S^1$ with $\varphi\in[0,2\pi)$, the projected laws are
\[
(P_\theta)_\#\mu=\tfrac12\delta_{+\!|\cos\varphi|}+\tfrac12\delta_{-\!|\cos\varphi|},
\qquad
(P_\theta)_\#\nu=\tfrac12\delta_{+\!|\sin\varphi|}+\tfrac12\delta_{-\!|\sin\varphi|},
\]
and after applying $A_{s,\varepsilon}$,
\[
(P_\theta)_\#(A_{s,\varepsilon})_\#\mu=\tfrac12\delta_{+\!s|\cos\varphi|}+\tfrac12\delta_{-\!s|\cos\varphi|},
\qquad
(P_\theta)_\#(A_{s,\varepsilon})_\#\nu=\tfrac12\delta_{+\!s\varepsilon|\sin\varphi|}+\tfrac12\delta_{-\!s\varepsilon|\sin\varphi|}.
\]
Define the uniform sliced lift (with $p=1$ and $\sigma$ uniform on $\mathbb S^1$) by
\[
\mathbf S\mathbf{MMD}^2_{k_\beta}(\mu,\nu)
:=\int_{\mathbb S^1}\mathbf{MMD}^2_{k_\beta}\big((P_\theta)_\#\mu,(P_\theta)_\#\nu\big)\,d\sigma(\theta).
\]
Using the identity \eqref{eq:quadrant_reduction_H} (as in the Wasserstein paragraph) to reduce the $\varphi$-integral to $[0,\pi/2]$,
and then applying \eqref{eq:1d_mmd2_power_2point} with $a=\cos\varphi$ and $b=\sin\varphi$, this becomes
\begin{align}
\label{eq:sliced_mmd2_power_base}
\mathbf S\mathbf{MMD}^2_{k_\beta}(\mu,\nu)
&=\frac{2}{\pi}\int_{0}^{\pi/2}
\Big(
|\cos\varphi-\sin\varphi|^\beta+(\cos\varphi+\sin\varphi)^\beta \nonumber\\
&\hspace{5.2em}
-2^{\beta-1}\big(\cos^\beta\varphi+\sin^\beta\varphi\big)
\Big)\,d\varphi,
\\
\label{eq:sliced_mmd2_power_push}
\mathbf S\mathbf{MMD}^2_{k_\beta}\big((A_{s,\varepsilon})_\#\mu,(A_{s,\varepsilon})_\#\nu\big)
&=\frac{2}{\pi}\int_{0}^{\pi/2}
\Big(
|s\cos\varphi-s\varepsilon\sin\varphi|^\beta+(s\cos\varphi+s\varepsilon\sin\varphi)^\beta \nonumber\\
&\hspace{5.2em}
-2^{\beta-1}\big((s\cos\varphi)^\beta+(s\varepsilon\sin\varphi)^\beta\big)
\Big)\,d\varphi .
\end{align}

\medskip
\noindent\emph{Concrete numbers (computed by the same numerical integration method as earlier).}
We use $\varepsilon=0.01$ throughout, with $s=0.9$ for $\beta\in\{0.5,1\}$ and $s=0.98$ for $\beta=1.5$.
When $\beta=1$, \eqref{eq:1d_mmd2_power_2point} simplifies to $\mathbf{MMD}^2_{k_1}(\tfrac12\delta_{\pm a},\tfrac12\delta_{\pm b})=|a-b|$,
so the sliced objective coincides with the uniform sliced Wasserstein-1 objective from the previous paragraph; numerically this gives
\[
\mathbf S\mathbf{MMD}^2_{k_1}(\mu,\nu)\approx 0.52739,
\qquad
\mathbf S\mathbf{MMD}^2_{k_1}\big((A_{0.9,0.01})_\#\mu,(A_{0.9,0.01})_\#\nu\big)\approx 0.56729,
\qquad
\text{ratio}\approx 1.07564.
\]
For $\beta=0.5$,
\[
\mathbf S\mathbf{MMD}^2_{k_{0.5}}(\mu,\nu)\approx 0.73545,
\qquad
\mathbf S\mathbf{MMD}^2_{k_{0.5}}\big((A_{0.9,0.01})_\#\mu,(A_{0.9,0.01})_\#\nu\big)\approx 0.88443,
\qquad
\text{ratio}\approx 1.20256.
\]
For $\beta=1.5$,
\[
\mathbf S\mathbf{MMD}^2_{k_{1.5}}(\mu,\nu)\approx 0.29777,
\qquad
\mathbf S\mathbf{MMD}^2_{k_{1.5}}\big((A_{0.98,0.01})_\#\mu,(A_{0.98,0.01})_\#\nu\big)\approx 0.31553,
\qquad
\text{ratio}\approx 1.05964.
\]
Thus, for several $\beta\in(0,2]$, the uniform sliced objective $\mathbf S\mathbf{MMD}^2_{k_\beta}$ can \emph{increase}
under a full-rank anisotropic $A_{s,\varepsilon}$ with $\|A_{s,\varepsilon}\|_{\mathrm{op}}=s<1$.

\paragraph{Uniform sliced MMD with the multiquadric kernel.}
Fix $h>0$ and consider the (negative) multiquadric kernel on $\mathbb R$,
\[
k_h(x,y):=-\sqrt{1+h^2(x-y)^2}.
\]
For one-dimensional laws $\mu,\nu\in\mathcal P(\mathbb R)$, with $X,X'\sim\mu$ i.i.d.\ and $Y,Y'\sim\nu$ i.i.d.,
\begin{equation}
\label{eq:mmd2_mq_expand_h}
\mathbf{MMD}^2_{k_h}(\mu,\nu)
=
\underbrace{2\,\mathbb E\sqrt{1+h^2(X-Y)^2}}_{\text{cross}}
\;-\;
\underbrace{\mathbb E\sqrt{1+h^2(X-X')^2}}_{\text{within }\mu}
\;-\;
\underbrace{\mathbb E\sqrt{1+h^2(Y-Y')^2}}_{\text{within }\nu}.
\end{equation}
For the two-point symmetric laws $\mu=\tfrac12\delta_{+a}+\tfrac12\delta_{-a}$ and
$\nu=\tfrac12\delta_{+b}+\tfrac12\delta_{-b}$ with $a,b\ge 0$, we have
\begin{align*}
\mathbb E\sqrt{1+h^2(X-Y)^2}
&=\tfrac12\sqrt{1+h^2(a-b)^2}+\tfrac12\sqrt{1+h^2(a+b)^2},\\
\mathbb E\sqrt{1+h^2(X-X')^2}
&=\tfrac12+\tfrac12\sqrt{1+4h^2a^2},\\
\mathbb E\sqrt{1+h^2(Y-Y')^2}
&=\tfrac12+\tfrac12\sqrt{1+4h^2b^2},
\end{align*}
and substituting into \eqref{eq:mmd2_mq_expand_h} yields
\begin{equation}
\label{eq:1d_mmd2_mq_2point_h}
\mathbf{MMD}^2_{k_h}(\mu,\nu)
=
\sqrt{1+h^2(a-b)^2}+\sqrt{1+h^2(a+b)^2}
-1-\tfrac12\Big(\sqrt{1+4h^2a^2}+\sqrt{1+4h^2b^2}\Big).
\end{equation}

Now consider the same $\mu,\nu\in\mathcal P(\mathbb R^2)$ and the full-rank anisotropic map
$A_{s,\varepsilon}=\mathrm{diag}(s,s\varepsilon)$ as above.
Define for $a,b\ge 0$
\[
F_{\mathrm{MQ},h}(a,b):=
\sqrt{1+h^2(a-b)^2}+\sqrt{1+h^2(a+b)^2}
-1-\tfrac12\Big(\sqrt{1+4h^2a^2}+\sqrt{1+4h^2b^2}\Big).
\]
Using the identity \eqref{eq:quadrant_reduction_H} to reduce the $\varphi$-integral to $[0,\pi/2]$, and applying
\eqref{eq:1d_mmd2_mq_2point_h} slice-wise with $a=\cos\varphi$ and $b=\sin\varphi$, we obtain
\begin{align}
\label{eq:sliced_mmd2_mq_base_h}
\mathbf S\mathbf{MMD}^2_{k_h}(\mu,\nu)
&=\frac{2}{\pi}\int_{0}^{\pi/2}
F_{\mathrm{MQ},h}\!\big(\cos\varphi,\sin\varphi\big)\,d\varphi,\\
\label{eq:sliced_mmd2_mq_push_h}
\mathbf S\mathbf{MMD}^2_{k_h}\big((A_{s,\varepsilon})_\#\mu,(A_{s,\varepsilon})_\#\nu\big)
&=\frac{2}{\pi}\int_{0}^{\pi/2}
F_{\mathrm{MQ},h}\!\big(s\cos\varphi,s\varepsilon\sin\varphi\big)\,d\varphi.
\end{align}

\medskip
\noindent\emph{Concrete numbers.}
We evaluate the one-dimensional integrals in \eqref{eq:sliced_mmd2_mq_base_h}--\eqref{eq:sliced_mmd2_mq_push_h}
using the same numerical integration method as earlier.
Instantiating $h=100$ (the value used in \citep{killingberg2023multiquadric}), and choosing $s=0.9$ and $\varepsilon=0.01$
(so $\|A_{s,\varepsilon}\|_{\mathrm{op}}=0.9<1$), we obtain
\[
\mathbf S\mathbf{MMD}^2_{k_{100}}(\mu,\nu)\approx 51.758656,
\qquad
\mathbf S\mathbf{MMD}^2_{k_{100}}\big((A_{0.9,0.01})_\#\mu,(A_{0.9,0.01})_\#\nu\big)\approx 55.552543,
\qquad
\text{ratio}\approx 1.07330,
\]
so $\mathbf S\mathbf{MMD}^2_{k_{100}}\big((A_{0.9,0.01})_\#\mu,(A_{0.9,0.01})_\#\nu\big)>
\mathbf S\mathbf{MMD}^2_{k_{100}}(\mu,\nu)$, which instantiates \eqref{eq:preamble_obstruction}
with $\Delta=\mathbf S\mathbf{MMD}^2_{k_{100}}$.

\newpage

\subsection{Max slicing} \label{sec:max_sliced}
\begin{lemma}[Max--sliced anisotropic scale contraction]\label{lem:maxsliced-scale}
Let $\Delta$ be a divergence on $\mathcal P(\mathbb R)$.
Assume that for all $\alpha,\beta\in\mathcal P(\mathbb R)$ the following holds:
\begin{itemize}\itemsep0.2em
\item[\Scond] \textbf{Scale contraction:} there exists a nondecreasing $c:[0,\infty)\to[0,\infty)$ such that for every $s\in[0,1]$,
\[
\Delta\bigl((x\mapsto s x)_{\#}\alpha,\,(x\mapsto s x)_{\#}\beta\bigr)\le c(s)\,\Delta(\alpha,\beta),
\]
with $c(s)\le 1$ for all $s\in[0,1]$ and $c(s)<1$ for all $s\in[0,1)$.
\end{itemize}

Define the max--sliced lift of $\Delta$ by
\[
\mathbf{MS}\Delta(\mu,\nu)
:=\sup_{\theta\in\mathbb S^{d-1}}
\Delta\!\big((P_\theta)_{\#}\mu,\,(P_\theta)_{\#}\nu\big),
\qquad
P_\theta(x)=\langle\theta,x\rangle.
\]

Then $\mathbf{MS}\Delta$ satisfies \Scondd: for every linear map $A:\mathbb R^d\to\mathbb R^d$ with $\|A\|_{\mathrm{op}}\le 1$,
\[
\boxed{\ 
\mathbf{MS}\Delta\bigl(A_{\#}\mu,\ A_{\#}\nu\bigr)\ \le\ c(\|A\|_{\mathrm{op}})\,\mathbf{MS}\Delta(\mu,\nu),
\qquad \forall\,\mu,\nu\in\mathcal P(\mathbb R^d).\ }
\]
In particular, if $\|A\|_{\mathrm{op}}<1$ then $A_{\#}$ is a contraction on $(\mathcal P(\mathbb R^d),\mathbf{MS}\Delta)$ with factor $c(\|A\|_{\mathrm{op}})<1$.
\end{lemma}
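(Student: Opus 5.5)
The plan is to reduce the anisotropic push-forward to a one-dimensional rescaling slice by slice, following the commutation identity already used in Lemma~\ref{lem:sliced-directional-linear}, and then exploit the supremum over directions. Fix $\theta\in\mathbb S^{d-1}$ and set $r:=\|A^\top\theta\|_2$. Since $\|A^\top\|_{\mathrm{op}}=\|A\|_{\mathrm{op}}\le 1$, we have $r\le\|A\|_{\mathrm{op}}\le 1$. If $r>0$, write $A^\top\theta=r\,\theta_A$ with $\theta_A\in\mathbb S^{d-1}$. Then $P_\theta(Ax)=r\,P_{\theta_A}(x)$, so
\[
(P_\theta)_\#A_\#\mu=(x\mapsto r x)_\#(P_{\theta_A})_\#\mu ,
\]
and likewise for $\nu$.

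Applying \Scond\ with $s=r\in[0,1]$ gives
\[
\Delta\bigl((P_\theta)_\#A_\#\mu,(P_\theta)_\#A_\#\nu\bigr)\le c(r)\,\Delta\bigl((P_{\theta_A})_\#\mu,(P_{\theta_A})_\#\nu\bigr).
\]
Since $c$ is nondecreasing and $r\le\|A\|_{\mathrm{op}}$, we get $c(r)\le c(\|A\|_{\mathrm{op}})$. Because $\theta_A$ is itself a unit direction, the last divergence is bounded by $\mathbf{MS}\Delta(\mu,\nu)$. Unlike the uniform-slicing case, no change of measure on the sphere is needed: the map $\theta\mapsto\theta_A$ may distort the uniform measure, but the supremum is insensitive to this. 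This is exactly why max slicing succeeds where the counterexamples of Section~\ref{sec:negative_results} defeat uniform slicing.

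The degenerate case $r=0$ must be handled separately. There $A^\top\theta=0$, so both projected laws equal $\delta_0$ and the slice divergence vanishes. This uses only that $\Delta$ is a divergence, so $\Delta(\delta_0,\delta_0)=0$, and the bound holds trivially. Taking the supremum over $\theta$ of the pointwise bound then yields
\[
\mathbf{MS}\Delta(A_\#\mu,A_\#\nu)\le c(\|A\|_{\mathrm{op}})\,\mathbf{MS}\Delta(\mu,\nu).
\]
The contraction claim for $\|A\|_{\mathrm{op}}<1$ follows directly from $c(s)<1$ on $[0,1)$.

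There is no real obstacle here. The only delicate points are checking that $r\le\|A\|_{\mathrm{op}}$, which lets us use monotonicity of $c$ rather than a power law as in Lemma~\ref{lem:sliced-directional-linear}, and handling the zero case. Both are routine, and the argument needs neither \Tcond\ nor \Mpcond.
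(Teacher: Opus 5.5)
Your proposal is correct and follows the paper's proof essentially step for step. It uses the same commutation identity $(P_\theta)_\#A_\#\mu=(x\mapsto r x)_\#(P_{\theta_A})_\#\mu$, handles the degenerate case $A^\top\theta=0$ separately in the same way, and bounds $c(r)\le c(\|A\|_{\mathrm{op}})$ by monotonicity before taking the supremum over $\theta$.
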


\begin{proof}
Fix $\theta\in\mathbb S^{d-1}$ and set $w_\theta:=A^\top\theta$.

\textbf{Case 1: $w_\theta=0$.}
Then $P_\theta(Ax)=\langle A^\top\theta,x\rangle\equiv 0$, hence $(P_\theta)_{\#}A_{\#}\mu=(P_\theta)_{\#}A_{\#}\nu$ and
\[
\Delta\!\big((P_\theta)_{\#}A_{\#}\mu,\ (P_\theta)_{\#}A_{\#}\nu\big)=0.
\]

\textbf{Case 2: $\|w_\theta\|>0$.}
Write $r_\theta:=\|w_\theta\|\in(0,\infty)$ and $\phi_\theta:=w_\theta/r_\theta\in\mathbb S^{d-1}$.
For any $X\sim\mu$,
\[
P_\theta(AX)=\langle\theta,AX\rangle=\langle A^\top\theta,X\rangle=r_\theta\,\langle\phi_\theta,X\rangle,
\]
and similarly for $Y\sim\nu$. Therefore,
\[
(P_\theta)_{\#}A_{\#}\mu \;=\; (x\mapsto r_\theta x)_{\#}(P_{\phi_\theta})_{\#}\mu,
\qquad
(P_\theta)_{\#}A_{\#}\nu \;=\; (x\mapsto r_\theta x)_{\#}(P_{\phi_\theta})_{\#}\nu.
\]
Since $r_\theta=\|A^\top\theta\|\le \|A^\top\|_{\mathrm{op}}=\|A\|_{\mathrm{op}}\le 1$, we may apply \Scond\ with $s=r_\theta$ and obtain
\[
\Delta\!\big((P_\theta)_{\#}A_{\#}\mu,\ (P_\theta)_{\#}A_{\#}\nu\big)
\le c(r_\theta)\,\Delta\!\big((P_{\phi_\theta})_{\#}\mu,\ (P_{\phi_\theta})_{\#}\nu\big).
\]

Taking the supremum over $\theta\in\mathbb S^{d-1}$ gives
\[
\mathbf{MS}\Delta(A_{\#}\mu,A_{\#}\nu)
=\sup_{\theta}\Delta\!\big((P_\theta)_{\#}A_{\#}\mu,\ (P_\theta)_{\#}A_{\#}\nu\big)
\le \sup_{\theta} c(r_\theta)\ \sup_{\phi}\Delta\!\big((P_{\phi})_{\#}\mu,\ (P_{\phi})_{\#}\nu\big).
\]
Since $c$ is nondecreasing and $r_\theta\le \|A\|_{\mathrm{op}}$ for all $\theta$, we have $\sup_{\theta} c(r_\theta)\le c(\|A\|_{\mathrm{op}})$, hence
\[
\mathbf{MS}\Delta(A_{\#}\mu,A_{\#}\nu)\le c(\|A\|_{\mathrm{op}})\,\mathbf{MS}\Delta(\mu,\nu),
\]
which is exactly \Scondd\ for $\mathbf{MS}\Delta$.
\end{proof}

\begin{lemma}[Max–sliced mixture $p$-convexity]
\label{lem:maxsliced-mix}
This result is the max–sliced analogue of Lemma~\ref{lem:sliced-mix-p}.

Let $\Delta$ be a divergence on $\mathcal P(\mathbb R)$ that is mixture $p$-convex for some $p\in[1,\infty)$: 
for every probability space $(\Omega,\mathcal F,\rho)$ and measurable families $(\mu_c),(\nu_c)\subset\mathcal P(\mathbb R)$,
\[
\Delta\!\left(\int_\Omega \mu_c\,\rho(dc),\ \int_\Omega \nu_c\,\rho(dc)\right)
\ \le\ \Bigg(\int_\Omega \Delta(\mu_c,\nu_c)^p\,\rho(dc)\Bigg)^{\!1/p}.
\]
Define the max–sliced lift on $\mathcal P(\mathbb R^d)$ by
\[
\mathbf{MS}\Delta(\mu,\nu)
:=\sup_{\theta\in\mathbb S^{d-1}}
\Delta\!\big((P_\theta)_\#\mu,\,(P_\theta)_\#\nu\big),
\qquad P_\theta(x)=\langle\theta,x\rangle .
\]
Then $\mathbf{MS}\Delta$ is also mixture $p$-convex:
\[
\boxed{\ 
\mathbf{MS}\Delta\!\left(\int_\Omega \mu_c\,\rho(dc),\ \int_\Omega \nu_c\,\rho(dc)\right)
\ \le\
\Bigg(\int_\Omega \mathbf{MS}\Delta(\mu_c,\nu_c)^{p}\,\rho(dc)\Bigg)^{\!1/p}. \ }
\]
\end{lemma}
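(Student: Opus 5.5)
The plan mirrors the proof of Lemma~\ref{lem:sliced-mix-p}, but replaces the integral over directions with a supremum, which is even simpler. Write $\bar\mu:=\int_\Omega\mu_c\,\rho(dc)$ and $\bar\nu:=\int_\Omega\nu_c\,\rho(dc)$. Fix an arbitrary direction $\theta\in\mathbb S^{d-1}$ and set $\mu_c^\theta:=(P_\theta)_\#\mu_c$ and $\nu_c^\theta:=(P_\theta)_\#\nu_c$.

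The first step uses the linearity of pushforward with respect to mixtures. For every Borel $B\subseteq\mathbb R$,
\[
(P_\theta)_\#\bar\mu(B)=\int_\Omega\mu_c(P_\theta^{-1}(B))\,\rho(dc)=\int_\Omega \mu_c^\theta(B)\,\rho(dc),
\]
and likewise for $\bar\nu$. Hence $(P_\theta)_\#\bar\mu=\int\mu_c^\theta\,d\rho$ and $(P_\theta)_\#\bar\nu=\int\nu_c^\theta\,d\rho$. Measurability of $c\mapsto\mu_c^\theta$ is inherited from measurability of $c\mapsto\mu_c$, since $P_\theta$ is continuous.

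The second step applies the one-dimensional mixture $p$-convexity of $\Delta$ at this fixed $\theta$:
\[
\Delta\big((P_\theta)_\#\bar\mu,(P_\theta)_\#\bar\nu\big)\le\Big(\int_\Omega\Delta(\mu_c^\theta,\nu_c^\theta)^p\,\rho(dc)\Big)^{1/p}.
\]
The key pointwise observation is that, for each $c$, $\Delta(\mu_c^\theta,\nu_c^\theta)\le\mathbf{MS}\Delta(\mu_c,\nu_c)$ by the definition of the supremum. Monotonicity of the integral and of $t\mapsto t^{1/p}$ then gives
\[
\Delta\big((P_\theta)_\#\bar\mu,(P_\theta)_\#\bar\nu\big)\le\Big(\int_\Omega\mathbf{MS}\Delta(\mu_c,\nu_c)^p\,\rho(dc)\Big)^{1/p}.
\]
The right-hand side does not depend on $\theta$, so taking the supremum over $\theta$ on the left yields the boxed inequality.

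The only step that is not immediate is making sure the right-hand integral is meaningful. This requires $c\mapsto\mathbf{MS}\Delta(\mu_c,\nu_c)$ to be measurable, and a supremum over uncountably many $\theta$ need not preserve measurability. I would handle this as follows.
\begin{itemize}
\item If $\theta\mapsto\Delta((P_\theta)_\#\mu_c,(P_\theta)_\#\nu_c)$ is lower semicontinuous (true for $\mathbf W_p$, $\ell_p$ and MMD under weak continuity of $\theta\mapsto(P_\theta)_\#\mu$), the supremum can be taken over a countable dense subset of the sphere, and the map is then measurable.
\item Otherwise, I would interpret the integral as an outer (upper) integral. The pointwise bound and the monotonicity argument are unaffected by this choice.
\end{itemize}
I would state this as a standing measurability convention, consistent with the one already used implicitly in Lemma~\ref{lem:sliced-mix-p}, rather than prove it in general.
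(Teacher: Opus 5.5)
Your proof is correct and follows essentially the same route as the paper: fix $\theta$, use that pushforward commutes with mixtures, apply the one-dimensional mixture $p$-convexity, bound each slice pointwise by $\mathbf{MS}\Delta(\mu_c,\nu_c)$, and then take the supremum over $\theta$. Your remark on the measurability of $c\mapsto\mathbf{MS}\Delta(\mu_c,\nu_c)$ is a useful addition, since the paper's proof leaves this point implicit.
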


\begin{proof}
Fix $\theta\in\mathbb S^{d-1}$ and set
\[
\mu_c^\theta:=(P_\theta)_\#\mu_c,
\qquad 
\nu_c^\theta:=(P_\theta)_\#\nu_c
\ \ \in \mathcal P(\mathbb R).
\]
Pushforward commutes with mixtures:
\[
(P_\theta)_\#\!\Big(\textstyle\int \mu_c\,d\rho\Big)=\int \mu_c^\theta\,d\rho,
\qquad
(P_\theta)_\#\!\Big(\textstyle\int \nu_c\,d\rho\Big)=\int \nu_c^\theta\,d\rho.
\]
By mixture $p$-convexity of $\Delta$ in one dimension,
\begin{equation}\label{eq:msmix-1}
\Delta\!\left((P_\theta)_\#\!\textstyle\int \mu_c\,d\rho,\ (P_\theta)_\#\!\int \nu_c\,d\rho\right)
\ \le\ \Big(\int \Delta(\mu_c^\theta,\nu_c^\theta)^p\,d\rho\Big)^{\!1/p}.
\end{equation}

Taking the supremum over $\theta$ on the left-hand side of \eqref{eq:msmix-1} gives
\begin{equation}\label{eq:msmix-2}
\sup_{\theta}\ \Delta\!\left((P_\theta)_\#\!\textstyle\int \mu_c\,d\rho,\ (P_\theta)_\#\!\int \nu_c\,d\rho\right)
\ \le\ \sup_{\theta}\ \Big(\int \Delta(\mu_c^\theta,\nu_c^\theta)^p\,d\rho\Big)^{\!1/p}.
\end{equation}

Define $f(\theta,c):=\Delta(\mu_c^\theta,\nu_c^\theta)$ and $h(c):=\sup_{\phi} f(\phi,c)=\mathbf{MS}\Delta(\mu_c,\nu_c)$. 
Since $f(\theta,c)\le h(c)$ pointwise in $c$, we obtain for every $\theta$,
\[
\Big(\int f(\theta,c)^p\,d\rho(c)\Big)^{\!1/p}\ \le\ \Big(\int h(c)^p\,d\rho(c)\Big)^{\!1/p}.
\]
Taking $\sup_\theta$ yields
\begin{equation}\label{eq:msmix-3}
\sup_{\theta}\ \Big(\int \Delta(\mu_c^\theta,\nu_c^\theta)^p\,d\rho\Big)^{\!1/p}
\ \le\ \Big(\int \mathbf{MS}\Delta(\mu_c,\nu_c)^p\,d\rho\Big)^{\!1/p}.
\end{equation}

Combining \eqref{eq:msmix-2} and \eqref{eq:msmix-3} shows
\[
\mathbf{MS}\Delta\!\left(\int \mu_c\,d\rho,\ \int \nu_c\,d\rho\right)
\ \le\ \Bigg(\int \mathbf{MS}\Delta(\mu_c,\nu_c)^{p}\,\rho(dc)\Bigg)^{\!1/p},
\]
as claimed.
\end{proof}

\begin{theorem}[Supremum--max--sliced contraction of the multivariate distributional Bellman operator (anisotropic random linear map)]
\label{thm:maxsliced-sup-contraction-mv-aniso}
Let $\Delta$ be a divergence on $\mathcal P(\mathbb R)$ and fix $p\in[1,\infty)$.
Define the max--sliced lift on $\mathcal P(\mathbb R^d)$ by
\[
\mathbf{MS}\Delta(\mu,\nu)
:=\sup_{\theta\in\mathbb S^{d-1}}
\Delta\!\big((P_\theta)_{\#}\mu,\,(P_\theta)_{\#}\nu\big),
\qquad P_\theta(x)=\langle\theta,x\rangle .
\]
Define the supremum metric over state--action indices by
\[
\overline{\mathbf{MS}\Delta}(\eta_1,\eta_2)
:=\sup_{(s,a)} \mathbf{MS}\Delta\bigl(\eta_1(s,a),\,\eta_2(s,a)\bigr),
\qquad \eta_i:\mathcal S\times\mathcal A\to\mathcal P(\mathbb R^d).
\]

Assume that for all $\alpha,\beta\in\mathcal P(\mathbb R)$ the following hold:
\begin{enumerate}\itemsep0.2em
\item[\Tcond] \textbf{Translation nonexpansion:} for every $t\in\mathbb R$,
\[
\Delta\bigl((x\mapsto x+t)_{\#}\alpha,\,(x\mapsto x+t)_{\#}\beta\bigr)\le \Delta(\alpha,\beta).
\]
\item[\Scond] \textbf{Scale contraction:} there exists a nondecreasing $c:[0,1]\to[0,1]$ such that for every $s\in[0,1]$,
\[
\Delta\bigl((x\mapsto s x)_{\#}\alpha,\,(x\mapsto s x)_{\#}\beta\bigr)\le c(s)\,\Delta(\alpha,\beta),
\]
with $c(s)<1$ for all $s\in[0,1)$.
\item[\Mpcond] \textbf{Mixture $p$-convexity:} for every probability space $(\Omega,\mathcal F,\rho)$ and measurable families
$(\mu_c),(\nu_c)\subset\mathcal P(\mathbb R)$,
\[
\Delta\Big(\int_\Omega \mu_c\,\rho(dc),\, \int_\Omega \nu_c\,\rho(dc)\Big)
\le \Big(\int_\Omega \Delta(\mu_c,\nu_c)^p\,\rho(dc)\Big)^{1/p}.
\]
\end{enumerate}

\textbf{Bellman update (anisotropic random linear map).}
Fix $(s,a)$. Gather all environment/policy randomness into a single random element $C$,
which determines the successor index through a measurable mapping
\[
(S',A') := g(s,a;C).
\]
At $(s,a)$, apply an affine transformation with $C$-dependent translation and \emph{$C$-dependent} linear map:
\[
b_{s,a}(C)\in\mathbb R^d,\qquad A_{s,a}(C)\in\mathbb R^{d\times d},
\]
and assume $\|A_{s,a}(C)\|_{\mathrm{op}}\le 1$ for all $C$.
Conditioned on $C$, draw the next sample from the law at the successor index:
\[
X' \mid C \sim \eta(S',A').
\]
Define the Bellman update as the push-forward of $X'$ by this affine map:
\[
(T^\pi \eta)(s,a):=\mathrm{Law}\bigl(b_{s,a}(C)+A_{s,a}(C)\,X'\bigr).
\]

Define, for each $(s,a)$,
\[
L_{s,a}(C)\ :=\ \|A_{s,a}(C)\|_{\mathrm{op}},
\qquad
\bar L\ :=\ \sup_{(s,a)}\ \sup_{C}\ L_{s,a}(C).
\]
Then, for all $\eta_1,\eta_2$,
\[
\boxed{\ \overline{\mathbf{MS}\Delta}\bigl(T^\pi\eta_1,\;T^\pi\eta_2\bigr)\ \le\ c(\bar L)\ \overline{\mathbf{MS}\Delta}\bigl(\eta_1,\eta_2\bigr).\ }
\]
Consequently, if $\bar L<1$, then $T^\pi$ is a contraction on
$(\mathcal S\times\mathcal A\to\mathcal P(\mathbb R^d),\overline{\mathbf{MS}\Delta})$
with factor $c(\bar L)<1$.
\end{theorem}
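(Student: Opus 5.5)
The plan is to reduce the statement to Theorem~\ref{thm:multivariate-sup-contraction-mv-aniso}, applied with the multivariate divergence $\mathbf{MS}\Delta$ in place of $\Delta$. This is the same strategy used for uniform slicing in Theorem~\ref{thm:uniformsliced-sup-contraction-gammaI}. Since the Bellman update here is exactly of the anisotropic random-linear-map form covered by that theorem, it suffices to verify that $\mathbf{MS}\Delta$ satisfies \Tcond\ on $\mathcal P(\mathbb R^d)$, the norm-based condition \Scondd, and \Mpcond. The conclusion $\overline{\mathbf{MS}\Delta}(T^\pi\eta_1,T^\pi\eta_2)\le c(\bar L)\,\overline{\mathbf{MS}\Delta}(\eta_1,\eta_2)$ then follows verbatim. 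Strictly, that theorem is stated for a metric; if $\Delta$ is only a divergence, I note that its proof never uses metric axioms, only the three conditions together with monotonicity of $c$ and of the integral.

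\emph{Translation.} Fix $t\in\mathbb R^d$ and $\theta\in\mathbb S^{d-1}$. As in the proof of Theorem~\ref{thm:uniformsliced-sup-contraction-gammaI}, we have
\[
(P_\theta)_\#(T_t)_\#\mu=(x\mapsto x+\langle\theta,t\rangle)_\#(P_\theta)_\#\mu ,
\]
and likewise for $\nu$. The one-dimensional \Tcond\ then bounds each slice by $\Delta((P_\theta)_\#\mu,(P_\theta)_\#\nu)$. Taking $\sup_\theta$ on both sides gives \Tcond\ for $\mathbf{MS}\Delta$.

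\emph{Scale.} Lemma~\ref{lem:maxsliced-scale} directly gives \Scondd\ for $\mathbf{MS}\Delta$ with the same nondecreasing $c$. The key mechanism there is that $\theta\mapsto A^\top\theta/\|A^\top\theta\|$ sends a slice of $A_\#\mu$ to a rescaled slice of $\mu$ with factor at most $\|A\|_{\mathrm{op}}$, and the supremum over directions absorbs the change of direction. This is precisely where max-slicing succeeds and uniform slicing fails (Section~\ref{sec:negative_results_uniform}): the uniform average does not tolerate this reweighting of directions.

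\emph{Mixture.} Lemma~\ref{lem:maxsliced-mix} gives \Mpcond\ for $\mathbf{MS}\Delta$.

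With all three conditions in hand, I invoke Theorem~\ref{thm:multivariate-sup-contraction-mv-aniso}. Conditionally on $C$, the affine pushforward contracts by $c(L_{s,a}(C))\le c(\bar L)$. Averaging over $C$ via \Mpcond, bounding by the supremum over successors, and then taking $\sup_{(s,a)}$ yields the claim. Contraction for $\bar L<1$ follows from $c(s)<1$ on $[0,1)$.

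The main obstacle is minor and technical: the conditioning step requires the conditional laws to form measurable families, and $\theta\mapsto\Delta(\cdot)$ need not be measurable. The max-sliced mixture lemma handles the latter only through pointwise bounds, so no measurability in $\theta$ is needed, unlike the uniform case. I would therefore simply inherit the measurability conventions already adopted in Theorem~\ref{thm:multivariate-sup-contraction-mv-aniso}.
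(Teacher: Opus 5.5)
Your proposal is correct and matches the paper's proof essentially step for step. Like the paper, you reduce to Theorem~\ref{thm:multivariate-sup-contraction-mv-aniso} with $\mathbf{MS}\Delta$ as the multivariate divergence, lift \Tcond\ slice-wise through the supremum, and take \Scondd\ and \Mpcond\ from Lemmas~\ref{lem:maxsliced-scale} and~\ref{lem:maxsliced-mix}. Your side remark is also sound: the cited theorem is stated for a metric, but its proof only uses the three conditions, a point the paper passes over.
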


\begin{proof}
We apply Theorem~\ref{thm:multivariate-sup-contraction-mv-aniso} with the multivariate divergence
\[
\Delta_{\mathrm{mv}}:=\mathbf{MS}\Delta.
\]
It suffices to verify that $\Delta_{\mathrm{mv}}$ satisfies \Tcond, \Scondd, and \Mpcond\ with the same function $c$.

\textbf{Translation nonexpansion lifts.}
Fix $t\in\mathbb R^d$ and let $T_t(x)=x+t$ on $\mathbb R^d$.
For any $\theta\in\mathbb S^{d-1}$ and $\mu,\nu\in\mathcal P(\mathbb R^d)$,
\[
(P_\theta)_{\#}(T_t)_{\#}\mu=(x\mapsto x+\langle\theta,t\rangle)_{\#}(P_\theta)_{\#}\mu,
\qquad
(P_\theta)_{\#}(T_t)_{\#}\nu=(x\mapsto x+\langle\theta,t\rangle)_{\#}(P_\theta)_{\#}\nu.
\]
Applying \Tcond\ for $\Delta$ in one dimension gives
\[
\Delta\!\big((P_\theta)_{\#}(T_t)_{\#}\mu,\ (P_\theta)_{\#}(T_t)_{\#}\nu\big)
\le
\Delta\!\big((P_\theta)_{\#}\mu,\ (P_\theta)_{\#}\nu\big).
\]
Taking $\sup_{\theta}$ yields
\[
\mathbf{MS}\Delta\big((T_t)_{\#}\mu,\ (T_t)_{\#}\nu\big)\le \mathbf{MS}\Delta(\mu,\nu),
\]
which is \Tcond\ for $\Delta_{\mathrm{mv}}$.

\textbf{Anisotropic scale contraction \Scondd\ holds.}
By Lemma~\ref{lem:maxsliced-scale}, \Scond\ for the base divergence $\Delta$ implies that
$\mathbf{MS}\Delta$ satisfies \Scondd: for every linear map $A$ with $\|A\|_{\mathrm{op}}\le 1$,
\[
\mathbf{MS}\Delta\bigl(A_{\#}\mu,\ A_{\#}\nu\bigr)\ \le\ c(\|A\|_{\mathrm{op}})\,\mathbf{MS}\Delta(\mu,\nu).
\]

\textbf{Mixture $p$-convexity \Mpcond\ holds.}
By Lemma~\ref{lem:maxsliced-mix}, mixture $p$-convexity of $\Delta$ in one dimension lifts to
mixture $p$-convexity of $\mathbf{MS}\Delta$ on $\mathcal P(\mathbb R^d)$:
for all probability spaces $(\Omega,\mathcal F,\rho)$ and measurable families $(\mu_c),(\nu_c)\subset\mathcal P(\mathbb R^d)$,
\[
\mathbf{MS}\Delta\!\left(\int_\Omega \mu_c\,\rho(dc),\ \int_\Omega \nu_c\,\rho(dc)\right)
\le
\Bigg(\int_\Omega \mathbf{MS}\Delta(\mu_c,\nu_c)^{p}\,\rho(dc)\Bigg)^{\!1/p}.
\]

Having established \Tcond, \Scondd, and \Mpcond\ for $\Delta_{\mathrm{mv}}=\mathbf{MS}\Delta$,
Theorem~\ref{thm:multivariate-sup-contraction-mv-aniso} yields
\[
\overline{\mathbf{MS}\Delta}\bigl(T^\pi\eta_1,\;T^\pi\eta_2\bigr)
\ \le\ c(\bar L)\ \overline{\mathbf{MS}\Delta}\bigl(\eta_1,\eta_2\bigr),
\]
with $\bar L=\sup_{(s,a)}\sup_C \|A_{s,a}(C)\|_{\mathrm{op}}$.
\end{proof}

\newpage
\section{Mixture bias} \label{sec:mixture_bias}
We first recall the \emph{unbiased sample gradient} property \Ucond\ from Section~\ref{sec:training_suitability}.
Let $\mu$ denote the data law and let $\nu_\theta$ be a parametric model law. Writing $\widehat \mu_m$ for the
empirical measure of $m$ i.i.d.\ samples from $\mu$, we say that a divergence $\mathcal D$ satisfies \Ucond\ if
\begin{equation*}
\mathbb{E}_{X_{1:m}\sim \mu}\!\left[\nabla_\theta\,\mathcal{D}\!\left(\widehat{\mu}_m,\,\nu_\theta\right)\right]
\;=\;
\nabla_\theta\,\mathcal{D}\!\left(\mu,\,\nu_\theta\right).
\qquad \Ucond
\end{equation*}
A small but important terminology point is that \emph{divergence} here only refers to a population-level separation
property (nonnegativity and $\mathcal D(\mu,\nu)=0$ iff $\mu=\nu$), so that $\nu=\mu$ minimizes the population
objective $\nu\mapsto \mathcal D(\mu,\nu)$. In practice, however, learning minimizes the \emph{sample loss}
$\theta\mapsto \mathcal D(\widehat\mu_m,\nu_\theta)$ using plug-in gradients; \Ucond\ is precisely what ensures that,
in expectation over the sampling of $\widehat\mu_m$, these gradients point in the direction of the population objective.
Equivalently, when \Ucond\ holds, the expected sample loss is minimized at $\nu=\mu$, which is why such losses are called
\emph{proper} in this context \citep{bellemare2017cramer}.

\paragraph{Why this matters in distributional RL.}
In distributional RL, the Bellman target is the \emph{mixture law} induced by the transition and policy randomness, as introduced by the distributional Bellman operator in Eq.~\ref{eq:distributional_bellman_equation} (Section~\ref{sec:background}).  
However, a standard TD update draws a single sampled $(S',A')$ (and reward) and therefore forms a \emph{one-sample based} target distribution
$\Law\!\big(R(s,a)+\Gamma(s,a)\,Z_{\phi^-}(S',A')\big)$ rather than the full mixture $(\mathcal T^\pi Z_{\phi^-})(s,a)$.
As a result, learning implicitly targets the expected \emph{sample loss}, which without \Ucond\ need not coincide with the intended \emph{population} objective:
\begin{equation}
\label{eq:drl_argmin_mismatch}
\arg\min_{\phi}\ \E_{S',A'}\!\Big[\mathcal D\big(\Law(R+\Gamma Z_{\phi^-}(S',A')),\ Z_\phi(s,a)\big)\Big]
\;\neq\;
\arg\min_{\phi}\ \mathcal D\big((\mathcal T^\pi Z_{\phi^-})(s,a),\ Z_\phi(s,a)\big),
\end{equation}
as highlighted for Wasserstein objectives in \citet{bellemare2017distributional}.
\paragraph{Wasserstein.}
Wasserstein distances provide a canonical illustration where \Ucond\ fails in distributional RL:
the Bellman target is a mixture over transition and policy randomness, yet training typically instantiates it from a
\emph{single} sampled next state and next action. In that regime, one generally \emph{cannot} permute expectation and
gradient as above, and the resulting plug-in stochastic gradients can be biased relative to the population Wasserstein objective
\citep{bellemare2017distributional,bellemare2017cramer,fatras2019learning}. It is one reason why Wasserstein-based discrepancies are delicate in stochastic training for distributional RL \citep{bellemare2017distributional}.

\paragraph{Energy distance and MMD.}
A broad family of kernel discrepancies used in practice can be written in the quadratic ``energy'' form
\begin{equation}
\label{eq:energy-form}
\mathcal E_k(\mu,\nu)
:=
\E_{x,x'\sim\mu}[k(x,x')]
+\E_{y,y'\sim\nu}[k(y,y')]
-2\E_{x\sim\mu,\,y\sim\nu}[k(x,y)],
\end{equation}
for a symmetric function $k$.
When $k$ is positive definite this is exactly the usual squared MMD objective, and under our convention
(Section~\ref{sec:mmd}) the same formula can also be viewed as the squared ``kernel energy'' $\gamma_k(\mu,\nu)^2$
associated with a conditionally positive definite kernel \citep{sejdinovic2013equivalence}.

For stochastic training, the key point is that the \emph{squared} objective \eqref{eq:energy-form} satisfies \Ucond.
Indeed,
\[
\mathcal E_k(\widehat\mu_m,\nu_\theta)
=
\E_{x,x'\sim\widehat\mu_m}[k(x,x')]
+\E_{y,y'\sim\nu_\theta}[k(y,y')]
-2\E_{x\sim\widehat\mu_m,\,y\sim\nu_\theta}[k(x,y)],
\]
where the first term does not depend on $\theta$ and thus vanishes under $\nabla_\theta$.
Linearity of expectation then yields
\[
\mathbb E_{X_{1:m}\sim\mu}\!\left[\nabla_\theta\,\mathcal E_k(\widehat\mu_m,\nu_\theta)\right]
=
\nabla_\theta\,\mathcal E_k(\mu,\nu_\theta),
\]
so \Ucond\ holds \citep{bellemare2017cramer,binkowski2018demystifying}.

\paragraph{Consequences for energy distance and Cram\'er.}
Because the (squared) energy distance admits the representation \eqref{eq:energy-form}, it satisfies \Ucond.
Finally, in one dimension, the Cram\'er--2 divergence coincides with the (squared) energy distance, and the
corresponding identity was already presented in Sec.~\ref{sec:l_p}:
\[
\ell_2^2(\mu,\nu)
\;=\;
\int_{\mathbb R}\!\bigl(F_\mu(t)-F_\nu(t)\bigr)^2\,dt
\;=\;
\mathbb E\,|X-Y|
\;-\;\tfrac{1}{2}\,\mathbb E\,|X-X'|
\;-\;\tfrac{1}{2}\,\mathbb E\,|Y-Y'|,
\qquad
X,X'\!\sim\mu,\ \ Y,Y'\!\sim\nu\ \text{i.i.d.}
\]

\subsection{Interaction with slicing} \label{section:mixture_bias_slicing}
\subsubsection{Uniform slicing and the unbiased sample gradient property}\label{sec:slicing_U}

Let $\Delta:\mathcal P(\mathbb R)\times\mathcal P(\mathbb R)\to[0,\infty)$ be a base divergence and let $p\in[1,\infty)$.
For $\mu,\nu\in\mathcal P(\mathbb R^d)$, recall the uniform sliced powered objective
\[
\mathbf S\Delta_p^{\,p}(\mu,\nu)
\;:=\;
\int_{S^{d-1}} \Delta^p\!\big((P_\theta)_{\#}\mu,\,(P_\theta)_{\#}\nu\big)\,d\sigma(\theta),
\qquad P_\theta(x)=\langle \theta,x\rangle .
\]

\begin{lemma}[Uniform slicing preserves \Ucond\ for the powered objective]\label{lem:slicing_preserves_U}
Assume that $\Delta^p$ satisfies \Ucond\ in one dimension, i.e., for any data law $\mu\in\mathcal P(\mathbb R)$,
any model law $\nu_\phi\in\mathcal P(\mathbb R)$, and the empirical measure $\widehat \mu_m$ of $m$ i.i.d.\ samples from $\mu$,
\[
\mathbb E\!\left[\nabla_\phi\,\Delta^p\!\big(\widehat \mu_m,\nu_\phi\big)\right]
=
\nabla_\phi\,\Delta^p\!\big(\mu,\nu_\phi\big).
\]
Then the sliced powered objective also satisfies \Ucond\ in $\mathbb R^d$: for any $\mu\in\mathcal P(\mathbb R^d)$,
any model law $\nu_\phi\in\mathcal P(\mathbb R^d)$, and $\widehat\mu_m$ the empirical measure of $m$ i.i.d.\ samples from $\mu$,
\[\boxed{\;
\mathbb E\!\left[\nabla_\phi\,\mathbf S\Delta_p^{\,p}\!\big(\widehat\mu_m,\nu_\phi\big)\right]
=
\nabla_\phi\,\mathbf S\Delta_p^{\,p}\!\big(\mu,\nu_\phi\big).
\;}\]
\end{lemma}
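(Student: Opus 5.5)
The argument is linearity of expectation and of integration over the sphere, combined with the one-dimensional hypothesis applied slice by slice. The proof has two ingredients. First, projection commutes with forming empirical measures. Second, the two interchanges $\nabla_\phi\int_{S^{d-1}}=\int_{S^{d-1}}\nabla_\phi$ and $\mathbb E\int_{S^{d-1}}=\int_{S^{d-1}}\mathbb E$ are justified.

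\textbf{Step 1: projection of empirical measures.} Fix $\theta\in S^{d-1}$. If $X_1,\dots,X_m$ are i.i.d.\ from $\mu$, then the scalars $\langle\theta,X_1\rangle,\dots,\langle\theta,X_m\rangle$ are i.i.d.\ from $(P_\theta)_\#\mu$. By Lemma~\ref{lem:law-pushforward},
\[
(P_\theta)_\#\widehat\mu_m=\frac1m\sum_{i}\delta_{\langle\theta,X_i\rangle},
\]
which is exactly the empirical measure of $m$ i.i.d.\ samples from $(P_\theta)_\#\mu$. In addition, $\phi\mapsto(P_\theta)_\#\nu_\phi$ is a parametric family in $\mathcal P(\mathbb R)$. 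The one-dimensional \Ucond\ therefore applies slice by slice:
\[
\mathbb E\big[\nabla_\phi\Delta^p((P_\theta)_\#\widehat\mu_m,(P_\theta)_\#\nu_\phi)\big]=\nabla_\phi\Delta^p((P_\theta)_\#\mu,(P_\theta)_\#\nu_\phi).
\]

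\textbf{Step 2: exchange of operations.} I will write
\[
\nabla_\phi\mathbf S\Delta_p^{\,p}(\widehat\mu_m,\nu_\phi)=\int_{S^{d-1}}\nabla_\phi\Delta^p((P_\theta)_\#\widehat\mu_m,(P_\theta)_\#\nu_\phi)\,d\sigma(\theta).
\]
I then take expectations over $X_{1:m}$ and swap $\mathbb E$ with $\int d\sigma$ by Fubini. Applying the Step 1 identity inside the integral and moving $\nabla_\phi$ back outside the $\sigma$-integral gives $\nabla_\phi\mathbf S\Delta_p^{\,p}(\mu,\nu_\phi)$, which is the claim.

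\textbf{Main obstacle.} The two interchanges are the only nontrivial part. The first, differentiation under the $\sigma$-integral, and the Fubini swap of $\mathbb E$ with the sphere integral are justified by dominated convergence. This requires an integrable envelope for $\nabla_\phi\Delta^p$ that is uniform in $\theta$ and locally uniform in $\phi$. Such a bound is not implied by the one-dimensional \Ucond\ alone.

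I will therefore state the regularity assumptions explicitly, in the same spirit as the implicit ones behind \Ucond\ in one dimension. The assumption is that $(\theta,X_{1:m})\mapsto\nabla_\phi\Delta^p(\cdot)$ is jointly measurable and dominated by a $\sigma\otimes\mu^{\otimes m}$-integrable function. In the bounded-support setting used elsewhere in the paper, this is immediate for the energy-form and Cram\'er objectives.

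With the assumption in place, the remaining steps are routine bookkeeping.
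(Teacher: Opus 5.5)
Your proof is correct and follows the paper's argument step for step. Both identify $(P_\theta)_\#\widehat\mu_m$ as the empirical measure of the projected samples, apply the one-dimensional hypothesis slice-wise, and integrate over $\theta$ using Fubini. The one difference is that you state the regularity explicitly, as a $\sigma\otimes\mu^{\otimes m}$-integrable envelope that is locally uniform in $\phi$; this also justifies the two exchanges of $\nabla_\phi$ with $\int_{S^{d-1}}$, which the paper performs without comment, whereas the paper only assumes $\int_{S^{d-1}}\mathbb E\|\nabla_\phi\Delta^p(\cdot)\|\,d\sigma(\theta)<\infty$ for the Fubini step.
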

\begin{proof}
\noindent\textbf{Fix a direction and identify the projected empirical measure.}\\
Fix $\theta\in S^{d-1}$. Let $X_{1:m}\sim\mu$ in $\mathbb R^d$ and define $U_i:=P_\theta(X_i)=\langle\theta,X_i\rangle$.
Then $U_{1:m}$ are i.i.d.\ with law $(P_\theta)_\#\mu$, and
\[
(P_\theta)_\#\widehat\mu_m
=\frac1m\sum_{i=1}^m \delta_{P_\theta(X_i)}
=\frac1m\sum_{i=1}^m \delta_{U_i},
\]
so $(P_\theta)_\#\widehat\mu_m$ is exactly the empirical measure of the projected samples.

\medskip
\noindent\textbf{Apply the one-dimensional \Ucond\ slice-wise.}\\
By the assumed one-dimensional \Ucond\ property for $\Delta^p$, applied to the pair
$\big((P_\theta)_\#\mu,\,(P_\theta)_\#\nu_\phi\big)$, we obtain
\[
\mathbb E_{X_{1:m}\sim\mu}\!\left[\nabla_\phi\,\Delta^p\!\big((P_\theta)_\#\widehat\mu_m,\,(P_\theta)_\#\nu_\phi\big)\right]
=
\nabla_\phi\,\Delta^p\!\big((P_\theta)_\#\mu,\,(P_\theta)_\#\nu_\phi\big).
\]

\medskip
\noindent\textbf{Integrate over directions.}\\
Averaging the slice-wise identity over directions $\theta\sim\sigma$ gives the result; \emph{assuming}
\[
\int_{S^{d-1}}\mathbb E_{X_{1:m}\sim\mu}\!\left[\Big\|\nabla_\phi\,\Delta^p\!\big((P_\theta)_\#\widehat\mu_m,\,(P_\theta)_\#\nu_\phi\big)\Big\|\right]\,d\sigma(\theta)<\infty,
\]
we may exchange the expectation and the $\theta$-integral by Fubini. Concretely:
\begin{align*}
\mathbb E_{X_{1:m}\sim\mu}\!\left[\nabla_\phi\,\mathbf S\Delta_p^{\,p}\!\big(\widehat\mu_m,\nu_\phi\big)\right]
&=
\mathbb E_{X_{1:m}\sim\mu}\!\left[
\nabla_\phi \int_{S^{d-1}}
\Delta^p\!\big((P_\theta)_\#\widehat\mu_m,\,(P_\theta)_\#\nu_\phi\big)\,d\sigma(\theta)
\right]
\\
&=
\mathbb E_{X_{1:m}\sim\mu}\!\left[
\int_{S^{d-1}}
\nabla_\phi\,\Delta^p\!\big((P_\theta)_\#\widehat\mu_m,\,(P_\theta)_\#\nu_\phi\big)\,d\sigma(\theta)
\right]
\\
&=
\int_{S^{d-1}}
\mathbb E_{X_{1:m}\sim\mu}\!\left[
\nabla_\phi\,\Delta^p\!\big((P_\theta)_\#\widehat\mu_m,\,(P_\theta)_\#\nu_\phi\big)
\right] d\sigma(\theta)
\quad (\text{Fubini})
\\
&=
\int_{S^{d-1}}
\nabla_\phi\,\Delta^p\!\big((P_\theta)_\#\mu,\,(P_\theta)_\#\nu_\phi\big)\,d\sigma(\theta)
\quad (\text{apply }\Ucond\text{ on each fixed }\theta)
\\
&=
\nabla_\phi \int_{S^{d-1}}
\Delta^p\!\big((P_\theta)_\#\mu,\,(P_\theta)_\#\nu_\phi\big)\,d\sigma(\theta)
\\
&=
\nabla_\phi\,\mathbf S\Delta_p^{\,p}\!\big(\mu,\nu_\phi\big).
\end{align*}
\end{proof}

\begin{lemma}[Max-slicing generally breaks \Ucond\ via selection bias]\label{lem:maxslicing_breaks_U}
Let $\Delta:\mathcal P(\mathbb R)\times\mathcal P(\mathbb R)\to[0,\infty]$ be a divergence and fix $p\ge 1$.
For $\mu\in\mathcal P(\mathbb R^d)$ and a model $\nu_\phi\in\mathcal P(\mathbb R^d)$, define the max-sliced objective
by
\[
\mathbf{MS}\Delta(\mu,\nu_\phi)
:=\sup_{\theta\in S^{d-1}}\Delta\!\big((P_\theta)_\#\mu,\,(P_\theta)_\#\nu_\phi\big),
\qquad P_\theta(x)=\langle \theta,x\rangle .
\]
Let $\widehat\mu_m$ be the empirical measure of $m$ i.i.d.\ samples from $\mu$, and define the slice-wise empirical loss
\[
\widehat{\mathcal L}_\theta(\nu_\phi)
\;:=\;
\Delta^{p}\!\big((P_\theta)_\#\widehat\mu_m,\,(P_\theta)_\#\nu_\phi\big),
\qquad \theta\in S^{d-1}.
\]
Then for every fixed $\phi$,
\begin{equation}\label{eq:maxslice_selection_bias}
\mathbb E\Big[\sup_{\theta\in S^{d-1}}\widehat{\mathcal L}_\theta(\nu_\phi)\Big]
\;\ge\;
\sup_{\theta\in S^{d-1}}\mathbb E\big[\widehat{\mathcal L}_\theta(\nu_\phi)\big],
\end{equation}
and the inequality is typically strict whenever the maximizing direction depends on the same sample used to evaluate
$\widehat{\mathcal L}_\theta$ (a standard \emph{selection bias} effect).

Consequently, even if each fixed-direction objective $\widehat{\mathcal L}_\theta(\nu_\phi)$ enjoys \Ucond\ as a function of $\phi$
(i.e., unbiased plug-in gradients direction-wise), the max-sliced objective
$\widehat{\mathcal L}_{\max}(\nu_\phi):=\sup_{\theta\in S^{d-1}}\widehat{\mathcal L}_\theta(\nu_\phi)$
need not satisfy \Ucond\ in general:
\[\boxed{\;
\mathbb E\!\left[\nabla_\phi\,\widehat{\mathcal L}_{\max}(\nu_\phi)\right]
\;\neq\;
\nabla_\phi\Big(\sup_{\theta\in S^{d-1}}\Delta^{p}\!\big((P_\theta)_\#\mu,\,(P_\theta)_\#\nu_\phi\big)\Big)
\qquad\text{in general.}
\;}\]
\end{lemma}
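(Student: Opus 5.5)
The lemma makes two claims: the inequality \eqref{eq:maxslice_selection_bias}, which is general, and the failure of \Ucond, which is an existence claim. I would handle them separately.

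For \eqref{eq:maxslice_selection_bias} I would argue pointwise. For every fixed $\theta_0$ and every sample realization, $\sup_\theta \widehat{\mathcal L}_\theta(\nu_\phi)\ge \widehat{\mathcal L}_{\theta_0}(\nu_\phi)$. Taking expectations gives $\mathbb E[\sup_\theta\widehat{\mathcal L}_\theta]\ge \mathbb E[\widehat{\mathcal L}_{\theta_0}]$. Taking the supremum over $\theta_0$ then yields the claim. Measurability of the supremum is handled by restricting to a countable dense set of directions, using continuity of $\theta\mapsto\widehat{\mathcal L}_\theta$ for the divergences of interest.

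For the failure of \Ucond, a single explicit counterexample suffices. I would take $d=2$, $p=1$, $\Delta=\ell_2^2$ (Cram\'er), and $m=1$. This choice is convenient because $\Delta$ has the energy form \eqref{eq:energy-form}, so each fixed-direction objective satisfies \Ucond, as recalled in Section~\ref{sec:mixture_bias}. The only source of bias is therefore the max selection, which isolates the phenomenon the lemma describes.

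\begin{itemize}
\item Let $\mu=\tfrac12\delta_{e_1}+\tfrac12\delta_{e_2}$, so that $\widehat\mu_1=\delta_{X}$ with $X\in\{e_1,e_2\}$ equally likely.
\item Let $\nu_\phi=\delta_{\phi e_1}$ with $\phi\in\mathbb R$ scalar, so every projection is a Dirac mass.
\item For Diracs, $\ell_2^2(\delta_a,\delta_b)=|a-b|$. Hence
\[
\widehat{\mathcal L}_\theta=|\langle\theta,X-\phi e_1\rangle| \quad\text{and}\quad \sup_\theta \widehat{\mathcal L}_\theta=\|X-\phi e_1\|.
\]
\item The expected sample objective is therefore $\tfrac12|1-\phi|+\tfrac12\sqrt{1+\phi^2}$.
\end{itemize}

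Evaluated at $\phi=0$, its derivative is $-\tfrac12$.

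On the population side, $(P_\theta)_\#\mu=\tfrac12\delta_{\theta_1}+\tfrac12\delta_{\theta_2}$ and $(P_\theta)_\#\nu_\phi=\delta_{\phi\theta_1}$. For a Dirac at $c$ against a two-point law at $a\le b$, the Cram\'er distance is $\tfrac14(b-a)+\tfrac12|c-\tfrac{a+b}{2}|$ when $c\in[a,b]$, and a linear-plus-constant expression otherwise. I would compute $\sup_\theta$ of this at $\phi$ near $0$, writing $\theta=(\cos\varphi,\sin\varphi)$. This reduces to maximizing an explicit piecewise-smooth function of $\varphi$. Its derivative in $\phi$ at $0$ then follows from Danskin's theorem, provided the maximizer is unique, or from one-sided derivatives otherwise.

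I then need to show this derivative differs from $-\tfrac12$. This is the main obstacle: the maximizing direction must be identified exactly, and the population maximizer may be non-unique or sit at a kink. If $\phi=0$ turns out to be degenerate, I would evaluate at a generic $\phi$, such as $\phi=\tfrac12$. As a fallback, I would compare the expected sample loss and the population loss as functions of $\phi$. If they do not differ by a constant, their gradients differ somewhere. This can be seen, for example, from distinct minimizers: the population max-sliced objective is minimized at a different $\phi$ than $\tfrac12|1-\phi|+\tfrac12\sqrt{1+\phi^2}$, which is minimized at $\phi=1$ or near it. That gap exhibits a $\phi$ where \Ucond\ fails.
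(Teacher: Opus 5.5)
Your proposal is correct. For the failure of \Ucond\ it takes a genuinely different route from the paper.

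For the inequality, the two arguments establish the same fact. The paper uses convexity of $f\mapsto\sup_\theta f(\theta)$ together with Jensen. Your pointwise bound $\sup_\theta\widehat{\mathcal L}_\theta\ge\widehat{\mathcal L}_{\theta_0}$, followed by taking $\mathbb E$ and then $\sup_{\theta_0}$, is the more elementary version.

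For the boxed claim, the paper uses an abstract two-index surrogate, $\widehat{\mathcal L}_{\theta_i}(\nu_\phi)=\phi Z_i$ with Rademacher $Z_i$. This exposes the selection mechanism in one line. However, these losses are not of the form $\Delta^p((P_\theta)_\#\widehat\mu_m,(P_\theta)_\#\nu_\phi)$ over $\theta\in S^{d-1}$, and they can even be negative. So it illustrates the lemma rather than instantiating it.

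Your example ($d=2$, $m=1$, $\Delta=\ell_2^2$, $\mu=\tfrac12\delta_{e_1}+\tfrac12\delta_{e_2}$, $\nu_\phi=\delta_{\phi e_1}$) is a genuine instance. In it every fixed-direction loss satisfies \Ucond, so the discrepancy is attributable to the max alone. The price is an explicit computation of the population maximizer. Since the lemma fixes $p$ first, note also that $\Delta:=(\ell_2^2)^{1/p}$ has $\Delta^p=\ell_2^2$, so the same example covers every $p\ge 1$.

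Two corrections to the plan.

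\textbf{The slice formula.} Your formula for $c\in[a,b]$ is wrong. The CDF gap between $\delta_c$ and $\tfrac12\delta_a+\tfrac12\delta_b$ is $\tfrac12$ on $[a,b)$ and $0$ elsewhere, so $\ell_2^2=\tfrac14(b-a)$, independent of $c$. By the energy form, the population slice loss is
\[
\mathcal L_\theta(\phi)=\tfrac12|1-\phi|\,|\theta_1|+\tfrac12|\theta_2-\phi\theta_1|-\tfrac14|\theta_1-\theta_2| .
\]

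\textbf{The step you flagged as the main obstacle.} It is routine and non-degenerate at $\phi=0$.
\begin{itemize}
\item Use $\mathcal L_{-\theta}=\mathcal L_\theta$ to restrict to $\theta_1\ge 0$.
\item On $\{\theta_2<\phi\theta_1\}$ one gets $\mathcal L_\theta=\tfrac14(\theta_1-\theta_2)\le\tfrac{\sqrt{2}}{4}$.
\item On $\{\theta_2\ge\phi\theta_1\}$ the two linear pieces are maximized on the kink $\theta_1=\theta_2$.
\end{itemize}
Hence $\sup_\theta\mathcal L_\theta(\phi)=\tfrac{1-\phi}{\sqrt{2}}$ for $|\phi|<\tfrac12$, attained at $\theta=\pm(1,1)/\sqrt{2}$. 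The population derivative at $\phi=0$ is therefore $-\tfrac{1}{\sqrt{2}}$, against $-\tfrac12$ for the expected sample gradient. This completes the counterexample.

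Do not rely on the distinct-minimizers fallback as you stated it. The population objective equals $\tfrac{\sqrt{2}}{4}$ on all of $[\tfrac12,1]$, so $\phi=1$ minimizes both objectives. The argmin sets differ, but they overlap at the very point you named.
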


\begin{proof}
Fix $\phi$ and write $\widehat{\mathcal L}_\theta:=\widehat{\mathcal L}_\theta(\nu_\phi)$.
For each realization of $X_{1:m}\sim\mu$, the collection $\{\widehat{\mathcal L}_\theta\}_{\theta\in\Theta}$ is a family
of real numbers indexed by $\Theta$ (for max-slicing, $\Theta=S^{d-1}$). Moreover, for any $f,g:\Theta\to\mathbb R$
and any $\lambda\in[0,1]$,
\begin{align*}
\sup_{\theta\in\Theta}\big(\lambda f(\theta)+(1-\lambda)g(\theta)\big)
&\le \sup_{\theta\in\Theta}\big(\lambda f(\theta)\big) + \sup_{\theta\in\Theta}\big((1-\lambda)g(\theta)\big)\\
&= \lambda\sup_{\theta\in\Theta} f(\theta) + (1-\lambda)\sup_{\theta\in\Theta} g(\theta),
\end{align*}
so $f\mapsto \sup_{\theta\in\Theta} f(\theta)$ is convex. Jensen's inequality then gives
\[
\mathbb E\Big[\sup_{\theta\in\Theta}\widehat{\mathcal L}_\theta\Big]
\;\ge\;
\sup_{\theta\in\Theta}\mathbb E\big[\widehat{\mathcal L}_\theta\big],
\]
which yields \eqref{eq:maxslice_selection_bias} (with $\Theta=S^{d-1}$). When the maximizer
$\hat\theta(X_{1:m})\in\arg\max_{\theta\in\Theta}\widehat{\mathcal L}_\theta$ depends \emph{nontrivially} on the same
sample $X_{1:m}$, the inequality is typically strict: the max step induces a selection bias by favoring indices
$\theta$ for which the realized empirical loss is unusually large.

The same selection bias can propagate to gradients. Define
$\widehat{\mathcal L}_{\max}(\nu_\phi):=\sup_{\theta\in\Theta}\widehat{\mathcal L}_\theta(\nu_\phi)$ and assume
(for simplicity) that the maximizer is unique for the realized sample, so that
\[
\widehat{\mathcal L}_{\max}(\nu_\phi)=\widehat{\mathcal L}_{\hat\theta(X_{1:m})}(\nu_\phi)
\qquad\text{and}\qquad
\nabla_\phi\,\widehat{\mathcal L}_{\max}(\nu_\phi)
=
\nabla_\phi\,\widehat{\mathcal L}_{\hat\theta(X_{1:m})}(\nu_\phi).
\]
Even if for every fixed $\theta$ we have direction-wise \Ucond,
\[
\mathbb E\!\left[\nabla_\phi\,\widehat{\mathcal L}_\theta(\nu_\phi)\right]
=
\nabla_\phi\,\Delta^{p}\!\big((P_\theta)_\#\mu,\,(P_\theta)_\#\nu_\phi\big),
\]
the random choice $\hat\theta(X_{1:m})$ couples the selected index to the same sample used to evaluate the loss, so
taking expectation of the selected gradient generally does not recover the gradient of the population max objective.

\medskip
\noindent\textbf{A concrete strictness example (two indices).}\\
Fix two indices $\theta_1,\theta_2\in\Theta$. Let $Z_1,Z_2$ be independent real random variables with
$\mathbb P(Z_i=1)=\mathbb P(Z_i=-1)=\tfrac12$, and define for $\phi\in\mathbb R$
\[
\widehat{\mathcal L}_{\theta_i}(\nu_\phi):=\phi\,Z_i,
\qquad i=1,2.
\]
Then $\nabla_\phi\,\widehat{\mathcal L}_{\theta_i}(\nu_\phi)=Z_i$ and $\mathbb E[Z_i]=0$, so each fixed-index expected
gradient vanishes. For any $\phi>0$,
\[
\widehat{\mathcal L}_{\max}(\nu_\phi)=\max\{\phi Z_1,\phi Z_2\}=\phi\,\max\{Z_1,Z_2\},
\]
hence $\nabla_\phi\,\widehat{\mathcal L}_{\max}(\nu_\phi)=\max\{Z_1,Z_2\}$ and
\[
\mathbb E\!\left[\nabla_\phi\,\widehat{\mathcal L}_{\max}(\nu_\phi)\right]
=
\mathbb E[\max\{Z_1,Z_2\}]
=
\frac12\neq 0.
\]
On the other hand, $\sup_{i\in\{1,2\}}\mathbb E[\widehat{\mathcal L}_{\theta_i}(\nu_\phi)]=0$, so the gradient of the
population restricted max objective is $0$. This shows explicitly that direction-wise unbiasedness does not lift
through the max over $\theta$.
\end{proof}

\newpage
\section{Sample complexity}\label{sec:sample_complexity}
\subsection{Uniform slicing}

\begin{theorem}[Sample complexity of sliced divergences]\label{prop:sliced-sample-complexity}
This is a rewrite of Theorem 5 from \citet{nadjahi2020statistical}.  

Fix $p\in[1,\infty)$. Let $\Delta$ be a divergence on $\mathcal P(\mathbb R)$ and assume there exists a function $\alpha(p,n)\ge 0$ such that for every $\mu\in\mathcal P(\mathbb R)$ with empirical $\hat\mu_n$,
\[
\mathbb{E}\big[\Delta(\hat\mu_n,\mu)^p\big]\;\le\;\alpha(p,n).
\]
For $\mu,\nu\in\mathcal P(\mathbb R^d)$, define
\[
\mathbf{S}\Delta_p(\mu,\nu)\;:=\;\Bigg(\int_{S^{d-1}}\Delta^p\!\big((P_\theta)_{\#}\mu,(P_\theta)_{\#}\nu\big)\,d\sigma(\theta)\Bigg)^{\!1/p},
\]
where $P_\theta(x)=\langle \theta,x\rangle$ and $\sigma$ is the uniform probability measure on $S^{d-1}$. Then:
\begin{enumerate}\itemsep0.3em
\item[\textnormal{(i)}] For any $\mu\in\mathcal P(\mathbb R^d)$ with empirical $\hat\mu_n$,
\[
\mathbb{E}\,\big|\mathbf{S}\Delta_p^p(\hat\mu_n,\mu)\big|\;\le\;\alpha(p,n).
\]
\item[\textnormal{(ii)}] If $\Delta$ verifies nonnegativity, symmetry, and the triangle inequality on $\mathcal P(\mathbb R)$ (hence $\mathbf{S}\Delta_p$ verifies them on $\mathcal P(\mathbb R^d)$ by Proposition~1), then for any $\mu,\nu\in\mathcal P(\mathbb R^d)$ with empirical measures $\hat\mu_n,\hat\nu_n$,
\[
\mathbb{E}\,\big|\,\mathbf{S}\Delta_p(\mu,\nu)-\mathbf{S}\Delta_p(\hat\mu_n,\hat\nu_n)\,\big|
\;\le\;2\,\alpha(p,n)^{1/p}.
\]
\end{enumerate}
\end{theorem}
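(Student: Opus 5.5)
The plan is to reduce both claims to the one-dimensional hypothesis slice by slice, and then integrate over directions.

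For (i), fix $\theta\in S^{d-1}$ and set $U_i:=\langle\theta,X_i\rangle$, where $X_{1:n}$ are the i.i.d.\ samples defining $\hat\mu_n$. As in the proof of Lemma~\ref{lem:slicing_preserves_U}, the $U_i$ are i.i.d.\ with law $(P_\theta)_\#\mu$, and
\[
(P_\theta)_\#\hat\mu_n=\tfrac1n\sum_i\delta_{U_i}
\]
is exactly the empirical measure of $(P_\theta)_\#\mu$. The one-dimensional hypothesis then gives, for every $\theta$,
\[
\mathbb E\big[\Delta^p((P_\theta)_\#\hat\mu_n,(P_\theta)_\#\mu)\big]\le\alpha(p,n).
\]
The integrand is nonnegative, so $\big|\mathbf S\Delta_p^p\big|=\mathbf S\Delta_p^p$. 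Tonelli's theorem then lets me swap $\mathbb E$ with $\int d\sigma(\theta)$, which yields
\[
\mathbb E\,\mathbf S\Delta_p^p(\hat\mu_n,\mu)=\int\mathbb E\big[\Delta^p(\cdot)\big]\,d\sigma\le\alpha(p,n).
\]
The only care needed is joint measurability of $(\theta,\omega)\mapsto\Delta^p((P_\theta)_\#\hat\mu_n,(P_\theta)_\#\mu)$. I will assume it, as the paper does for the Fubini step elsewhere.

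For (ii), I use Lemma~\ref{prop:basic-metric}: the pseudo-metric axioms of $\Delta$ transfer to $\mathbf S\Delta_p$, with the triangle inequality coming from Minkowski. Applying the triangle inequality twice gives the reverse-triangle bound
\[
|\mathbf S\Delta_p(\mu,\nu)-\mathbf S\Delta_p(\hat\mu_n,\hat\nu_n)|\le\mathbf S\Delta_p(\mu,\hat\mu_n)+\mathbf S\Delta_p(\nu,\hat\nu_n).
\]
Symmetry is used to orient the terms. Taking expectations, Jensen's inequality for the concave map $x\mapsto x^{1/p}$ gives
\[
\mathbb E\,\mathbf S\Delta_p(\hat\mu_n,\mu)\le\big(\mathbb E\,\mathbf S\Delta_p^p(\hat\mu_n,\mu)\big)^{1/p}\le\alpha(p,n)^{1/p}
\]
by (i), and likewise for $\nu$. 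Summing the two gives $2\alpha(p,n)^{1/p}$.

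I do not expect a substantive obstacle. The points needing attention are the identification of the projected empirical measure with the empirical measure of the projected law (which is what lets the one-dimensional hypothesis apply uniformly in $\theta$), the measurability needed for Tonelli, and the possibility that $\mathbf S\Delta_p$ takes the value $\infty$. In that last case the bound is either trivial or the finiteness of $\alpha$ forces finiteness almost surely.
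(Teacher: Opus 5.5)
Your proposal is correct and follows the paper's proof essentially step for step. Part (i) uses Tonelli together with the slice-wise one-dimensional bound, and part (ii) uses the triangle inequality for $\mathbf S\Delta_p$, then Jensen for $x\mapsto x^{1/p}$, then (i). You also state explicitly that $(P_\theta)_\#\hat\mu_n$ is the empirical measure of $(P_\theta)_\#\mu$; the paper uses this fact only implicitly, so this adds care without changing the argument.
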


\begin{proof}
\noindent\textbf{(i) One-sample bound for $\mathbf{S}\Delta_p^p$.}
\begin{align*}
\mathbb{E}\,\big|\mathbf{S}\Delta_p^p(\hat\mu_n,\mu)\big|
&= \mathbb{E}\,\Bigg|\int_{S^{d-1}} \Delta^p\!\big((P_\theta)_{\#}\hat\mu_n,(P_\theta)_{\#}\mu\big)\,d\sigma(\theta)\Bigg| \\
&\le \mathbb{E}\int_{S^{d-1}} \big|\Delta^p\!\big((P_\theta)_{\#}\hat\mu_n,(P_\theta)_{\#}\mu\big)\big|\,d\sigma(\theta)
\quad\text{(triangle inequality for the integral)} \\
&= \int_{S^{d-1}} \mathbb{E}\,\big|\Delta^p\!\big((P_\theta)_{\#}\hat\mu_n,(P_\theta)_{\#}\mu\big)\big|\,d\sigma(\theta)
\quad\text{(Tonelli)} \\
&= \int_{S^{d-1}} \mathbb{E}\,\Delta^p\!\big((P_\theta)_{\#}\hat\mu_n,(P_\theta)_{\#}\mu\big)\,d\sigma(\theta)
\quad\text{(non-negativity)} \\
&\le \int_{S^{d-1}} \alpha(p,n)\,d\sigma(\theta)\;=\;\alpha(p,n).
\end{align*}

\medskip
\noindent\textbf{(ii) Two-sample bound for $\mathbf{S}\Delta_p$.}
By Proposition~\ref{prop:basic-metric} (triangle–inequality item), the triangle inequality for $\Delta$ on $\mathcal P(\mathbb R)$ implies that $\mathbf{S}\Delta_p$ satisfies the triangle inequality on $\mathcal P(\mathbb R^d)$. Hence
\begin{align*}
\big|\mathbf{S}\Delta_p(\mu,\nu)-\mathbf{S}\Delta_p(\hat\mu_n,\hat\nu_n)\big|
&\le \big|\mathbf{S}\Delta_p(\hat\mu_n,\mu)\big|+\big|\mathbf{S}\Delta_p(\hat\nu_n,\nu)\big|
&&\text{(triangle inequality)} \\
&= \mathbf{S}\Delta_p(\hat\mu_n,\mu)+\mathbf{S}\Delta_p(\hat\nu_n,\nu)
&&\text{(non-negativity).}
\end{align*}
Taking expectations with respect to the empirical draws $(\hat\mu_n,\hat\nu_n)$,
\begin{equation*}
\mathbb{E}\,\big|\mathbf{S}\Delta_p(\mu,\nu)-\mathbf{S}\Delta_p(\hat\mu_n,\hat\nu_n)\big|
\;\le\; \mathbb{E}\,\big|\mathbf{S}\Delta_p(\hat\mu_n,\mu)\big|
     \;+\;\mathbb{E}\,\big|\mathbf{S}\Delta_p(\hat\nu_n,\nu)\big|.
\end{equation*}
Since $x\mapsto x^{1/p}$ is concave for $p\ge 1$, Jensen’s inequality gives
\begin{align*}
\mathbb{E}\,\big|\mathbf{S}\Delta_p(\hat\mu_n,\mu)\big|
&\;\le\;\big\{\mathbb{E}\,\big|\mathbf{S}\Delta_p(\hat\mu_n,\mu)\big|^p\big\}^{1/p}
= \big\{\mathbb{E}\,\mathbf{S}\Delta_p^p(\hat\mu_n,\mu)\big\}^{1/p}, \\
\mathbb{E}\,\big|\mathbf{S}\Delta_p(\hat\nu_n,\nu)\big|
&\;\le\;\big\{\mathbb{E}\,\big|\mathbf{S}\Delta_p(\hat\nu_n,\nu)\big|^p\big\}^{1/p}
= \big\{\mathbb{E}\,\mathbf{S}\Delta_p^p(\hat\nu_n,\nu)\big\}^{1/p}.
\end{align*}

Applying the bound from part (i) to both terms,
\begin{equation*}
\mathbb{E}\,\big|\mathbf{S}\Delta_p(\mu,\nu)-\mathbf{S}\Delta_p(\hat\mu_n,\hat\nu_n)\big|
\;\le\; \alpha(p,n)^{1/p}+\alpha(p,n)^{1/p}
\;=\;2\,\alpha(p,n)^{1/p}.
\end{equation*}

\end{proof}

\begin{lemma}[Mean $L^2(F)$ CDF discrepancy for the empirical CDF]\label{lem:cramer_L2F_rate}
Let $X_1,\dots,X_n$ be i.i.d.\ with cumulative distribution function $F$, and let
$F_n(t):=\frac{1}{n}\sum_{i=1}^n \mathbf 1\{X_i\le t\}$ be the empirical CDF.
Then
\[
\boxed{\;\mathbb E\,\|F_n-F\|_{L^2(F)} \;\le\; \frac{1}{2\sqrt n}\;=\;\mathcal O(n^{-1/2}).\;}
\]
\end{lemma}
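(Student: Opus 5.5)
We bound the first moment by the second moment and then compute the second moment exactly. By Jensen's inequality (concavity of the square root),
\[
\mathbb E\,\|F_n-F\|_{L^2(F)}\;\le\;\Big(\mathbb E\,\|F_n-F\|_{L^2(F)}^2\Big)^{1/2}
=\Big(\mathbb E\int_{\mathbb R}\big(F_n(t)-F(t)\big)^2\,dF(t)\Big)^{1/2}.
\]
The integrand is nonnegative and jointly measurable in $(t,X_{1:n})$. Tonelli's theorem therefore lets us exchange expectation and the $dF$-integral, which gives $\int_{\mathbb R}\mathbb E\big[(F_n(t)-F(t))^2\big]\,dF(t)$.

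For each fixed $t$, $nF_n(t)$ is a sum of $n$ i.i.d.\ Bernoulli variables with parameter $F(t)$. Hence $\mathbb E\,F_n(t)=F(t)$ and
\[
\mathbb E\big[(F_n(t)-F(t))^2\big]=\frac{F(t)\big(1-F(t)\big)}{n}\;\le\;\frac{1}{4n},
\]
using $u(1-u)\le 1/4$ on $[0,1]$. Because $F$ is a probability measure, integrating against $dF$ gives at most $1/(4n)$. Taking the square root yields the claimed $1/(2\sqrt n)$.

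The only step needing care is the variance identity and the Tonelli exchange. Both are routine, since $(F_n(t)-F(t))^2\in[0,1]$ and $dF$ has total mass one, so no moment assumptions on $X$ are needed. Unlike the Lebesgue-measure $\ell_2$ norm, no tail integrability is required.

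One could sharpen the constant via $\int F(1-F)\,dF\le 1/6$ for continuous $F$, but this is unnecessary for the stated bound.
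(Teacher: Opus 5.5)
Your proposal is correct and follows essentially the same route as the paper's proof. Both use Tonelli to swap the expectation and the $dF$-integral, the pointwise Bernoulli variance $F(t)(1-F(t))/n\le 1/(4n)$, the unit total mass of $dF$, and Jensen for the square root.
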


\begin{proof}
By definition,
\[
\|F_n-F\|_{L^2(F)}^2 \;=\; \int_{\mathbb R} (F_n(t)-F(t))^2\,dF(t),
\]
and therefore
\[
\mathbb E\|F_n-F\|_{L^2(F)}^2
= \mathbb E\left[\int_{\mathbb R} (F_n(t)-F(t))^2\,dF(t)\right].
\]
Since the integrand is nonnegative, Tonelli's theorem yields
\[
\mathbb E\left[\int_{\mathbb R} (F_n(t)-F(t))^2\,dF(t)\right]
= \int_{\mathbb R} \mathbb E\big[(F_n(t)-F(t))^2\big]\,dF(t),
\]
so that
\[
\mathbb E\|F_n-F\|_{L^2(F)}^2
= \int_{\mathbb R} \mathbb E\big[(F_n(t)-F(t))^2\big]\,dF(t).
\]

Fix $t\in\mathbb R$ and set $Y_i(t):=\mathbf 1\{X_i\le t\}$, so that
$F_n(t)=\frac{1}{n}\sum_{i=1}^n Y_i(t)$.
Moreover,
\[
\mathbb P\big(Y_i(t)=1\big)=\mathbb P(X_i\le t)=F(t),
\]
so $Y_i(t)\sim \mathrm{Bernoulli}(F(t))$.
Hence
\[
\mathbb E[Y_i(t)]=F(t),
\qquad
\mathrm{Var}(Y_i(t))=F(t)\bigl(1-F(t)\bigr).
\]
By independence,
\[
\mathrm{Var}(F_n(t))
=\mathrm{Var}\!\left(\frac{1}{n}\sum_{i=1}^n Y_i(t)\right)
=\frac{1}{n^2}\sum_{i=1}^n \mathrm{Var}(Y_i(t))
=\frac{F(t)(1-F(t))}{n}.
\]
Moreover $\mathbb E[F_n(t)]=F(t)$, so $F_n(t)$ is unbiased and therefore
\[
\mathbb E\big[(F_n(t)-F(t))^2\big]=\mathrm{Var}(F_n(t))=\frac{F(t)(1-F(t))}{n}.
\]
Substituting back gives the identity
\[
\mathbb E\|F_n-F\|_{L^2(F)}^2
=\frac{1}{n}\int_{\mathbb R} F(t)(1-F(t))\,dF(t).
\]
Since $u(1-u)\le \tfrac14$ for all $u\in[0,1]$ and $F(t)\in[0,1]$, we obtain
\[
\mathbb E\|F_n-F\|_{L^2(F)}^2
\le \frac{1}{n}\int_{\mathbb R} \frac14\,dF(t)
= \frac{1}{4n}.
\]
Finally, by Jensen's inequality for the concave map $x\mapsto \sqrt{x}$,
\[
\mathbb E\|F_n-F\|_{L^2(F)}
=\mathbb E\sqrt{\|F_n-F\|_{L^2(F)}^2}
\le \sqrt{\mathbb E\|F_n-F\|_{L^2(F)}^2}
\le \sqrt{\frac{1}{4n}}
=\frac{1}{2\sqrt n}.
\]
\end{proof}

\subsection{Max slicing}

\begin{lemma}[Half--spaces and CDFs of projections]\label{lem:halfspace-cdf}
As noted in the proof of Theorem~4 of \cite{nguyen2021distributional},
the CDF of a projection can be written as the probability of a half--space.

Let $P\in\mathcal P(\mathbb R^d)$ and $X_1,\dots,X_n\overset{iid}{\sim}P$, with empirical measure
$P_n=\tfrac1n\sum_{i=1}^n\delta_{X_i}$.
For $\theta\in\mathbb S^{d-1}$ and $t\in\mathbb R$, define the half--space
\[
H_{\theta,t} \;:=\; \{x\in\mathbb R^d : \langle \theta,x\rangle \le t\}.
\]
We also write $P_\theta(x)=\langle \theta,x\rangle$ for the one--dimensional projection map.
Then, for all $t\in\mathbb R$, the CDF of the projection $(P_\theta)_\#P$ is
\[
F_\theta(t) \;=\; (P_\theta)_\#P((-\infty,t]) \;=\; P(H_{\theta,t}),
\]
while the empirical CDF of the projection $(P_\theta)_\#P_n$ is
\[
F_{n,\theta}(t) \;=\; (P_\theta)_\#P_n((-\infty,t]) 
\;=\; P_n(H_{\theta,t})
\;=\; \frac1n\sum_{i=1}^n \mathbf 1\{\langle\theta,X_i\rangle\le t\}.
\]
\end{lemma}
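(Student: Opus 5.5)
This is essentially an unpacking of definitions; no earlier result is needed beyond the pushforward identity of Lemma~\ref{lem:law-pushforward}.

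\textbf{Step 1: the population CDF.} Fix $\theta\in\mathbb S^{d-1}$ and $t\in\mathbb R$. By definition of the pushforward,
\[
(P_\theta)_\#P\big((-\infty,t]\big)=P\big(P_\theta^{-1}((-\infty,t])\big).
\]
I will then note that the preimage is
\[
P_\theta^{-1}\big((-\infty,t]\big)=\{x:\langle\theta,x\rangle\le t\}=H_{\theta,t}.
\]
This set is a closed half-space, hence Borel, so the expression $P(H_{\theta,t})$ is well defined. Together these give $F_\theta(t)=P(H_{\theta,t})$.

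\textbf{Step 2: the empirical CDF.} I will apply the same identity with $P$ replaced by the empirical measure $P_n$, which gives $F_{n,\theta}(t)=P_n(H_{\theta,t})$.

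\textbf{Step 3: the explicit sum.} To obtain the final expression, I will expand $P_n=\tfrac1n\sum_i\delta_{X_i}$ and use $\delta_{X_i}(H_{\theta,t})=\mathbf 1\{X_i\in H_{\theta,t}\}=\mathbf 1\{\langle\theta,X_i\rangle\le t\}$. Equivalently, $(P_\theta)_\#P_n=\tfrac1n\sum_i\delta_{\langle\theta,X_i\rangle}$ by linearity of the pushforward, which is exactly the empirical CDF of the projected samples. Either route gives $F_{n,\theta}(t)=\tfrac1n\sum_{i=1}^n\mathbf 1\{\langle\theta,X_i\rangle\le t\}$.

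There is no real obstacle here; the only point requiring care is measurability of the half-space, which holds because $P_\theta$ is continuous.
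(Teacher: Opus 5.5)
Your proposal is correct and follows the paper's proof essentially step for step: use the pushforward definition with $A=(-\infty,t]$ to identify the preimage as $H_{\theta,t}$, repeat the argument with $P_n$ in place of $P$, and expand the Dirac sum to get the average of indicators. Your remark that $H_{\theta,t}$ is Borel because $P_\theta$ is continuous is a harmless addition that the paper leaves implicit.
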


\begin{proof}
By definition of the pushforward, for any Borel $A\subseteq\mathbb R$,
\[
(P_\theta)_\#P(A) \;=\; P\big(\{x\in\mathbb R^d : P_\theta(x)\in A\}\big).
\]

Taking $A=(-\infty,t]$ yields
\[
F_\theta(t) 
= (P_\theta)_\#P((-\infty,t]) 
= P\big(\{x:\langle\theta,x\rangle\le t\}\big) 
= P(H_{\theta,t}).
\]

The same argument with $P$ replaced by $P_n$ gives
\[
F_{n,\theta}(t) 
= (P_\theta)_\#P_n((-\infty,t]) 
= P_n(H_{\theta,t}).
\]

Finally, since $P_n$ is the empirical measure,
\[
P_n(H_{\theta,t})
= \frac1n\sum_{i=1}^n \mathbf 1\{\langle\theta,X_i\rangle\le t\}.
\]
\end{proof}

\begin{lemma}[VC inequality for half–spaces in $\mathbb R^d$]\label{lem:VC-halfspaces}
Let $P\in\mathcal P(\mathbb R^d)$, let $X_1,\dots,X_n\overset{iid}{\sim}P$ with empirical measure
$P_n=\tfrac1n\sum_{i=1}^n\delta_{X_i}$, and let
\[
\mathcal H\;=\;\bigl\{H_{\theta,t}=\{x\in\mathbb R^d:\langle\theta,x\rangle\le t\}\;:\;
\theta\in\mathbb S^{d-1},\,t\in\mathbb R\bigr\}.
\]
Define
\[
Z\;:=\;\sup_{H\in\mathcal H}\bigl|P_n(H)-P(H)\bigr|
\;=\;\sup_{\theta\in\mathbb S^{d-1},\,t\in\mathbb R}\bigl|P_n(H_{\theta,t})-P(H_{\theta,t})\bigr|.
\]
Then, for any $\delta\in(0,1)$,
\[
\Pr\!\Big(Z\;\le\;c_{n,\delta}\Big)\;\ge\;1-\delta,
\qquad
c_{n,\delta}\;:=\;\sqrt{\frac{32}{n}\Big((d{+}1)\log(n{+}1)+\log\frac{8}{\delta}\Big)}.
\]
This is the explicit VC bound used in the proof of Theorem~4 of \cite{nguyen2021distributional}.
\end{lemma}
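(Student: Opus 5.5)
The plan is to read Lemma~\ref{lem:VC-halfspaces} as a direct instance of the classical Vapnik--Chervonenkis inequality, in the form given by Devroye, Gy\"orfi and Lugosi. For any class $\mathcal A$ of measurable sets, that inequality states that for all $\varepsilon>0$,
\[
\Pr\Big(\sup_{A\in\mathcal A}|P_n(A)-P(A)|>\varepsilon\Big)\;\le\;8\,S_{\mathcal A}(n)\,e^{-n\varepsilon^2/32},
\]
where $S_{\mathcal A}(n)$ is the $n$-th shatter coefficient, namely the maximal number of distinct traces $\{A\cap\{x_1,\dots,x_n\}:A\in\mathcal A\}$ over point sets of size $n$. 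I will invoke this inequality as a black box for $\mathcal A=\mathcal H$. The only problem-specific inputs are then a bound on $S_{\mathcal H}(n)$ and a final inversion of the tail bound.

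\textbf{Step 1: VC dimension of $\mathcal H$.} I will bound the VC dimension of closed half-spaces $\{x:\langle\theta,x\rangle\le t\}$ in $\mathbb R^d$ by $d+1$. The argument is Radon's theorem: any $d+2$ points can be split into two parts $I,J$ whose convex hulls intersect. Suppose some half-space $H$ picked out exactly the points of $I$. Then $\mathrm{conv}(I)\subseteq H$, because $H$ is convex. Also $\mathrm{conv}(J)\subseteq H^c$, because the complement of a closed half-space is an open half-space, which is also convex. These two inclusions contradict $\mathrm{conv}(I)\cap\mathrm{conv}(J)\neq\emptyset$. So no set of $d+2$ points is shattered.

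\textbf{Step 2: shatter coefficient.} Sauer's lemma then gives
\[
S_{\mathcal H}(n)\le\sum_{i=0}^{d+1}\binom{n}{i}\le (n+1)^{d+1}.
\]
The last inequality holds because $\binom{n}{i}\le n^i$ and the sum $\sum_{i\le d+1} n^i$ is dominated by the binomial expansion of $(n+1)^{d+1}$.

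\textbf{Step 3: inverting the tail bound.} Substituting Step 2 into the VC inequality gives
\[
\Pr(Z>\varepsilon)\le 8(n+1)^{d+1}e^{-n\varepsilon^2/32}.
\]
I will set the right-hand side equal to $\delta$ and solve for $\varepsilon$. This gives $n\varepsilon^2/32=(d+1)\log(n+1)+\log(8/\delta)$, that is, $\varepsilon=c_{n,\delta}$. Hence $\Pr(Z\le c_{n,\delta})\ge 1-\delta$, which is the claim.

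\textbf{Main obstacle.} I expect the delicate points to be the combinatorial Step 1 and a measurability issue, not the algebra. The measurability issue is that $Z$ is a supremum over uncountably many sets. I plan to handle it by reducing to a countable class. For fixed $\theta$, the maps $t\mapsto P_n(H_{\theta,t})$ and $t\mapsto P(H_{\theta,t})$ are right-continuous, so restricting to rational $t$ loses nothing. Restricting to a countable dense set of $\theta$ requires care, because points lying exactly on boundary hyperplanes can make $P(H_{\theta,t})$ discontinuous in $\theta$. If this step becomes technical, I will adopt the standard convention of interpreting $Z$ via outer probability, as is done in the cited proof of Theorem~4 of \cite{nguyen2021distributional}.
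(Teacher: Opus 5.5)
Your argument is correct and is exactly the standard derivation behind the stated constants: the Devroye--Gy\"orfi--Lugosi VC inequality with $S_{\mathcal H}(n)\le (n+1)^{d+1}$ (VC dimension $\le d+1$ via Radon, then Sauer), followed by solving $8(n+1)^{d+1}e^{-n\varepsilon^2/32}=\delta$ for $\varepsilon$. The paper does not re-prove this bound; it simply imports it from the cited work. On measurability you do not need outer probability: take rational $t_k\downarrow t$ and $\theta_k$ from a countable dense subset of $\mathbb S^{d-1}$ with $\|\theta_k-\theta\|\le (t_k-t)^2$. Then $\mathbf 1\{\langle\theta_k,x\rangle\le t_k\}\to\mathbf 1\{\langle\theta,x\rangle\le t\}$ for every $x$, boundary points included, so $P(H_{\theta_k,t_k})\to P(H_{\theta,t})$ and $P_n(H_{\theta_k,t_k})\to P_n(H_{\theta,t})$. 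Hence $Z$ is a supremum over a countable subclass, to which the VC inequality applies directly.
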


\begin{theorem}[Max--sliced bound from a 1D CDF control, in expectation]\label{thm:MS-Exp}


Let $P\in\mathcal P(\mathbb R^d)$ and $X_1,\dots,X_n\overset{iid}{\sim}P$ with empirical measure $P_n=\tfrac1n\sum_{i=1}^n\delta_{X_i}$.
Assume $\mathrm{diam}(\mathrm{supp}\,P)\le D$ (so for every $\theta$, the range of $x\mapsto\langle\theta,x\rangle$ over $\mathrm{supp}\,P$ has length $\le D$).
Let $\Delta$ be a divergence on $\mathcal P(\mathbb R)$ such that for any one–dimensional laws $\mu,\nu$ supported on an interval of length $\le D$ there exist
\[
\alpha\in(0,1],\qquad \beta\ge 0,\qquad L>0
\]
with the \emph{CDF–dominance} inequality
\[
\Delta(\mu,\nu)\;\le\; L\,D^{\beta}\,\|F_\mu-F_\nu\|_\infty^{\;\alpha}. \tag{A}
\]
Define
\[
\mathbf{MS}\Delta(\mu,\nu):=\sup_{\theta\in\mathbb S^{d-1}}
\Delta\!\big((P_\theta)_\#\mu,\,(P_\theta)_\#\nu\big),\qquad P_\theta(x)=\langle\theta,x\rangle.
\]
Then
\[
\boxed{\;\mathbb E\,\mathbf{MS}\Delta(P_n,P)
= \mathcal{O}\!\Big(D^\beta\,\big(\tfrac{d\log n}{n}\big)^{\alpha/2}\Big).\;}
\]
More precisely, there exists a constant $C_\Delta$ depending only on $L$ and $\alpha$ such that
\[
\mathbb E\,\mathbf{MS}\Delta(P_n,P)\;\le\;
L\,D^\beta\!\left(\sqrt{\tfrac{32(d{+}1)\log(n{+}1)}{n}}+4\sqrt{\tfrac{32\pi}{n}}\right)^{\alpha}
\;\le\; C_\Delta\,D^\beta\!\left(\sqrt{\tfrac{d\log(n{+}1)}{n}}\right)^{\!\alpha}.
\]

\begin{proof}
Let
\[
Z\;:=\;\sup_{\theta\in\mathbb S^{d-1},\,t\in\mathbb R}\big|F_{n,\theta}(t)-F_\theta(t)\big|,
\]
where Lemma~\ref{lem:halfspace-cdf} identifies $F_{n,\theta}(t)=P_n(H_{\theta,t})$ and $F_\theta(t)=P(H_{\theta,t})$.
By (A), for each $\theta$,
\[
\Delta\!\big((P_\theta)_\#P_n,(P_\theta)_\#P\big)\;\le\; L\,D^\beta\,\|F_{n,\theta}-F_\theta\|_\infty^\alpha,
\]
hence, after taking $\sup_\theta$,
\[
\mathbf{MS}\Delta(P_n,P)\;\le\;L\,D^\beta\,Z^\alpha.
\]
Taking expectations and using Jensen (concavity of $x\mapsto x^\alpha$ for $\alpha\in(0,1]$),
\[
\mathbb E\,\mathbf{MS}\Delta(P_n,P)\;\le\;L\,D^\beta\,\mathbb E[Z^\alpha]
\;\le\;L\,D^\beta\,(\mathbb E Z)^\alpha.
\]
By Lemma~\ref{lem:VC-halfspaces}, for any $\delta\in(0,1)$,
$\Pr(Z\le c_{n,\delta})\ge 1-\delta$ with $c_{n,\delta}$ as stated. Put $b_n:=\sqrt{32(d{+}1)\log(n{+}1)/n}$ and take
$\delta=8e^{-ns^2/32}$ so that $c_{n,\delta}\le b_n+s$ and $\Pr(Z>b_n+s)\le 8e^{-ns^2/32}$ for all $s\ge 0$.
Integrating the tail,
\[
\mathbb E Z\;=\;\int_0^\infty \Pr(Z>t)\,dt
\;\le\; b_n+\int_0^\infty 8e^{-ns^2/32}\,ds
\;=\; b_n+4\sqrt{\tfrac{32\pi}{n}}.
\]
Insert this into the previous display and absorb numerical constants into $C_\Delta$ to obtain the claim.
\end{proof}
\end{theorem}

\begin{corollary}[Max--sliced $\mathbf{W}_1$]\label{cor:MSW1}
If $\Delta=\mathbf{W}_1$ (one-dimensional Wasserstein–1), then
\[
\boxed{\;\mathbb E\,\mathbf{MSW}_1(P_n,P)
= \mathcal{O}\!\Big(D\,\sqrt{\tfrac{d\log n}{n}}\Big).\;}
\]
\begin{proof}
By Vallender’s identity \citep{vallender1974calculation}, for probability laws 
$\alpha,\beta$ on $\mathbb R$ with CDFs $F_\alpha,F_\beta$,  
\[
\mathbf{W}_1(\alpha,\beta)
\;=\;\int_{\mathbb R} \bigl|F_\alpha(x)-F_\beta(x)\bigr|\,dx.
\]
If the support of $\alpha$ and $\beta$ lies within an interval of length $D$, then  
\[
\int_{\mathbb R} \bigl|F_\alpha(x)-F_\beta(x)\bigr|\,dx
\;\le\; D\,\|F_\alpha-F_\beta\|_\infty.
\]
Hence  
\[
\mathbf{W}_1(\alpha,\beta)
\;\le\; D\,\|F_\alpha-F_\beta\|_\infty,
\]
which verifies condition (A) with $(\alpha,\beta,L)=(1,1,1)$.
Applying Theorem~\ref{thm:MS-Exp} concludes the proof.
\end{proof}

\end{corollary}

\begin{corollary}[Max--sliced $\mathbf{W}_p$ for $p>1$]\label{cor:MSWp}
Fix $p>1$ and $\Delta=\mathbf{W}_p$. Then
\[
\boxed{\;\mathbb E\,\mathbf{MSW}_p(P_n,P)
= \mathcal{O}\!\Big(D\,\big(\tfrac{d\log n}{n}\big)^{1/(2p)}\Big).\;}
\]
\begin{proof}
By the 1D quantile representation,
\[
\mathbf{W}_p^p(\alpha,\beta)
= \int_0^1 \bigl|F_\alpha^{-1}(u)-F_\beta^{-1}(u)\bigr|^p \, du.
\]
If $\alpha,\beta$ are supported on an interval of length $D$, then every quantile difference 
$F_\alpha^{-1}(u)-F_\beta^{-1}(u)$ lies in $[-D,D]$. Hence, for $x=F_\alpha^{-1}(u)-F_\beta^{-1}(u)$,
\[
|x|^p = |x|^{p-1}\,|x| \;\le\; D^{\,p-1}|x|.
\]
Applying this bound inside the integral gives
\[
\mathbf{W}_p^p(\alpha,\beta)
\;\le\; D^{\,p-1}\int_0^1 \bigl|F_\alpha^{-1}(u)-F_\beta^{-1}(u)\bigr|\,du.
\]
The integral on the right is exactly the 1D Wasserstein–1 distance,
\[
\int_0^1 \bigl|F_\alpha^{-1}(u)-F_\beta^{-1}(u)\bigr|\,du
\;=\;\mathbf{W}_1(\alpha,\beta).
\]
Hence
\[
\mathbf{W}_p^p(\alpha,\beta) \;\le\; D^{\,p-1}\,\mathbf{W}_1(\alpha,\beta).
\]
By Vallender’s identity \citep{vallender1974calculation} and the support bound of length $D$, we already established in Corollary~\ref{cor:MSW1} that
\[
\mathbf{W}_1(\alpha,\beta)
\;\le\; D\,\|F_\alpha-F_\beta\|_\infty.
\]
Combining the two inequalities yields
\[
\mathbf{W}_p^p(\alpha,\beta)
\;\le\; D^{\,p}\,\|F_\alpha-F_\beta\|_\infty.
\]
Taking the $p$-th root finally gives
\[
\mathbf{W}_p(\alpha,\beta)
\;\le\; D\,\|F_\alpha-F_\beta\|_\infty^{1/p}.
\]
Thus condition (A) holds with $(\alpha,\beta,L)=(1/p,1,1)$, and 
Theorem~\ref{thm:MS-Exp} applies.
\end{proof}
\end{corollary}

\begin{corollary}[Max--sliced Cramér]\label{cor:MSCramer}
Let $\Delta(\alpha,\beta)=\|F_\alpha-F_\beta\|_{L^2(\mathbb R)}$. Then
\[
\boxed{\;\mathbb E\,\mathbf{MSC}_2(\hat\mu_n,\mu)
= \mathcal{O}\!\Big(\sqrt{D}\,\sqrt{\tfrac{d\log n}{n}}\Big).\;}
\]
\begin{proof}
On an interval of length $D$, one has $\|\,\cdot\,\|_{L^2}\le D^{1/2}\|\,\cdot\,\|_\infty$, 
so (A) holds with $(\alpha,\beta,L)=(1,1/2,1)$. 
Applying Theorem~\ref{thm:MS-Exp} yields the result.
\end{proof}
\end{corollary}

\newpage
\section{Instantiations}\label{sec:instantiations}

This section collects concrete instantiations of our contraction results for Wasserstein, Cram\'er, and MMD divergences. For each choice, we state the relevant structural properties, contraction factors under shared scalar and general matrix discounting, and we record suitability for distributional RL training and sample complexity. We cover Wasserstein in Section~\ref{sec:instantiation_wasserstein}, Cram\'er in Section~\ref{sec:instantiation_cramer}, and MMD in Section~\ref{sec:instantiation_mmd}. We summarize all results in Table~\ref{tab:contraction-summary}.

\begin{table}[h!]
\centering
\renewcommand{\arraystretch}{1.2}
\begin{tabular}{l c c c c c c}
\hline
\textbf{Divergence} & \textbf{3 prop.} & \textbf{Contr. fac. $\gamma$} & \textbf{Contr. fac. $\bar L$} & \textbf{Sample complexity} & \textbf{Comp. cost} & \textbf{Grad. bias \Ucond} \\
\hline
$\mathbf{SW}_p$        & $\checkmark$ & $\gamma$       & /              & $\mathcal{O}\!\big(n^{-1/(2p)}\big)$ & $\mathcal{O}(L\,n\log n)$ & unavoidable \\
$\mathbf{MSW}_p$       & $\checkmark$ & $\gamma$       & $\bar L$       & $\mathcal{O}\!\Big(D\,\big(\tfrac{d\log n}{n}\big)^{1/(2p)}\Big)$ & $\mathcal{O}(D\,n\log n)$ & unavoidable \\
\hline
$\mathbf{SC}_2$        & $\checkmark$ & $\gamma^{1/2}$ & /              & $\mathcal{O}\!\big(n^{-1/2}\big)$ & $\mathcal{O}(L\,n\log n)$ & avoidable \\
$\mathbf{MSC}_2$       & $\checkmark$ & $\gamma^{1/2}$ & $\bar L^{1/2}$ & $\mathcal{O}\!\Big(\sqrt{D}\,\sqrt{\tfrac{d\log n}{n}}\Big)$ & $\mathcal{O}(D\,n\log n)$ & unavoidable \\
\hline
$\mathbf{SMMD}_{k_h}$  & $\checkmark$ & $\gamma^{1/2}$ & /              & $\mathcal{O}\!\big(n^{-1/2}\big)$ & $\mathcal{O}(L\,n^2)$ & avoidable \\
$\mathbf{MSMMD}_{k_h}$ & $\checkmark$ & $\gamma^{1/2}$ & $\bar L^{1/2}$ & $\times$ & $\mathcal{O}(D\,n^2)$ & unavoidable \\
\hline
$\mathbf{W}_p$         & $\checkmark$ & $\gamma$ & $\bar L$ & $\times$ & $\mathcal{O}(n^3\log n)$ & unavoidable \\
\hline
\end{tabular}
\caption{\textbf{Summary of contraction factors, sample complexity, computational cost, and gradient bias \Ucond.}
Summary of contraction factors and sample complexity results for sliced and max--sliced divergences under  matrix discounting $R+\Gamma Z(S',A')$. Uniform sliced contractions with $\Gamma=\gamma I$ follow from Theorem~\ref{thm:uniformsliced-sup-contraction-gammaI}. Max--sliced contractions with a general linear map $\Gamma$ follow from Theorem~\ref{thm:maxsliced-sup-contraction-mv-aniso}, where $\bar L$ denotes the corresponding operator-norm envelope. In the computational costs, $L$ denotes the number of Monte Carlo projection directions used for uniform slicing, and $D$ denotes the number of candidate directions evaluated for max slicing. Uniform slicing yields dimension--free rates by Theorem~\ref{prop:sliced-sample-complexity} together with the stated one--dimensional base bounds; max--sliced Wasserstein and Cram\'er rates follow from Theorem~\ref{thm:MS-Exp} and Corollaries~\ref{cor:MSW1}--\ref{cor:MSWp}, \ref{cor:MSCramer}. Exact OT in $\mathbb R^d$ can scale as $\mathcal{O}(n^3\log n)$ for standard solvers \citep{genevay2019sample}. For completeness, we defer sample complexity bounds for $p$--Wasserstein distances to \citep{fournier2015rate}. For $\mathbf{MSMMD}_{k_h}$, we do not provide a sharp sample complexity bound.}
\label{tab:contraction-summary}
\end{table}

\subsection{Wasserstein}\label{sec:instantiation_wasserstein}
Wasserstein is a metric on $\mathcal P_p(\mathbb R)$ for every $p\in[1,\infty]$ (Proposition~2 in \cite{givens1984class}).  
It satisfies \Tcond\ since it is translation invariant.  
It satisfies \Scond\ with $c(s)=s$ by linear push-forward contraction (Proposition~\ref{prop:wp-linear-pushforward-matrix}).  
It satisfies \Mpcond\ by mixture $p$-convexity (Proposition~\ref{prop:wp-mix-integral-conditional}).

\emph{Gradient bias.}
As discussed in Section~\ref{sec:mixture_bias}, Wasserstein discrepancies generally fail \Ucond\ in the one-sample TD regime
when a mixture target is instantiated from a single sampled $(S',A')$ and reward as in Eq.~\ref{eq:drl_argmin_mismatch}.
Therefore, gradient bias is unavoidable for one-sample TD training with Wasserstein objectives.

\emph{Contraction and sample complexity.}
Combining \Tcond, Proposition~\ref{prop:wp-linear-pushforward-matrix}, and Proposition~\ref{prop:wp-mix-integral-conditional} yields
\[
\mathbf{W}_p\!\bigl(T^\pi\eta_1,\,T^\pi\eta_2\bigr)
\;\le\;
\|\Gamma\|_{\op}\,\mathbf{W}_p\!\bigl(\eta_1,\,\eta_2\bigr),
\]
hence strict contraction whenever $\|\Gamma\|_{\op}<1$.
In particular, for $\Gamma=\gamma I_d$ this reduces to the factor $\gamma$.
For empirical Wasserstein in dimension $d$, the expected estimation error deteriorates with dimension \citep{fournier2015rate}, so we do not record a dimension-robust rate for $\mathbf{W}_p$ itself.

\emph{Sliced variants.}
By Theorem~\ref{thm:uniformsliced-sup-contraction-gammaI} with $\Delta=\mathbf{W}_p$, the uniform sliced Wasserstein update with
$\Gamma=\gamma I_d$ contracts with factor $c(\gamma)<1$,
\[
\mathbf{SW}_p\!\bigl(T^\pi\eta_1,\,T^\pi\eta_2\bigr)
\;\le\;
c(\gamma)\,\mathbf{SW}_p\!\bigl(\eta_1,\,\eta_2\bigr).
\]

By Theorem~\ref{thm:maxsliced-sup-contraction-mv-aniso} with $\Delta=\mathbf{W}_p$, the max-sliced Wasserstein update under a general
random linear map (with envelope $\bar L=\sup_{(s,a),C}\|A_{s,a}(C)\|_{\op}$) contracts with factor $c(\bar L)$, strictly so whenever
$\bar L<1$,
\[
\mathbf{MSW}_p\!\bigl(T^\pi\eta_1,\,T^\pi\eta_2\bigr)
\;\le\;
c(\bar L)\,\mathbf{MSW}_p\!\bigl(\eta_1,\,\eta_2\bigr).
\]

\emph{Sample complexity (uniform slicing).}  
Let $p\in[1,\infty)$ and assume $\mu\in\mathcal P_q(\mathbb R^d)$ with $q>2p$ (finite $q$-th moment).
Let $\hat\mu_n$ be the empirical measure from $n$ samples.
Carrying the same steps as in Corollary~2 of \citet{nadjahi2020statistical} but in the one-sample setting, and plugging the 1D base bound from Theorem~1 of \citet{fournier2015rate}, we obtain the dimension–free rate
\[
\mathbb E\!\left[\mathbf{SW}_p(\hat\mu_n,\mu)\right]\;=\;\mathcal O\!\left(n^{-1/(2p)}\right).
\]
Thus, uniform slicing avoids the curse of dimensionality.

\emph{Sample complexity (max-sliced).}
By Theorem~\ref{thm:MS-Exp} and Corollaries~\ref{cor:MSW1}--\ref{cor:MSWp}, if $\mathrm{diam}(\mathrm{supp}\,\mu)\le D$ then
\[
\mathbb E\,\mathbf{MSW}_1(\hat\mu_n,\mu)
=
\mathcal O\!\Big(D\,\sqrt{\tfrac{d\log n}{n}}\Big),
\qquad
\mathbb E\,\mathbf{MSW}_p(\hat\mu_n,\mu)
=
\mathcal O\!\Big(D\,\big(\tfrac{d\log n}{n}\big)^{\!1/(2p)}\Big)
\qquad (p>1).
\]

\subsection{Cramér}\label{sec:instantiation_cramer}

The Cramér distance enjoys all the structural assumptions we require.  
By Proposition~\ref{prop:lp-metric}, it is a metric.  
It satisfies \Tcond\ by Proposition~2 in \cite{bellemare2017cramer} and Proposition~3.2 in \cite{odin2020dynamic}, and \Scond\ with $c(s)=s^{1/2}$ via Proposition~\ref{prop:lp-scale-lipschitz}.  
It also satisfies \Mpcond\ (Proposition~\ref{prop:lp-mix-integral-corrected}).  

\emph{Gradient bias.}
As discussed in Section~\ref{sec:mixture_bias}, the \emph{squared} energy-distance form satisfies \Ucond, and in one dimension
Cram\'er--2 coincides with squared energy distance.
By Lemma~\ref{lem:slicing_preserves_U}, this unbiasedness lifts to uniform slicing for the powered objective, so the guarantee
applies to the \emph{squared} sliced Cram\'er objective $\mathbf{SC}_2^{\,2}$.

\emph{Contraction factors.}
By Theorem~\ref{thm:uniformsliced-sup-contraction-gammaI} with $\Delta=\mathbf{C}_2$ (so $c(s)=s^{1/2}$), the \emph{sliced Cramér} update with $\Gamma=\gamma I$ contracts with factor $\gamma^{1/2}$:
\[
\mathbf{SC}_2\!\bigl(T^\pi\eta_1,\,T^\pi\eta_2\bigr)\;\le\;\gamma^{1/2}\,\mathbf{SC}_2\!\bigl(\eta_1,\,\eta_2\bigr).
\]
By Theorem~\ref{thm:maxsliced-sup-contraction-mv-aniso}, the \emph{max–sliced Cramér} update with a general linear map $\Gamma$ contracts with factor
\(c(\bar L)=\bar L^{1/2}\), strictly so whenever $\bar L<1$:
\[
\mathbf{MSC}_2\!\bigl(T^\pi\eta_1,\,T^\pi\eta_2\bigr)\;\le\;\bar L^{1/2}\,\mathbf{MSC}_2\!\bigl(\eta_1,\,\eta_2\bigr).
\]

\emph{Sample complexity (uniform slicing).}
For the one--dimensional Cram\'er distance (the $L^2$--CDF discrepancy), Lemma~\ref{lem:cramer_L2F_rate} yields
\[
\mathbb{E}\,\|F_n-F\|_{L^2(F)} \;=\; \mathcal O\!\left(n^{-1/2}\right).
\]
Plugging this base rate into Theorem~\ref{prop:sliced-sample-complexity} yields the dimension--free bound
\[
\mathbb E\!\left[\mathbf{SC}_2(\hat\mu_n,\mu)\right]\;=\;\mathcal O\!\left(n^{-1/2}\right).
\]
Thus, uniform slicing avoids the curse of dimensionality.

\emph{Sample complexity (max–sliced).}
By Theorem~\ref{thm:MS-Exp} and Corollary~\ref{cor:MSCramer}, for $\mathrm{diam}(\mathrm{supp}\,\mu)\le D$,
\[
\mathbb E\!\left[\mathbf{MSC}_2(\hat\mu_n,\mu)\right]\;=\;\mathcal O\!\Big(\sqrt{D}\,\sqrt{\tfrac{d\log n}{n}}\Big).
\]

\subsection{MMD}\label{sec:instantiation_mmd}
The Maximum Mean Discrepancy (MMD) with a conditionally strictly positive definite kernel~\citep{JMLR:v13:gretton12a,sejdinovic2013equivalence} is a valid metric on probability laws.  
With the multiquadric (MQ) kernel $k_h(x,y)=-\sqrt{1+h^2\|x-y\|^2}$, it enjoys all the structural assumptions we require.  
By Proposition~\ref{prop:mmd_metric}, it is a metric.  
It satisfies \Tcond\ since MMD is translation invariant for all shift–invariant kernels.  
It satisfies \Scond\ with $c(s)=s^{1/2}$ for the MQ kernel (Proposition~\ref{lem:mq-scale-contraction}), reflecting its scale–sensitivity.  
Finally, it satisfies \Mpcond\ by mixture convexity of RKHS embeddings (Proposition~\ref{lem:mmd_rkhs_mixture_p_convexity}).  

\emph{Gradient bias.}
As discussed in Section~\ref{sec:mixture_bias}, squared kernel energy objectives of the form \eqref{eq:energy-form} satisfy \Ucond.
By Lemma~\ref{lem:slicing_preserves_U}, this unbiasedness lifts to uniform slicing for the powered objective $\mathbf{S}\mathbf{MMD}_{k_h}^{\,2}$, so the guarantee applies to the squared sliced MMD.

\emph{Contraction factors.}
By Theorem~\ref{thm:uniformsliced-sup-contraction-gammaI} with $\Delta=\mathbf{MMD}_{k_h}$ and the scale bound
\[
c(s)=s^{1/2},
\]
the \emph{sliced MMD} update with $\Gamma=\gamma I$ satisfies
\[
\mathbf{SMMD}_{k_h}\!\bigl(T^\pi\eta_1,\,T^\pi\eta_2\bigr)\;\le\;c(\gamma)\,\mathbf{SMMD}_{k_h}\!\bigl(\eta_1,\,\eta_2\bigr).
\]
In particular, for scalar discounts $\gamma\in(0,1)$ we have $c(\gamma)=\gamma^{1/2}$.

By Theorem~\ref{thm:maxsliced-sup-contraction-mv-aniso}, the \emph{max–sliced MMD} update with a general linear map $\Gamma$ satisfies
\[
\mathbf{MSMMD}_{k_h}\!\bigl(T^\pi\eta_1,\,T^\pi\eta_2\bigr)\;\le\;c(\bar L)\,\mathbf{MSMMD}_{k_h}\!\bigl(\eta_1,\,\eta_2\bigr),
\qquad c(\bar L)=\bar L^{1/2}.
\]
In particular, under $\bar L<1$ this is a strict contraction.

\emph{Sample complexity (uniform slicing).} In one dimension, the unbiased empirical MMD (equivalently, the energy distance)
is a U--statistic \citep{JMLR:v13:gretton12a,sejdinovic2013equivalence}, 
so classical U--statistic theory yields the standard rate
\[
\mathbb E\!\left[\mathbf{MMD}_{k_h}(\hat\mu_n,\mu)\right]\;=\;\mathcal O\!\left(n^{-1/2}\right).
\]
Plugging this into Theorem~\ref{prop:sliced-sample-complexity}\textnormal{(i)} yields the
dimension--free bound
\[
\mathbb E\!\left[\mathbf{SMMD}_{k_h}(\hat\mu_n,\mu)\right]\;=\;\mathcal O\!\left(n^{-1/2}\right).
\]
Thus, uniform slicing avoids the curse of dimensionality.

\emph{Sample complexity (max–sliced).}
We were not able to establish a sharp sample complexity bound for the max–sliced MMD.  
Deriving such a result remains an open problem for future work.

\section{Pseudo-codes} \label{sec:pseudo-codes}

\begin{algorithm}[H]
  \caption{Estimation of $\mathrm{MS}\Delta$ from empirical samples}
  \label{alg:msdelta-empirical}
  \begin{algorithmic}
    \STATE {\bfseries Input:} empirical samples $X=\{x_i\}_{i=1}^N \subset \mathbb R^d$, $Y=\{y_i\}_{i=1}^N \subset \mathbb R^d$
    \STATE {\bfseries Input:} base 1D divergence $\Delta$; number of gradient steps $T$; step size $\eta$

    \STATE {\bfseries Initialize random unit direction}
    \STATE $w \sim \mathcal N(0,I_d)$
    \STATE $\theta \gets w / \|w\|$

    \FOR{$t = 1$ {\bfseries to} $T$}
        \STATE {\bfseries Project samples}
        \FOR{$i = 1$ {\bfseries to} $N$}
            \STATE $u_i \gets \langle \theta, x_i \rangle$
            \STATE $v_i \gets \langle \theta, y_i \rangle$
        \ENDFOR

        \STATE {\bfseries Objective along direction}
        \STATE $J(\theta) \gets \Delta\big(\{u_i\}_{i=1}^N,\; \{v_i\}_{i=1}^N\big)$

        \STATE {\bfseries Gradient ascent on direction}
        \STATE $g \gets \nabla_{\theta} J(\theta)$
        \STATE $w \gets w + \eta\, g$

        \STATE {\bfseries Re-normalize onto the unit sphere}
        \STATE $\theta \gets w / \|w\|$
    \ENDFOR

    \STATE {\bfseries Freeze optimized direction}
    \STATE $\bar{\theta} \gets \mathrm{stop\_grad}(\theta)$

    \STATE {\bfseries Output}
    \STATE $\widehat{\mathrm{MS}\Delta}(X,Y)
      \;\gets\;
      \Delta\big(
        \{\langle \bar{\theta}, x_i\rangle\}_{i=1}^N,\;
        \{\langle \bar{\theta}, y_i\rangle\}_{i=1}^N
      \big)$
  \end{algorithmic}
\end{algorithm}

\newpage

\section{Pixel-based control: maze and Atari}\label{sec:pixel_based}
We follow the maze environment and Atari subset of \citet{zhang2021distributional}. The maze environment was
reimplemented purely in JAX \citep{jax2018github} to improve parallelism. This section collects the design
choices shared by both pixel-based setups. Details specific to each are provided in Section \ref{sec:maze} and \ref{sec:atari}.

\begin{table}[b]
\centering
\begin{tabular}{lcc}
\toprule
Environment settings & Values for Maze & Values for Atari \\
\midrule
Stack size & 1 & 4 \\
Frame skip & 1 & 4 \\
One-frame observation shape & $(84, 84, 3)$ & $(84, 84)$ \\
Agent’s observation shape & $(84, 84, 3)$ & $(84, 84, 4)$ \\
$\gamma$ & 0.99 & 0.99 \\
Reward clipping & -- & false \\
Terminate on Life Loss & -- & true \\
Sticky Actions & false & false \\
\bottomrule
\end{tabular}

\caption{Environment settings for pixel-based control experiments, adapted from \citet{zhang2021distributional}. In contrast to \citet{zhang2021distributional}, we do not use reward clipping for Atari.}
\label{tab:pixel_env_settings}
\end{table}

\subsection{From multivariate rewards to a control signal}
Our critic models the full multivariate return distribution $Z^\pi(s,a)\in\mathbb{R}^d$. To interface with standard
control algorithms, we fix a preference (weight) vector $\alpha\in\mathbb{R}^d$ and derive a scalar action-value
by taking the expected return and projecting it:
\[
Q^\pi_\alpha(s,a)\;:=\;\alpha^\top \mathbb{E}\!\left[Z^\pi(s,a)\right].
\]
This produces a standard scalar control objective while retaining a multivariate distributional critic (used for the TD
updates and richer modeling). We then plug this signal into a standard control algorithm, using PQN in our
experiments \citep{gallici2024simplifying}. Other common choices include DQN \citep{mnih2013playing} and
actor-critic methods for continuous control \citep{lillicrap2015continuous}.

\subsection{Baseline algorithm: PQN}

\paragraph{Adapting TD-$\lambda$.}
PQN uses TD-$\lambda$ \citep{sutton1988learning} to build training targets that interpolate between one-step bootstrapping and Monte Carlo returns.
Given an episode (or truncated rollout) $(S_t,A_t,R_{t+1})_{t=0}^{T-1}$, a discount $\gamma\in[0,1)$, and a bootstrap value
$V_t := V(S_t)$, define the $n$-step return
\begin{equation}
\label{eq:n_step_return}
G_t^{(n)}
\;:=\;
\sum_{k=0}^{n-1}\gamma^k\,R_{t+k+1}
\;+\;\gamma^n\,V(S_{t+n}),
\qquad n\ge 1.
\end{equation}
The TD-$\lambda$ target is the geometric mixture of these $n$-step returns,
\begin{equation}
\label{eq:td_lambda_forward}
G_t^{\lambda}
\;:=\;
(1-\lambda)\sum_{n=1}^{T-t-1}\lambda^{n-1}G_t^{(n)}
\;+\;\lambda^{T-t-1}G_t^{(T-t)},
\qquad \lambda\in[0,1].
\end{equation}
This canonical definition is equivalent to the backward recursion used for implementation:
\begin{equation}
\label{eq:td_lambda_backward}
G_t^{\lambda}
\;=\;
R_{t+1}
\;+\;
\gamma(1-D_{t+1})
\Big((1-\lambda)\,V(S_{t+1})+\lambda\,G_{t+1}^{\lambda}\Big),
\end{equation}
which is computed by a reverse scan over time.
Here $D_{t+1}\in\{0,1\}$ is the done flag, so the factor $(1-D_{t+1})$ disables bootstrapping after termination.

\paragraph{Adapting to the distributional setting.}
The distributional critic outputs $K$ particles that represent a sample-based approximation of the return distribution.
In our case, these $K$ particles are predicted jointly by a neural network and cannot be assumed to be perfectly i.i.d., so
their ordering should not be used implicitly when forming bootstrapped targets.

In Q-learning, TD-$\lambda$ combines the one-step bootstrap and the continuation through a convex combination via
\eqref{eq:td_lambda_backward}. For distributional return learning, we reinterpret the same recursion at the level of return laws:
\begin{equation}
\label{eq:td_lambda_backward_dist}
G_t^{\lambda} \;\overset{D}{=}\; R_{t+1} + \gamma(1-D_{t+1})\Big((1-\lambda)\,Z_{t+1} + \lambda\,G_{t+1}^{\lambda}\Big),
\qquad
Z_{t+1}:=Z(S_{t+1},a^{\mathrm{policy}}).
\end{equation}
When approximating these laws by particles, we implement the convex combination as a \emph{mixture} over the two components.
Let $\{z_{t+1}^{(k)}\}_{k=1}^K$ denote the particles predicted at $(S_{t+1},a^{\mathrm{policy}})$ and
$\{G_{t+1}^{\lambda,(k)}\}_{k=1}^K$ the particles of the continuation target.
For each transition and each $k$, sample $b_t^{(k)}\sim\mathrm{Bernoulli}(\lambda)$ and set
\begin{equation}
\label{eq:td_lambda_particle_mixture}
c_t^{(k)} :=
\underbrace{\mathbf{1}\{b_t^{(k)}=0\}\,\tilde z_{t+1}^{(k)}}_{\text{one-step bootstrap}}
\;+\;
\underbrace{\mathbf{1}\{b_t^{(k)}=1\}\,\tilde G_{t+1}^{\lambda,(k)}}_{\text{continuation target}} .
\end{equation}

Second, before applying this mixture we independently \emph{shuffle} the $K$ particles within each of the two sets
$\{z_{t+1}^{(k)}\}_{k=1}^K$ and $\{G_{t+1}^{\lambda,(k)}\}_{k=1}^K$, and then define $\tilde z_{t+1}^{(k)}$ and $\tilde G_{t+1}^{\lambda,(k)}$
as the particles obtained after shuffling. This breaks any systematic index pairing while preserving the empirical
distribution of each set.

\begin{algorithm}[t]
\caption{Distributional multivariate PQN with $\lambda$-returns (adapted from Algorithm~1 in \citep{gallici2024simplifying}). Here $I$ is the number of parallel environments, $T$ is the rollout length, $K$ is the number of particles, and $\alpha\in\mathbb{R}^d$ is the fixed scalarization weight vector used to define $Q^\alpha_\phi(s,a):=\alpha^\top\big(\frac{1}{K}\sum_{k=1}^K Z_\phi(s,a)_{:,k}\big)$.}
\label{alg:dist-mv-pqn-lambda}
\begin{algorithmic}
\STATE {\bfseries Initialise} distributional critic parameters $\phi$ for $Z_\phi:\mathcal{S}\times\mathcal{A}\to\mathbb{R}^{d\times K}$
\STATE Sample initial states $s_0^i \sim P_0$ for all environments $i \in \{0,\dots,I-1\}$
\STATE $t \gets 0$

\FOR{{\bfseries each episode}}
    \FOR{{\bfseries each} $i \in \{0,1,\dots,I-1\}$ {\bfseries (in parallel)}}
        \STATE $a_t^i \sim \pi_{\text{Explore}}(s_t^i)$ \hfill (e.g.\ $\epsilon$-greedy w.r.t.\ $Q^\alpha_\phi$)
        \STATE $r_t^i \sim P_R(s_t^i,a_t^i)$,\quad $s_{t+1}^i \sim P_S(s_t^i,a_t^i)$,\quad $d_{t+1}^i \sim P_D(s_t^i,a_t^i)$
    \ENDFOR
    \STATE $t \gets t + 1$

    \IF{$t \bmod T = 0$}
        \STATE {\bfseries Compute greedy actions from scalarized values}
        \FOR{$k = t-T$ {\bfseries to} $t$}
            \STATE $a_{k}^{\star,i} \gets \arg\max_{a'} Q^\alpha_\phi(s_k^i,a')$
        \ENDFOR

        \STATE {\bfseries Compute $\lambda$-return targets (mixture implementation for particle targets)}
        \STATE $G_{t}^{\lambda,i} \gets Z_\phi(s_t^i, a_{t}^{\star,i}) \in \mathbb{R}^{d\times K}$
        \FOR{$k = t-1$ {\bfseries down to} $t-T$}
            \STATE $G_{k}^{\lambda,i} \gets r_k^i + \gamma(1-d_{k+1}^i)\,
            \mathrm{Mix}_\lambda\!\Big(Z_\phi(s_{k+1}^i, a_{k+1}^{\star,i}),\, G_{k+1}^{\lambda,i}\Big)$
        \ENDFOR

        \FOR{{\bfseries number of epochs}}
            \FOR{{\bfseries number of minibatches}}
                \STATE Sample minibatch $B$ of size $b \le I\!\cdot\!T$ from
                $\{t-T,\dots,t-1\}\times\{0,\dots,I-1\}$
                \STATE {\bfseries Distributional update}
                \STATE $\displaystyle
                \phi \gets \phi
                - \frac{\eta_t}{b}\,
                \nabla_\phi
                \sum_{(k,i)\in B}
                \mathcal{L}_{\text{dist}}\!\Big(Z_\phi(x_k^i),\, G_k^{\lambda,i}\Big)$
            \ENDFOR
        \ENDFOR
    \ENDIF
\ENDFOR
\end{algorithmic}
\end{algorithm}

\subsection{Experimental setup}
We base most of our implementation on PQN \citep{gallici2024simplifying}. The distributional extension used in this work is summarized in Algorithm~\ref{alg:dist-mv-pqn-lambda}. Environment parameters shared by our pixel-based setups are reported in Table~\ref{tab:pixel_env_settings}, and the default hyper-parameters for the training setup are reported in Table~\ref{tab:pixel_hparams_default}. As discussed in Section~\ref{sec:instantiations}, we use the powered sliced objectives for all base metrics.
\paragraph{Neural network architecture.}
We keep most of the Q-network of \citet{gallici2024simplifying} and apply two changes: we replace ReLU with SiLU,
and we change the final linear layer to output $K$ particles per action. The resulting architecture is summarized in
Table~\ref{tab:pixel_cnn_arch}.

\begin{table}[t]
\centering
\small
\renewcommand{\arraystretch}{1.15}
\begin{tabular}{l l}
\toprule
\textbf{Component} & \textbf{Specification used in our experiments} \\
\midrule
Input & $84\times 84$ pixel observation (stacked as in Table~\ref{tab:pixel_env_settings}) \\
Preprocess & divide by $255$ \\
\midrule
Conv1 & $32$ channels, $8{\times}8$ kernel, stride $4$, VALID \\
Norm \& act. & LayerNorm + SiLU \\
Conv2 & $64$ channels, $4{\times}4$ kernel, stride $2$, VALID \\
Norm \& act. & LayerNorm + SiLU \\
Conv3 & $64$ channels, $3{\times}3$ kernel, stride $1$, VALID \\
Norm \& act. & LayerNorm + SiLU \\
Flatten & reshape to a vector \\
FC & Dense(512) \\
Norm \& act. & LayerNorm + SiLU \\
\midrule
Output head & Dense($A\cdot K\cdot d$), reshaped into $K$ particles per action in $\mathbb{R}^{d}$ \\
\bottomrule
\end{tabular}
\caption{Neural network architecture used in our pixel-based control experiments. We follow \citet{gallici2024simplifying} for the CNN trunk and LayerNorm placement, replace ReLU with SiLU, and modify the final layer to predict $K$ particles per action.}
\label{tab:pixel_cnn_arch}
\end{table}

\subsection{Maze}
\label{sec:maze}

We faithfully reimplemented the maze environments introduced by \citet{zhang2021distributional}.
The environments were rewritten entirely in \textsc{JAX} \citep{jax2018github}, including both the
environment dynamics and the rendering pipeline. This choice was made to ensure full compatibility
with PQN, which is itself implemented in JAX, and to support large-scale parallel execution with
$256$ environments in parallel. Unless stated otherwise, the training settings for the maze experiments
are reported in Table~\ref{tab:pixel_maze_hparams}.

\begin{table}[t]
\centering
\begin{tabular}{lc}
\toprule
Hyper-parameters & Values  \\
\midrule
Number of frames & $5\times 2^{6}$ \\
Number of environments & 256 \\
Rollout length (steps per environment, per update) & 1 \\
Number of epochs (per update) & 2 \\
Number of minibatches (per epoch) & 32 \\
TD-$\lambda$ & 0.0 \\
Learning rate & $1.0\times 10^{-5}$ \\
Optimizer & RAdam~\citep{liu2019variance} \\
Number of particles & 200 \\
Slicing: number of projection directions & 64 \\
Multiquadric kernel bandwidth & 100.0 \\
Evaluation : number of Monte Carlo rollouts & 16 \\
\bottomrule
\end{tabular}

\caption{Default hyper-parameter settings used for the maze experiments.}
\label{tab:pixel_maze_hparams}
\end{table}

\paragraph{Environment.}
All maze variants are fixed grid environments with deterministic movement dynamics and pixel observations. The agent observation is an $84\times84\times3$ RGB rendering of the environment. This observation fully specifies the current environment configuration relevant to control, including the maze layout, the agent position, and the reward locations. The agent occupies a single cell in the grid and can take one of four discrete actions corresponding to moving up, down, left, or right. If the target cell is free, the agent moves to that cell, otherwise the action has no effect and the agent remains in its current position. Moves also have no effect when the agent is trying to return to a previous position. Colored walls can only be passed in one direction. Yellow walls can be passed from the top or from the left. Red walls can be passed from the bottom or from the right. Episodes start from a fixed initial position shared across all maze variants and run for a fixed horizon. At each step, the environment emits a vector-valued reward.

Reward locations are indicated by colored squares in the maze. Each color corresponds to one reward component. When the agent reaches such a square, it receives a stochastic scalar reward in the corresponding component and the reward cannot be collected again within the same episode. The number of reward components and their distributions depend on the maze variant described next.

\paragraph{Maze variants and reward functions.}
Figure~\ref{fig:maze_envs} illustrates the three maze variants and the placement of colored reward sources.
The reward dimensionality is $d=2$ for \textsc{maze-exclusive} and \textsc{maze-identical}, and $d=4$ for
\textsc{maze-multireward}. The reward distributions follow \citep[Appendix A.2]{zhang2021distributional}.

\begin{itemize}
\item \textbf{\textsc{maze-exclusive} and \textsc{maze-identical} ($d=2$).}
Each position with a green square yields a reward $r \sim \mathcal{U}(0.2, 0.6)$ in the first reward component.
Each position with a red square yields a reward $r \sim \mathcal{U}(0.4, 0.9)$ in the second reward component.

\item \textbf{\textsc{maze-multireward} ($d=4$).}
Each orange square yields a reward $r \sim \mathcal{U}(0.2, 0.4)$ in the first reward component.
Each blue square yields a reward $r \sim \mathcal{U}(0.8, 1.0)$ in the second reward component.
Each green square yields a reward $r \sim \mathcal{U}(0.3, 0.5)$ in the third reward component.
Each red square yields a reward $r \sim \mathcal{U}(0.5, 0.7)$ in the fourth reward component.
\end{itemize}

\begin{figure*}[t]
  \centering
  \begin{subfigure}[t]{0.22\textwidth}
    \centering
    \includegraphics[width=\linewidth]{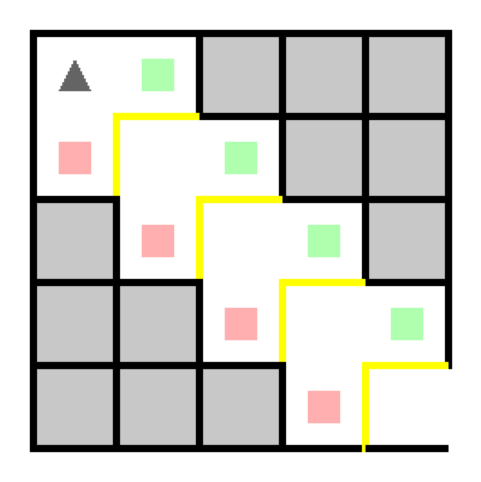}
    \subcaption{}\label{fig:maze-exclusive}
  \end{subfigure}\hfill
  \begin{subfigure}[t]{0.22\textwidth}
    \centering
    \includegraphics[width=\linewidth]{plots_icml/maze_v2.png}
    \subcaption{}\label{fig:maze-identical}
  \end{subfigure}\hfill
  \begin{subfigure}[t]{0.22\textwidth}
    \centering
    \includegraphics[width=\linewidth]{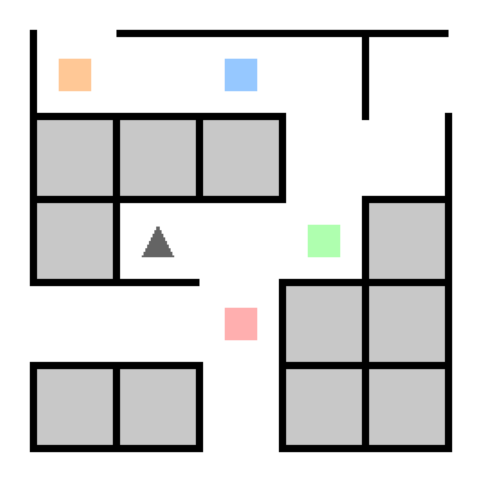}
    \subcaption{}\label{fig:maze-multireward}
  \end{subfigure}

  \caption{Maze environments reproduced from \citep{zhang2021distributional}. Initial observations are shown for three variants: (a) \textsc{maze-exclusive}, with two mutually exclusive reward sources; (b) \textsc{maze-identical}, with two positively correlated reward sources; and (c) \textsc{maze-multireward}, with four correlated reward sources.}

  \label{fig:maze_envs}
\end{figure*}

\paragraph{Evaluation protocol.}
To evaluate the learned return distributions, we generate Monte Carlo (MC) rollouts from the reset distribution
\(S_0 \sim P_0\) using a uniform random policy \(\pi(a\mid s)=\frac{1}{|\mathcal{A}|}\), so the rollouts capture both
environment stochasticity and policy stochasticity. For a fixed horizon \(H\), each rollout yields a discounted
return
\[
G_0^{(n)} \;:=\; \sum_{t=0}^{H-1}\gamma^t R_{t+1}^{(n)}\in\mathbb{R}^d,
\qquad 
\widehat{\mu}_{\mathrm{MC}} := \frac{1}{N}\sum_{n=1}^N \delta_{G_0^{(n)}} .
\]
We then query the distributional critic at the start state for all actions and aggregate its predicted particles using the
same uniform policy \(\pi\), yielding an empirical critic-implied return distribution \(\widehat{\mu}_\phi\).
Finally, we compare the two empirical distributions using the exact optimal-transport estimator
\(\widehat {\mathbf{W}}_2(\widehat{\mu}_{\mathrm{MC}},\widehat{\mu}_\phi)\) computed with POT \citep{flamary2021pot}.

\subsection{Atari} \label{sec:atari}
\paragraph{Reward decomposition.}
We reproduce the Atari multi-reward setup of \citep[Appendix~A.2]{zhang2021distributional}. Concretely, we consider six
environments, \textsc{AirRaid}, \textsc{Asteroids}, \textsc{Gopher}, \textsc{MsPacman}, \textsc{UpNDown}, and \textsc{Pong}, and decompose each
primitive reward into a fixed-dimensional vector while preserving the total reward (the sum of components equals the
original scalar reward). We list the exact mappings below.

\begin{itemize}
\item \textbf{AirRaid.}
Primitive rewards take values in $\{100,75,50,25,0\}$ and are mapped to
\[
100\mapsto[100,0,0,0],\quad
75\mapsto[0,75,0,0],\quad
50\mapsto[0,0,50,0],\quad
25\mapsto[0,0,0,25],\quad
0\mapsto[0,0,0,0].
\]

\item \textbf{Pong.}
\[
-1\mapsto[-1,0],\qquad
+1\mapsto[0,1],\qquad
0\mapsto[0,0].
\]

\item \textbf{Asteroids.}
Let $r$ be the primitive reward and output $[r_1,r_2,r_3]$ defined by
\[
r_1=\begin{cases}
20, & (r-20)\bmod 50=0,\\
0, & \text{otherwise},
\end{cases}
\qquad
r_2=\begin{cases}
50, & (r-r_1-50)\bmod 100=0,\\
0, & \text{otherwise},
\end{cases}
\qquad
r_3=r-r_1-r_2.
\]

\item \textbf{MsPacman.}
Let $r$ be the primitive reward and output $[r_1,r_2,r_3,r_4]$ defined by
\[
r_1=\begin{cases}
10, & (r-10)\bmod 50=0,\\
0, & \text{otherwise},
\end{cases}
\qquad
r_2=\begin{cases}
50, & (r-r_1-50)\bmod 100=0,\\
0, & \text{otherwise},
\end{cases}
\]
\[
r_3=\begin{cases}
100, & (r-r_1-r_2-100)\bmod 200=0,\\
0, & \text{otherwise},
\end{cases}
\qquad
r_4=r-r_1-r_2-r_3.
\]

\item \textbf{Gopher.}
Let $r$ be the primitive reward and output $[r_1,r_2]$ defined by
\[
r_1=\begin{cases}
20, & (r-20)\bmod 100=0,\\
0, & \text{otherwise},
\end{cases}
\qquad
r_2=r-r_1.
\]

\item \textbf{UpNDown.}
Let $r$ be the primitive reward and output $[r_1,r_2,r_3]$ defined by
\[
r_1=\begin{cases}
10, & (r-10)\bmod 100=0,\\
0, & \text{otherwise},
\end{cases}
\qquad
r_2=\begin{cases}
100, & (r-r_1-100)\bmod 200=0,\\
0, & \text{otherwise},
\end{cases}
\qquad
r_3=r-r_1-r_2.
\]
\end{itemize}

\paragraph{Scalarization rule.}
Following \citet{zhang2021distributional}, we derive a scalar control signal from the multi-dimensional return by simple
summation of its components. Concretely, letting $Z(s,a)\in\mathbb{R}^d$ denote the vector-valued return, we define the
scalar action-value used for control as
\[
Q(s,a) \;=\; \mathbb{E}\!\left[\mathbf{1}^\top Z(s,a)\right],
\]
where $\mathbf{1}\in\mathbb{R}^d$ is the all-ones vector. 

\begin{table}[t]
\centering
\begin{tabular}{lc}
\toprule
Hyper-parameters & Default values \\
\midrule
Number of frames & $5\times 10^{7}$ \\
Number of environments & 128 \\
Rollout length (steps per environment, per update) & 32 \\
Number of epochs (per update) & 2 \\
Number of minibatches (per epoch) & 32 \\
TD-$\lambda$ & 0.65 \\
Learning rate & $2.5\times 10^{-4}$ \\
Optimizer & RAdam~\citep{liu2019variance} \\
Gradient-norm clipping & 10.0 \\
$\epsilon$-start & 1.0 \\
$\epsilon$-finish & 0.001 \\
$\epsilon$-decay ratio (of total updates) & 0.1 \\
Number of particles & 200 \\
Slicing: number of projection directions & 128 \\
Multiquadric kernel bandwidth & 100.0 \\
\bottomrule
\end{tabular}

\caption{Default hyper-parameter settings used in our pixel-based control experiments.}
\label{tab:pixel_hparams_default}
\end{table}

\paragraph{Evaluation protocol.}
Alongside the training environments, we run a separate pool of $8$ evaluation environments in parallel, using the same
evaluation implementation as in the public codebase of \citet{gallici2024simplifying}. During evaluation, the agent follows
a purely greedy policy (no exploration noise). This design choice is deliberate for performance: evaluation runs
continuously without interrupting training.
We additionally report \emph{normalized scores} to aggregate results across environments. We compute
\[
\mathrm{Norm}(\pi)\;:=\;100 \times \frac{J(\pi)-J(\pi_{\mathrm{rand}})}{J(\pi_{\mathrm{ref}})-J(\pi_{\mathrm{rand}})},
\]
where $J(\cdot)$ is the undiscounted evaluation return. The constants $J(\pi_{\mathrm{rand}})$ and $J(\pi_{\mathrm{ref}})$
are the random and reference scores used for normalization.
We do not interpret these as \emph{human-normalized} scores because our Atari setup does not use sticky actions
(we follow \citep{zhang2021distributional}). We therefore refer to them simply as \emph{normalized scores}.

\newpage
\section{Chain environment}\label{sec:chain_environment}
Section~\ref{sec:one_sample_dist_td} describes the standard one-sampled distributional TD regime
(Algorithm~\ref{alg:one_sample_chain_td}).
Section~\ref{sec:near_exact_dist_td} describes the near-exact variant that explicitly constructs the transition-mixture Bellman target
(Algorithm~\ref{alg:near_exact_chain_td}).
Unless stated otherwise, distributional accuracy is evaluated at the start state using the same Monte Carlo protocol as for Maze
(Section~\ref{sec:maze}).

\subsection{Environment description}
We reproduce and generalize the chain environment of \citet{rowland2019statistics} as used by \citet{nguyen2021distributional}. \emph{This setup is used purely for policy evaluation.}
The MDP has states $\{s_0,\dots,s_{K-1}\}$ with initial state $s_0$ and terminal (absorbing) state $s_{K-1}$. A schematic is provided in Fig.~\ref{fig:chain_env}.
For any nonterminal state $s_i$ with $i\le K-2$ and action $a\in\{\texttt{fwd},\texttt{bwd}\}$, the next state satisfies
\[
S_{t+1} =
\begin{cases}
s_{i+1}, & \text{with probability } p(a),\\
s_0,     & \text{with probability } 1-p(a),
\end{cases}
\qquad
p(\texttt{fwd})=0.9,\;\; p(\texttt{bwd})=0.1,
\]
and if $S_t=s_{K-1}$ then $S_{t+1}=s_{K-1}$.
Episodes terminate upon first entering $s_{K-1}$, and we do not emit further rewards nor bootstrap beyond termination.
In the original scalar chain, the reward is $+1$ upon entering the terminal state, $-1$ on transitions that reset to $s_0$, and $0$ otherwise.

In our multivariate variant, the transition dynamics are unchanged and only the reward is modified.
Here the return dimension is $d=K$.
We represent each state $s_i$ by a one-hot vector $e_i\in\mathbb{R}^{K}$ and emit the positional reward vector
$R_{t+1} = e_{S_{t+1}}$, which is the one-hot embedding of the entered next state.
This yields a $K$-dimensional return that tracks discounted state occupancies in the spirit of successor features \citep{barreto2017successor} while preserving the original chain's stochasticity.
In our experiments we use a scalar discount $\gamma$ and instantiate the matrix discount as $\Gamma=\gamma I_d$.

\subsection{Empirical setup}
To model the return distribution we follow \citet{nguyen2021distributional} and, for each state--action pair, parametrize an empirical distribution with a fixed set of particles. Since the chain is tabular and small, we do not use a neural network. We initialize the particles randomly and update them using a distributional divergence, as described in the two subsequent sections. Default hyper-parameters are reported in Table~\ref{tab:chain_hparams_default}.

\begin{table}[b]
\centering
\begin{tabular}{lc}
\toprule
Hyper-parameters & Default values \\
\midrule
Optimizer & RAdam~\citep{liu2019variance} \\
Learning rate & $2.5\times 10^{-3}$ \\
Discount factor $\gamma$ & 0.9 \\
Number of particles $N$ & 500 \\
Gradient-norm clipping & 0.1 \\
Chain forward success probability $p(\texttt{fwd})$ & 0.9 \\
Chain length & 10 \\
Particle initialization & $\mathcal{N}(-1,\,0.08^{2})$ \\
\bottomrule
\end{tabular}
\caption{Default hyper-parameter settings used in our tabular chain policy-evaluation experiments.}
\label{tab:chain_hparams_default}
\end{table}

\paragraph{Evaluation protocol.}
We follow the same Monte Carlo evaluation protocol as for the Maze environments (Section~\ref{sec:maze}).
Concretely, we generate Monte Carlo (MC) rollouts starting from the initial state $S_0=s_0$ and unroll the evaluation policy ($\pi(s)\equiv\texttt{fwd}$) until termination at $s_{K-1}$. Each rollout yields a discounted return vector, and repeating this procedure produces an empirical return distribution at the start state. We then query the distributional critic at $(s_0,\texttt{fwd})$ to obtain its predicted particle distribution and compare the two, using the same distributional metric as in the Maze evaluation.

\subsection{One-sampled distributional TD}\label{sec:one_sample_dist_td}
In distributional RL, the (population) Bellman target at $(s,a)$ is a mixture law induced by transition and policy randomness.
In most implementations, however, TD bootstrapping replaces this mixture by a one-sample instantiation based on a single sampled successor
$(S_{t+1},A_{t+1})$. This is the setting discussed in Sec.~\ref{sec:training_suitability}.

We begin with this standard \emph{one-sampled} distributional TD bootstrapping regime in our tabular chain setting.
The purpose of this subsection is only to specify the target distribution produced by a single sampled successor, which we will contrast in the
next subsection with a closer approximation of the full (mixture) distributional Bellman target.

For clarity, we describe this mode for the evaluation policy that always goes right,
i.e.\ $\pi(s)\equiv \texttt{fwd}$ for all nonterminal states.
At each update, we only modify the particles at the \emph{current} tabular entry $(S_t,\texttt{fwd})$ using a single observed transition.

Consider one on-policy transition $(S_t,\texttt{fwd},R_{t+1},S_{t+1})$, where $S_{t+1}\sim P(\cdot\mid S_t,\texttt{fwd})$,
and take $A_{t+1}=\texttt{fwd}$.
We form a one-sample distributional target by bootstrapping from the single successor pair $(S_{t+1},\texttt{fwd})$:
\[
\hat Z_{t+1}
\;:=\;
R_{t+1} \;+\; \Gamma\, Z_{\phi}(S_{t+1},\texttt{fwd}),
\]
where $\Gamma\in\mathbb{R}^{d\times d}$ is the (possibly matrix) discount.
In our particle implementation, $\hat Z_{t+1}$ is represented by the target particle set
\[
\hat z^{(n)}_{t+1}
\;:=\;
R_{t+1} \;+\; \Gamma\, z^{(n)}_{\phi}(S_{t+1},\texttt{fwd}),
\qquad n=1,\dots,N,
\]
and no gradient flows through the bootstrap particles $z^{(n)}_{\phi}(S_{t+1},\texttt{fwd})$ when updating the particles at $(S_t,\texttt{fwd})$.
The TD update at time $t$ then matches the empirical distribution at the current tabular entry $(S_t,\texttt{fwd})$ to the empirical distribution defined by
$\{\hat z^{(n)}_{t+1}\}_{n=1}^N$ using a distributional divergence.

\subsection{Near exact distributional TD}\label{sec:near_exact_dist_td}
Because the chain dynamics are known and finite, the (population) distributional Bellman target can be approximated directly,
rather than instantiated from a single sampled successor as in Sec.~\ref{sec:one_sample_dist_td}.
We exploit the fact that, for each state--action pair $(s,a)$, the next state has only two possible outcomes, so the mixture induced by the
transition kernel can be constructed explicitly and represented with a fixed particle budget.

For clarity, we describe this mode for the evaluation policy that always goes right,
i.e.\ $\pi(s)\equiv \texttt{fwd}$ for all nonterminal states.
In this regime, all nonterminal states are updated simultaneously in a batch sweep.

Fix a nonterminal state $s=s_i$ with $i\le K-2$ and take $a=\texttt{fwd}$.
Then the next state is $s_{\mathrm{succ}}=s_{i+1}$ with probability $p=0.9$ and $s_{\mathrm{fail}}=s_0$ with probability $1-p=0.1$.
Let $R(s,s')\in\mathbb{R}^d$ denote the (vector) reward emitted on transition $s\to s'$.
We form the two successor-backed-up target distributions
\[
Z_{\mathrm{succ}}
\;:=\;
R(s,s_{\mathrm{succ}}) \;+\; \Gamma\, Z_{\phi}(s_{\mathrm{succ}},\texttt{fwd}),
\qquad
Z_{\mathrm{fail}}
\;:=\;
R(s,s_{\mathrm{fail}}) \;+\; \Gamma\, Z_{\phi}(s_{\mathrm{fail}},\texttt{fwd}),
\]
and define the corresponding mixture target
\[
Z_{\mathrm{targ}}
\;:=\;
p\, Z_{\mathrm{succ}} \;+\; (1-p)\, Z_{\mathrm{fail}}.
\]
In our particle implementation, $Z_{\mathrm{targ}}$ is represented by drawing $N$ target particles from this mixture:
we pool the particles from $Z_{\mathrm{succ}}$ and $Z_{\mathrm{fail}}$ and resample $N$ of them with mixture weights $p$ and $1-p$.
As in Sec.~\ref{sec:one_sample_dist_td}, no gradient flows through the bootstrap particles when updating the particles at $(s,\texttt{fwd})$.

This yields a close approximation of the full mixture Bellman target while keeping the update fully tabular and using a fixed particle budget per state.
\newcommand{\sg}{\mathtt{sg}} 

\begin{figure*}[t]
\centering

\begin{minipage}[t]{0.49\textwidth}
\begin{algorithm}[H]
\caption{One-sampled distributional TD (\texttt{fwd}-only policy).}
\label{alg:one_sample_chain_td}
\begin{algorithmic}
\STATE {\bfseries Initialise} tabular particles $\{z^{(n)}_\phi(s,\texttt{fwd})\}_{n=1}^N$ for all $s\in\{s_0,\dots,s_{K-1}\}$
\STATE Fix evaluation policy $\pi(s)\equiv \texttt{fwd}$ and discount matrix $\Gamma\in\mathbb{R}^{d\times d}$
\STATE Sample initial state $S_0 \sim P_0$
\FOR{$u=0,1,\dots,U-1$}
    \STATE Observe $(S_{u+1},R_{u+1})\sim P(\cdot,\cdot\mid S_u,\texttt{fwd})$
    \STATE \textit{Form target particles}
    \FOR{$n=1,2,\dots,N$}
        \STATE $\hat z^{(n)} \gets R_{u+1} + \Gamma\, \sg\!\big(z^{(n)}_{\phi}(S_{u+1},\texttt{fwd})\big)$
    \ENDFOR
    \STATE $g_u \gets \nabla_\phi\, \mathcal{D}\!\Big(Z_\phi(S_u,\texttt{fwd}),\, \tfrac{1}{N}\sum_{n=1}^N \delta_{\hat z^{(n)}}\Big)$
    \STATE $\phi \gets \phi - \eta_u\, g_u$
    \STATE $S_u \gets S_{u+1}$
\ENDFOR
\end{algorithmic}
\end{algorithm}

\end{minipage}
\hfill
\begin{minipage}[t]{0.49\textwidth}
\begin{algorithm}[H]
\caption{Near-exact distributional TD (\texttt{fwd}-only policy).}
\label{alg:near_exact_chain_td}
\begin{algorithmic}
\STATE {\bfseries Initialise} tabular particles $\{z^{(n)}_\phi(s,\texttt{fwd})\}_{n=1}^N$ for all $s\in\{s_0,\dots,s_{K-2}\}$
\STATE Fix evaluation policy $\pi(s)\equiv \texttt{fwd}$ and discount matrix $\Gamma\in\mathbb{R}^{d\times d}$
\FOR{$u=1,2,\dots,U$}
    \STATE {\bfseries Construct mixture targets for all states (in parallel)}
    \FOR{{\bfseries each} $s=s_i$, $i\in\{0,\dots,K-2\}$}
        \STATE $s_{\mathrm{succ}} \gets s_{i+1},\; s_{\mathrm{fail}} \gets s_0,\; p \gets 0.9$
        \STATE $T_{\mathrm{succ}} \gets R(s,s_{\mathrm{succ}}) + \Gamma\, \sg\!\big(Z_\phi(s_{\mathrm{succ}},\texttt{fwd})\big)$
        \STATE $T_{\mathrm{fail}} \gets R(s,s_{\mathrm{fail}}) + \Gamma\, \sg\!\big(Z_\phi(s_{\mathrm{fail}},\texttt{fwd})\big)$
        \STATE $\{\hat z^{(n)}(s)\}_{n=1}^N \gets \mathrm{Resample}\!\Big(p\,T_{\mathrm{succ}} + (1-p)\,T_{\mathrm{fail}}\Big)$
    \ENDFOR
    \STATE $g_u \gets \nabla_\phi \sum_{i=0}^{K-2}
    \mathcal{D}\!\Big(
        Z_\phi(s_i,\texttt{fwd}),\;
        \tfrac{1}{N}\sum_{n=1}^N \delta_{\hat z^{(n)}(s_i)}
    \Big)$
    \STATE $\phi \gets \phi - \eta_u\, g_u$
\ENDFOR
\end{algorithmic}
\end{algorithm}
\end{minipage}

\caption{Two tabular policy-evaluation regimes for the chain environment under the $\texttt{fwd}$-only policy. Left: standard one-sampled bootstrapping from a single observed successor. Right: a near-exact approximation of the mixture Bellman target obtained by explicitly constructing the two-outcome transition mixture for every state. Here $\sg(\cdot)$ denotes a stop-gradient operator.}
\label{fig:chain_td_regimes}
\end{figure*}

\section{More results and ablations}\label{sec:results_and_ablations}
This section collects additional results and diagnostics that complement the main experiments.
Section~\ref{sec:more_computation_cost} reports update-time scaling with the number of particles.
Section~\ref{sec:more_chain_environment} provides additional chain results, including a max-slicing sanity check and
Monte Carlo versus predicted return visualizations.
Section~\ref{sec:more_maze} reports a Maze ablation on the choice of Cramér estimator, together with Monte Carlo
versus predicted return visualizations.
Section~\ref{sec:more_atari} reports per-environment Atari learning curves and studies the effect of the number of
projection directions.

\subsection{Computational cost}\label{sec:more_computation_cost}
In Figure~\ref{fig:compute_cost} we compare the wall-clock time of a single update iteration of
Algorithm~\ref{alg:near_exact_chain_td} across the objectives considered in this work, while varying the number of
particles used to model the return distribution. As expected, MMD and sliced-MMD exhibit a much steeper increase in
runtime as the number of particles grows, which is consistent with the quadratic complexity of standard MMD estimators
($\mathcal{O}(n^2)$). In contrast, the other sliced variants scale more mildly with the number of particles, which is
consistent with a near $\mathcal{O}(n\log n)$ cost when their one-dimensional base computations admit efficient sorting-based
implementations.

\begin{figure}[h]
    \centering
    \includegraphics[width=0.85\columnwidth]{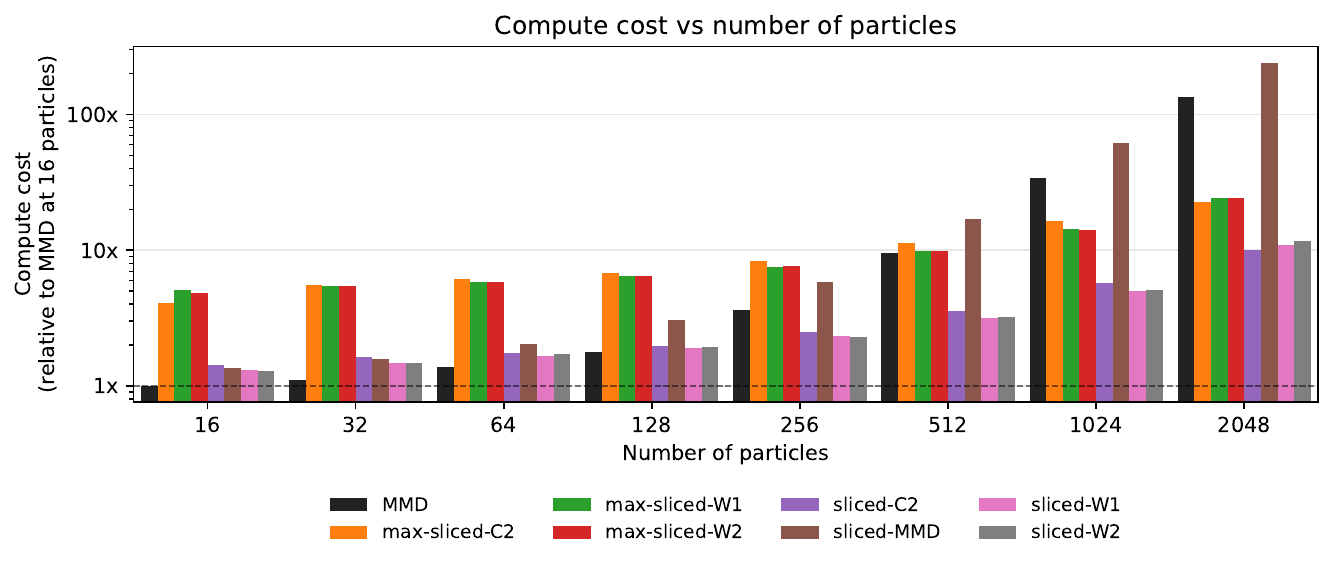}
    \caption{Wall-clock time per update as a function of the number of particles, normalized to MMD at the smallest particle count.}
    \label{fig:compute_cost}
\end{figure}

\subsection{Chain environment}\label{sec:more_chain_environment}

\subsubsection{Testing gradient bias from max slicing}\label{sec:chain_identical}

We now isolate the selection bias induced by max slicing by considering a chain setting in which the multivariate return
distribution is effectively one-dimensional.
We do so by enforcing that all rewards are colinear: there exists a fixed direction $u\in\mathbb R^d$ such that
\[
R_{t+1} = r_{t+1}\,u \qquad \text{for all transitions,}
\]
for some scalar reward process $(r_t)_{t\ge 0}$.
In our experiments, we take $u=\mathbf 1_d$ and replicate the scalar chain reward across all $d$ coordinates, which is a
direct generalization of the scalar chain setting of \citet{nguyen2021distributional}.

\paragraph{Consequence for max slicing.}
When the return distribution is supported on the one-dimensional subspace $\mathrm{span}\{u\}$, the max-sliced
objective reduces to a constant multiple of a one-dimensional divergence and the selection mechanism behind
Lemma~\ref{lem:maxslicing_breaks_U} simply doesn't apply.
Indeed, since $Z=\sum_{t\ge 0}\gamma^t R_{t+1}=Y\,u$ for some scalar random variable $Y$, for any slicing direction
$\theta\in S^{d-1}$ we have
\[
\langle \theta, Z\rangle = \langle \theta, u\rangle\,Y.
\]
Thus the projected distributions are scaled versions of the same scalar law. For the divergences considered in this
work, this scaling induces a deterministic multiplicative factor: for Wasserstein, the standard homogeneity property
gives $\mathbf{W}_p((S_s)_{\#}\mu,(S_s)_{\#}\nu)=s\,\mathbf{W}_p(\mu,\nu)$, and for Cram\'er we have
$\mathbf{C}_2((S_s)_{\#}\mu,(S_s)_{\#}\nu)=s\,\mathbf{C}_2(\mu,\nu)$, shown in Proposition~\ref{prop:lp-scale-lipschitz}.
Consequently, for all $\theta$,
\[
\Delta\!\big((P_\theta)_{\#}\mu,\,(P_\theta)_{\#}\nu\big)
=
|\langle \theta, u\rangle|\,
\Delta\!\big(\mu^{(1)},\nu^{(1)}\big),
\]
where $\mu^{(1)},\nu^{(1)}$ are the corresponding scalar return distributions and $\Delta\in\{\mathbf{W}_p,\mathbf{C}_2\}$.
Taking the supremum over $\theta$ gives
\[
\sup_{\theta\in S^{d-1}}
\Delta\!\big((P_\theta)_{\#}\mu,\,(P_\theta)_{\#}\nu\big)
=
\|u\|_2\,
\Delta\!\big(\mu^{(1)},\nu^{(1)}\big),
\]
which depends only on $u$ and not on the sample used to form the empirical sliced losses.

\paragraph{Empirical verification.}
Figure~\ref{fig:chain_identical} reports, for each objective, the empirical Wasserstein--$2$ distance between Monte Carlo
returns and the learned critic distribution at the initial state, averaged over $5$ random seeds (with $95\%$
confidence intervals). In this degenerate setting, max-sliced-$\mathbf{C}_2$ performs on par with the objectives that
satisfy \Ucond, consistent with the fact that the max-slicing selection mechanism (and the associated gradient bias)
does not apply here.

\begin{figure}[h]
    \centering
    \includegraphics[width=0.75\columnwidth]{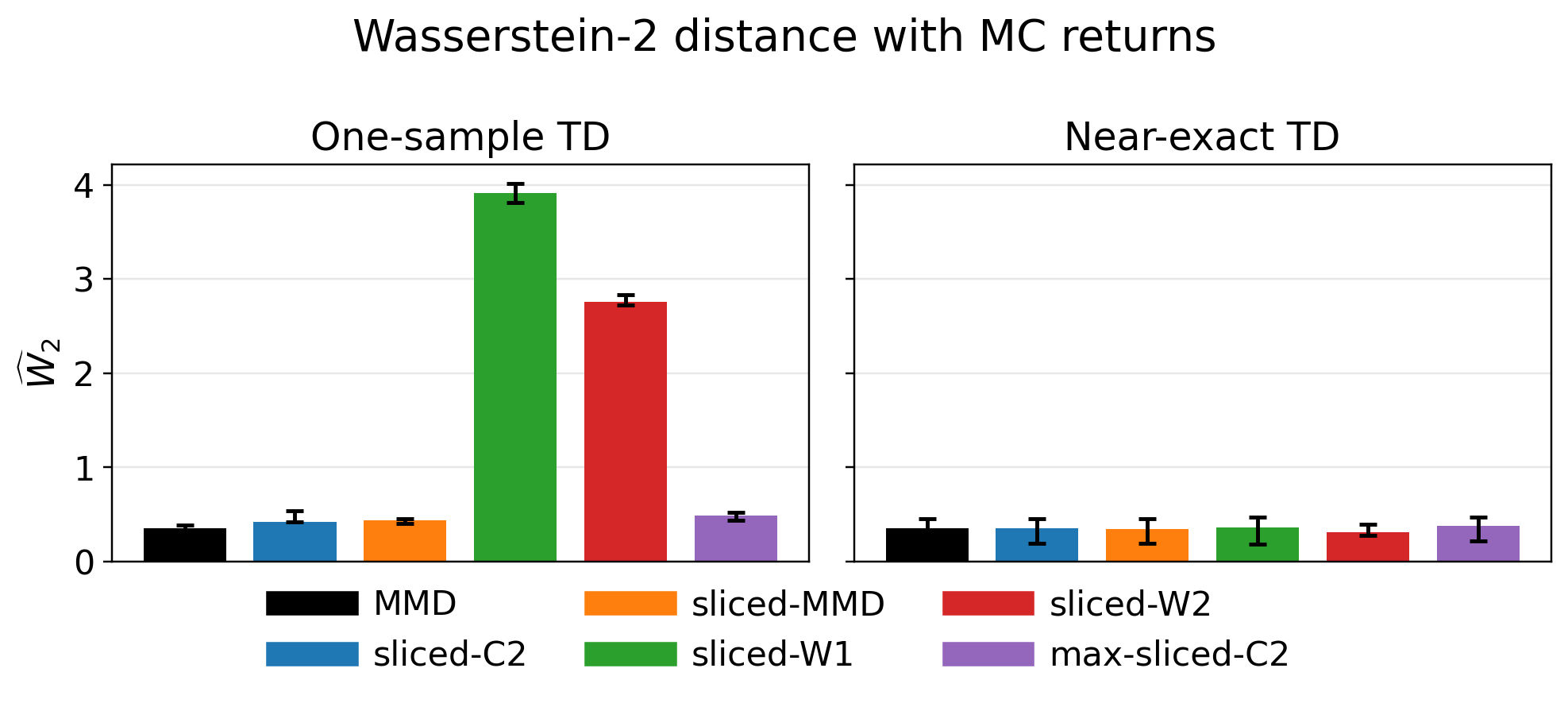}
    \caption{
Empirical Wasserstein--$2$ distance between predicted and Monte Carlo return distributions in the chain environment with
identical reward coordinates.
The reward is zero everywhere except at the terminal state, yielding an effectively one-dimensional return
distribution.
Bars report the median $\widehat{W}_2$ across $5$ random seeds (initial state); error bars show $95\%$ bootstrap CIs
(10k resamples).
Left: one-sampled distributional TD.
Right: near-exact distributional TD.
}

\label{fig:chain_identical}
\end{figure}

\subsubsection{Monte Carlo returns vs predicted returns} \label{sec:chain_monte_carlo_clouds}
Figure~\ref{fig:clouds_chain} provides a qualitative companion to the chain results reported in
Section~\ref{sec:experiments_main} (Fig.~\ref{fig:chain_env_bars}).
For the same evaluation setting, we directly visualize the learned return particles at the initial state
alongside Monte Carlo rollouts from the true environment.
Notably, although the Wasserstein--$2$ objective can collapse to a degenerate return distribution in this setting,
its predicted mean remains qualitatively comparable to that of objectives that fit the distribution well.

\begin{figure}[h]
    \centering
    \includegraphics[width=0.95\columnwidth]{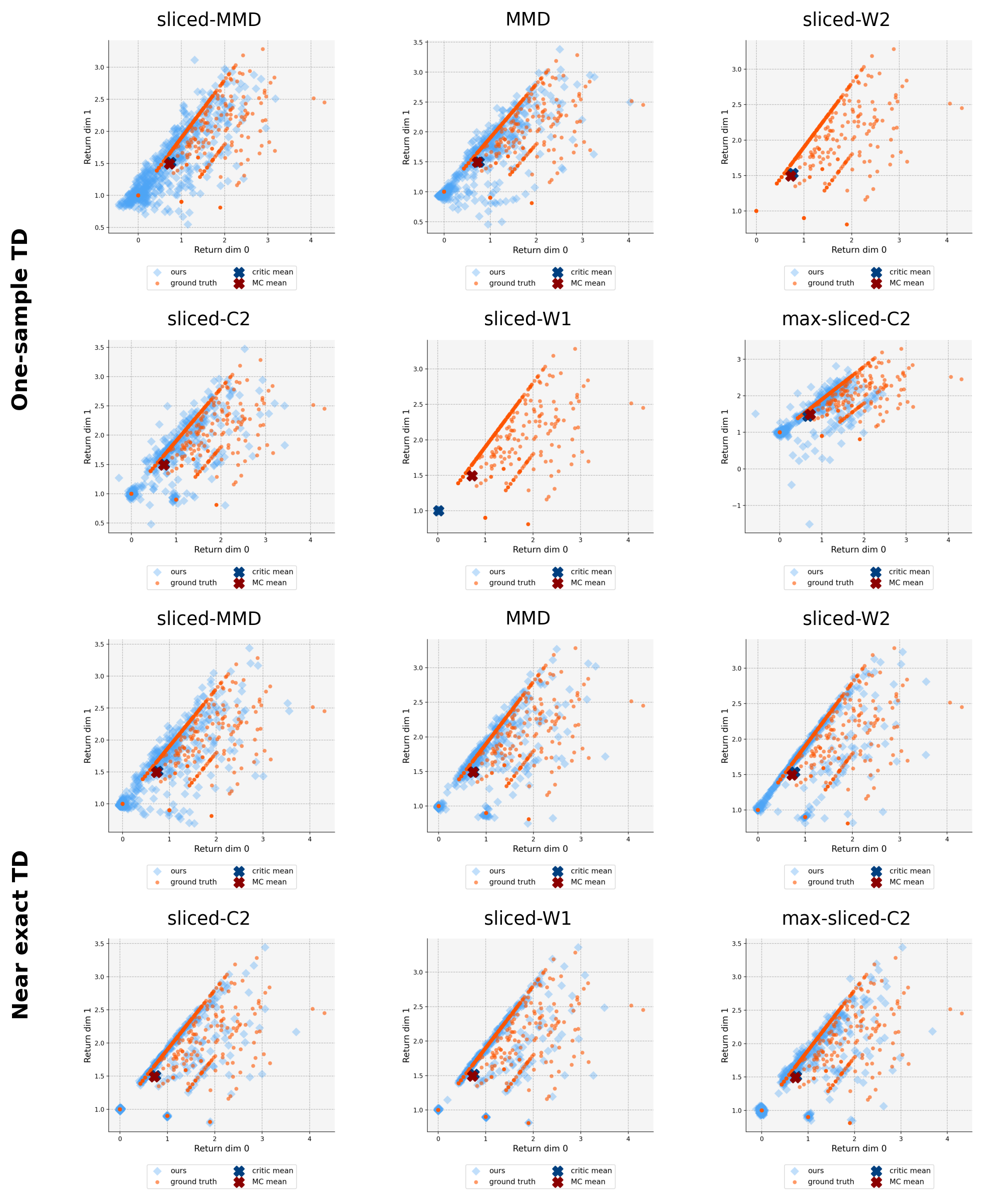}
    \caption{
Comparison between learned and ground-truth return distributions in the multivariate chain environment.
Each panel shows the joint distribution of two return coordinates (dimensions $0$ and $1$) at the initial state
(the start of the chain) under the $\texttt{fwd}$-only evaluation policy.
Orange dots correspond to Monte Carlo rollouts approximating the true distributional Bellman target,
while blue dots are the particles learned by distributional TD using the indicated objective.
Rows correspond to the two training regimes described in Section~\ref{sec:chain_environment}:
(top) standard one-sampled distributional TD (Algorithm~\ref{alg:one_sample_chain_td}),
and (bottom) near-exact distributional TD using an explicit mixture construction
(Algorithm~\ref{alg:near_exact_chain_td}).
Markers indicate the empirical mean of the learned critic distribution and the Monte Carlo mean.
}

    \label{fig:clouds_chain}
\end{figure}

\subsection{Maze}\label{sec:more_maze}
\subsubsection{Choice of Cramér estimator}
As discussed in Section~\ref{sec:cramer_estimators}, the Cramér--$2$ objective admits multiple empirical estimators.
In addition to our default estimator, we further tested the CRPS-based alternative while keeping the rest of the
training and evaluation protocol unchanged.
Figure~\ref{fig:maze_pinball} shows that this choice has no noticeable effect on distributional matching accuracy
across the three environments considered, and does not change the conclusions drawn from the main maze results.

\begin{figure}[H]
    \centering
    \includegraphics[width=0.65\columnwidth]{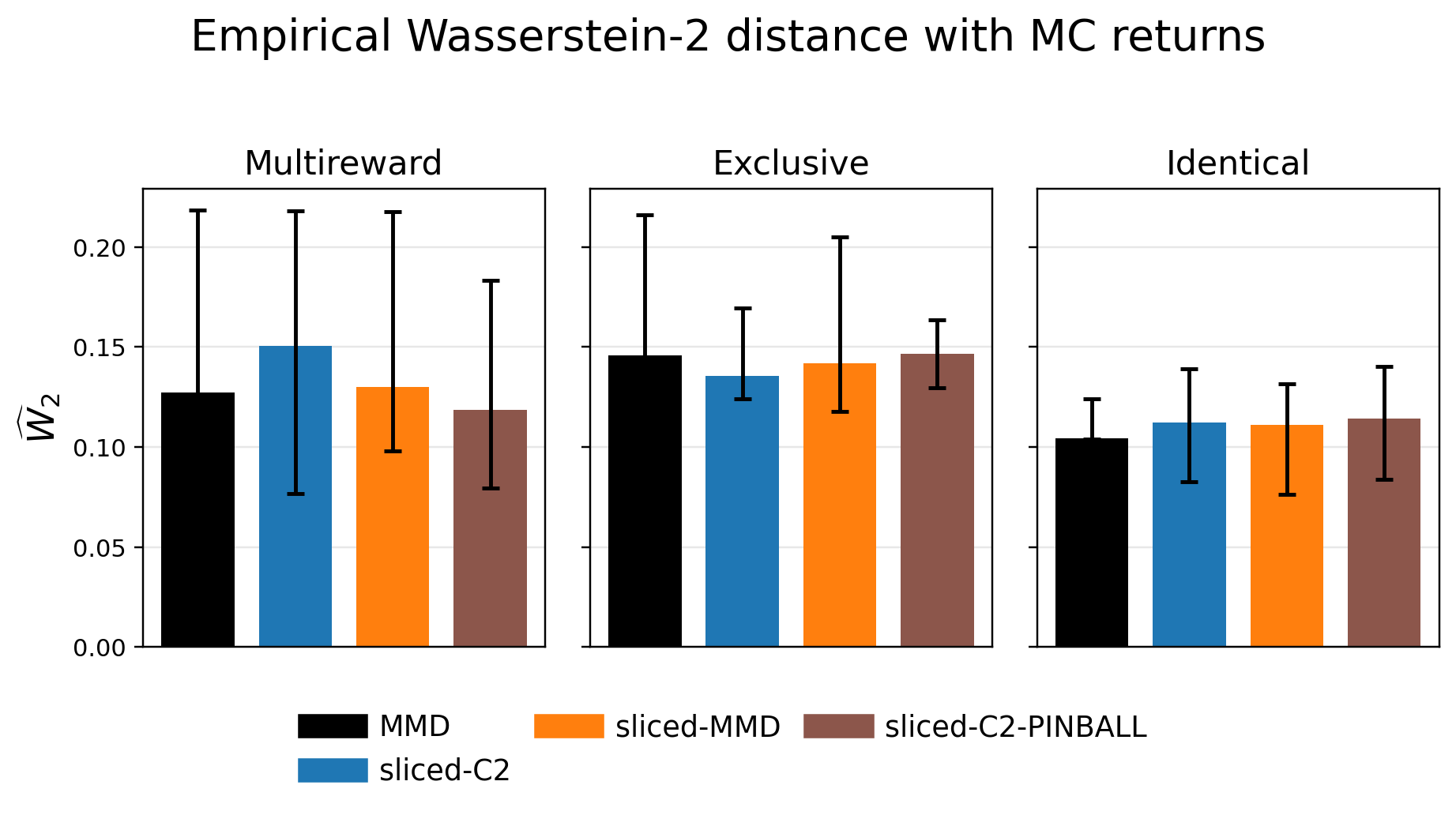}
    \caption{
Effect of the Cramér--$2$ empirical estimator choice on distributional matching accuracy in three maze environments.
We compare the default Cramér estimator to the CRPS-based alternative (Section~\ref{sec:cramer_estimators}) while
keeping the training pipeline fixed, including the same bootstrap resampling scheme.
Bars report the median metric value across $5$ random seeds (initial state); error bars show $95\%$ bootstrap CIs
(10k resamples).
Across all three environments, the estimator choice has no noticeable impact.
}

    \label{fig:maze_pinball}
\end{figure}

\subsubsection{Monte Carlo returns vs predicted returns}
Figure~\ref{fig:clouds_gridworld} provides a qualitative companion to the Maze policy-evaluation results reported in
Section~\ref{sec:experiments_main} (Fig.~\ref{fig:gridworld_bars}).
For each of the three Maze variants, we visualize the critic’s learned return particles at a fixed reference
state alongside Monte Carlo return samples from the true environment, plotting two return coordinates (dimensions $0$
and $1$).
Blue points denote the particles predicted by the critic, while orange points are Monte Carlo returns.
The figure illustrates that objectives satisfying \Ucond\ (notably sliced Cram\'er and sliced-MMD) closely match the
Monte Carlo return distribution, whereas objectives that violate \Ucond\ can exhibit visible distributional mismatch.

\begin{figure}[h]
    \centering
    \includegraphics[width=0.65\columnwidth]{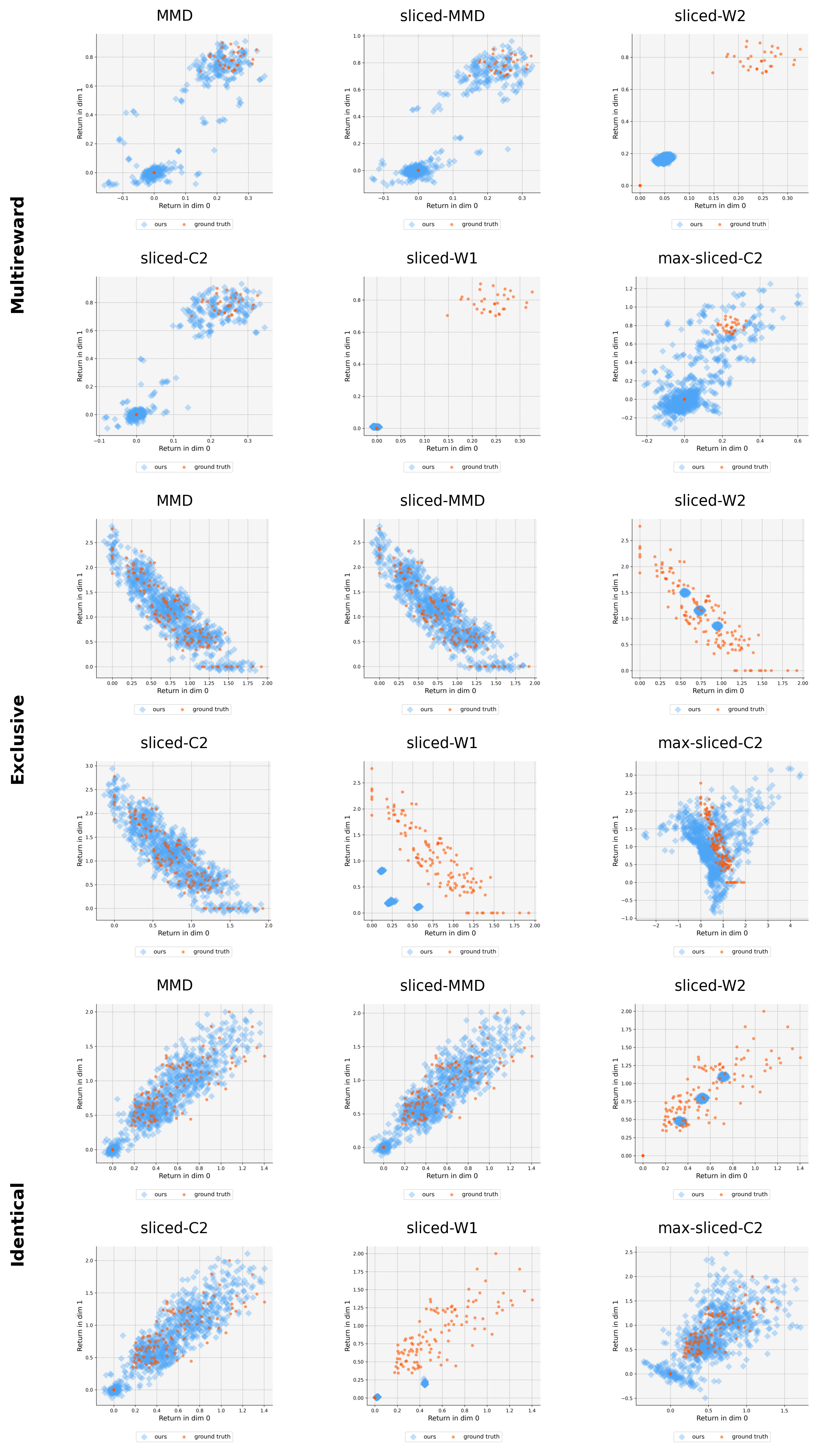}
    \caption{
Monte Carlo returns versus critic-predicted return particles in the Maze environments.
Each panel shows the joint distribution of two return coordinates (dimensions $0$ and $1$) at the start state,
under the uniform random policy used for evaluation.
Orange dots are Monte Carlo discounted returns obtained from rollouts of the true environment, while blue
dots are the critic-implied particles aggregated across actions using the same policy.
Rows correspond to the three Maze variants (\textsc{maze-multireward}, \textsc{maze-exclusive}, and
\textsc{maze-identical}), and columns to the distributional objectives used to train the critic.
}

    \label{fig:clouds_gridworld}
\end{figure}

\subsection{Atari}\label{sec:more_atari}
\subsubsection{Number of projection directions}
Figure~\ref{fig:atari_learning_curves} complements the aggregated normalized-score results in
Figure~\ref{fig:normalized_scores_main} by showing per-environment learning curves for the same runs. We vary the number of projection directions used by the sliced objectives and visualize how this choice affects raw control performance (undiscounted evaluation returns) over training. In this experiment, performance is remarkably robust to the number of projection directions, with strong results already obtained using a single projection direction.

\begin{figure*}[t]
\centering

\begin{minipage}{0.85\textwidth}
    \centering
    \includegraphics[width=\linewidth]{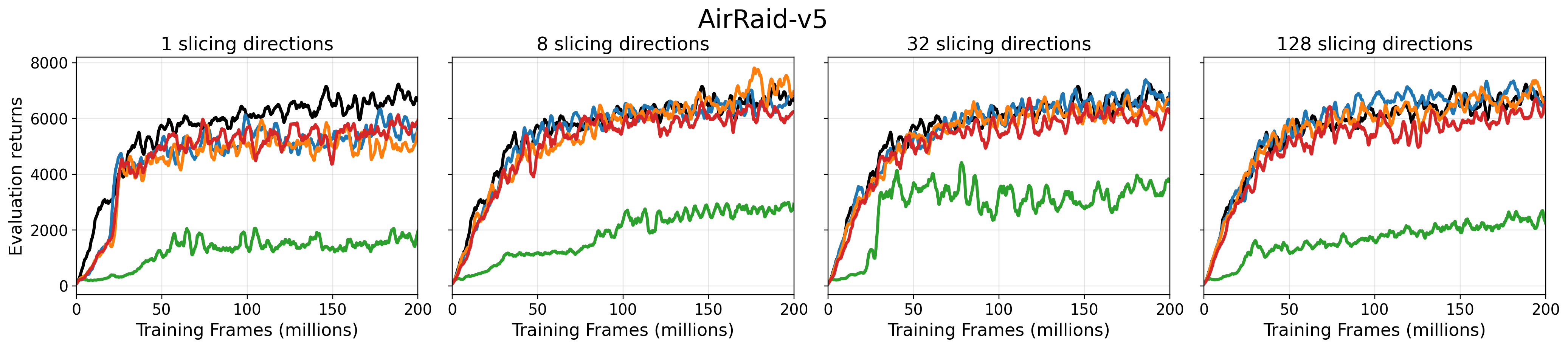}
    \includegraphics[width=\linewidth]{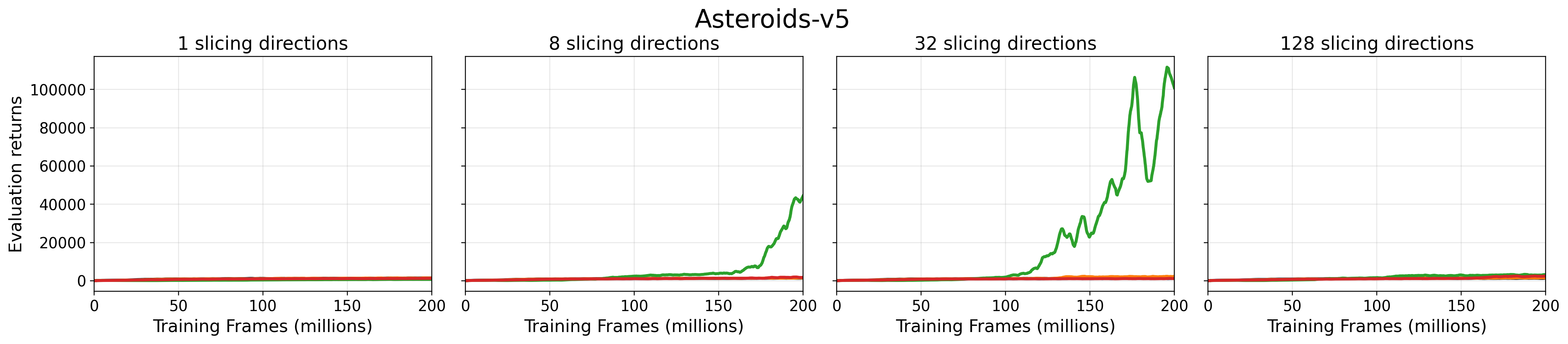}
    \includegraphics[width=\linewidth]{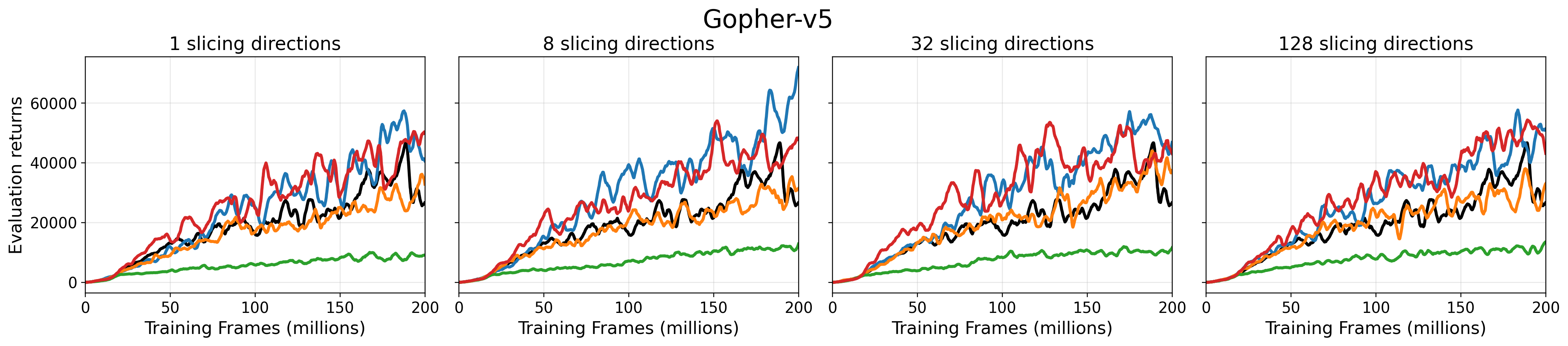}
    \includegraphics[width=\linewidth]{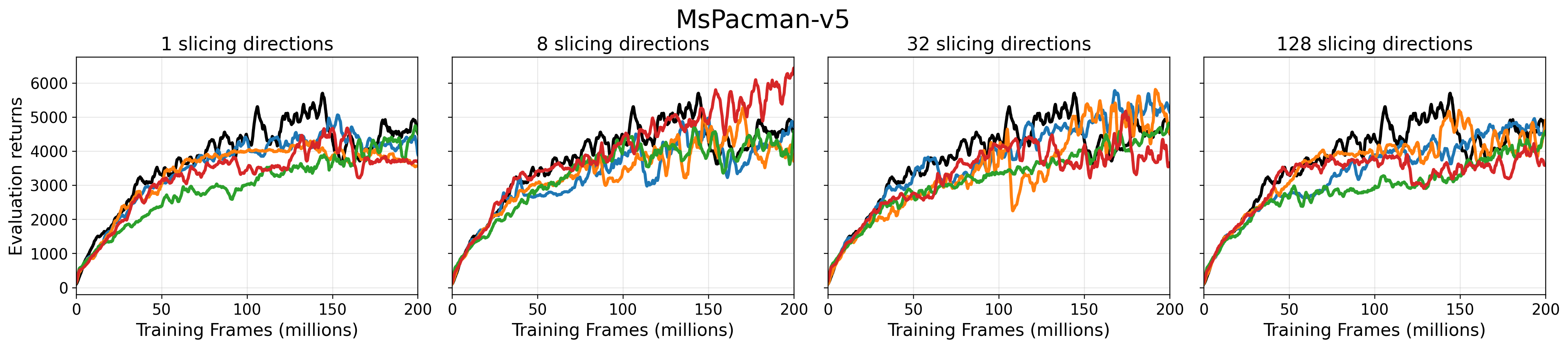}
    \includegraphics[width=\linewidth]{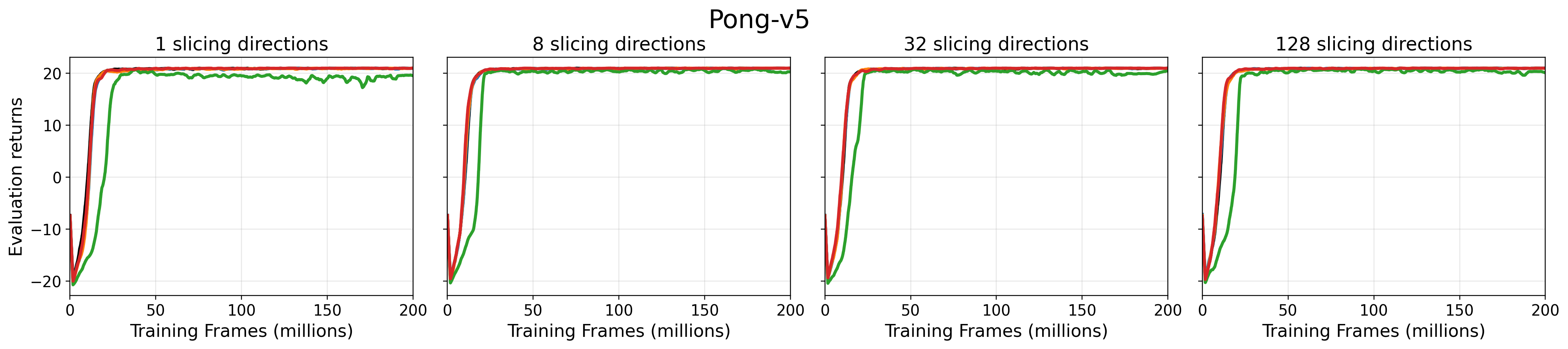}
    \includegraphics[width=\linewidth]{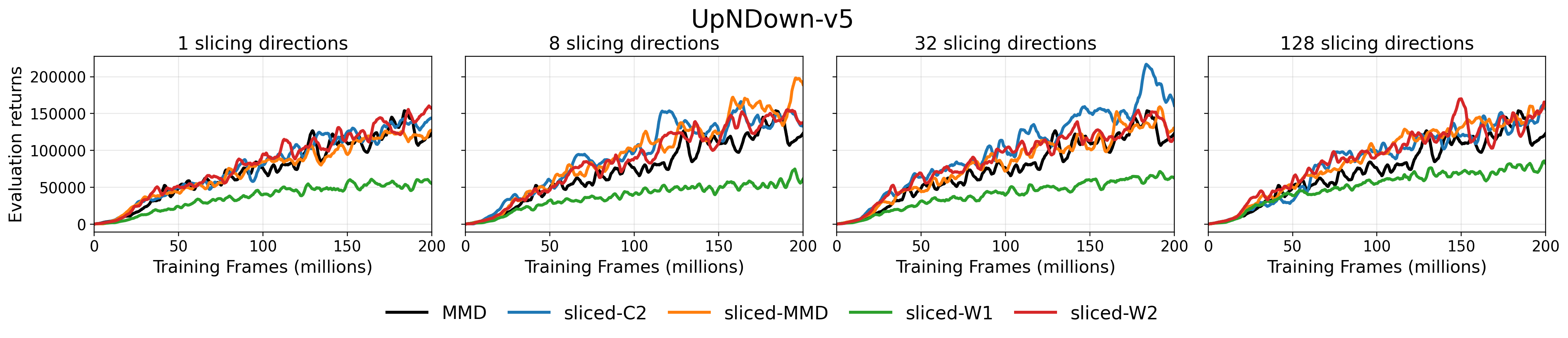}
\end{minipage}

\caption{
Learning curves on six Atari environments with decomposed reward signals.
All methods are evaluated using the same protocol as in \citet{gallici2024simplifying}, following a greedy policy
during evaluation and running evaluation environments continuously alongside training.
We report raw undiscounted evaluation returns (sum of rewards) rather than normalized scores, and smooth curves by
window averaging with window size 151.
Each row corresponds to a different environment; curves show the median across $5$ random seeds.
All methods build on the PQN architecture and training pipeline of \citet{gallici2024simplifying}, following the Atari
reward decomposition setup of \citet{zhang2021distributional}, and differ only in the distributional objective used to
train the critic.
}

\label{fig:atari_learning_curves}
\end{figure*}

\clearpage
\newpage

\section{LLM Usage}
We used an LLM-based assistant to support the preparation of this paper. 
In particular, it was employed to (i) rephrase draft paragraphs for clarity and suggest alternative framings of related work, 
(ii) format proofs, explore directions, and verify intermediate steps, 
(iii) assist in debugging code, 
(iv) suggest LaTeX equation formatting, and 
(v) help identify relevant theoretical results in preceding works. 
All core research contributions, including the development of theoretical results, algorithms, and experiments, were carried out by the authors.


\end{document}